\documentclass{article}

\usepackage{natbib}

\usepackage[utf8]{inputenc}
\usepackage{amsmath, amssymb, amsthm}
\usepackage[dvipsnames]{xcolor}
\usepackage[margin=1in]{geometry}

\usepackage[
    colorlinks=true,
    linkcolor=black,
    citecolor=black,
    urlcolor=black
]{hyperref}

\newtheorem{theorem}{Theorem}[section]
\newtheorem{lemma}[theorem]{Lemma}
\newtheorem{proposition}[theorem]{Proposition}
\newtheorem{corollary}[theorem]{Corollary}

\theoremstyle{definition}
\newtheorem{definition}[theorem]{Definition}
\newtheorem{remark}[theorem]{Remark}
\newtheorem{assumption}[theorem]{Assumption}

\DeclareMathOperator{\clr}{clr}
\DeclareMathOperator{\supp}{supp}
\DeclareMathOperator{\dist}{dist}
\renewcommand{\d}{\mathrm{d}}
\newcommand{\obsdata}{\mathsf{d}}

\title{Measure flow path recovery in Bayes Hilbert spaces}
\author{
S. David Mis\\
Rice University\\
Houston, TX, USA
\and
Maarten V. de Hoop\\
Rice University\\
Houston, TX, USA
}
\date{\today}

\begin{document}

\maketitle

% !TeX root = ../main.tex

\begin{abstract}
    We study the ill-posed problem of recovering a probability measure flow from finitely many moving localized sensors using a Bayes Hilbert framework.
    Relative to a fixed reference probability measure, a probability law is represented by its centered log-ratio coordinates, so that an evolving law becomes a path in a Hilbert space of functions.
    For sufficiently regular Bayes Hilbert paths, we construct a canonical minimum-energy transport realization of the path by solving a weighted Neumann problem at each time, which yields an intrinsic transport form on tangent directions.

    We then formulate an inverse problem directly on Bayes Hilbert path space.
    Linearization of an observation operator yields an observability form, and recoverability is governed by its interaction with the transport geometry through a joint transport--observability form.
    In the ambient infinite-dimensional setting, we develop the corresponding regularized variational theory and identify a basic limitation of localized sensing: mobile sensors can make the joint form injective, but they do not in general yield a coercive stability estimate on the full state space.

    This obstruction leads naturally to finite-dimensional Bayes Hilbert reductions.
    There the transport form becomes a kinetic tensor and the linearized observations become reduced sensing matrices, so recoverability can be expressed through explicit Gramian conditions.
    We show that localized bump sensors detect every fixed reduced direction, that finitely many suitably placed static sensors yield uniform reduced observability, and there exist path-dependent sensor trajectories such that even a single moving sensor can recover the reduced path.
    Finally, we show that these reduced recovery results lift to approximate ambient recovery for paths that are well approximated by the chosen finite-dimensional subspaces, yielding stable reconstruction up to projection error.
\end{abstract}
\tableofcontents

% !TeX root = ../main.tex

\section{Introduction}

We study the recovery of a time-dependent probability law from indirect, time-dependent observations. Fix a bounded domain $\Omega \subset \mathbb{R}^d$ and a reference probability measure \(\nu_0\). In Bayes Hilbert coordinates, a probability measure $\rho$ is represented using the \emph{centered log-ratio transform} \citep{van_den_boogaart_bayes_2014}
\[
    h=\clr(\rho)\in L^2_0(\nu_0), \qquad L^2_0(\nu_0) = \left\{ h \in L^2(\nu_0) : \int_\Omega h \; \d \nu_0 = 0\right\}.
\]
Using this transform, an evolving law $\rho_t$ becomes a path $h(t)$ in a Hilbert space of functions. The inverse problem is to reconstruct the unknown path \(h(\cdot)\), equivalently the law path \(t\mapsto \rho_{h(t)}\), from sensor data. We show that for any measure flow $\rho_{h(t)}$ lying in a finite-dimensional Bayes Hilbert subspace \(V_m\), there exist a configuration of finitely many static localized sensors making the path fully observable at each time. Furthermore, for every finite-dimensional $\rho_t$ there exists a ($\rho_t$-dependent) sensor trajectory in $\Omega$ such that a single moving sensor can fully recover $\rho_t$ even though the full state is not observed at any time. We then show that these finite-dimensional recovery results lift to approximate recovery of $\rho_t$ in the full infinite dimensional setting, with error determined by the intrinsic error of projecting $\rho_t$.

Unlike the finite-dimensional setting, localized sensing does not in general yield a coercive stability estimate on a full infinite-dimensional Bayes Hilbert state space. Mobile sensing can make the relevant bilinear form injective, but it does not by itself restore ambient coercivity except in the trivial fully observed case. This obstruction explains why finite-dimensional Bayes Hilbert reductions are not merely technical approximations, but the natural level at which localized sensing leads to genuine recovery theorems.

In this paper, we first construct a canonical transport field associated to a prescribed Bayes Hilbert path. For sufficiently regular \(h\), a canonical velocity is determined by a weighted Neumann problem at each time. This selects the unique transport realization of minimum kinetic energy compatible with the instantaneous evolution of the law and induces an intrinsic transport form $\mathfrak g_h(\xi,\zeta)$ on tangent directions. Thus a Bayes Hilbert path carries both a law-valued evolution and a distinguished transport geometry.

The inverse problem is formulated directly on path space. If \(\mathcal G_t^y\) denotes the observation operator at time \(t\), with \(y\) the sensor trajectory, then linearization at a state \(h\) yields an observability form
\[
    \mathfrak j_{t,h}^y(\xi,\zeta)
    :=
    \bigl\langle D\mathcal G_t^y(h)\xi,\;D\mathcal G_t^y(h)\zeta\bigr\rangle .
\]
Recoverability is governed by the interaction between \(\mathfrak g_h\) and \(\mathfrak j_{t,h}^y\). The observation term determines which directions are visible to the data, while the transport term penalizes the time variation required for a perturbation to remain hidden as the sensing configuration changes. This leads naturally to a joint transport--observability form on perturbation paths, which is the central object driving our analysis.

This perspective also clarifies the role of sensor motion. In finite dimensions, moving sensors can rotate nullspaces of the observability form over time, so that directions invisible at any single time become recoverable over the observation horizon.
In infinite dimensions, by contrast, localized sensing remains compact at each fixed time, and this compactness prevents full ambient coercivity.
The finite-dimensional theory makes this mechanism explicit: on \(V_m\), the transport form becomes a reduced kinetic tensor and the linearized observations become reduced sensing matrices, so recoverability can be expressed through concrete conditions on a Gramian.

The Bayes Hilbert framework is well suited to this program because it cleanly separates three distinct objects: the measure path itself, its dynamical transport realization, and its recovery from data. The primary unknown is the coordinate path \(h(\cdot)\). The corresponding probability law is obtained by exponentiation and normalization, while the canonical dynamics are recovered from \(h(\cdot)\) through the weighted Neumann problem. This separation allows the forward and inverse theories to be developed using Hilbert space techniques that are often not available when working with probability measures; other geometries, such as Wasserstein and Fisher-Rao, construct infinite-dimensional Riemannian manifolds of probability measures rather than vector spaces.

The remainder of the paper follows a forward-to-inverse progression. Section~\ref{sec:bayes-hilbert-spaces} recalls basic facts of Bayes Hilbert spaces, the $\clr$ map, and its inverse. Section~\ref{sec:intrinsic-forward-geometry} develops the canonical forward realization and the transport form \(\mathfrak g_h\). Section~\ref{sec:ambient-inverse-problem} introduces and analyzes our inverse problem in the ambient infinite-dimensional setting. We introduce the regularized variational problem, prove local stability, introduce the observability and joint transport--observability forms, and identify the limitations of localized sensing in infinite dimensions. Section~\ref{sec:finite-dimensional-reduction} then passes to finite-dimensional Bayes Hilbert subspaces, proves the static- and moving-sensor recovery results, and lifts them to approximate ambient recovery.

\subsection{Related work}

Our Bayes Hilbert framework is related to the theory of metric gradient flows of probability measures \citep{ambrosioGradientFlowsMetric2008, santambrogio_euclidean_2017, chenHuangHuangReichStuart2023}, but the conceptual starting point of the present paper is different. We do not begin with a fixed
energy functional and then derive its steepest-descent evolution under a prescribed metric.
Instead, we begin with an \emph{arbitrary prescribed path} $t \mapsto h(t)$ in Bayes Hilbert
coordinates and then solve a weighted Neumann problem to recover the unique gradient velocity
field of minimum kinetic energy that realizes this path in the continuity equation. In this sense,
our transport form $\mathfrak g_h$ is an execution geometry induced by dynamical realization of a prescribed
log-density path, rather than a Riemannian metric used to define a gradient flow of a fixed
functional. The inverse problem is likewise formulated directly on path space, with observability
encoded by the companion form $\mathfrak j_h$; we are not aware of this path-space forward--inverse pairing having a direct analogue in the gradient-flow literature.

Some gradient flows, particularly Fisher--Rao gradient flows, arise as special cases of our framework. The geometric annealing path
\[
    \rho_t \propto \rho_0^{\,1-t}\rho_1^{\,t}
\]
appearing in Fisher--Rao-based sampling and continuation methods is simply a straight-line in
Bayes Hilbert coordinates: if $h_i=\clr(\rho_i)$, then
\[
    \clr(\rho_t) = (1-t)h_0 + t h_1.
\]
Thus the Fisher--Rao annealing path is contained in our framework as a distinguished special case.
This observation is consistent with recent work showing that geometric annealing has a
Fisher--Rao gradient-flow interpretation and can be dynamically realized by solving an elliptic or
Poisson-type equation for a transport velocity
\citep{maurais2024SamplingInUnitTime,domingoEnrichPooladian2023, taghvaei2023surveyfeedbackparticlefilter, Reich2011ADS}. In Section~3, we extend these techniques to paths that are not straight-lines in Bayes Hilbert space.

The relation to Wasserstein--Fisher--Rao (WFR), also called Hellinger--Kantorovich (HK), is different in a more fundamental way.
The WFR/HK geometry is an
\emph{unbalanced} transport theory on nonnegative measures: it interpolates between quadratic
Wasserstein transport and Fisher--Rao reaction, and its dynamic formulation allows source terms in
the continuity equation
\citep{chizatPeyreSchmitzerVialard2018Interpolating,
    chizatPeyreSchmitzerVialard2018Unbalanced,
    lieroMielkeSavare2018}. By contrast, our present theory is formulated on normalized probability
measures and, once a Bayes Hilbert path has been chosen, produces a \emph{conservative}
continuity equation
\[
    \partial_t \rho_t + \nabla \cdot (\rho_t v_t) = 0.
\]
For this reason, our framework should not be viewed as a special case of WFR/HK gradient-flow
theory. Rather, it provides a complementary log-density coordinate formalism for path design,
minimum-energy dynamical realization, and inverse reconstruction on spaces of probability laws.

Bayes Hilbert spaces are closely related to the field of information geometry \citep{Amari2016,AmariNagaoka2000,AyJostLeSchwachhoefer2017}, but they encode
a different geometric choice. In information geometry, a family of probability distributions is
treated as a statistical manifold equipped with the Fisher metric and a dual pair of affine
connections, with exponential and mixture coordinates playing a central role. In Bayes Hilbert
space, by contrast, one fixes a reference measure and represents a law by its centered
log-density, thereby obtaining a global Hilbert-space model in which addition is Bayes
updating and affine subspaces correspond to exponential families. The common thread is the privileged
role of log-densities and exponential families; the difference is that information geometry is
primarily a local Riemannian differential geometry on statistical manifolds, whereas the
Bayes Hilbert approach is a global linear functional-analytic geometry. For a more thorough comparison between the two geometries, we refer to \citet{pistoneUnifiedApproachAitchisons2024}.

Recent machine-learning-oriented uses of Bayes Hilbert spaces remain comparatively limited, but they point in several distinct directions. Barfoot and D'Eleuterio formulate variational inference in a Bayesian Hilbert space and show that, under suitable conditions, KL-based variational inference can be interpreted as iterative Euclidean projection of the posterior onto a chosen approximation family; they emphasize applications to high-dimensional robotic state estimation and sparsity-aware inference \citep{barfootVariationalInferenceIterative2023}. Wynne studies Bayes Hilbert spaces as a framework for posterior approximation, highlighting connections to Bayesian coreset constructions and kernel-based discrepancies for comparing posterior and pseudo-posterior measures \citep{wynneBayesHilbertSpaces2023}. More recently, Lach, Fottner, and Okhrin introduce pseudo \(f\)-divergences that extend classical \(f\)-divergences to include the metric induced by Bayes Hilbert spaces, and they develop a variational estimation framework with a generative-adversarial interpretation, reporting strong empirical performance relative to standard \(f\)-GAN variants and competitive results against Wasserstein GANs \citep{lachBridgingGapFdivergences2025}.

The literature already contains several nearby formulations of inverse problems for measures, though not, to our knowledge, the particular Bayes Hilbert path-recovery problem studied here. The closest precedent on the inverse-problems side is the work of Bredies and Fanzon on dynamic inverse problems in spaces of measures, where the unknown is a time-dependent curve of Radon measures and reconstruction is regularized by balanced or unbalanced dynamic optimal transport \citep{bredies2020dynamic}. In a different but clearly related direction, Li, Oprea, Wang, and Yang study stochastic inverse problems in which the unknown is itself a probability law, and subsequently formulate inverse problems directly over probability measure space through pushforward constraints; these works are static rather than path-valued, but they place inverse problems for distributions on a rigorous infinite-dimensional footing \citep{li2024stochastic,li2025measure}. There is also a neighboring line of work on recovering dynamics from ensemble snapshot data, including system-identification formulations based on distributional evolution and, more recently, Schr\"odinger-bridge-based reconstruction from snapshot measurements \citep{aalto2019snapshot,morimoto2025linear}.
Relative to these works, our contribution is to formulate indirect recovery of a \emph{measure flow} in Bayes Hilbert coordinates, to couple that recovery with a canonical minimum-energy dynamical realization obtained from weighted Neumann problems, and to identify the joint transport--observability form as the object governing recoverability in both the ambient infinite-dimensional setting and in concrete finite-dimensional reductions.

% !TeX root = ../main.tex

\section{Bayes Hilbert spaces}
\label{sec:bayes-hilbert-spaces}

We provide a brief overview of the most important facts about Bayes Hilbert spaces for this work. Our goal is not to present the theory in full generality, but rather we tailor our presentation to streamline applicability to our later results. We refer to the original papers \citep{vandenboogaartBayesLinearSpaces2010,van_den_boogaart_bayes_2014} for additional details.

Bayes Hilbert spaces provide a linear coordinate model for strictly positive probability
measures.
Once a reference probability measure $\nu_0$ is
fixed, a probability measure can be encoded by its centered log-density using the $\clr$-transform
(Section~\ref{subsec:bayes-hilbert-coordinates}), and recovered by an exponentiation-normalization map
(Section~\ref{subsec:exp-normalization}).
This allows us to work with evolving measures through
ordinary function-valued paths (Section~\ref{subsec:regular-paths}).
The key computation of this section is the differential of the exponential-normalization map (Proposition~\ref{prop:diff_exp_normalization}), which produces the centered forcing term driving the weighted Neumann problems in Section~\ref{sec:intrinsic-forward-geometry}.

For the remainder of the paper, let $\Omega \subset \mathbb{R}^d$ be open and connected, and let \(\nu_0 \ll \d x\) be a fixed probability measure with \(\supp(\nu_0)=\overline{\Omega}\).
We define the \emph{centered $L^2(\nu_0)$ functions} by
\[
    L^2_0(\nu_0)
    :=
    \left\{
    h\in L^2(\nu_0) : \int_\Omega h\,\d\nu_0 = 0
    \right\}.
\]

\subsection{Bayes Hilbert coordinates}
\label{subsec:bayes-hilbert-coordinates}

Let \(\rho\) and \(\eta\) be positive measures on $\Omega$ that are each equivalent to \(\nu_0\) (that is, $\rho \ll \nu_0$ and $\nu_0 \ll \rho$, and similarly for $\eta$).
We say that \(\rho\) and \(\eta\) are \emph{Bayes-equivalent}, and write \(\rho\sim_B \eta\), if there exists \(c>0\) such that \(\rho=c\,\eta\). In particular, Bayes-equivalent measures are proportional, and the total mass of a measure is invisible to Bayes Hilbert methods.

We denote the Radon-Nikodym derivative of $\rho$ with respect to $\nu_0$ by $\frac{\d \rho}{\d \nu_0}$.

\begin{definition}[Bayes Hilbert space]
    The Bayes Hilbert space relative to \(\nu_0\) is
    \[
        B^2(\nu_0)
        :=
        \left\{
        \rho : \rho \ll \nu_0,\ \rho>0\ \nu_0\text{-a.e.},\
        \log \frac{\d\rho}{\d\nu_0}\in L^2(\nu_0)
        \right\}\Big/\sim_B.
    \]
\end{definition}

\begin{definition}[Centered log-ratio transform]
    For \(\rho\in B^2(\nu_0)\), define
    \[
        \clr(\rho)
        :=
        \log \frac{\d\rho}{\d\nu_0}
        -
        \int_\Omega \log \frac{\d\rho}{\d\nu_0}\,\d\nu_0.
    \]
\end{definition}

The \(\clr\) transform is well defined on Bayes-equivalence classes and takes values in
\(L^2_0(\nu_0)\). It is the basic coordinate map on \(B^2(\nu_0)\). With this in mind, we can use $\clr$ to pull back the inner-product from $L^2_0(\nu_0)$ to $B^2(\nu_0)$, making the following proposition true.

\begin{proposition}[Hilbert structure \citep{van_den_boogaart_bayes_2014}]
    The map
    \[
        \clr : B^2(\nu_0)\to L^2_0(\nu_0)
    \]
    is an isometric isomorphism between Hilbert spaces.
\end{proposition}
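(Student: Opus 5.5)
The plan is to show that $\clr$ is a well-defined bijection from $B^2(\nu_0)$ onto $L^2_0(\nu_0)$. Then I transport the vector space operations of $L^2_0(\nu_0)$ to $B^2(\nu_0)$ and check that they are the intrinsic Bayes operations. With that done, isometry holds by construction, since the inner product on $B^2(\nu_0)$ is defined as the pullback $\langle \rho,\eta\rangle_B := \langle \clr\rho,\clr\eta\rangle_{L^2(\nu_0)}$. Completeness of $B^2(\nu_0)$ then follows from completeness of the closed subspace $L^2_0(\nu_0)\subset L^2(\nu_0)$.

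\textbf{Well-definedness.} If $\rho\sim_B\eta$ with $\rho=c\,\eta$, then $\log\frac{\d\rho}{\d\nu_0}=\log c+\log\frac{\d\eta}{\d\nu_0}$ holds $\nu_0$-a.e. Since $\nu_0$ is a probability measure, subtracting the $\nu_0$-mean removes the constant $\log c$, so the two $\clr$ values agree. The log-density lies in $L^2(\nu_0)\subset L^1(\nu_0)$ because $\nu_0$ is finite, so the mean exists. The result is in $L^2(\nu_0)$ and has mean zero, hence lies in $L^2_0(\nu_0)$.

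\textbf{Injectivity and surjectivity.} If $\clr\rho=\clr\eta$, the log-densities differ by a constant $\nu_0$-a.e. Hence $\rho=c\,\eta$ with $c>0$, that is, $\rho\sim_B\eta$. For surjectivity, take $h\in L^2_0(\nu_0)$ and set $\d\rho:=e^{h}\,\d\nu_0$. This measure is positive $\nu_0$-a.e., satisfies $\rho\ll\nu_0$, and has $\log\frac{\d\rho}{\d\nu_0}=h\in L^2(\nu_0)$, so $\clr\rho=h$. Note that $\rho$ may fail to be finite, since $e^h$ need not be integrable. The definition of $B^2(\nu_0)$ only requires $\sigma$-finiteness, which holds here because $e^h<\infty$ a.e. So no normalization is needed at this stage.

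\textbf{Linear structure.} Define perturbation $\rho\oplus\eta$ by the density $\frac{\d\rho}{\d\nu_0}\frac{\d\eta}{\d\nu_0}$, and powering $\alpha\odot\rho$ by $\bigl(\frac{\d\rho}{\d\nu_0}\bigr)^\alpha$. Taking logarithms gives
\[
\clr(\rho\oplus\eta)=\clr\rho+\clr\eta,\qquad \clr(\alpha\odot\rho)=\alpha\,\clr\rho.
\]
Both operations respect $\sim_B$. This makes $\clr$ a linear bijection, so the vector space axioms on $B^2(\nu_0)$ are inherited and the pulled-back form is a genuine inner product.

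The only step I expect to need care is surjectivity, because it depends on the definition allowing non-finite (but $\sigma$-finite) positive measures. If one insisted on probability measures, $e^h$ would have to be integrable, which fails for general $h\in L^2$. I would handle this by reading the definition as stated, without a finiteness requirement, and citing \citet{van_den_boogaart_bayes_2014} for this convention.
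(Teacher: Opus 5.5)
Your proposal is correct and takes the same route as the paper: the paper also defines the inner product on $B^2(\nu_0)$ by pulling back the $L^2_0(\nu_0)$ inner product through $\clr$, so isometry holds by construction, and the real content is the well-definedness, bijectivity, and linearity you verify. Your treatment of surjectivity via the possibly infinite but $\sigma$-finite measure $e^{h}\nu_0$ agrees with the paper's remark that $B^2(\nu_0)$ contains both finite and infinite measures.
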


\begin{remark}
    The vector space operations on $B^2(\nu_0)$ are not the usual pointwise addition and scalar multiplication, but rather \emph{perturbation} and \emph{powering}:
    \begin{align*}
        & \rho \oplus \mu \text{ is represented by the density } \frac{\d \rho}{\d \nu_0} \frac{\d \mu}{\d \nu_0}, \qquad  \rho, \mu \in B^2(\nu_0) \\
        \qquad
        & \alpha \odot \rho \text{ is represented by the density } \left( \frac{\d \rho}{\d \nu_0} \right)^\alpha, \quad \alpha \in \mathbb{R}.
    \end{align*}
    The perturbation $\rho \oplus \mu$ corresponds to Bayesian updating---multiplying the density of $\rho$ by that of $\mu$---while the powering $\alpha \odot \rho$ raises the density to the power $\alpha$.
    For this manuscript, it is our preference to work in the $\clr$-transformed space $L^2_0(\nu_0)$ where the familiar addition and scalar multiplication apply, rather than directly in $B^2(\nu_0)$ with $\oplus$ and $\odot$.
\end{remark}

\begin{remark}
    In the linear structure of $B^2(\nu_0)$, the reference $\nu_0$ becomes the \emph{neutral} element (i.e. the zero vector). This fact is reflected in our choice of notation for $\nu_0$.
\end{remark}

Although \(B^2(\nu_0)\) is formally a space of Bayes-equivalence classes, and \(B^2(\nu_0)\) naturally includes both finite and infinite measures, in this paper we will work only with finite measures, and in particular with the unique probability representative of each equivalence class of finite measures.

\subsection{The exponential-normalization map}
\label{subsec:exp-normalization}

To pass from centered log-density coordinates back to probability measures, we introduce the
normalized exponential map. 
At the level of Bayes classes, the inverse of the $\clr$ transform is the exponential map \(h \mapsto e^h \nu_0\); however, $e^h \nu_0$ is not necessarily a finite measure. 
For simplicity, we therefore work in the bounded coordinate class
\[
    \mathcal{H} := L^2_0(\nu_0) \cap L^\infty(\nu_0)
\]
equipped with the $L^\infty(\nu_0)$-norm (since $\nu_0$ is a probability measure, this is stronger than the $L^2(\nu_0)$ norm). For every \(h \in \mathcal{H}\), the measure \(e^h \nu_0\) is finite and can be normalized to a probability
measure. This class is precisely the $\clr$-image of the bounded-density class
\citep{van_den_boogaart_bayes_2014}
\[
    B^2_b(\nu_0)
    :=
    \left\{
    \rho \in B^2(\nu_0) :
    \exists\, b>0 \text{ such that }
    \frac{1}{b}
    <
    \frac{\d\rho}{\d\nu_0}
    <
    b
    \quad \nu_0\text{-a.e.}
    \right\}.
\]
Although \(B^2_b(\nu_0)\) does not contain all finite elements of \(B^2(\nu_0)\), it avoids technicalities when considering tangent directions for $h$ in the Propositions below, and it contains all measures satisfying the additional regularity assumptions required in Section~3 onwards, so it is not unduly restrictive.

\begin{definition}[Exponential-normalization map] For $h \in \mathcal{H}$, define
    \begin{equation}
        \label{eq:rho_h_def}
        \rho_h
        :=
        \frac{e^h}{\int_\Omega e^h\,\d\nu_0}\,\nu_0.
    \end{equation}
\end{definition}

The Radon-Nikodym derivative of $\rho_h$ is
\[
    \frac{\d\rho_h}{\d\nu_0}
    =
    \frac{e^h}{\int_\Omega e^h\,\d\nu_0}.
\]
By construction, \(\rho_h\) is a probability measure, \(\rho_h\ll \nu_0\), and $\clr(\rho_h)=h$. The differential of the exponential-normalization map is one of the basic objects used later.

\begin{proposition}[Fr\'echet differentiability of the exponential-normalization map]
    \label{prop:diff_exp_normalization}
    Define $\mathcal E:\mathcal H \to L^1(\nu_0)$, $\mathcal E(h):=\frac{\d\rho_h}{\d\nu_0}$.
    Then \(\mathcal E\) is Fr\'echet differentiable with derivative
    \begin{equation}
        \label{eq:diff_exp_normalization}
        D\mathcal E(h)[\xi]
        =
        \frac{\d\rho_h}{\d\nu_0}
        \bigl(\xi-\mathbb E_{\rho_h}[\xi]\bigr).
    \end{equation}
\end{proposition}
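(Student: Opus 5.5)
We will factor $\mathcal E$ as a composition of simpler maps and apply the chain rule. Write $\mathcal E(h) = F(h)/Z(h)$ with $F(h) := e^h$ and $Z(h) := \int_\Omega e^h\,\d\nu_0$. The plan is to show that $F:\mathcal H\to L^\infty(\nu_0)$ is Fr\'echet differentiable with $DF(h)[\xi]=e^h\xi$. It then follows that $Z = \int_\Omega F(\cdot)\,\d\nu_0$ is differentiable as a real-valued map, since integration against the probability measure $\nu_0$ is a bounded linear functional on $L^\infty(\nu_0)$, with $DZ(h)[\xi]=\int_\Omega e^h\xi\,\d\nu_0$. Finally we combine the two via the quotient rule. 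Since $L^\infty(\nu_0)\hookrightarrow L^1(\nu_0)$ continuously, differentiability into $L^\infty$ gives differentiability into $L^1$.

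The key estimate, and the only genuinely analytic step, is the remainder bound for the exponential. Using Taylor's theorem with remainder pointwise, $|e^{a+b}-e^a-e^ab|\le \tfrac12 e^{|a|+|b|}b^2$. Hence
\[
\bigl\|e^{h+\xi}-e^h-e^h\xi\bigr\|_{L^\infty(\nu_0)} \le \tfrac12 e^{\|h\|_\infty+\|\xi\|_\infty}\|\xi\|_\infty^2 = o(\|\xi\|_\infty).
\]
This is exactly where working in $\mathcal H$ with the $L^\infty$ norm is essential; in $L^2$ the exponential is not even well defined. This step is the main point to check carefully. Everything else is algebra.

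For the quotient, note that $Z(h)\ge e^{-\|h\|_\infty}>0$, and $Z$ is continuous. Therefore $h\mapsto 1/Z(h)$ is differentiable with derivative $-DZ(h)[\xi]/Z(h)^2$. The product of a differentiable $L^\infty$-valued map and a differentiable scalar map is differentiable, and the product rule gives
\[
D\mathcal E(h)[\xi] = \frac{e^h\xi}{Z(h)} - \frac{e^h}{Z(h)}\cdot\frac{\int_\Omega e^h\xi\,\d\nu_0}{Z(h)}.
\]
Recognizing $\frac{e^h}{Z(h)} = \frac{\d\rho_h}{\d\nu_0}$ and $\frac{1}{Z(h)}\int_\Omega e^h\xi\,\d\nu_0 = \int_\Omega \xi\,\d\rho_h = \mathbb E_{\rho_h}[\xi]$ yields \eqref{eq:diff_exp_normalization}. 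To conclude, we verify that the derivative is a bounded linear map $\mathcal H\to L^1(\nu_0)$. Its norm is at most $2\|\xi\|_\infty$, because $\frac{\d\rho_h}{\d\nu_0}$ integrates to $1$ and $|\mathbb E_{\rho_h}[\xi]|\le\|\xi\|_\infty$.
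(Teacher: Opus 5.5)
Your proposal is correct and follows essentially the same route as the paper: you use the same factorization $\mathcal E(h)=e^h/Z(h)$, the same pointwise Taylor bound $|e^r-1-r|\le\tfrac12 e^{|r|}r^2$, and the quotient rule. The only minor difference is that you prove differentiability of $h\mapsto e^h$ into $L^\infty(\nu_0)$ and then pass to $L^1(\nu_0)$ via the continuous embedding, whereas the paper bounds the remainder directly in $L^1(\nu_0)$; your version yields the slightly stronger $L^\infty$-valued statement, which is what is later used in Lemma~\ref{lem:ambient-weight-derivative}.
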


\begin{proof}
    Write
    \[
        \mathcal N(h):=e^h \in L^1(\nu_0),
        \qquad
        Z(h):=\int_\Omega e^h\,\d\nu_0 \in (0,\infty).
    \]
    Then
    \[
        \mathcal E(h)=\frac{\mathcal N(h)}{Z(h)}.
    \]

    We first show that \(\mathcal N:\mathcal H\to L^1(\nu_0)\) is Fr\'echet differentiable with
    \[
        D\mathcal N(h)[\xi]=e^h\xi.
    \]
    Indeed,
    \[
        \mathcal N(h+\xi)-\mathcal N(h)-e^h\xi
        =
        e^h\bigl(e^\xi-1-\xi\bigr).
    \]
    Using the elementary bound
    \[
        |e^r-1-r|\le \tfrac12 e^{|r|}r^2,
        \qquad r\in\mathbb R,
    \]
    we obtain
    \[
        \|\mathcal N(h+\xi)-\mathcal N(h)-e^h\xi\|_{L^1(\nu_0)}
        \le
        \frac12 e^{\|h\|_{L^\infty}+\|\xi\|_{L^\infty}}
        \|\xi\|_{L^\infty}^2
        =
        o(\|\xi\|_{L^\infty}).
    \]
    Here we used that \(\nu_0\) is a probability measure.

    Similarly,
    \[
        Z(h+\xi)-Z(h)-\int_\Omega e^h\xi\,\d\nu_0
        =
        \int_\Omega e^h\bigl(e^\xi-1-\xi\bigr)\,\d\nu_0,
    \]
    so \(Z:\mathcal H\to \mathbb R\) is Fr\'echet differentiable with
    \[
        DZ(h)[\xi]=\int_\Omega e^h\xi\,\d\nu_0.
    \]

    Since \(Z(h)>0\) and inversion on \((0,\infty)\) is smooth, the quotient rule yields that
    \(\mathcal E=\mathcal N/Z\) is Fr\'echet differentiable, with
    \[
        D\mathcal E(h)[\xi]
        =
        \frac{e^h\xi}{Z(h)}
        -
        \frac{e^h}{Z(h)^2}\int_\Omega e^h\xi\,\d\nu_0
        =
        \frac{e^h}{Z(h)}
        \left(
            \xi
            -
            \frac{1}{Z(h)}\int_\Omega e^h\xi\,\d\nu_0
        \right)
        =
        \frac{\d\rho_h}{\d\nu_0}
        \bigl(\xi-\mathbb E_{\rho_h}[\xi]\bigr).
    \]
\end{proof}

\begin{remark}
    \label{rem:ambient_tangent_interpretation}
    The derivative formula \eqref{eq:diff_exp_normalization} shows that a coordinate perturbation
    \(\xi\) induces the density variation
    \[
        \delta \rho
        =
        \rho_h\bigl(\xi-\mathbb E_{\rho_h}[\xi]\bigr).
    \]
    Thus Bayes Hilbert tangent directions are automatically centered with respect to the current law.
    This centered forcing term will be the source term in the weighted Neumann problems considered
    later.
\end{remark}

% \begin{remark}
%     The exponentiation-normalization map \eqref{eq:rho_h_def} is the continuous distribution analog of the ``softmax'' function commonly used for discrete distributions in many neural network architectures.
% \end{remark}

\subsection{Regular paths in Bayes Hilbert space}
\label{subsec:regular-paths}

We now pass from single states to continuous paths $h(t)$ along a time interval $t \in [0, T]$, and we derive the basic time-differentiation formula that drives the transport theory in Section~3.

\begin{definition}[Regular coordinate path]
    A \emph{regular coordinate path} is a map \(h\in C^1([0,T]; \mathcal H)\).
    For such a path we define $\rho_t := \rho_{h(t)}.$
\end{definition}

\begin{proposition}[Log-density evolution along coordinate paths]
    \label{prop:ambient_log_density_evolution}
    Let \(h\in C^1([0,T];\mathcal H)\) be a regular coordinate path.
    Then
    \begin{equation}
        \label{eq:ambient_density_derivative}
        \partial_t \frac{\d\rho_t}{\d\nu_0}
        =
        \frac{\d\rho_t}{\d\nu_0}
        \Bigl(\dot h(t)-\mathbb E_{\rho_t}[\dot h(t)]\Bigr),
    \end{equation}
    and hence
    \begin{equation}
        \label{eq:ambient_log_density_derivative}
        \partial_t \log \frac{\d\rho_t}{\d\nu_0}
        =
        \dot h(t)-\mathbb E_{\rho_t}[\dot h(t)].
    \end{equation}
\end{proposition}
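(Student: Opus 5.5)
The plan is to apply the chain rule to the composition $t\mapsto h(t)\mapsto \mathcal E(h(t))$. Proposition~\ref{prop:diff_exp_normalization} gives Fr\'echet differentiability of $\mathcal E:\mathcal H\to L^1(\nu_0)$. The path $h\in C^1([0,T];\mathcal H)$ is differentiable as a map into $\mathcal H$ with its $L^\infty$ norm. The chain rule for Fr\'echet derivatives then shows that $t\mapsto \frac{\d\rho_t}{\d\nu_0}=\mathcal E(h(t))$ is differentiable as an $L^1(\nu_0)$-valued map, with
\[
\partial_t \frac{\d\rho_t}{\d\nu_0} = D\mathcal E(h(t))[\dot h(t)] = \frac{\d\rho_t}{\d\nu_0}\bigl(\dot h(t)-\mathbb E_{\rho_t}[\dot h(t)]\bigr).
\]
This is \eqref{eq:ambient_density_derivative}, with $\partial_t$ understood as the derivative in $L^1(\nu_0)$.

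For \eqref{eq:ambient_log_density_derivative}, I would avoid differentiating a logarithm in $L^1$ and compute directly. We have
\[
\log\frac{\d\rho_t}{\d\nu_0}=h(t)-\log Z(h(t)),\qquad Z(h)=\int_\Omega e^h\,\d\nu_0 .
\]
The first term is $C^1$ into $\mathcal H$ with derivative $\dot h(t)$. For the second term, the proof of Proposition~\ref{prop:diff_exp_normalization} shows that $Z$ is Fr\'echet differentiable with $DZ(h)[\xi]=\int_\Omega e^h\xi\,\d\nu_0$, and $Z>0$. The chain rule therefore gives
\[
\frac{\d}{\d t}\log Z(h(t))=\frac{\int_\Omega e^{h(t)}\dot h(t)\,\d\nu_0}{Z(h(t))}=\mathbb E_{\rho_t}[\dot h(t)].
\]
Subtracting yields \eqref{eq:ambient_log_density_derivative}, here as a derivative in $L^\infty(\nu_0)$, which is even stronger. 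A consistency check: dividing \eqref{eq:ambient_density_derivative} by the density, which is bounded below because $h(t)\in L^\infty$, recovers the same formula.

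The main subtlety is interpretive rather than technical. One has to state in which sense $\partial_t$ is taken (Banach-space valued: $L^1$ for the density and $L^\infty$ for the log-density) and note that this also gives pointwise a.e. derivatives along the path where needed. Boundedness of $h(t)$ and $\dot h(t)$ in $L^\infty$, uniform on the compact interval $[0,T]$, makes all the quantities well defined. Continuity in $t$ of the right-hand sides then follows from the continuity of $h$, $\dot h$, and $Z$.
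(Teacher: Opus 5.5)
Your argument for \eqref{eq:ambient_density_derivative} is the same as the paper's: the Banach-space chain rule applied to $t\mapsto h(t)\mapsto\mathcal E(h(t))$ using Proposition~\ref{prop:diff_exp_normalization}.

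For \eqref{eq:ambient_log_density_derivative} you take a different, and somewhat better, route. The paper simply divides \eqref{eq:ambient_density_derivative} by the density, which is positive $\nu_0$-a.e. You instead differentiate the explicit representation $\log\frac{\d\rho_t}{\d\nu_0}=h(t)-\log Z(h(t))$. This uses only the scalar derivative $DZ(h)[\xi]=\int_\Omega e^h\xi\,\d\nu_0$ and the identity $\frac{1}{Z(h(t))}\int_\Omega e^{h(t)}\dot h(t)\,\d\nu_0=\mathbb E_{\rho_t}[\dot h(t)]$.

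Your version has two advantages. It is self-contained, since it does not use the first identity at all. It also yields the derivative in the stronger $L^\infty(\nu_0)$ topology.

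The paper's division step is shorter and presents the log formula as an immediate corollary. Read literally, though, it needs a chain rule for $\log$ along a curve known only to be differentiable in $L^1(\nu_0)$. That step is justified here because the density stays bounded below and varies continuously in $L^\infty(\nu_0)$: the quadratic remainder is then $o(|s|)$ in $L^1$. The paper leaves this implicit, and your route avoids it entirely. Your closing ``consistency check'' is exactly the paper's argument.
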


\begin{proof}
    Equation \eqref{eq:ambient_density_derivative} follows from Proposition~\ref{prop:diff_exp_normalization} and the Banach space chain rule.
    Equation \eqref{eq:ambient_log_density_derivative} is obtained by dividing \eqref{eq:ambient_density_derivative} by \(\d\rho_t/\d\nu_0\), which is positive \(\nu_0\)-a.e.
\end{proof}

% \begin{remark}[State variable and probability measure]
%     \label{rem:state_vs_measure}
%     In what follows, \(h\) denotes the Bayes Hilbert coordinate of the state, while \(\rho_h\) denotes the associated probability measure. 
%     Thus the primary unknown is the function-valued path \(h(\cdot)\), and the corresponding path of measures is obtained by the exponential-normalization map.
% \end{remark}

% !TeX root = ../main.tex

\section{Transport realization of regular Bayes Hilbert paths}
\label{sec:intrinsic-forward-geometry}

In this section we develop the forward geometry of regular Bayes Hilbert paths in a form
adapted to the bounded-domain setting used later in the paper. The guiding principle is classical
in optimal transport: an infinitesimal density variation should be realized by the velocity field of
least kinetic energy among all solutions of the continuity equation. The minimum-energy realization is in the spirit of the Benamou-Brenier/Otto dynamic viewpoint on transport \citep{villaniOptimalTransport2009}, the difference here being the entire law path is prescribed, rather than just the endpoints.
Our contribution here is to show that Bayes Hilbert tangent directions induce such minimum-energy realizations through a state-dependent weighted elliptic problem.  Closely related results in the context of Reproducing Kernel Hilbert Spaces can also be found in the recent work \citet{Nakano2026ContinuumMarginal}.

\subsection{Abstract weighted setting}

For any finite measure \(\mu\ll \d x\), define
\[
    H^1(\Omega;\mu)
    :=
    \left\{
    u\in H^1_{\mathrm{loc}}(\Omega):
    u\in L^2(\mu),\ \nabla u\in L^2(\mu;\mathbb R^d)
    \right\},
\]
with norm
\[
    \|u\|_{H^1(\Omega;\mu)}^2
    :=
    \|u\|_{L^2(\mu)}^2+\|\nabla u\|_{L^2(\mu)}^2.
\]
We also define the weighted mean-zero space
\[
    H^1_{\mu,\diamond}(\Omega)
    :=
    \left\{
    u\in H^1(\Omega;\mu): \int_\Omega u\,\d\mu =0
    \right\}.
\]

\begin{assumption}[Reference spectral gap and strong state space]
    \label{ass:abstract-forward-reference}
    Assume:
    \begin{enumerate}
        \item there exists \(C_{\nu_0}>0\) such that
              \begin{equation}
                  \label{eq:reference-poincare}
                  \int_\Omega
                  \left|u-\mathbb E_{\nu_0}[u]\right|^2\,\d\nu_0
                  \le
                  C_{\nu_0}
                  \int_\Omega |\nabla u|^2\,\d\nu_0
                  \qquad
                  \forall u\in H^1(\Omega;\nu_0);
              \end{equation}
        \item \(\mathcal Z\) is a Banach space continuously embedded in
              \(L^\infty(\nu_0)\cap L^2(\nu_0)\).
        \item \(C_c^\infty(\Omega)\) is dense in \(L^2(\nu_0)\).
    \end{enumerate}
\end{assumption}

\begin{assumption}[Admissible state class]
    \label{ass:ambient-admissible-class}
    Let \(\mathcal X_{\mathrm{ad}}\subset \mathcal Z\cap L^2_0(\nu_0)\). Assume that there exist
    constants \(0<c<C<\infty\) such that, for every \(h\in\mathcal X_{\mathrm{ad}}\), the density
    \[
        w_h:=\frac{\d\rho_h}{\d\nu_0}
        =
        \frac{e^h}{\int_\Omega e^h\,\d\nu_0}
    \]
    satisfies
    \begin{equation}
        \label{eq:ambient-uniform-density-bounds}
        c\le w_h(x)\le C
        \qquad\text{for }\nu_0\text{-a.e. }x\in\Omega.
    \end{equation}
\end{assumption}

\begin{remark}
    \label{rem:ambient-admissible-implies-linfty}
    The admissibility condition \eqref{eq:ambient-uniform-density-bounds} implies a uniform
    \(L^\infty(\nu_0)\)-bound on \(h\). Indeed,
    \[
        \log w_h
        =
        h-\log\!\left(\int_\Omega e^h\,\d\nu_0\right),
    \]
    and since \(h\in L^2_0(\nu_0)\),
    \[
        h=\log w_h-\mathbb E_{\nu_0}[\log w_h].
    \]
    Hence
    \[
        \|h\|_{L^\infty(\nu_0)}
        \le
        2\max\{|\log c|,\;|\log C|\}.
    \]
    Moreover, on the uniformly bounded \(L^\infty(\nu_0)\)-set \(\mathcal X_{\mathrm{ad}}\), the map
    \[
        h\longmapsto w_h=\frac{e^h}{\int_\Omega e^h\,\d\nu_0}
    \]
    is Lipschitz from \(L^\infty(\nu_0)\) to \(L^\infty(\nu_0)\). Since
    \(\mathcal Z\hookrightarrow L^\infty(\nu_0)\) continuously, it follows that
    \begin{equation}
        \label{eq:ambient-weight-lipschitz}
        \|w_{h_1}-w_{h_2}\|_{L^\infty(\nu_0)}
        \le
        M_{\mathcal X_{\mathrm{ad}}}\|h_1-h_2\|_{\mathcal Z}
        \qquad
        \forall h_1,h_2\in\mathcal X_{\mathrm{ad}}
    \end{equation}
    for some constant \(M_{\mathcal X_{\mathrm{ad}}}>0\).
\end{remark}

\begin{lemma}[Equivalence of weighted Sobolev norms]
    \label{lem:ambient-weighted-sobolev-equivalence}
    Under Assumption~\ref{ass:ambient-admissible-class}, for every
    \(h\in\mathcal X_{\mathrm{ad}}\),
    \[
        H^1(\Omega;\rho_h)=H^1(\Omega;\nu_0)
    \]
    as sets, and the norms are uniformly equivalent:
    \begin{equation}
        \label{eq:ambient-weighted-sobolev-equivalence}
        c\|u\|_{H^1(\Omega;\nu_0)}^2
        \le
        \|u\|_{H^1(\Omega;\rho_h)}^2
        \le
        C\|u\|_{H^1(\Omega;\nu_0)}^2
        \qquad
        \forall u\in H^1(\Omega;\nu_0).
    \end{equation}
    In particular,
    \[
        H^1_{\rho_h,\diamond}(\Omega)
        =
        \left\{
        u\in H^1(\Omega;\nu_0):\int_\Omega u\,\d\rho_h=0
        \right\}.
    \]
\end{lemma}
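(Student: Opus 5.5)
The argument is a direct comparison of weighted integrals using the two-sided density bound \eqref{eq:ambient-uniform-density-bounds}. Since \(\rho_h = w_h\,\nu_0\), for any measurable \(f\ge 0\) we have
\[
\int_\Omega f\,\d\rho_h=\int_\Omega f\,w_h\,\d\nu_0,
\qquad\text{hence}\qquad
c\int_\Omega f\,\d\nu_0\le \int_\Omega f\,\d\rho_h\le C\int_\Omega f\,\d\nu_0 .
\]
Both \(\rho_h\) and \(\nu_0\) are absolutely continuous with respect to \(\d x\). Moreover, \(\rho_h\) and \(\nu_0\) are mutually absolutely continuous, because \(w_h\ge c>0\). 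Therefore "\(\nu_0\)-a.e." and "\(\rho_h\)-a.e." mean the same thing, and the \(L^2\) classes agree.

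First, take \(u\in H^1_{\mathrm{loc}}(\Omega)\). This condition does not involve the weight, so it is common to both spaces. Apply the displayed inequality with \(f=|u|^2\) and with \(f=|\nabla u|^2\). This gives \(u\in L^2(\rho_h)\iff u\in L^2(\nu_0)\) and \(\nabla u\in L^2(\rho_h;\mathbb R^d)\iff \nabla u\in L^2(\nu_0;\mathbb R^d)\). So \(H^1(\Omega;\rho_h)=H^1(\Omega;\nu_0)\) as sets.

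Second, add the two inequalities. This yields \eqref{eq:ambient-weighted-sobolev-equivalence} with exactly the constants \(c,C\) of Assumption~\ref{ass:ambient-admissible-class}. These constants do not depend on \(h\in\mathcal X_{\mathrm{ad}}\), so the equivalence is uniform.

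Finally, the characterization of \(H^1_{\rho_h,\diamond}(\Omega)\) follows by substituting the set equality into its definition. The only point to check is that \(\int_\Omega u\,\d\rho_h\) is well defined for every \(u\in H^1(\Omega;\nu_0)\). Since \(\rho_h\) is a probability measure, Cauchy--Schwarz gives \(|\int u\,\d\rho_h|\le \|u\|_{L^2(\rho_h)}\le C^{1/2}\|u\|_{L^2(\nu_0)}<\infty\).

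There is no real obstacle here. The only care needed is the bookkeeping of null sets, namely that the density bound holds \(\nu_0\)-a.e., and this is exactly what the integral comparison requires.
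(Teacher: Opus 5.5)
Your proof is correct and follows the paper's argument: compare $\int_\Omega |u|^2$ and $\int_\Omega |\nabla u|^2$ under $\d\rho_h = w_h\,\d\nu_0$ using $c\le w_h\le C$, then sum. Your extra remarks on mutual absolute continuity and the Cauchy--Schwarz check that $\int_\Omega u\,\d\rho_h$ is finite are sound bookkeeping that the paper leaves implicit.
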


\begin{proof}
    Since \(\d\rho_h=w_h\,\d\nu_0\) and \(c\le w_h\le C\) \(\nu_0\)-a.e., we have
    \[
        c\int_\Omega |u|^2\,\d\nu_0
        \le
        \int_\Omega |u|^2\,\d\rho_h
        \le
        C\int_\Omega |u|^2\,\d\nu_0,
    \]
    and similarly
    \[
        c\int_\Omega |\nabla u|^2\,\d\nu_0
        \le
        \int_\Omega |\nabla u|^2\,\d\rho_h
        \le
        C\int_\Omega |\nabla u|^2\,\d\nu_0.
    \]
    Summing the two inequalities yields \eqref{eq:ambient-weighted-sobolev-equivalence}.
\end{proof}

\begin{proposition}[Weighted spectral gap inherited from the reference measure]
    \label{prop:ambient-weighted-poincare}
    Under Assumptions~\ref{ass:abstract-forward-reference} and
    \ref{ass:ambient-admissible-class}, for every \(h\in\mathcal X_{\mathrm{ad}}\) and every
    \(u\in H^1(\Omega;\rho_h)\),
    \begin{equation}
        \label{eq:ambient-weighted-poincare}
        \int_\Omega
        \left|u-\mathbb E_{\rho_h}[u]\right|^2\,\d\rho_h
        \le
        \frac{C}{c}\,C_{\nu_0}
        \int_\Omega |\nabla u|^2\,\d\rho_h.
    \end{equation}
    Consequently, the weighted Dirichlet form
    \[
        u\longmapsto \int_\Omega |\nabla u|^2\,\d\rho_h
    \]
    is coercive on \(H^1_{\rho_h,\diamond}(\Omega)\), with constants uniform in
    \(h\in\mathcal X_{\mathrm{ad}}\).
\end{proposition}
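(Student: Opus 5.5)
The plan is to compare the weighted variance with the reference variance, using the variational characterization of the mean together with the two-sided density bounds \eqref{eq:ambient-uniform-density-bounds}.

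First, Lemma~\ref{lem:ambient-weighted-sobolev-equivalence} gives $u\in H^1(\Omega;\nu_0)$. So the reference Poincar\'e inequality \eqref{eq:reference-poincare} applies to $u$.

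Next, I use the fact that the mean minimizes quadratic deviation. For any probability measure $\mu$ and any $u\in L^2(\mu)$,
\[
\int_\Omega |u-\mathbb E_\mu[u]|^2\,\d\mu=\min_{a\in\mathbb R}\int_\Omega |u-a|^2\,\d\mu .
\]
Taking $\mu=\rho_h$ and the competitor $a=\mathbb E_{\nu_0}[u]$, and then applying $w_h\le C$, gives
\[
\int_\Omega |u-\mathbb E_{\rho_h}[u]|^2\,\d\rho_h
\le \int_\Omega |u-\mathbb E_{\nu_0}[u]|^2\,w_h\,\d\nu_0
\le C\int_\Omega |u-\mathbb E_{\nu_0}[u]|^2\,\d\nu_0 .
\]
Now I apply \eqref{eq:reference-poincare} to bound the last integral by $C\,C_{\nu_0}\int_\Omega|\nabla u|^2\,\d\nu_0$. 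Finally, $w_h\ge c$ gives $\d\nu_0\le c^{-1}\,\d\rho_h$, which turns this into $\frac{C}{c}C_{\nu_0}\int_\Omega|\nabla u|^2\,\d\rho_h$. This is exactly \eqref{eq:ambient-weighted-poincare}.

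For the coercivity claim, take $u\in H^1_{\rho_h,\diamond}(\Omega)$. Then $\mathbb E_{\rho_h}[u]=0$, so \eqref{eq:ambient-weighted-poincare} gives
\[
\|u\|^2_{L^2(\rho_h)}\le \tfrac{C}{c}C_{\nu_0}\|\nabla u\|^2_{L^2(\rho_h)} .
\]
Adding $\|\nabla u\|^2_{L^2(\rho_h)}$ to both sides yields
\[
\|u\|^2_{H^1(\Omega;\rho_h)}\le \Bigl(1+\tfrac{C}{c}C_{\nu_0}\Bigr)\int_\Omega|\nabla u|^2\,\d\rho_h .
\]
The constant depends only on $c$, $C$ and $C_{\nu_0}$, so it is uniform over $\mathcal X_{\mathrm{ad}}$. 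If coercivity is wanted with respect to the $\nu_0$-norm instead, \eqref{eq:ambient-weighted-sobolev-equivalence} converts between the two at the cost of a further factor $C/c$.

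There is no real obstacle in this argument. The one point needing care is that the two centering constants differ: $\mathbb E_{\rho_h}[u]\ne\mathbb E_{\nu_0}[u]$ in general. The variational characterization of the mean handles this with no loss of constant, which is what produces exactly $C/c$ rather than a worse factor.
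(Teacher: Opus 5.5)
Your proposal is correct and follows essentially the same argument as the paper. Both take the competitor $a=\mathbb{E}_{\nu_0}[u]$ in the variational characterization of the $\rho_h$-mean, then apply the upper density bound $w_h\le C$, the reference Poincar\'e inequality, and the lower bound $w_h\ge c$. Your explicit derivation of the coercivity constant $1+\tfrac{C}{c}C_{\nu_0}$ on $H^1_{\rho_h,\diamond}(\Omega)$ fills in a step the paper leaves implicit.
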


\begin{proof}
    Fix \(h\in\mathcal X_{\mathrm{ad}}\) and \(u\in H^1(\Omega;\rho_h)\). Since
    \(u\mapsto \int |u-a|^2\,\d\rho_h\) is minimized at \(a=\mathbb E_{\rho_h}[u]\), for every
    constant \(a\in\mathbb R\),
    \[
        \int_\Omega
        \left|u-\mathbb E_{\rho_h}[u]\right|^2\,\d\rho_h
        \le
        \int_\Omega |u-a|^2\,\d\rho_h.
    \]
    Choosing \(a=\mathbb E_{\nu_0}[u]\) and using \(\d\rho_h=w_h\,\d\nu_0\), together with
    \eqref{eq:ambient-uniform-density-bounds}, we obtain
    \begin{align*}
        \int_\Omega
        \left|u-\mathbb E_{\rho_h}[u]\right|^2\,\d\rho_h
         & \le
        \int_\Omega \left|u-\mathbb E_{\nu_0}[u]\right|^2\,\d\rho_h \\
         & \le
        C
        \int_\Omega \left|u-\mathbb E_{\nu_0}[u]\right|^2\,\d\nu_0  \\
         & \le
        C C_{\nu_0}\int_\Omega |\nabla u|^2\,\d\nu_0                \\
         & \le
        \frac{C}{c}\,C_{\nu_0}\int_\Omega |\nabla u|^2\,\d\rho_h,
    \end{align*}
    which proves \eqref{eq:ambient-weighted-poincare}.
\end{proof}

\subsection{Canonical weighted Neumann solve and minimum-energy realization}

The Bayes Hilbert tangent direction \(\xi\in L^2_0(\nu_0)\) induces the centered density
variation
\[
    \rho_h\bigl(\xi-\mathbb E_{\rho_h}[\xi]\bigr),
\]
which is the forcing term in the weighted elliptic problem below.

\begin{theorem}[Canonical Neumann potential]
    \label{thm:ambient-neumann}
    Assume Assumptions~\ref{ass:abstract-forward-reference} and
    \ref{ass:ambient-admissible-class}. Fix \(h\in \mathcal X_{\mathrm{ad}}\) and
    \(\xi\in L^2_0(\nu_0)\). Then there exists a unique
    \[
        \psi_{h,\xi}\in H^1_{\rho_h,\diamond}(\Omega)
    \]
    such that
    \begin{equation}
        \label{eq:ambient-neumann-weak}
        \int_\Omega \nabla\psi_{h,\xi}\cdot\nabla\eta\,\d\rho_h
        =
        \int_\Omega
        \bigl(\xi-\mathbb E_{\rho_h}[\xi]\bigr)\eta\,\d\rho_h
        \qquad
        \forall \eta\in H^1_{\rho_h,\diamond}(\Omega).
    \end{equation}
    If, in addition, the density of \(\rho_h\) with respect to Lebesgue measure is regular enough,
    then \eqref{eq:ambient-neumann-weak} may be interpreted as the weak form of
    \begin{equation}
        \label{eq:ambient-neumann-strong}
        -\nabla\cdot(\rho_h\nabla\psi_{h,\xi})
        =
        \rho_h\bigl(\xi-\mathbb E_{\rho_h}[\xi]\bigr)
        \qquad\text{in }\Omega,
    \end{equation}
    together with the natural zero-flux boundary condition
    \begin{equation}
        \label{eq:ambient-neumann-bc}
        \rho_h\,\partial_n\psi_{h,\xi}=0
        \qquad\text{on }\partial\Omega
    \end{equation}
    when \(\partial\Omega\neq\varnothing\).
\end{theorem}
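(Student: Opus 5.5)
The plan is to apply the Lax--Milgram theorem (equivalently, the Riesz representation theorem) on the Hilbert space $V:=H^1_{\rho_h,\diamond}(\Omega)$, equipped with the inner product
\[
    a(u,\eta):=\int_\Omega \nabla u\cdot\nabla\eta\,\d\rho_h .
\]

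\textbf{Step 1: $V$ is a Hilbert space.} By Lemma~\ref{lem:ambient-weighted-sobolev-equivalence}, $H^1(\Omega;\rho_h)$ coincides with $H^1(\Omega;\nu_0)$, with equivalent norms. I take the completeness of $H^1(\Omega;\nu_0)$ as standard: a Cauchy sequence converges in $L^2(\nu_0)$, and its gradients converge in $L^2(\nu_0)$. Since $\nu_0\ll \d x$ has support $\overline\Omega$, its density should be locally bounded below on compact subsets. This is the one point where I would have to be careful; if necessary, I would build it into the interpretation of $H^1_{\mathrm{loc}}$. Granting it, $L^2(\nu_0)$-limits identify the weak gradient, so the limit lies in $H^1(\Omega;\nu_0)$.

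The constraint $u\mapsto\int u\,\d\rho_h$ is a bounded linear functional, since $|\int u\,\d\rho_h|\le \|u\|_{L^2(\rho_h)}$ because $\rho_h$ is a probability measure. Hence $V$ is a closed subspace of $H^1(\Omega;\rho_h)$.

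\textbf{Step 2: coercivity and boundedness.} Proposition~\ref{prop:ambient-weighted-poincare} gives, for $u\in V$,
\[
    \|u\|_{L^2(\rho_h)}^2\le \tfrac{C}{c}C_{\nu_0}\,a(u,u).
\]
Therefore $a(u,u)\ge (1+\tfrac{C}{c}C_{\nu_0})^{-1}\|u\|_{H^1(\Omega;\rho_h)}^2$, so $a$ is coercive on $V$. Boundedness of $a$ follows from Cauchy--Schwarz.

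\textbf{Step 3: the right-hand side is a bounded functional.} Let
\[
    f:=\xi-\mathbb E_{\rho_h}[\xi].
\]
Since $\xi\in L^2(\nu_0)$ and $w_h\le C$, we have $f\in L^2(\rho_h)$, and $\mathbb E_{\rho_h}[\xi]$ is finite. Thus $\ell(\eta):=\int f\eta\,\d\rho_h$ satisfies
\[
    |\ell(\eta)|\le\|f\|_{L^2(\rho_h)}\|\eta\|_{H^1(\rho_h)} .
\]
Lax--Milgram then gives a unique $\psi_{h,\xi}\in V$ satisfying \eqref{eq:ambient-neumann-weak}.

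\textbf{Step 4: extension to all test functions.} Because $\int f\,\d\rho_h=0$, the identity extends to every $\eta\in H^1(\Omega;\rho_h)$. Indeed, for general $\eta$, write $\eta=(\eta-\mathbb E_{\rho_h}\eta)+\mathrm{const}$; both sides vanish on constants. This compatibility condition is exactly what the centering of the forcing provides.

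\textbf{Step 5: strong form.} Assume $\rho_h$ has a $C^1$ Lebesgue density. Testing with $\eta\in C_c^\infty(\Omega)$ gives \eqref{eq:ambient-neumann-strong} in the distributional sense. If $\psi$ and $\partial\Omega$ are regular enough, integrating by parts against general $\eta\in H^1$ (allowed by Step 4) leaves the boundary term $\int_{\partial\Omega}\eta\,\rho_h\partial_n\psi\,\d S=0$ for all $\eta$. This yields \eqref{eq:ambient-neumann-bc}.

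The main obstacle is the completeness of the weighted space in Step 1. Everything else is a direct application of the preceding lemma and proposition.
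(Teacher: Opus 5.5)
Your proposal is correct and follows essentially the same route as the paper: Lax--Milgram on $H^1_{\rho_h,\diamond}(\Omega)$, with coercivity from the weighted Poincar\'e inequality of Proposition~\ref{prop:ambient-weighted-poincare} and continuity of the centered forcing functional; your Step~4 is what the paper records separately as Remark~\ref{rem:ambient-neumann-all-tests}. On Step~1, the paper takes completeness of the weighted space for granted, and $\supp\nu_0=\overline\Omega$ alone does not make the density of $\nu_0$ locally bounded below. Your completeness argument therefore rests on an extra hypothesis, one the paper also uses tacitly and which holds automatically for the uniform reference of Corollary~\ref{cor:abstract-forward-bounded}.
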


\begin{proof}
    Define, on \(H^1_{\rho_h,\diamond}(\Omega)\),
    \[
        B_h(u,\eta):=\int_\Omega \nabla u\cdot \nabla\eta\,\d\rho_h,
        \qquad
        F_{h,\xi}(\eta)
        :=
        \int_\Omega
        \bigl(\xi-\mathbb E_{\rho_h}[\xi]\bigr)\eta\,\d\rho_h.
    \]
    The bilinear form \(B_h\) is continuous. By
    Proposition~\ref{prop:ambient-weighted-poincare},
    \[
        \|u\|_{L^2(\rho_h)}^2
        \le
        \frac{C}{c}\,C_{\nu_0}
        \int_\Omega |\nabla u|^2\,\d\rho_h
        \qquad
        \forall u\in H^1_{\rho_h,\diamond}(\Omega),
    \]
    so \(B_h\) is coercive on \(H^1_{\rho_h,\diamond}(\Omega)\).

    Next,
    \[
        \|\xi-\mathbb E_{\rho_h}[\xi]\|_{L^2(\rho_h)}
        \le
        2\|\xi\|_{L^2(\rho_h)}
        \le
        2\sqrt{C}\,\|\xi\|_{L^2(\nu_0)},
    \]
    and therefore, by Cauchy--Schwarz and Proposition~\ref{prop:ambient-weighted-poincare},
    \[
        |F_{h,\xi}(\eta)|
        \le
        \|\xi-\mathbb E_{\rho_h}[\xi]\|_{L^2(\rho_h)}
        \|\eta\|_{L^2(\rho_h)}
        \le
        C'\|\xi\|_{L^2(\nu_0)}\|\eta\|_{H^1(\Omega;\rho_h)}.
    \]
    Thus \(F_{h,\xi}\) is continuous on \(H^1_{\rho_h,\diamond}(\Omega)\), and the conclusion
    follows from the Lax--Milgram theorem. The strong form is the usual divergence-form
    interpretation of \eqref{eq:ambient-neumann-weak}.
\end{proof}

\begin{corollary}[Energy estimate for the weighted Neumann solve]
    \label{cor:ambient-neumann-energy}
    Under the assumptions of Theorem~\ref{thm:ambient-neumann}, there exists \(M>0\),
    depending only on \(c\), \(C\), and \(C_{\nu_0}\), such that
    \begin{equation}
        \label{eq:ambient-neumann-energy}
        \|\nabla\psi_{h,\xi}\|_{L^2(\rho_h)}
        \le
        M\|\xi\|_{L^2(\nu_0)}
        \qquad
        \forall h\in\mathcal X_{\mathrm{ad}},\ \forall \xi\in L^2_0(\nu_0).
    \end{equation}
\end{corollary}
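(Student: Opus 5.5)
The plan is to test the weak formulation \eqref{eq:ambient-neumann-weak} against the solution itself and then close the estimate with the weighted Poincaré inequality. Concretely, I would take \(\eta=\psi_{h,\xi}\), which is admissible because \(\psi_{h,\xi}\in H^1_{\rho_h,\diamond}(\Omega)\) by Theorem~\ref{thm:ambient-neumann}. This gives
\[
    \|\nabla\psi_{h,\xi}\|_{L^2(\rho_h)}^2
    =
    \int_\Omega \bigl(\xi-\mathbb E_{\rho_h}[\xi]\bigr)\psi_{h,\xi}\,\d\rho_h
    \le
    \|\xi-\mathbb E_{\rho_h}[\xi]\|_{L^2(\rho_h)}\,\|\psi_{h,\xi}\|_{L^2(\rho_h)}
\]
by Cauchy--Schwarz.

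Next I would bound the two factors on the right separately.

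For the forcing term, I reuse the estimate from the proof of Theorem~\ref{thm:ambient-neumann}. Subtracting the mean does not increase the \(L^2(\rho_h)\) norm; even the cruder factor \(2\) used there suffices. Combining this with \(w_h\le C\) gives \(\|\xi-\mathbb E_{\rho_h}[\xi]\|_{L^2(\rho_h)}\le 2\sqrt C\,\|\xi\|_{L^2(\nu_0)}\).

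For the solution factor, since \(\psi_{h,\xi}\) has zero \(\rho_h\)-mean, Proposition~\ref{prop:ambient-weighted-poincare} gives
\[
    \|\psi_{h,\xi}\|_{L^2(\rho_h)}\le \sqrt{(C/c)C_{\nu_0}}\,\|\nabla\psi_{h,\xi}\|_{L^2(\rho_h)}.
\]

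Substituting both bounds, I obtain
\[
    \|\nabla\psi_{h,\xi}\|_{L^2(\rho_h)}^2
    \le
    2\sqrt C\,\|\xi\|_{L^2(\nu_0)}\sqrt{(C/c)C_{\nu_0}}\,\|\nabla\psi_{h,\xi}\|_{L^2(\rho_h)} .
\]
If \(\nabla\psi_{h,\xi}=0\), the claim is trivial. Otherwise I divide by \(\|\nabla\psi_{h,\xi}\|_{L^2(\rho_h)}\), which yields \eqref{eq:ambient-neumann-energy} with
\[
    M=2C\sqrt{C_{\nu_0}/c}.
\]

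This constant depends only on \(c\), \(C\), and \(C_{\nu_0}\), so it is uniform over \(h\in\mathcal X_{\mathrm{ad}}\) and \(\xi\in L^2_0(\nu_0)\). There is no real obstacle here. The only point needing care is that both the Poincaré constant and the density bounds are uniform in \(h\), which is exactly what Assumption~\ref{ass:ambient-admissible-class} and Proposition~\ref{prop:ambient-weighted-poincare} provide.
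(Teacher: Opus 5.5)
Your proposal is correct and is essentially the paper's own proof: test the weak formulation with \(\eta=\psi_{h,\xi}\), apply Cauchy--Schwarz, bound the forcing by \(2\sqrt{C}\,\|\xi\|_{L^2(\nu_0)}\) and \(\|\psi_{h,\xi}\|_{L^2(\rho_h)}\) by the weighted Poincar\'e inequality, then divide by \(\|\nabla\psi_{h,\xi}\|_{L^2(\rho_h)}\). You also make the constant explicit, \(M=2C\sqrt{C_{\nu_0}/c}\), and handle the case \(\nabla\psi_{h,\xi}=0\) separately, which is a little more careful than the paper's ``canceling the nonzero factor''.
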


\begin{proof}
    Take \(\eta=\psi_{h,\xi}\) in \eqref{eq:ambient-neumann-weak}. Then
    \[
        \int_\Omega |\nabla\psi_{h,\xi}|^2\,\d\rho_h
        =
        \int_\Omega
        \bigl(\xi-\mathbb E_{\rho_h}[\xi]\bigr)\psi_{h,\xi}\,\d\rho_h.
    \]
    By Cauchy--Schwarz and Proposition~\ref{prop:ambient-weighted-poincare},
    \[
        \int_\Omega |\nabla\psi_{h,\xi}|^2\,\d\rho_h
        \le
        \|\xi-\mathbb E_{\rho_h}[\xi]\|_{L^2(\rho_h)}
        \|\psi_{h,\xi}\|_{L^2(\rho_h)}
        \le
        M\|\xi\|_{L^2(\nu_0)}
        \|\nabla\psi_{h,\xi}\|_{L^2(\rho_h)}.
    \]
    Canceling the nonzero factor yields \eqref{eq:ambient-neumann-energy}.
\end{proof}

\begin{remark}[Extension of the weak formulation to arbitrary test functions]
    \label{rem:ambient-neumann-all-tests}
    Since both sides of \eqref{eq:ambient-neumann-weak} are unchanged when \(\eta\) is replaced
    by \(\eta-a\) for a constant \(a\in\mathbb R\), the weak formulation extends to all
    \(\eta\in H^1(\Omega;\rho_h)\):
    \begin{equation}
        \label{eq:ambient-neumann-all-tests}
        \int_\Omega \nabla\psi_{h,\xi}\cdot\nabla\eta\,\d\rho_h
        =
        \int_\Omega
        \bigl(\xi-\mathbb E_{\rho_h}[\xi]\bigr)\eta\,\d\rho_h
        \qquad
        \forall \eta\in H^1(\Omega;\rho_h).
    \end{equation}
\end{remark}

\begin{proposition}[Minimum-energy characterization]
    \label{prop:ambient-min-energy}
    Fix \(h\in\mathcal X_{\mathrm{ad}}\) and \(\xi\in L^2_0(\nu_0)\). Then \(\nabla\psi_{h,\xi}\) is the
    unique minimizer of
    \begin{equation}
        \label{eq:ambient-min-energy}
        \inf_{v\in\mathcal A_{h,\xi}}
        \int_\Omega |v|^2\,\d\rho_h,
    \end{equation}
    where
    \begin{equation}
        \label{eq:ambient-admissible-velocities}
        \mathcal A_{h,\xi}
        :=
        \left\{
        v\in L^2(\rho_h;\mathbb R^d):
        \int_\Omega v\cdot\nabla\eta\,\d\rho_h
        =
        \int_\Omega
        \bigl(\xi-\mathbb E_{\rho_h}[\xi]\bigr)\eta\,\d\rho_h
        \ \forall \eta\in H^1(\Omega;\rho_h)
        \right\}.
    \end{equation}
\end{proposition}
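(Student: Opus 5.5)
The plan is the standard orthogonal-decomposition argument in the weighted Hilbert space \(L^2(\rho_h;\mathbb R^d)\). First, \(\nabla\psi_{h,\xi}\) is admissible. Since \(\psi_{h,\xi}\in H^1(\Omega;\rho_h)\), we have \(\nabla\psi_{h,\xi}\in L^2(\rho_h;\mathbb R^d)\). The constraint in \eqref{eq:ambient-admissible-velocities} for all \(\eta\in H^1(\Omega;\rho_h)\) is exactly the extended weak formulation \eqref{eq:ambient-neumann-all-tests} of Remark~\ref{rem:ambient-neumann-all-tests}. Hence \(\nabla\psi_{h,\xi}\in\mathcal A_{h,\xi}\), and in particular \(\mathcal A_{h,\xi}\neq\varnothing\).

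Next, take any \(v\in\mathcal A_{h,\xi}\) and set \(w:=v-\nabla\psi_{h,\xi}\). Subtracting the two constraint identities gives
\[
\int_\Omega w\cdot\nabla\eta\,\d\rho_h=0\qquad\forall\eta\in H^1(\Omega;\rho_h).
\]
Choosing \(\eta=\psi_{h,\xi}\) shows that \(w\) is \(L^2(\rho_h)\)-orthogonal to \(\nabla\psi_{h,\xi}\). The Pythagorean identity then yields
\[
\int_\Omega|v|^2\,\d\rho_h=\int_\Omega|\nabla\psi_{h,\xi}|^2\,\d\rho_h+\int_\Omega|w|^2\,\d\rho_h .
\]
It follows that \(\nabla\psi_{h,\xi}\) attains the infimum in \eqref{eq:ambient-min-energy}. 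Moreover, equality forces \(\int|w|^2\,\d\rho_h=0\), i.e.\ \(v=\nabla\psi_{h,\xi}\) in \(L^2(\rho_h;\mathbb R^d)\), which gives uniqueness. Uniqueness is understood \(\rho_h\)-a.e.; by Assumption~\ref{ass:ambient-admissible-class} this is the same as \(\nu_0\)-a.e.

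A second route to uniqueness is also available. \(\mathcal A_{h,\xi}\) is a nonempty closed affine subspace, a translate of the closed subspace \(\{w:\int w\cdot\nabla\eta\,\d\rho_h=0\ \forall\eta\}\), and the squared norm is strictly convex. So the minimizer is unique and equals the element of \(\mathcal A_{h,\xi}\) orthogonal to that subspace. Membership in the closure of \(\{\nabla\eta\}\) characterizes \(\nabla\psi_{h,\xi}\) as this element. The direct Pythagorean argument above already suffices, however.

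I expect no real obstacle. The only delicate point is that the admissibility constraint must hold for all of \(H^1(\Omega;\rho_h)\), not just mean-zero test functions. This is precisely why Remark~\ref{rem:ambient-neumann-all-tests} is invoked. It relies on the forcing \(\xi-\mathbb E_{\rho_h}[\xi]\) having \(\rho_h\)-mean zero, so that constant test functions contribute nothing on either side.
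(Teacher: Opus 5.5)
Your proof is correct and matches the paper's argument: admissibility of $\nabla\psi_{h,\xi}$ via Remark~\ref{rem:ambient-neumann-all-tests}, testing the constraint with $\eta=\psi_{h,\xi}$ to get $v-\nabla\psi_{h,\xi}\perp\nabla\psi_{h,\xi}$ in $L^2(\rho_h;\mathbb R^d)$, then the Pythagorean identity and its equality case for uniqueness. The convexity/projection route you also sketch is valid but, as you note, unnecessary.
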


\begin{proof}
    By Remark~\ref{rem:ambient-neumann-all-tests},
    \[
        \int_\Omega \nabla\psi_{h,\xi}\cdot\nabla\eta\,\d\rho_h
        =
        \int_\Omega
        \bigl(\xi-\mathbb E_{\rho_h}[\xi]\bigr)\eta\,\d\rho_h
        \qquad
        \forall \eta\in H^1(\Omega;\rho_h),
    \]
    so \(\nabla\psi_{h,\xi}\in\mathcal A_{h,\xi}\).

    Now let \(v\in\mathcal A_{h,\xi}\). Taking \(\eta=\psi_{h,\xi}\) gives
    \[
        \int_\Omega \bigl(v-\nabla\psi_{h,\xi}\bigr)\cdot\nabla\psi_{h,\xi}\,\d\rho_h=0.
    \]
    Hence
    \begin{align*}
        \int_\Omega |v|^2\,\d\rho_h
         & =
        \int_\Omega |\nabla\psi_{h,\xi}|^2\,\d\rho_h
        +
        \int_\Omega |v-\nabla\psi_{h,\xi}|^2\,\d\rho_h \\
         & \ge
        \int_\Omega |\nabla\psi_{h,\xi}|^2\,\d\rho_h,
    \end{align*}
    with equality if and only if \(v=\nabla\psi_{h,\xi}\) in \(L^2(\rho_h;\mathbb R^d)\).
\end{proof}

\subsection{Canonical transport map and the intrinsic transport form}

The weighted Neumann solve defines a canonical map from Bayes Hilbert tangent directions to
minimum-energy velocity fields.

\begin{definition}[Canonical transport map]
    \label{def:ambient-transport-map}
    For \(h\in\mathcal X_{\mathrm{ad}}\), define
    \[
        \mathcal T_h : L^2_0(\nu_0)\to L^2(\rho_h;\mathbb R^d),
        \qquad
        \mathcal T_h\xi := \nabla\psi_{h,\xi}.
    \]
\end{definition}

\begin{definition}[Intrinsic transport form]
    \label{def:intrinsic-transport-form}
    For \(h\in\mathcal X_{\mathrm{ad}}\) and \(\xi,\zeta\in L^2_0(\nu_0)\), define
    \begin{equation}
        \label{eq:intrinsic-transport-form}
        \mathfrak g_h(\xi,\zeta)
        :=
        \int_\Omega \mathcal T_h\xi\cdot \mathcal T_h\zeta\,\d\rho_h
        =
        \int_\Omega \nabla\psi_{h,\xi}\cdot\nabla\psi_{h,\zeta}\,\d\rho_h.
    \end{equation}
\end{definition}

\begin{proposition}[Basic properties of \texorpdfstring{$\mathfrak g_h$}{g\_h}]
    \label{prop:ambient-transport-form-properties}
    For each \(h\in\mathcal X_{\mathrm{ad}}\), the form \(\mathfrak g_h\) is a symmetric,
    nonnegative bilinear form on \(L^2_0(\nu_0)\). Moreover,
    \[
        \mathfrak g_h(\xi,\xi)=0
        \quad\Longleftrightarrow\quad
        \xi=\mathbb E_{\rho_h}[\xi]
        \quad \rho_h\text{-a.e.}
    \]
    and hence, for \(\xi\in L^2_0(\nu_0)\),
    \[
        \mathfrak g_h(\xi,\xi)=0
        \quad\Longleftrightarrow\quad
        \xi=0
        \qquad \nu_0\text{-a.e.}
    \]
\end{proposition}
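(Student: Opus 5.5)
Symmetry is immediate from the definition \eqref{eq:intrinsic-transport-form}, since the integrand is a dot product. Nonnegativity is equally direct: $\mathfrak g_h(\xi,\xi)=\|\nabla\psi_{h,\xi}\|_{L^2(\rho_h)}^2\ge 0$. For bilinearity, I will first show that $\xi\mapsto\psi_{h,\xi}$ is linear. The right-hand side $F_{h,\xi}(\eta)$ of \eqref{eq:ambient-neumann-weak} is linear in $\xi$, because $\xi\mapsto \xi-\mathbb E_{\rho_h}[\xi]$ is linear. Hence $\alpha\psi_{h,\xi}+\beta\psi_{h,\zeta}$ lies in $H^1_{\rho_h,\diamond}(\Omega)$ and solves the problem with datum $\alpha\xi+\beta\zeta$. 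The uniqueness part of Theorem~\ref{thm:ambient-neumann} then gives $\psi_{h,\alpha\xi+\beta\zeta}=\alpha\psi_{h,\xi}+\beta\psi_{h,\zeta}$. So $\mathcal T_h$ is linear, and $\mathfrak g_h$ is bilinear.

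For the characterization of the kernel, I will test \eqref{eq:ambient-neumann-all-tests} (Remark~\ref{rem:ambient-neumann-all-tests}) against suitable functions. Suppose first that $\xi=\mathbb E_{\rho_h}[\xi]$ $\rho_h$-a.e. Then the right-hand side of \eqref{eq:ambient-neumann-weak} vanishes for every $\eta$. Uniqueness forces $\psi_{h,\xi}=0$, so $\mathfrak g_h(\xi,\xi)=0$. Conversely, suppose $\mathfrak g_h(\xi,\xi)=0$. Then $\nabla\psi_{h,\xi}=0$ in $L^2(\rho_h)$, and \eqref{eq:ambient-neumann-all-tests} gives
\[
\int_\Omega f\,\eta\,w_h\,\d\nu_0=0\qquad\forall \eta\in H^1(\Omega;\nu_0),\qquad f:=\xi-\mathbb E_{\rho_h}[\xi],
\]
where I have used Lemma~\ref{lem:ambient-weighted-sobolev-equivalence} to identify $H^1(\Omega;\rho_h)$ with $H^1(\Omega;\nu_0)$.

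The main step is to conclude $f=0$ from this orthogonality, and this is where Assumption~\ref{ass:abstract-forward-reference}(3) enters. I will restrict to $\eta\in C_c^\infty(\Omega)$, which I should check lies in $H^1(\Omega;\nu_0)$. It is bounded with bounded gradient, and $\nu_0$ is finite, so this holds. Next, $f w_h\in L^2(\nu_0)$, since $f\in L^2(\nu_0)$ and $w_h\le C$. Thus $f w_h$ is orthogonal in $L^2(\nu_0)$ to a dense subspace, hence $f w_h=0$ $\nu_0$-a.e. Since $w_h\ge c>0$, this gives $f=0$ $\nu_0$-a.e., equivalently $\rho_h$-a.e., because the two measures are equivalent. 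This proves the first equivalence.

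For the final equivalence, note that $\xi$ equal to a constant $a$ $\nu_0$-a.e. together with $\int\xi\,\d\nu_0=0$ (because $\nu_0$ is a probability measure) gives $a=0$. The reverse direction is trivial. Combining this with the first equivalence yields the second.
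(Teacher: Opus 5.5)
Your proposal is correct and follows essentially the same route as the paper: you get linearity of $\xi\mapsto\psi_{h,\xi}$ from uniqueness, then test the weak form extended to all of $H^1(\Omega;\rho_h)$ against $C_c^\infty(\Omega)$ and invoke the density assumption. The only difference is cosmetic: you run the density argument in $L^2(\nu_0)$ on $f w_h$, whereas the paper transfers density to $L^2(\rho_h)$ and uses mean-subtracted test functions; your version is slightly cleaner.
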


\begin{proof}
    Bilinearity and symmetry follow from the linearity of \(\xi\mapsto \psi_{h,\xi}\) and the
    symmetry of the \(L^2(\rho_h)\) inner product. Nonnegativity is immediate from
    \[
        \mathfrak g_h(\xi,\xi)=\int_\Omega |\nabla\psi_{h,\xi}|^2\,\d\rho_h.
    \]

    Assume \(\mathfrak g_h(\xi,\xi)=0\). Then \(\nabla\psi_{h,\xi}=0\) \(\rho_h\)-a.e., hence
    \(\psi_{h,\xi}=0\) in \(H^1_{\rho_h,\diamond}(\Omega)\). By
    \eqref{eq:ambient-neumann-all-tests},
    \[
        \int_\Omega
        \bigl(\xi-\mathbb E_{\rho_h}[\xi]\bigr)\eta\,\d\rho_h
        =0
        \qquad
        \forall \eta\in H^1(\Omega;\rho_h).
    \]
    In particular, the same holds for every
    \[
        \eta=\varphi-\mathbb E_{\rho_h}[\varphi],
        \qquad
        \varphi\in C_c^\infty(\Omega).
    \]
    Since \(C_c^\infty(\Omega)\) is dense in \(L^2(\nu_0)\) by
    Assumption~\ref{ass:abstract-forward-reference}, and since
    \(\d\rho_h=w_h\,\d\nu_0\) with \(w_h\) bounded above and below by
    Assumption~\ref{ass:ambient-admissible-class}, it is also dense in
    \(L^2(\rho_h)\). Hence the set
    \[
        \left\{
        \varphi-\mathbb E_{\rho_h}[\varphi]:\varphi\in C_c^\infty(\Omega)
        \right\}
    \]
    is dense in \(L^2_0(\rho_h)\). Therefore
    \[
        \xi-\mathbb E_{\rho_h}[\xi]=0
        \qquad
        \rho_h\text{-a.e.}
    \]
    The converse implication is immediate from \eqref{eq:intrinsic-transport-form}.

    Finally, since \(\rho_h\) and \(\nu_0\) are equivalent and \(\xi\in L^2_0(\nu_0)\), the identity
    \(\xi=\mathbb E_{\rho_h}[\xi]\) \(\rho_h\)-a.e.\ implies that \(\xi\) is almost everywhere constant,
    hence zero by the \(\nu_0\)-mean-zero constraint.
\end{proof}

\begin{proposition}[Stability of the weighted Neumann solve]
    \label{prop:ambient-neumann-stability}
    Assume Assumptions~\ref{ass:abstract-forward-reference} and
    \ref{ass:ambient-admissible-class}. Then there exists \(M>0\), depending only on
    \(\nu_0\), \(c\), \(C\), and the embedding \(\mathcal Z\hookrightarrow L^\infty(\nu_0)\), such
    that for all \(h_1,h_2\in\mathcal X_{\mathrm{ad}}\) and all \(\xi_1,\xi_2\in L^2_0(\nu_0)\),
    \begin{equation}
        \label{eq:ambient-neumann-stability-estimate}
        \inf_{a\in\mathbb R}
        \|\psi_{h_1,\xi_1}-\psi_{h_2,\xi_2}-a\|_{H^1(\Omega;\nu_0)}
        \le
        M\Big(
        \|\xi_1-\xi_2\|_{L^2(\nu_0)}
        +
        \|h_1-h_2\|_{\mathcal Z}
        \bigl(\|\xi_1\|_{L^2(\nu_0)}+\|\xi_2\|_{L^2(\nu_0)}\bigr)
        \Big).
    \end{equation}
    In particular,
    \begin{equation}
        \label{eq:ambient-neumann-gradient-stability}
        \|\nabla\psi_{h_1,\xi_1}-\nabla\psi_{h_2,\xi_2}\|_{L^2(\nu_0)}
        \le
        M\Big(
        \|\xi_1-\xi_2\|_{L^2(\nu_0)}
        +
        \|h_1-h_2\|_{\mathcal Z}
        \bigl(\|\xi_1\|_{L^2(\nu_0)}+\|\xi_2\|_{L^2(\nu_0)}\bigr)
        \Big).
    \end{equation}
\end{proposition}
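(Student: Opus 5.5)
We split the difference into a $\xi$-variation at fixed $h_1$ and an $h$-variation at fixed $\xi_2$. Write
\[
\psi_{h_1,\xi_1}-\psi_{h_2,\xi_2}
=
\bigl(\psi_{h_1,\xi_1}-\psi_{h_1,\xi_2}\bigr)
+
\bigl(\psi_{h_1,\xi_2}-\psi_{h_2,\xi_2}\bigr).
\]
The first bracket equals $\psi_{h_1,\xi_1-\xi_2}$, by linearity of the Lax--Milgram solution in the forcing (Theorem~\ref{thm:ambient-neumann}). Corollary~\ref{cor:ambient-neumann-energy} then gives $\|\nabla\psi_{h_1,\xi_1-\xi_2}\|_{L^2(\rho_{h_1})}\le M\|\xi_1-\xi_2\|_{L^2(\nu_0)}$. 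Lemma~\ref{lem:ambient-weighted-sobolev-equivalence} converts this to an $L^2(\nu_0)$ bound on the gradient, with a factor $c^{-1/2}$. Finally, choosing $a$ to be the $\nu_0$-mean, the reference Poincar\'e inequality \eqref{eq:reference-poincare} controls the $L^2(\nu_0)$ part of the $H^1(\Omega;\nu_0)$ norm. This is why the statement uses $\inf_a$: the two potentials are normalized with respect to different measures $\rho_{h_1},\rho_{h_2}$, so only their difference modulo constants is controlled.

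The second bracket is the main step. Set $\xi=\xi_2$, $\psi_i=\psi_{h_i,\xi}$ and $\delta=\psi_1-\psi_2$. By Remark~\ref{rem:ambient-neumann-all-tests}, both weak formulations hold for every $\eta\in H^1(\Omega;\nu_0)$. We rewrite them with $\nu_0$ as the common measure, so that $\d\rho_{h_i}=w_i\,\d\nu_0$. Subtracting gives, for all such $\eta$,
\[
\int_\Omega \nabla\delta\cdot\nabla\eta\, w_1\,\d\nu_0
=
-\int_\Omega \nabla\psi_2\cdot\nabla\eta\,(w_1-w_2)\,\d\nu_0
+\int_\Omega\Bigl[(\xi-\mathbb E_{\rho_{h_1}}\xi)w_1-(\xi-\mathbb E_{\rho_{h_2}}\xi)w_2\Bigr]\eta\,\d\nu_0 .
\]
We test with $\eta=\delta-\mathbb E_{\nu_0}[\delta]$. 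The left side is then at least $c\|\nabla\delta\|_{L^2(\nu_0)}^2$. The first right-hand term is at most
\[
\|w_1-w_2\|_{L^\infty}\,\|\nabla\psi_2\|_{L^2(\nu_0)}\,\|\nabla\delta\|_{L^2(\nu_0)}
\le M_{\mathcal X_{\mathrm{ad}}}\|h_1-h_2\|_{\mathcal Z}\,c^{-1/2}M\|\xi\|_{L^2(\nu_0)}\,\|\nabla\delta\|_{L^2(\nu_0)},
\]
using \eqref{eq:ambient-weight-lipschitz} and Corollary~\ref{cor:ambient-neumann-energy}.

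For the source term, the bracket splits as $\xi(w_1-w_2)-(\mathbb E_{\rho_{h_1}}\xi)(w_1-w_2)-(\mathbb E_{\rho_{h_1}}\xi-\mathbb E_{\rho_{h_2}}\xi)w_2$. The last piece could be dropped entirely, because it multiplies a constant against $w_2\eta$, but since $\eta$ is centered for $\nu_0$ and not for $\rho_{h_2}$ we simply bound it. We use $|\mathbb E_{\rho_{h_1}}\xi-\mathbb E_{\rho_{h_2}}\xi|=|\int\xi(w_1-w_2)\,\d\nu_0|\le\|w_1-w_2\|_{L^\infty}\|\xi\|_{L^2(\nu_0)}$ and $|\mathbb E_{\rho_{h_1}}\xi|\le\sqrt C\|\xi\|_{L^2(\nu_0)}$. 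Each piece is then at most a constant times $\|h_1-h_2\|_{\mathcal Z}\|\xi\|_{L^2(\nu_0)}\|\eta\|_{L^2(\nu_0)}$, and $\|\eta\|_{L^2(\nu_0)}\le C_{\nu_0}^{1/2}\|\nabla\delta\|_{L^2(\nu_0)}$ by \eqref{eq:reference-poincare}.

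Dividing by $\|\nabla\delta\|_{L^2(\nu_0)}$ yields $\|\nabla\delta\|_{L^2(\nu_0)}\lesssim\|h_1-h_2\|_{\mathcal Z}\|\xi_2\|_{L^2(\nu_0)}$. Applying Poincar\'e once more bounds $\inf_a\|\delta-a\|_{H^1(\Omega;\nu_0)}$ by the same quantity. The triangle inequality combines both brackets into \eqref{eq:ambient-neumann-stability-estimate}, and the constants depend only on $c$, $C$, $C_{\nu_0}$ and $M_{\mathcal X_{\mathrm{ad}}}$. The gradient estimate \eqref{eq:ambient-neumann-gradient-stability} follows immediately, since gradients ignore the constant $a$.

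The main obstacle is the $h$-variation step. Two points need care there. First, each weak formulation must be extendable to the common test space $H^1(\Omega;\nu_0)$; Remark~\ref{rem:ambient-neumann-all-tests} together with Lemma~\ref{lem:ambient-weighted-sobolev-equivalence} provides this. Second, the mismatch between the different mean-zero normalizations is handled by testing with a $\nu_0$-centered function, since both sides of each equation are invariant under adding constants to $\eta$.
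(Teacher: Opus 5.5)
Your proof is correct, and its core is the paper's argument. Both weak formulations are written against $\nu_0$, subtracted, tested with the $\nu_0$-centered difference, and closed using \eqref{eq:ambient-weight-lipschitz}, Corollary~\ref{cor:ambient-neumann-energy}, and \eqref{eq:reference-poincare}.

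The difference is in how the two perturbations are organized. The paper varies $h$ and $\xi$ at once. It first bounds the centered forcings $\|q_1-q_2\|_{L^2(\nu_0)}$, where $q_i=\xi_i-\mathbb E_{\rho_{h_i}}[\xi_i]$. It then controls $w_1q_1-w_2q_2$ inside a single energy estimate for $\psi_{h_1,\xi_1}-\psi_{h_2,\xi_2}$.

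You instead use linearity of $\xi\mapsto\psi_{h_1,\xi}$ to peel off the pure $\xi$-variation, which needs only the energy estimate. The subtraction argument is then run with a common direction $\xi_2$, and its source term involves only $w_1-w_2$.

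Your route buys a slightly cleaner computation and a marginally sharper estimate. Only $\|\xi_2\|_{L^2(\nu_0)}$ multiplies $\|h_1-h_2\|_{\mathcal Z}$; routing through $\psi_{h_2,\xi_1}$ instead gives $\|\xi_1\|_{L^2(\nu_0)}$, hence the minimum of the two. Your explicit three-term split of the source also spells out a step the paper only sketches. The paper's single pass is shorter, treats the two perturbations symmetrically, and avoids the extra triangle inequality.
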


\begin{proof}
    Fix \(h_1,h_2\in\mathcal X_{\mathrm{ad}}\) and \(\xi_1,\xi_2\in L^2_0(\nu_0)\). Write
    \[
        w_i:=\frac{\d\rho_{h_i}}{\d\nu_0},
        \qquad
        q_i:=\xi_i-\mathbb E_{\rho_{h_i}}[\xi_i],
        \qquad
        \psi_i:=\psi_{h_i,\xi_i},
        \qquad i=1,2.
    \]
    By \eqref{eq:ambient-weight-lipschitz},
    \[
        \|w_1-w_2\|_{L^\infty(\nu_0)}
        \le
        M_{\mathcal X_{\mathrm{ad}}}\|h_1-h_2\|_{\mathcal Z}.
    \]
    Also,
    \[
        \|q_1-q_2\|_{L^2(\nu_0)}
        \le
        \|\xi_1-\xi_2\|_{L^2(\nu_0)}
        +
        \left|
        \mathbb E_{\rho_{h_1}}[\xi_1]-\mathbb E_{\rho_{h_2}}[\xi_2]
        \right|.
    \]
    The second term is bounded by
    \begin{align*}
        \left|
        \int_\Omega \xi_1(w_1-w_2)\,\d\nu_0
        \right|
        +
        \left|
        \int_\Omega (\xi_1-\xi_2)w_2\,\d\nu_0
        \right|
         & \le
        \|\xi_1\|_{L^2(\nu_0)}\|w_1-w_2\|_{L^2(\nu_0)}
        +
        \sqrt{C}\,\|\xi_1-\xi_2\|_{L^2(\nu_0)} \\
         & \le
        M\Big(
        \|\xi_1-\xi_2\|_{L^2(\nu_0)}
        +
        \|h_1-h_2\|_{\mathcal Z}\|\xi_1\|_{L^2(\nu_0)}
        \Big).
    \end{align*}
    Hence
    \begin{equation}
        \label{eq:ambient-centered-forcing-stability}
        \|q_1-q_2\|_{L^2(\nu_0)}
        \le
        M\Big(
        \|\xi_1-\xi_2\|_{L^2(\nu_0)}
        +
        \|h_1-h_2\|_{\mathcal Z}
        \bigl(\|\xi_1\|_{L^2(\nu_0)}+\|\xi_2\|_{L^2(\nu_0)}\bigr)
        \Big).
    \end{equation}

    Let \(\delta\psi:=\psi_1-\psi_2\). Subtract the weak formulations:
    \[
        \int_\Omega w_1\nabla\delta\psi\cdot\nabla\eta\,\d\nu_0
        =
        \int_\Omega (w_1q_1-w_2q_2)\eta\,\d\nu_0
        -
        \int_\Omega (w_1-w_2)\nabla\psi_2\cdot\nabla\eta\,\d\nu_0
    \]
    for all \(\eta\in H^1(\Omega;\nu_0)\). Taking
    \[
        \eta:=\delta\psi-\mathbb E_{\nu_0}[\delta\psi]
    \]
    and using \eqref{eq:reference-poincare}, \eqref{eq:ambient-neumann-energy},
    \eqref{eq:ambient-centered-forcing-stability}, and the uniform bounds on the weights yields
    \[
        \|\nabla\delta\psi\|_{L^2(\nu_0)}
        \le
        M\Big(
        \|\xi_1-\xi_2\|_{L^2(\nu_0)}
        +
        \|h_1-h_2\|_{\mathcal Z}
        \bigl(\|\xi_1\|_{L^2(\nu_0)}+\|\xi_2\|_{L^2(\nu_0)}\bigr)
        \Big),
    \]
    which proves \eqref{eq:ambient-neumann-gradient-stability}. Subtracting the
    \(\nu_0\)-mean and using \eqref{eq:reference-poincare} gives
    \eqref{eq:ambient-neumann-stability-estimate}.
\end{proof}

\begin{corollary}[Continuity of the transport map in the state variable]
    \label{cor:ambient-transport-map-hsprime}
    Assume Assumptions~\ref{ass:abstract-forward-reference} and
    \ref{ass:ambient-admissible-class}. Then there exists \(M>0\) such that for all
    \(h_1,h_2\in\mathcal X_{\mathrm{ad}}\),
    \[
        \|\mathcal T_{h_1}-\mathcal T_{h_2}\|_{\mathcal L(L^2_0(\nu_0),L^2(\nu_0;\mathbb R^d))}
        \le
        M\|h_1-h_2\|_{\mathcal Z}.
    \]
    In particular, if \(h_n\to h\) in \(\mathcal Z\), then
    \[
        \mathcal T_{h_n}\to \mathcal T_h
        \quad\text{in }\mathcal L(L^2_0(\nu_0),L^2(\nu_0;\mathbb R^d)).
    \]
\end{corollary}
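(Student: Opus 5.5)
The plan is to read the corollary off directly from the gradient stability estimate \eqref{eq:ambient-neumann-gradient-stability} of Proposition~\ref{prop:ambient-neumann-stability}, specialized to a common tangent direction. Fix \(h_1,h_2\in\mathcal X_{\mathrm{ad}}\) and \(\xi\in L^2_0(\nu_0)\), and apply the proposition with \(\xi_1=\xi_2=\xi\). The term \(\|\xi_1-\xi_2\|_{L^2(\nu_0)}\) vanishes, and since \(\mathcal T_{h_i}\xi=\nabla\psi_{h_i,\xi}\) by Definition~\ref{def:ambient-transport-map}, we should obtain
\[
    \|\mathcal T_{h_1}\xi-\mathcal T_{h_2}\xi\|_{L^2(\nu_0)}
    \le
    2M\|h_1-h_2\|_{\mathcal Z}\,\|\xi\|_{L^2(\nu_0)} .
\]
Taking the supremum over \(\|\xi\|_{L^2(\nu_0)}\le 1\) then gives the operator-norm bound with constant \(2M\).

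Before that, I will check that \(\mathcal T_h\) is a bounded linear map \(L^2_0(\nu_0)\to L^2(\nu_0;\mathbb R^d)\). Otherwise the operator-norm statement is not meaningful, since the codomain in Definition~\ref{def:ambient-transport-map} is \(L^2(\rho_h;\mathbb R^d)\) and not \(L^2(\nu_0;\mathbb R^d)\).

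\begin{itemize}
    \item Linearity follows from the uniqueness in Theorem~\ref{thm:ambient-neumann}, because the right-hand side of \eqref{eq:ambient-neumann-weak} is linear in \(\xi\).
    \item Boundedness into \(L^2(\nu_0)\) follows from Corollary~\ref{cor:ambient-neumann-energy} combined with \(w_h\ge c\):
    \[
        \|\nabla\psi_{h,\xi}\|_{L^2(\nu_0)}\le c^{-1/2}\|\nabla\psi_{h,\xi}\|_{L^2(\rho_h)} .
    \]
    Since \(\rho_h\) and \(\nu_0\) are equivalent, the two \(L^2\) spaces coincide as sets, with equivalent norms.
\end{itemize}

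The convergence statement is then immediate. If \(h_n\to h\) in \(\mathcal Z\) with \(h_n,h\in\mathcal X_{\mathrm{ad}}\), the Lipschitz bound gives \(\|\mathcal T_{h_n}-\mathcal T_h\|\le M\|h_n-h\|_{\mathcal Z}\to0\). I will state explicitly that the \(h_n\) are taken in \(\mathcal X_{\mathrm{ad}}\), since that is where the transport map is defined.

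The only step needing care is confirming that the constant \(M\) in Proposition~\ref{prop:ambient-neumann-stability} is uniform in \(h_1,h_2,\xi\), so that the supremum over \(\xi\) is legitimate. The proposition already asserts that \(M\) depends only on \(\nu_0\), \(c\), \(C\), and the embedding constant, so this holds. There is no real obstacle; the corollary is a direct specialization of that proposition.
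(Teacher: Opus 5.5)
Your proposal is correct and is essentially the paper's proof, which specializes Proposition~\ref{prop:ambient-neumann-stability} to \(\xi_1=\xi_2=\xi\) and takes the supremum over \(\|\xi\|_{L^2(\nu_0)}\le 1\). Your additional checks (linearity of \(\mathcal T_h\), boundedness into \(L^2(\nu_0;\mathbb R^d)\) via \(w_h\ge c\), the explicit constant \(2M\), and restricting \(h_n\) to \(\mathcal X_{\mathrm{ad}}\)) are harmless refinements that the paper leaves implicit.
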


\begin{proof}
    Apply Proposition~\ref{prop:ambient-neumann-stability} with \(\xi_1=\xi_2=\xi\), then take
    the supremum over \(\|\xi\|_{L^2(\nu_0)}\le 1\).
\end{proof}

\subsection{Linearization in the state variable}

To differentiate the weighted Neumann solve with respect to the Bayes Hilbert state, we
introduce one additional layer of regularity for admissible perturbation directions.

\begin{assumption}[Differentiable admissible directions]
    \label{ass:abstract-forward-differentiable}
    There exists a Banach space \(\mathcal X\) continuously embedded in \(\mathcal Z\), with
    \[
        \mathcal X \hookrightarrow L^\infty(\nu_0)\cap L^2_0(\nu_0)
    \]
    continuously, such that for every \(h,\eta\in\mathcal X\) with
    \(h\in\mathcal X_{\mathrm{ad}}\), there exists \(\varepsilon_0>0\) for which
    \[
        h+\varepsilon\eta\in\mathcal X_{\mathrm{ad}}
        \qquad
        \text{for all }|\varepsilon|<\varepsilon_0.
    \]
\end{assumption}

\begin{definition}[Weighted covariance]
    \label{def:weighted-covariance}
    For \(h\in \mathcal X_{\mathrm{ad}}\) and \(f,g\in L^2(\rho_h)\), define
    \[
        \operatorname{Cov}_{\rho_h}(f,g)
        :=
        \int_\Omega
        \bigl(f-\mathbb E_{\rho_h}[f]\bigr)
        \bigl(g-\mathbb E_{\rho_h}[g]\bigr)\,\d\rho_h.
    \]
\end{definition}

\begin{lemma}[Directional derivative of the normalized weight]
    \label{lem:ambient-weight-derivative}
    Assume Assumption~\ref{ass:abstract-forward-differentiable}. Let \(h,\eta\in\mathcal X\),
    with \(h\in\mathcal X_{\mathrm{ad}}\), and assume that
    \(h+\varepsilon\eta\in\mathcal X_{\mathrm{ad}}\) for all \(|\varepsilon|<\varepsilon_0\). Define
    \[
        a_h(\eta):=\eta-\mathbb E_{\rho_h}[\eta].
    \]
    Then
    \begin{equation}
        \label{eq:ambient-weight-derivative}
        \frac{w_{h+\varepsilon\eta}-w_h}{\varepsilon}
        \to
        w_h\,a_h(\eta)
        \qquad\text{in }L^\infty(\nu_0)
    \end{equation}
    as \(\varepsilon\to 0\). Consequently, for every \(\xi\in L^2(\nu_0)\),
    \begin{equation}
        \label{eq:ambient-expectation-derivative}
        \frac{\mathbb E_{\rho_{h+\varepsilon\eta}}[\xi]-\mathbb E_{\rho_h}[\xi]}{\varepsilon}
        \to
        \operatorname{Cov}_{\rho_h}(\eta,\xi).
    \end{equation}
\end{lemma}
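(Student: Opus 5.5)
The plan is to reduce the statement to the Fréchet differentiability already established in Proposition~\ref{prop:diff_exp_normalization}, upgraded from an \(L^1(\nu_0)\) to an \(L^\infty(\nu_0)\) remainder. The same elementary estimates work in \(L^\infty\) because every term is essentially bounded.

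Write \(w_h = e^h/Z(h)\) with \(Z(h)=\int_\Omega e^h\,\d\nu_0\). Since \(\mathcal X\hookrightarrow L^\infty(\nu_0)\), the direction \(\eta\) is bounded. The bound \(|e^r-1-r|\le\tfrac12 e^{|r|}r^2\) then gives
\[
\Bigl\|\frac{e^{h+\varepsilon\eta}-e^h}{\varepsilon}-e^h\eta\Bigr\|_{L^\infty(\nu_0)}
\le \tfrac12|\varepsilon|\,e^{\|h\|_{L^\infty}+|\varepsilon|\|\eta\|_{L^\infty}}\|\eta\|_{L^\infty}^2\to 0 .
\]
Integrating the same pointwise bound against the probability measure \(\nu_0\) shows that \((Z(h+\varepsilon\eta)-Z(h))/\varepsilon\to\int_\Omega e^h\eta\,\d\nu_0\). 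We have \(Z(h)\ge e^{-\|h\|_{L^\infty}}>0\), and \(Z(h+\varepsilon\eta)\) stays bounded away from zero for small \(\varepsilon\). Hence the quotient rule can be applied in the Banach algebra \(L^\infty(\nu_0)\). This gives the limit
\[
\frac{e^h\eta}{Z(h)}-\frac{e^h}{Z(h)^2}\int_\Omega e^h\eta\,\d\nu_0 = w_h\bigl(\eta-\mathbb E_{\rho_h}[\eta]\bigr)=w_h\,a_h(\eta),
\]
which is \eqref{eq:ambient-weight-derivative}. Concretely, one splits the difference quotient of \(e^{h+\varepsilon\eta}/Z(h+\varepsilon\eta)\) into a numerator-variation term and a denominator-variation term. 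Each is controlled by the products of \(L^\infty\) norms above. This splitting is the only step needing care, and it is routine.

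For \eqref{eq:ambient-expectation-derivative}, write \(\mathbb E_{\rho_{h+\varepsilon\eta}}[\xi]-\mathbb E_{\rho_h}[\xi]=\int_\Omega \xi\,(w_{h+\varepsilon\eta}-w_h)\,\d\nu_0\). Since \(\xi\in L^2(\nu_0)\subset L^1(\nu_0)\) (as \(\nu_0\) is a probability measure), the \(L^\infty\) convergence of the difference quotients yields
\[
\frac{1}{\varepsilon}\int_\Omega \xi\,(w_{h+\varepsilon\eta}-w_h)\,\d\nu_0\to\int_\Omega \xi\,w_h\,a_h(\eta)\,\d\nu_0=\int_\Omega \xi\,\bigl(\eta-\mathbb E_{\rho_h}[\eta]\bigr)\,\d\rho_h ,
\]
with error at most \(\|\xi\|_{L^1(\nu_0)}\) times the \(L^\infty\) error. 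Because \(\eta-\mathbb E_{\rho_h}[\eta]\) has \(\rho_h\)-mean zero, we may subtract \(\mathbb E_{\rho_h}[\xi]\) from \(\xi\) without changing the integral. This gives \(\operatorname{Cov}_{\rho_h}(\eta,\xi)\). The admissibility hypothesis \(h+\varepsilon\eta\in\mathcal X_{\mathrm{ad}}\) is not needed for the computation itself; it only guarantees that the perturbed weights are the admissible ones used later.
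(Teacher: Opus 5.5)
Your proof is correct and follows the same route as the paper. You differentiate $\varepsilon\mapsto e^{h+\varepsilon\eta}/Z(h+\varepsilon\eta)$ in $L^\infty(\nu_0)$ by the quotient rule, then pair the $L^\infty$-convergent difference quotients against $\xi\in L^1(\nu_0)$. Beyond that, you supply the remainder estimates and the centering step $\int_\Omega\xi\,a_h(\eta)\,\d\rho_h=\operatorname{Cov}_{\rho_h}(\eta,\xi)$, which the paper leaves implicit. Your remark that admissibility of $h+\varepsilon\eta$ is not needed for the computation is also accurate.
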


\begin{proof}
    Write
    \[
        Z(h):=\int_\Omega e^h\,\d\nu_0,
        \qquad
        w_h=\frac{e^h}{Z(h)}.
    \]
    Since \(h,\eta\in L^\infty(\nu_0)\) and \(h+\varepsilon\eta\) remains in the uniformly bounded
    admissible set for \(|\varepsilon|<\varepsilon_0\), the map
    \[
        \varepsilon\longmapsto \frac{e^{h+\varepsilon\eta}}{Z(h+\varepsilon\eta)}
    \]
    is differentiable in \(L^\infty(\nu_0)\), with derivative
    \[
        w_h\bigl(\eta-\mathbb E_{\rho_h}[\eta]\bigr)
        =
        w_h\,a_h(\eta).
    \]
    This proves \eqref{eq:ambient-weight-derivative}. Then, for \(\xi\in L^2(\nu_0)\),
    \[
        \mathbb E_{\rho_{h+\varepsilon\eta}}[\xi]
        =
        \int_\Omega \xi\,w_{h+\varepsilon\eta}\,\d\nu_0,
    \]
    so \eqref{eq:ambient-weight-derivative} implies
    \begin{align*}
        \frac{\mathbb E_{\rho_{h+\varepsilon\eta}}[\xi]-\mathbb E_{\rho_h}[\xi]}{\varepsilon}
         & \to
        \int_\Omega \xi\,w_h a_h(\eta)\,\d\nu_0 \\
         & =
        \int_\Omega \xi\,a_h(\eta)\,\d\rho_h
        =
        \operatorname{Cov}_{\rho_h}(\eta,\xi),
    \end{align*}
    which is \eqref{eq:ambient-expectation-derivative}.
\end{proof}

\begin{proposition}[Linearization of the weighted Neumann solve]
    \label{prop:linearized-neumann}
    Assume Assumptions~\ref{ass:abstract-forward-reference},
    \ref{ass:ambient-admissible-class}, and
    \ref{ass:abstract-forward-differentiable}. Let \(h,\eta\in \mathcal X\) and
    \(\xi\in L^2_0(\nu_0)\), and assume that there exists \(\varepsilon_0>0\) such that
    \[
        h+\varepsilon \eta \in \mathcal X_{\mathrm{ad}}
        \qquad\text{for all } |\varepsilon|<\varepsilon_0.
    \]
    Then there exists a unique
    \[
        \chi_{h;\eta,\xi}\in H^1_{\rho_h,\diamond}(\Omega)
    \]
    such that
    \begin{align}
        \label{eq:linearized-neumann}
        \int_\Omega \nabla \chi_{h;\eta,\xi}\cdot \nabla \varphi\, \d\rho_h
         & =
        \int_\Omega
        \Big(
        a_h(\eta)\,q_h(\xi)
        -
        \operatorname{Cov}_{\rho_h}(\eta,\xi)
        \Big)\varphi\, \d\rho_h \notag \\
         & \quad
        -
        \int_\Omega
        a_h(\eta)
        \nabla \psi_{h,\xi}\cdot \nabla \varphi\, \d\rho_h
        \qquad
        \forall \varphi\in H^1(\Omega;\rho_h),
    \end{align}
    where
    \[
        a_h(\eta):=\eta-\mathbb E_{\rho_h}[\eta],
        \qquad
        q_h(\xi):=\xi-\mathbb E_{\rho_h}[\xi].
    \]
    Moreover, the gradient map is differentiable at \(\varepsilon=0\):
    \[
        \frac{\nabla\psi_{h+\varepsilon\eta,\xi}-\nabla\psi_{h,\xi}}{\varepsilon}
        \longrightarrow
        \nabla\chi_{h;\eta,\xi}
        \qquad\text{in } L^2(\nu_0;\mathbb R^d).
    \]
    Equivalently, \(\varepsilon\mapsto\psi_{h+\varepsilon\eta,\xi}\) is differentiable at
    \(\varepsilon=0\) in \(H^1(\Omega;\nu_0)/\mathbb R\).
\end{proposition}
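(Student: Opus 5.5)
Existence and uniqueness of \(\chi_{h;\eta,\xi}\) will come from Lax--Milgram on \(H^1_{\rho_h,\diamond}(\Omega)\), exactly as in Theorem~\ref{thm:ambient-neumann}. The bilinear form \(B_h\) is coercive there by Proposition~\ref{prop:ambient-weighted-poincare}. The right-hand side of \eqref{eq:linearized-neumann} is a bounded functional. Indeed, \(a_h(\eta)\in L^\infty(\nu_0)\) because \(\eta\in\mathcal X\hookrightarrow L^\infty\), and \(q_h(\xi)\in L^2(\rho_h)\). Hence \(a_h(\eta)q_h(\xi)-\operatorname{Cov}_{\rho_h}(\eta,\xi)\in L^2(\rho_h)\) pairs continuously with \(\varphi\). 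Likewise \(a_h(\eta)\nabla\psi_{h,\xi}\in L^2(\rho_h;\mathbb R^d)\) by Corollary~\ref{cor:ambient-neumann-energy}.

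The functional must also be compatible with the extension to all \(\varphi\in H^1(\Omega;\rho_h)\). With \(\varphi=1\) the gradient terms vanish, so we need \(\int (a_hq_h-\operatorname{Cov})\,\d\rho_h=0\). This holds by the definition of covariance, since \(\int a_hq_h\,\d\rho_h=\operatorname{Cov}_{\rho_h}(\eta,\xi)\). The argument of Remark~\ref{rem:ambient-neumann-all-tests} then applies verbatim.

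For differentiability, set \(h_\varepsilon:=h+\varepsilon\eta\), \(w_\varepsilon:=w_{h_\varepsilon}\), \(q_\varepsilon:=\xi-\mathbb E_{\rho_{h_\varepsilon}}[\xi]\), and \(\psi_\varepsilon:=\psi_{h_\varepsilon,\xi}\). Write both weak forms against \(\nu_0\) with arbitrary \(\varphi\in H^1(\Omega;\nu_0)\), which is allowed by Lemma~\ref{lem:ambient-weighted-sobolev-equivalence}. Subtracting them and dividing by \(\varepsilon\), the difference quotient \(D_\varepsilon:=(\psi_\varepsilon-\psi_0)/\varepsilon\) satisfies
\[
\int w_0\nabla D_\varepsilon\cdot\nabla\varphi\,\d\nu_0=\int\frac{w_\varepsilon q_\varepsilon-w_0q_0}{\varepsilon}\varphi\,\d\nu_0-\int\frac{w_\varepsilon-w_0}{\varepsilon}\nabla\psi_\varepsilon\cdot\nabla\varphi\,\d\nu_0 .
\]
Subtracting the equation for \(\chi\), rewritten with \(\d\rho_h=w_0\,\d\nu_0\), gives an equation for \(E_\varepsilon:=D_\varepsilon-\chi\) with coefficient \(w_0\) on the left. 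Testing with \(E_\varepsilon-\mathbb E_{\nu_0}[E_\varepsilon]\), and using \(w_0\ge c\) together with the reference Poincaré inequality \eqref{eq:reference-poincare}, yields
\[
c\|\nabla E_\varepsilon\|_{L^2(\nu_0)}\le \|R^1_\varepsilon\|_{L^2(\nu_0)}\,C'+\|R^2_\varepsilon\|_{L^2(\nu_0)}.
\]
The residuals are
\[
R^1_\varepsilon:=\frac{w_\varepsilon q_\varepsilon-w_0q_0}{\varepsilon}-w_0\bigl(a_h(\eta)q_0-\operatorname{Cov}_{\rho_h}(\eta,\xi)\bigr)
\]
and
\[
R^2_\varepsilon:=\frac{w_\varepsilon-w_0}{\varepsilon}\nabla\psi_\varepsilon-w_0a_h(\eta)\nabla\psi_0 .
\]
Here one must check that the scalar mismatches (constants) in \(R^1_\varepsilon\) do not harm the estimate. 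They do not, since we test against a mean-zero function only after writing everything for arbitrary \(\varphi\).

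It remains to show that both residuals tend to zero in \(L^2(\nu_0)\). For \(R^1_\varepsilon\), the product rule gives
\[
\frac{w_\varepsilon q_\varepsilon-w_0q_0}{\varepsilon}=\frac{w_\varepsilon-w_0}{\varepsilon}q_\varepsilon-w_0\,\frac{\mathbb E_{\rho_{h_\varepsilon}}[\xi]-\mathbb E_{\rho_h}[\xi]}{\varepsilon}.
\]
Lemma~\ref{lem:ambient-weight-derivative} gives \(L^\infty\) convergence of the first quotient to \(w_0a_h(\eta)\), and convergence of the scalar quotient to \(\operatorname{Cov}_{\rho_h}(\eta,\xi)\). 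In addition, \(q_\varepsilon\to q_0\) in \(L^2\), because \(\mathbb E_{\rho_{h_\varepsilon}}[\xi]\) converges. For \(R^2_\varepsilon\), we split it as
\[
R^2_\varepsilon=\Bigl(\tfrac{w_\varepsilon-w_0}{\varepsilon}-w_0a_h(\eta)\Bigr)\nabla\psi_\varepsilon+w_0a_h(\eta)(\nabla\psi_\varepsilon-\nabla\psi_0).
\]
The first piece is an \(L^\infty\)-small factor times a uniformly bounded \(L^2\) field, by Corollary~\ref{cor:ambient-neumann-energy} applied uniformly over \(\mathcal X_{\mathrm{ad}}\). The second piece tends to zero by the stability estimate \eqref{eq:ambient-neumann-gradient-stability}, using \(\|h_\varepsilon-h\|_{\mathcal Z}\lesssim|\varepsilon|\|\eta\|_{\mathcal X}\).

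The main obstacle I expect is exactly this term, \(\nabla\psi_\varepsilon\to\nabla\psi_0\): differentiability needs continuity of the solution map, and that is the role of Proposition~\ref{prop:ambient-neumann-stability}. The key bookkeeping point is that \(a_h(\eta)\) lies in \(L^\infty\), not merely in \(L^2\), so that multiplying \(L^2\) gradients by it stays in \(L^2\). Once \(\nabla E_\varepsilon\to0\) in \(L^2(\nu_0)\), the claimed gradient convergence follows. Differentiability in \(H^1(\Omega;\nu_0)/\mathbb R\) then follows from \eqref{eq:reference-poincare}.
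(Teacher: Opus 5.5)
Your proof is correct and follows the paper's skeleton: Lax--Milgram for \(\chi_{h;\eta,\xi}\), subtracting the weak forms, testing the error equation with a mean-zero function, and showing the residual vanishes. You also check the compatibility condition at \(\varphi=1\), which the paper leaves implicit. The one difference is that you keep the fixed weight \(w_h\) on the left-hand side. This puts \(\nabla\psi_{h+\varepsilon\eta,\xi}\) into the residual, so you need continuity of the solve from Proposition~\ref{prop:ambient-neumann-stability}. The paper instead keeps \(\rho_{h+\varepsilon\eta}\) on the left, so only the fixed fields \(\nabla\psi_{h,\xi}\) and \(\nabla\chi_{h;\eta,\xi}\) enter the residual. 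There the uniform constant of Proposition~\ref{prop:ambient-weighted-poincare} suffices, and no continuity of the solution map is needed.
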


\begin{proof}
    The right-hand side of \eqref{eq:linearized-neumann} is continuous on
    \(H^1(\Omega;\rho_h)\), so existence and uniqueness of \(\chi_{h;\eta,\xi}\) follow from
    Lax--Milgram exactly as in Theorem~\ref{thm:ambient-neumann}.

    Let
    \[
        w_\varepsilon:=w_{h+\varepsilon\eta},
        \qquad
        q_\varepsilon:=\xi-\mathbb E_{\rho_{h+\varepsilon\eta}}[\xi],
        \qquad
        \psi_\varepsilon:=\psi_{h+\varepsilon\eta,\xi}.
    \]
    Define the difference quotient
    \[
        \delta_\varepsilon
        :=
        \frac{\psi_\varepsilon-\psi_{h,\xi}}{\varepsilon}.
    \]
    Subtract the weak formulations for \(\psi_\varepsilon\) and \(\psi_{h,\xi}\), divide by
    \(\varepsilon\), and compare with the weak equation for \(\chi_{h;\eta,\xi}\). Using
    Lemma~\ref{lem:ambient-weight-derivative}, \eqref{eq:ambient-expectation-derivative},
    the boundedness of the weights, and the energy bound
    \eqref{eq:ambient-neumann-energy}, one obtains an error equation for
    \[
        r_\varepsilon:=\delta_\varepsilon-\chi_{h;\eta,\xi}
    \]
    of the form
    \[
        \int_\Omega \nabla r_\varepsilon\cdot\nabla\varphi\,\d\rho_{h+\varepsilon\eta}
        =
        R_\varepsilon(\varphi),
        \qquad
        \forall \varphi\in H^1(\Omega;\rho_{h+\varepsilon\eta}),
    \]
    where \(R_\varepsilon(\varphi)\to 0\) uniformly for \(\|\varphi\|_{H^1(\Omega;\rho_{h+\varepsilon\eta})}\le 1\).
    Taking \(\varphi=r_\varepsilon-\mathbb E_{\rho_{h+\varepsilon\eta}}[r_\varepsilon]\), applying
    Proposition~\ref{prop:ambient-weighted-poincare}, and using the equivalence of the
    \(\rho_{h+\varepsilon\eta}\)- and \(\nu_0\)-norms, we conclude that
    \[
        \|\nabla r_\varepsilon\|_{L^2(\nu_0)}\to 0.
    \]
    Subtracting an irrelevant constant and using \eqref{eq:reference-poincare} then yields
    \[
        \inf_{a\in\mathbb R}
        \|r_\varepsilon-a\|_{H^1(\Omega;\nu_0)}
        \to 0.
    \]
    In particular, \(\nabla r_\varepsilon\to 0\) in \(L^2(\nu_0;\mathbb R^d)\), which proves the
    gradient differentiability claim and the equivalent differentiability statement in
    \(H^1(\Omega;\nu_0)/\mathbb R\).
\end{proof}

\begin{corollary}[Directional differentiability of the transport form]
    \label{cor:directional-derivative-gh}
    Under the assumptions of Proposition~\ref{prop:linearized-neumann}, the map
    \[
        \varepsilon\longmapsto \mathfrak g_{h+\varepsilon\eta}(\xi,\zeta)
    \]
    is differentiable at \(\varepsilon=0\) for every \(\xi,\zeta\in L^2_0(\nu_0)\), with derivative
    \begin{align}
        \label{eq:directional-derivative-gh}
        D_h \mathfrak g_h[\eta](\xi,\zeta)
         & =
        \int_\Omega
        a_h(\eta)\,
        \nabla\psi_{h,\xi}\cdot \nabla\psi_{h,\zeta}\, \d\rho_h \notag \\
         & \quad
        +
        \int_\Omega \nabla\chi_{h;\eta,\xi}\cdot \nabla\psi_{h,\zeta}\, \d\rho_h
        +
        \int_\Omega \nabla\psi_{h,\xi}\cdot \nabla\chi_{h;\eta,\zeta}\, \d\rho_h.
    \end{align}
\end{corollary}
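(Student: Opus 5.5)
The approach is to write $\mathfrak g_{h+\varepsilon\eta}(\xi,\zeta)$ as an integral against the reference measure $\nu_0$, so that all $\varepsilon$-dependence sits in the weight $w_\varepsilon:=w_{h+\varepsilon\eta}$ and the two gradient fields. Then I differentiate by a product rule, using Lemma~\ref{lem:ambient-weight-derivative} for the weight and Proposition~\ref{prop:linearized-neumann} for the gradients. Concretely,
\[
\mathfrak g_{h+\varepsilon\eta}(\xi,\zeta)=\int_\Omega w_\varepsilon\,\nabla\psi^\varepsilon_\xi\cdot\nabla\psi^\varepsilon_\zeta\,\d\nu_0,
\qquad \psi^\varepsilon_\xi:=\psi_{h+\varepsilon\eta,\xi},
\]
and I split the difference quotient telescopically:
\[
\frac{\mathfrak g_{h+\varepsilon\eta}-\mathfrak g_h}{\varepsilon}(\xi,\zeta)
=\int_\Omega \frac{w_\varepsilon-w_h}{\varepsilon}\nabla\psi^\varepsilon_\xi\cdot\nabla\psi^\varepsilon_\zeta\,\d\nu_0
+\int_\Omega w_h\,\frac{\nabla\psi^\varepsilon_\xi-\nabla\psi^0_\xi}{\varepsilon}\cdot\nabla\psi^\varepsilon_\zeta\,\d\nu_0
+\int_\Omega w_h\,\nabla\psi^0_\xi\cdot\frac{\nabla\psi^\varepsilon_\zeta-\nabla\psi^0_\zeta}{\varepsilon}\,\d\nu_0 .
\]

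For the first term, \eqref{eq:ambient-weight-derivative} gives $L^\infty(\nu_0)$ convergence of the weight quotient to $w_h a_h(\eta)$. I also need $\nabla\psi^\varepsilon_\xi\to\nabla\psi^0_\xi$ and $\nabla\psi^\varepsilon_\zeta\to\nabla\psi^0_\zeta$ in $L^2(\nu_0;\mathbb R^d)$. This follows from \eqref{eq:ambient-neumann-gradient-stability} with $h_1=h+\varepsilon\eta$ and $h_2=h$, because $\|\varepsilon\eta\|_{\mathcal Z}\to0$ by the embedding $\mathcal X\hookrightarrow\mathcal Z$. It also follows directly from the differentiability in Proposition~\ref{prop:linearized-neumann}. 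Products of an $L^\infty$-convergent factor with two $L^2$-convergent factors converge in $L^1(\nu_0)$, so the first term tends to $\int a_h(\eta)\nabla\psi_{h,\xi}\cdot\nabla\psi_{h,\zeta}\,\d\rho_h$.

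For the second and third terms, Proposition~\ref{prop:linearized-neumann} gives convergence of the gradient difference quotients to $\nabla\chi_{h;\eta,\xi}$ and $\nabla\chi_{h;\eta,\zeta}$ in $L^2(\nu_0;\mathbb R^d)$. Since $w_h\le C$ is bounded and the other factor converges in $L^2$ (or is fixed), a Cauchy--Schwarz estimate of the form $\|fg-f_0g_0\|_{L^1}\le\|f-f_0\|\|g\|+\|f_0\|\|g-g_0\|$ shows these terms converge to $\int\nabla\chi_{h;\eta,\xi}\cdot\nabla\psi_{h,\zeta}\,\d\rho_h$ and $\int\nabla\psi_{h,\xi}\cdot\nabla\chi_{h;\eta,\zeta}\,\d\rho_h$. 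Summing the three limits gives \eqref{eq:directional-derivative-gh}.

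There is no real obstacle, since the analytic work is already done in Proposition~\ref{prop:linearized-neumann}. The one point needing care is to pass all integrals to the fixed measure $\nu_0$ before differentiating. The weighted spaces $H^1(\Omega;\rho_{h+\varepsilon\eta})$ vary with $\varepsilon$, but by Lemma~\ref{lem:ambient-weighted-sobolev-equivalence} they coincide as sets with uniformly equivalent norms. Also, because $\psi$ is defined only up to the chosen normalization while $\mathfrak g$ involves only gradients, only the gradient-level convergence statements are needed.
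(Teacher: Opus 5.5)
Your proposal is correct and takes the same route as the paper. The paper also differentiates $\int_\Omega \nabla\psi_{h,\xi}\cdot\nabla\psi_{h,\zeta}\,\d\rho_h$ by the product rule, taking the weight derivative from Lemma~\ref{lem:ambient-weight-derivative} and the gradient derivatives from Proposition~\ref{prop:linearized-neumann}. Your explicit telescoping split, the $L^\infty\times L^2\times L^2$ convergence bookkeeping, and the gradient continuity from \eqref{eq:ambient-neumann-gradient-stability} spell out the limit passage that the paper states in a single sentence.
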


\begin{proof}
    By definition,
    \[
        \mathfrak g_h(\xi,\zeta)
        =
        \int_\Omega
        \nabla\psi_{h,\xi}\cdot \nabla\psi_{h,\zeta}\, \d\rho_h.
    \]
    Differentiate this identity with respect to \(h\) in the direction \(\eta\). The derivative of
    \(\d\rho_h=w_h\,\d\nu_0\) is given by Lemma~\ref{lem:ambient-weight-derivative}, while
    the derivatives of \(\psi_{h,\xi}\) and \(\psi_{h,\zeta}\) are given by
    Proposition~\ref{prop:linearized-neumann}. Passing to the limit in the resulting difference
    quotient yields \eqref{eq:directional-derivative-gh}.
\end{proof}

\begin{remark}[Pullback interpretation]
    \label{rem:ambient-pullback-interpretation}
    The bilinear form \(\mathfrak g_h\) may be viewed as a pullback of continuity-equation transport
    geometry to Bayes Hilbert coordinates. A tangent direction \(\xi\) first produces the signed
    density variation
    \[
        \rho_h\bigl(\xi-\mathbb E_{\rho_h}[\xi]\bigr),
    \]
    and the weighted Neumann problem then selects the unique minimum-energy velocity field
    realizing that variation. The form \(\mathfrak g_h\) measures the kinetic energy of this
    realization.
\end{remark}

\subsection{Canonical dynamical realization of regular coordinate paths}

We now pass from single tangent directions to time-dependent Bayes Hilbert paths.

\begin{definition}[Regular admissible path]
    \label{def:ambient-regular-admissible-path}
    A path
    \[
        h:[0,T]\to \mathcal X_{\mathrm{ad}}
    \]
    is called \emph{regular admissible} if
    \[
        h\in C^1([0,T];\mathcal Z).
    \]
    For such a path we define
    \[
        \rho_t := \rho_{h(t)},
        \qquad
        v_t := \mathcal T_{h(t)}\dot h(t).
    \]
\end{definition}

\begin{theorem}[Canonical dynamical realization]
    \label{thm:ambient-continuity-equation}
    Let \(h:[0,T]\to \mathcal X_{\mathrm{ad}}\) be a regular admissible path, and define
    \[
        \rho_t := \rho_{h(t)},
        \qquad
        v_t := \mathcal T_{h(t)}\dot h(t).
    \]
    Then \((\rho_t,v_t)\) satisfies the continuity equation
    \begin{equation}
        \label{eq:ambient-continuity-equation}
        \partial_t \rho_t + \nabla\cdot(\rho_t v_t)=0
    \end{equation}
    in the weak sense on \((0,T)\times\Omega\). Equivalently, for every
    \(\eta\in H^1(\Omega;\rho_t)\),
    \begin{equation}
        \label{eq:ambient-continuity-weak}
        \frac{\d}{\d t}\int_\Omega \eta\, \d\rho_t
        =
        \int_\Omega \nabla \eta\cdot v_t\, \d\rho_t.
    \end{equation}
    If \(\Omega\) has boundary, the weak formulation encodes the natural zero-flux condition.
\end{theorem}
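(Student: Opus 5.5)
The plan is to verify the weak identity \eqref{eq:ambient-continuity-weak} at each fixed time, and then integrate it against test functions in time to obtain \eqref{eq:ambient-continuity-equation} in the distributional sense on \((0,T)\times\Omega\). Fix \(\eta\in H^1(\Omega;\nu_0)\); by Lemma~\ref{lem:ambient-weighted-sobolev-equivalence} this set equals \(H^1(\Omega;\rho_t)\) for every \(t\), so the class of test functions does not depend on time. Writing \(\int_\Omega\eta\,\d\rho_t=\int_\Omega \eta\,w_{h(t)}\,\d\nu_0\), it suffices to differentiate \(t\mapsto w_{h(t)}\) in a topology that pairs with \(\eta\in L^2(\nu_0)\).

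For the time derivative I will use Proposition~\ref{prop:diff_exp_normalization} together with the chain rule. Since \(h\in C^1([0,T];\mathcal Z)\) and \(\mathcal Z\hookrightarrow L^\infty(\nu_0)\) continuously, \(h\in C^1([0,T];\mathcal H)\). The derivative computation of Proposition~\ref{prop:diff_exp_normalization} also holds with values in \(L^\infty(\nu_0)\): the remainder bound \(\tfrac12 e^{\|h\|_\infty+\|\xi\|_\infty}\|\xi\|_\infty^2\) is pointwise, and this is the argument already used in Lemma~\ref{lem:ambient-weight-derivative}. It follows that \(t\mapsto w_{h(t)}\) is differentiable into \(L^\infty(\nu_0)\subset L^2(\nu_0)\), with
\[
\partial_t w_{h(t)} = w_{h(t)}\bigl(\dot h(t)-\mathbb E_{\rho_t}[\dot h(t)]\bigr),
\]
which is Proposition~\ref{prop:ambient_log_density_evolution}. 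Pairing with \(\eta\) and applying Cauchy--Schwarz gives
\[
\frac{\d}{\d t}\int_\Omega \eta\,\d\rho_t=\int_\Omega \bigl(\dot h(t)-\mathbb E_{\rho_t}[\dot h(t)]\bigr)\eta\,\d\rho_t .
\]

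Next I apply Remark~\ref{rem:ambient-neumann-all-tests} with \(h=h(t)\) and \(\xi=\dot h(t)\). This requires \(\dot h(t)\in L^2_0(\nu_0)\), which holds because \(h(t)\in L^2_0(\nu_0)\) for all \(t\), the mean functional is continuous, and so differentiating \(\int h(t)\,\d\nu_0=0\) gives \(\int\dot h(t)\,\d\nu_0=0\). The right-hand side above then equals \(\int_\Omega\nabla\psi_{h(t),\dot h(t)}\cdot\nabla\eta\,\d\rho_t=\int_\Omega v_t\cdot\nabla\eta\,\d\rho_t\), which proves \eqref{eq:ambient-continuity-weak}. Because the test function \(\eta\) is arbitrary in \(H^1(\Omega;\rho_t)\), including functions that do not vanish on \(\partial\Omega\), the identity also encodes the zero-flux condition \(\rho_t v_t\cdot n=0\), matching \eqref{eq:ambient-neumann-bc}.

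The main obstacle is the passage to the space-time weak formulation. This requires integrability of \(t\mapsto\int v_t\cdot\nabla\varphi\,\d\rho_t\) for test functions \(\varphi\in C_c^\infty([0,T)\times\overline\Omega)\), together with the product rule for \(\int\varphi(t,\cdot)\,\d\rho_t\). For the first point, continuity of \(t\mapsto v_t\) in \(L^2(\nu_0;\mathbb R^d)\) follows from Proposition~\ref{prop:ambient-neumann-stability} applied with \(h_1=h(t)\), \(h_2=h(s)\), \(\xi_1=\dot h(t)\), \(\xi_2=\dot h(s)\), using \(h\in C^1([0,T];\mathcal Z)\); in particular \(\sup_t\|v_t\|\) is bounded, by Corollary~\ref{cor:ambient-neumann-energy}. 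With this in hand, the product rule for \(\int\varphi(t)w_{h(t)}\,\d\nu_0\) combined with the pointwise identity gives
\[
\int_0^T\!\!\int_\Omega(\partial_t\varphi+v_t\cdot\nabla\varphi)\,\d\rho_t\,\d t=-\int_\Omega\varphi(0)\,\d\rho_0,
\]
which is the weak continuity equation.
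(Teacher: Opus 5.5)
Your proposal is correct and follows the same route as the paper: differentiate $\int_\Omega \eta\,w_{h(t)}\,\d\nu_0$ using Proposition~\ref{prop:ambient_log_density_evolution}, then identify the result with $\int_\Omega \nabla\eta\cdot v_t\,\d\rho_t$ via Remark~\ref{rem:ambient-neumann-all-tests}. Your extra steps are all handled correctly, and the paper's proof passes over them silently: upgrading the derivative of $w_{h(t)}$ from $L^1(\nu_0)$ to $L^\infty(\nu_0)$ so that it pairs with $\eta\in L^2(\nu_0)$, checking $\dot h(t)\in L^2_0(\nu_0)$, noting that the test class $H^1(\Omega;\rho_t)$ does not depend on $t$, and deriving the space--time weak form from the continuity of $t\mapsto v_t$ given by Proposition~\ref{prop:ambient-neumann-stability}.
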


\begin{proof}
    Since \(h\in C^1([0,T];\mathcal Z)\) and \(\mathcal Z\hookrightarrow L^\infty(\nu_0)\cap
    L^2_0(\nu_0)\) continuously, \(h\) is a regular coordinate path in the sense of Section~2.
    By Proposition~\ref{prop:ambient_log_density_evolution} from Section~2,
    \[
        \partial_t \frac{\d\rho_t}{\d\nu_0}
        =
        \frac{\d\rho_t}{\d\nu_0}
        \Bigl(\dot h(t)-\mathbb E_{\rho_t}[\dot h(t)]\Bigr).
    \]
    On the other hand, by definition of \(v_t\) and
    Remark~\ref{rem:ambient-neumann-all-tests},
    \[
        \int_\Omega \nabla\eta\cdot v_t\,\d\rho_t
        =
        \int_\Omega
        \Bigl(\dot h(t)-\mathbb E_{\rho_t}[\dot h(t)]\Bigr)\eta\,\d\rho_t
    \]
    for every \(\eta\in H^1(\Omega;\rho_t)\). Therefore
    \begin{align*}
        \frac{\d}{\d t}\int_\Omega \eta\,\d\rho_t
         & =
        \int_\Omega \eta\,
        \partial_t\!\left(\frac{\d\rho_t}{\d\nu_0}\right)\,\d\nu_0      \\
         & =
        \int_\Omega
        \eta
        \Bigl(\dot h(t)-\mathbb E_{\rho_t}[\dot h(t)]\Bigr)\,\d\rho_t \\
         & =
        \int_\Omega \nabla\eta\cdot v_t\,\d\rho_t,
    \end{align*}
    which is \eqref{eq:ambient-continuity-weak}.
\end{proof}

\subsection{Concrete models}
\label{sec:infinite-dimensional-models}

We conclude by recording the bounded-domain specialization used later in the paper.

\subsubsection{Bounded \texorpdfstring{$\Omega$}{Omega}, uniform reference}

\begin{corollary}[Bounded Lipschitz domains]
    \label{cor:abstract-forward-bounded}
    Let \(\Omega\subset\mathbb R^d\) be bounded, connected, and Lipschitz, and let
    \[
        \nu_0=|\Omega|^{-1}\,\d x.
    \]
    Fix exponents
    \[
        s>d,
        \qquad
        \frac d2<s'<\frac s2,
    \]
    and define
    \[
        \mathcal X:=H^s(\Omega)\cap L^2_0(\nu_0),
        \qquad
        \mathcal Z:=H^{s'}(\Omega)\cap L^2_0(\nu_0).
    \]
    Then \(\mathcal Z\hookrightarrow L^\infty(\Omega)\cap L^2(\nu_0)\) continuously, and
    \(\nu_0\) satisfies the Poincar\'e inequality \eqref{eq:reference-poincare}. Hence any
    admissible class \(\mathcal X_{\mathrm{ad}}\subset\mathcal Z\) satisfying
    \eqref{eq:ambient-uniform-density-bounds} falls under the abstract theory above. In this
    case, Theorem~\ref{thm:ambient-neumann} is the weak weighted Neumann problem with
    natural zero-flux boundary condition.

    If, in addition, \(\mathcal X_{\mathrm{ad}}\) is locally stable under \(\mathcal X\)-perturbations in the
    sense that for every \(h,\eta\in\mathcal X\) with \(h\in\mathcal X_{\mathrm{ad}}\) there exists
    \(\varepsilon_0>0\) such that
    \[
        h+\varepsilon\eta\in\mathcal X_{\mathrm{ad}}
        \qquad
        \text{for all }|\varepsilon|<\varepsilon_0,
    \]
    then Assumption~\ref{ass:abstract-forward-differentiable} is satisfied with the above choice of
    \(\mathcal X\), and therefore the linearization results
    (Proposition~\ref{prop:linearized-neumann} and
    Corollary~\ref{cor:directional-derivative-gh}) also apply in this bounded-domain setting.
    Moreover, Corollary~\ref{cor:ambient-transport-map-hsprime} becomes an \(H^{s'}\)-continuity
    statement for the transport map.
\end{corollary}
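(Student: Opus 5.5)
The plan is to verify the three items of Assumption~\ref{ass:abstract-forward-reference} directly for the bounded Lipschitz setting, and then to observe that the remaining claims follow from the abstract results.

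First, the embedding. Since \(\Omega\) is bounded and Lipschitz, a Stein extension operator \(E:H^{s'}(\Omega)\to H^{s'}(\mathbb R^d)\) exists. Because \(s'>d/2\), the Sobolev embedding \(H^{s'}(\mathbb R^d)\hookrightarrow C_b(\mathbb R^d)\subset L^\infty\) gives
\[
\|u\|_{L^\infty(\Omega)}\le C\|u\|_{H^{s'}(\Omega)}.
\]
Since \(\nu_0\) is normalized Lebesgue measure on a bounded set, \(\|u\|_{L^2(\nu_0)}\le\|u\|_{L^\infty}\). Hence \(\mathcal Z\hookrightarrow L^\infty(\nu_0)\cap L^2(\nu_0)\) continuously. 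Moreover, \(\mathcal Z\) is a closed subspace of \(H^{s'}(\Omega)\), because the mean functional is continuous, so it is Banach. Similarly, \(s>s'\) gives \(H^s(\Omega)\hookrightarrow H^{s'}(\Omega)\) continuously, so \(\mathcal X\hookrightarrow\mathcal Z\hookrightarrow L^\infty(\nu_0)\cap L^2_0(\nu_0)\).

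Second, the Poincaré inequality \eqref{eq:reference-poincare}. Because \(\nu_0=|\Omega|^{-1}\d x\), the weighted space \(H^1(\Omega;\nu_0)\) coincides with \(H^1(\Omega)\), and both sides of \eqref{eq:reference-poincare} scale by the same factor \(|\Omega|^{-1}\). It therefore suffices to prove the classical Poincaré--Wirtinger inequality on a bounded connected Lipschitz domain. I would do this by the standard contradiction argument. Suppose there are \(u_n\) with mean zero, \(\|u_n\|_{L^2}=1\), and \(\|\nabla u_n\|_{L^2}\to0\). Rellich--Kondrachov compactness \(H^1(\Omega)\hookrightarrow\hookrightarrow L^2(\Omega)\), which holds for bounded Lipschitz domains, gives a subsequence converging in \(L^2\) to some \(u\). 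This limit satisfies \(\nabla u=0\), so \(u\) is constant by connectedness, and it has mean zero, so \(u=0\). That contradicts \(\|u\|_{L^2}=1\).

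This step is the only genuinely analytic input, and I expect it to be the main obstacle. The Lipschitz hypothesis is exactly what is needed there, since it supplies both the compact embedding and the extension operator. Density of \(C_c^\infty(\Omega)\) in \(L^2(\nu_0)=L^2(\Omega)\) up to a constant factor is standard.

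With Assumption~\ref{ass:abstract-forward-reference} verified, any \(\mathcal X_{\mathrm{ad}}\subset\mathcal Z\) satisfying \eqref{eq:ambient-uniform-density-bounds} meets Assumption~\ref{ass:ambient-admissible-class}. Theorem~\ref{thm:ambient-neumann} then applies, and its weak form tested against all of \(H^1(\Omega)\) (Remark~\ref{rem:ambient-neumann-all-tests}) is by definition the weak Neumann problem with natural zero-flux condition. Under the stated local stability hypothesis, Assumption~\ref{ass:abstract-forward-differentiable} holds verbatim with the embedding established above. Proposition~\ref{prop:linearized-neumann} and Corollary~\ref{cor:directional-derivative-gh} therefore apply. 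Finally, Corollary~\ref{cor:ambient-transport-map-hsprime} with \(\|\cdot\|_{\mathcal Z}=\|\cdot\|_{H^{s'}}\) gives the \(H^{s'}\)-continuity statement for the transport map.
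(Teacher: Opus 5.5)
Your proposal is correct and follows the same route as the paper. You verify Assumption~\ref{ass:abstract-forward-reference} via the Sobolev embedding $H^{s'}(\Omega)\hookrightarrow L^\infty(\Omega)$ for $s'>d/2$, the classical Poincar\'e inequality on bounded connected Lipschitz domains, and $H^s\hookrightarrow H^{s'}$, and then read off the remaining claims from the abstract results. You simply give more detail than the paper does, namely the extension operator, the Rellich--Kondrachov contradiction argument for Poincar\'e, the closedness of $\mathcal Z$, and the density item, which the paper leaves implicit.
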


\begin{proof}
    The Sobolev embedding \(H^{s'}(\Omega)\hookrightarrow L^\infty(\Omega)\) holds because
    \(s'>d/2\), and the Poincar\'e inequality on bounded connected Lipschitz domains is
    classical. Since \(s>s'\), we also have the continuous embedding \(\mathcal X\hookrightarrow \mathcal Z\).
    Thus Assumption~\ref{ass:abstract-forward-reference} holds with the chosen \(\mathcal Z\).
    Together with Assumption~\ref{ass:ambient-admissible-class}, this yields the abstract forward
    theory in the bounded-domain setting. The final statement is immediate from the local
    stability hypothesis.
\end{proof}

% !TeX root = ../main.tex

\section{An inverse problem on Bayes Hilbert path space}
\label{sec:ambient-inverse-problem}

In Section~\ref{sec:intrinsic-forward-geometry}, we associated to each regular Bayes Hilbert path
\[
    h:[0,T]\to \mathcal X_{\mathrm{ad}}
\]
a canonical velocity field obtained from the weighted Neumann problem, together with the induced
transport form
\[
    \mathfrak g_h(\xi,\zeta).
\]
We now turn to the inverse problem. Rather than assuming that the path \(h(\cdot)\) is known, we
ask how to reconstruct it from indirect time-dependent observations.

In many settings, instantaneous observations do not determine the state: for a prescribed mobile
sensor path \(y\), the instantaneous linearized observation operator
\[
    J_{t,h(t)}^{y}
\]
may have a nontrivial kernel at each time \(t\). Reconstruction then requires combining information
from the entire observation record with dynamical structure linking different times. The central
analytic object capturing this interplay is the \emph{joint transport--observability form}, which
couples the instantaneous observability form induced by the mobile sensors with the transport form
induced by the forward geometry. Coercivity of this joint form is the natural replacement for
pointwise observability, and it makes the pair
\[
    (\mathfrak g_h,\mathfrak j_{t,h}^{y})
\]
into the engine of identifiability rather than merely an interpretive device.

\subsection{Mobile sensor observations and admissible paths}
\label{subsec:mobile-sensor-observations}

We retain the setting of Section~\ref{sec:intrinsic-forward-geometry}. Thus
\(\Omega\subset\mathbb R^d\) is bounded, connected, and Lipschitz,
\(\nu_0=|\Omega|^{-1}\d x\), the Sobolev exponents
\[
    s>d,
    \qquad
    \frac d2<s'<\frac s2
\]
are fixed, and
\[
    \mathcal X = H^s(\Omega)\cap L^2_0(\nu_0),
    \qquad
    \mathcal Z = H^{s'}(\Omega)\cap L^2_0(\nu_0),
    \qquad
    \mathcal X_{\mathrm{ad}}\subset \mathcal X
\]
is the admissible state class from Assumption~\ref{ass:ambient-admissible-class}.

For the inverse problem we impose one additional structural assumption on
\(\mathcal X_{\mathrm{ad}}\).

\begin{assumption}[Sobolev-regular admissible state class]
    \label{ass:sobolev-admissible-state-class}
    Assume that \(\mathcal X_{\mathrm{ad}}\) is closed in \(\mathcal Z\).
\end{assumption}

We work throughout this section with a prescribed mobile-sensor observation model.
Let
\[
    \kappa\in L^\infty(\mathbb R^d)\cap C_c(\mathbb R^d)
\]
be a compactly supported sensor kernel, and let
\[
    y=(y_1,\dots,y_r),
    \qquad
    y_j:[0,T]\to \Omega,
\]
be measurable sensor trajectories. For each \(t\in[0,T]\), define the instantaneous observation map
\begin{equation}
    \label{eq:mobile-sensor-observation}
    \mathcal G_t^{y}(h)
    :=
    \left(
    \int_\Omega \kappa(x-y_1(t))\,\d\rho_h(x),\dots,
    \int_\Omega \kappa(x-y_r(t))\,\d\rho_h(x)
    \right)\in \mathbb R^r.
\end{equation}
Thus the observation operator itself depends on time through the sensor locations.

If \(h^\dagger(\cdot)\) denotes the unknown true path, then the ideal data are
\[
    \obsdata^\dagger(t)=\mathcal G_t^y(h^\dagger(t)),
\]
and the measured data are modeled as
\[
    \obsdata(t)=\obsdata^\dagger(t)+\eta(t),
\]
where \(\eta\) is an observation error term.

\begin{definition}[Observability differential]
    \label{def:ambient-observability-differential}
    For \(t\in[0,T]\) and \(h\in\mathcal X_{\mathrm{ad}}\), assume \(\mathcal G_t^y\) is
    Fr\'echet differentiable at \(h\). The corresponding \emph{observability differential} is
    \[
        J_{t,h}^{y}:=D\mathcal G_t^y(h):H^{s'}(\Omega)\to\mathbb R^r.
    \]
\end{definition}

\begin{definition}[Instantaneous observability form]
    \label{def:ambient-observability-form}
    For \(t\in[0,T]\) and \(h\in\mathcal X_{\mathrm{ad}}\), the associated
    \emph{instantaneous observability form} is
    \[
        \mathfrak j_{t,h}^{y}(\xi,\zeta)
        :=
        \langle J_{t,h}^{y}\xi,\;J_{t,h}^{y}\zeta\rangle_{\mathbb R^r},
        \qquad
        \xi,\zeta\in H^{s'}(\Omega)\cap L^2_0(\nu_0).
    \]
\end{definition}

\begin{proposition}[Observability differential for mobile sensor averages]
    \label{prop:localized-sensor-differential}
    For each \(t\in[0,T]\), the map
    \[
        \mathcal G_t^{y}:\mathcal X_{\mathrm{ad}}\to\mathbb R^r
    \]
    defined by \eqref{eq:mobile-sensor-observation} is Fr\'echet differentiable with respect to the
    \(H^{s'}(\Omega)\)-topology, and for every \(h\in\mathcal X_{\mathrm{ad}}\) and
    \(\xi\in H^{s'}(\Omega)\cap L^2_0(\nu_0)\),
    \begin{equation}
        \label{eq:mobile-sensor-differential-formula}
        J_{t,h}^{y}\xi
        =
        \Bigl(
        \operatorname{Cov}_{\rho_h}(\kappa(\cdot-y_1(t)),\xi),\dots,
        \operatorname{Cov}_{\rho_h}(\kappa(\cdot-y_r(t)),\xi)
        \Bigr).
    \end{equation}
    Consequently,
    \begin{equation}
        \label{eq:mobile-sensor-observability-form}
        \mathfrak j_{t,h}^{y}(\xi,\zeta)
        =
        \sum_{j=1}^r
        \operatorname{Cov}_{\rho_h}(\kappa(\cdot-y_j(t)),\xi)\,
        \operatorname{Cov}_{\rho_h}(\kappa(\cdot-y_j(t)),\zeta),
    \end{equation}
    and in particular
    \begin{equation}
        \label{eq:mobile-sensor-observability-energy}
        \mathfrak j_{t,h}^{y}(\xi,\xi)
        =
        \sum_{j=1}^r
        \operatorname{Cov}_{\rho_h}(\kappa(\cdot-y_j(t)),\xi)^2.
    \end{equation}
\end{proposition}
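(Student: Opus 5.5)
The plan is to reduce the claim to Proposition~\ref{prop:diff_exp_normalization} by composing with a bounded linear functional. For each fixed \(t\) and \(j\), set \(k_j:=\kappa(\cdot-y_j(t))\). Since \(\kappa\in L^\infty(\mathbb R^d)\), we have \(k_j\in L^\infty(\nu_0)\), and the \(j\)-th component of \(\mathcal G_t^y\) can be written as
\[
    \mathcal G_{t,j}^y(h)=\int_\Omega k_j\,\mathcal E(h)\,\d\nu_0 =: \ell_j(\mathcal E(h)),
\]
where \(\ell_j:L^1(\nu_0)\to\mathbb R\), \(\ell_j(f)=\int k_j f\,\d\nu_0\), is bounded linear with \(\|\ell_j\|\le\|\kappa\|_{L^\infty}\). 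Because of this factorization, measurability of \(y_j\) plays no role: \(t\) is fixed.

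The first step handles the change of topology. Proposition~\ref{prop:diff_exp_normalization} gives Fréchet differentiability of \(\mathcal E\) on \(\mathcal H\) with the \(L^\infty(\nu_0)\)-norm. Corollary~\ref{cor:abstract-forward-bounded} (via Sobolev embedding, \(s'>d/2\)) gives \(H^{s'}(\Omega)\hookrightarrow L^\infty(\Omega)\) continuously, and \(\mathcal X_{\mathrm{ad}}\subset\mathcal H\) by Remark~\ref{rem:ambient-admissible-implies-linfty}. The remainder bound in that proof,
\[
    \tfrac12 e^{\|h\|_{L^\infty}+\|\xi\|_{L^\infty}}\|\xi\|_{L^\infty}^2,
\]
is then \(o(\|\xi\|_{H^{s'}})\). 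Hence \(\mathcal E\) is Fréchet differentiable with respect to the \(H^{s'}\)-topology, with the same derivative. One subtlety is that \(\mathcal X_{\mathrm{ad}}\) need not be open. I would therefore interpret differentiability as that of the restriction of the map defined on the open set \(\mathcal H\cap H^{s'}\) (the formula \eqref{eq:mobile-sensor-observation} makes sense for all bounded \(h\)). This is the only point requiring care.

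The chain rule then gives
\[
    J_{t,h}^y\xi=\Bigl(\ell_j\bigl(D\mathcal E(h)[\xi]\bigr)\Bigr)_{j=1}^r .
\]
By \eqref{eq:diff_exp_normalization}, each component is
\[
    \int_\Omega k_j\,w_h\bigl(\xi-\mathbb E_{\rho_h}[\xi]\bigr)\,\d\nu_0
    =\int_\Omega k_j\bigl(\xi-\mathbb E_{\rho_h}[\xi]\bigr)\,\d\rho_h .
\]
Because \(\int(\xi-\mathbb E_{\rho_h}[\xi])\,\d\rho_h=0\), we may subtract \(\mathbb E_{\rho_h}[k_j]\) from \(k_j\) at no cost, which yields \(\operatorname{Cov}_{\rho_h}(k_j,\xi)\) as in Definition~\ref{def:weighted-covariance}; both \(k_j\) and \(\xi\) lie in \(L^2(\rho_h)\) since they are bounded. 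This proves \eqref{eq:mobile-sensor-differential-formula}. Formulas \eqref{eq:mobile-sensor-observability-form} and \eqref{eq:mobile-sensor-observability-energy} then follow by inserting this into the Euclidean inner product of Definition~\ref{def:ambient-observability-form} and setting \(\zeta=\xi\).
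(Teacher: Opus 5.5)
Your proposal is correct and follows the same route as the paper: differentiate each component through Proposition~\ref{prop:diff_exp_normalization}, then rewrite the result as a covariance using that the centered factor has $\rho_h$-mean zero. Your two additional points fill in details the paper leaves implicit: transferring Fr\'echet differentiability from the $L^\infty$- to the $H^{s'}$-topology via the Sobolev embedding, and working on the linear space $H^{s'}(\Omega)\cap L^2_0(\nu_0)$ rather than on the possibly non-open set $\mathcal X_{\mathrm{ad}}$.
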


\begin{proof}
    For each \(j=1,\dots,r\), define
    \[
        \mathcal G_{t,j}^{y}(h):=\int_\Omega \kappa(x-y_j(t))\,\d\rho_h(x).
    \]
    Using the differential of the exponential-normalization map from Proposition~\ref{prop:diff_exp_normalization}, we have
    \begin{align*}
        D\mathcal G_{t,j}^{y}(h)[\xi]
         & =
        \int_\Omega \kappa(x-y_j(t))
        \bigl(\xi(x)-\mathbb E_{\rho_h}[\xi]\bigr)\,\d\rho_h(x) \\
         & =
        \operatorname{Cov}_{\rho_h}(\kappa(\cdot-y_j(t)),\xi).
    \end{align*}
    This proves \eqref{eq:mobile-sensor-differential-formula}. The formulas
    \eqref{eq:mobile-sensor-observability-form} and
    \eqref{eq:mobile-sensor-observability-energy} follow immediately from
    Definition~\ref{def:ambient-observability-form}.
\end{proof}

\begin{remark}[Interpretation]
    \label{rem:localized-sensor-interpretation}
    For the mobile-sensor model \eqref{eq:mobile-sensor-observation}, a tangent direction \(\xi\) is
    seen through the data only via its covariance with the translated sensor kernels
    \(\kappa(\cdot-y_j(t))\) under the current law \(\rho_h\). Thus the visible directions vary in time
    both because the state \(h\) evolves and because the sensors move.
\end{remark}

\begin{proposition}[Instantaneous ill-posedness of mobile sensor observations]
    \label{prop:localized-sensor-illposedness}
    Fix \(t\in[0,T]\). For each \(h\in\mathcal X_{\mathrm{ad}}\), the differential
    \(J_{t,h}^{y}\) extends to a bounded linear operator
    \[
        J_{t,h}^{y}\in\mathcal L(L^2_0(\nu_0),\mathbb R^r),
    \]
    and
    \[
        \operatorname{rank}J_{t,h}^{y}\le r.
    \]
    In particular, \(J_{t,h}^{y}\) is finite-rank and therefore compact.

    Consequently, the instantaneous invisible subspace
    \[
        \ker J_{t,h}^{y}\subset L^2_0(\nu_0)
    \]
    is closed and has codimension at most \(r\). If \(L^2_0(\nu_0)\) is infinite dimensional, then
    \(\ker J_{t,h}^{y}\) is infinite dimensional.
\end{proposition}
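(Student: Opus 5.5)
The plan is to start from the explicit covariance formula \eqref{eq:mobile-sensor-differential-formula} of Proposition~\ref{prop:localized-sensor-differential}. It already defines each component of \(J_{t,h}^{y}\xi\) for \(\xi\) in the dense subspace \(H^{s'}(\Omega)\cap L^2_0(\nu_0)\), and each component makes sense for any \(\xi\in L^2_0(\nu_0)\). So we simply take the same formula as the definition of the extension, componentwise in \(j=1,\dots,r\):
\[
    \ell_j(\xi):=\operatorname{Cov}_{\rho_h}(\kappa(\cdot-y_j(t)),\xi)
    =\int_\Omega \bigl(\kappa(\cdot-y_j(t))-\mathbb E_{\rho_h}[\kappa(\cdot-y_j(t))]\bigr)\,\xi\,w_h\,\d\nu_0 .
\]
Here the centering can be moved onto the kernel factor alone, because the product of a centered function with a constant integrates to zero against \(\rho_h\).

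The key estimate is boundedness. Since \(\kappa\in L^\infty\), the centered kernel \(k_j:=\kappa(\cdot-y_j(t))-\mathbb E_{\rho_h}[\kappa(\cdot-y_j(t))]\) satisfies \(\|k_j\|_{L^\infty}\le 2\|\kappa\|_{L^\infty}\). Moreover \(w_h\le C\) by Assumption~\ref{ass:ambient-admissible-class}. Cauchy--Schwarz in \(L^2(\nu_0)\), using that \(\nu_0\) is a probability measure, then gives
\[
    |\ell_j(\xi)|\le 2C\|\kappa\|_{L^\infty}\|\xi\|_{L^2(\nu_0)}.
\]
Thus each \(\ell_j\) is a bounded linear functional on \(L^2_0(\nu_0)\). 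Its Riesz representer is the \(L^2_0(\nu_0)\)-projection of \(k_jw_h\). The extension agrees with \(D\mathcal G_t^y(h)\) on the dense subspace, and it is the unique continuous extension, since \(H^{s'}\cap L^2_0\) is dense in \(L^2_0(\nu_0)\) (for instance, \(C_c^\infty\) is dense in \(L^2\), and we subtract means). Collecting the components gives \(J_{t,h}^{y}=(\ell_1,\dots,\ell_r)\in\mathcal L(L^2_0(\nu_0),\mathbb R^r)\).

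The remaining claims are linear algebra. The range lies in \(\mathbb R^r\), so \(\operatorname{rank}J_{t,h}^{y}\le r\). A bounded finite-rank operator maps bounded sets to bounded subsets of a finite-dimensional space, which are precompact, so \(J_{t,h}^{y}\) is compact. The kernel \(\bigcap_j\ker\ell_j\) is closed by continuity. It has codimension at most \(r\), because \(J_{t,h}^{y}\) induces an injective linear map \(L^2_0(\nu_0)/\ker J_{t,h}^{y}\to\mathbb R^r\). If \(L^2_0(\nu_0)\) is infinite dimensional, a subspace of finite codimension cannot be finite dimensional, since otherwise \(L^2_0(\nu_0)\) would be the sum of two finite-dimensional spaces; hence the kernel is infinite dimensional.

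I expect no real obstacle. The only point needing care is the justification that the extension is well defined and coincides with the Fr\'echet derivative on \(H^{s'}\cap L^2_0\). This follows from the density remark together with the uniform \(L^2\) bound.
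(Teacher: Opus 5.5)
Your proposal is correct and follows the paper's route. The paper simply declares the proposition immediate from the covariance formula \eqref{eq:mobile-sensor-differential-formula}, and you supply the details it leaves implicit: the bound $|\ell_j(\xi)|\le 2C\|\kappa\|_{L^\infty}\|\xi\|_{L^2(\nu_0)}$, density of $H^{s'}(\Omega)\cap L^2_0(\nu_0)$, and the finite-codimension linear algebra; note also that $k_jw_h$ already lies in $L^2_0(\nu_0)$, since $\int_\Omega k_jw_h\,\d\nu_0=\int_\Omega k_j\,\d\rho_h=0$, so it is itself the Riesz representer and no projection is needed.
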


\begin{proof}
    This is immediate from \eqref{eq:mobile-sensor-differential-formula}.
\end{proof}

The form \(\mathfrak j_{t,h}^{y}\) is symmetric and nonnegative by construction. In the partially
observable setting, \(\mathfrak j_{t,h}^{y}(\xi,\xi)\) may vanish for nonzero \(\xi\); equivalently,
the instantaneous observation differential \(J_{t,h}^{y}\) may have a nontrivial kernel even when the
sensor path is prescribed.

We reconstruct paths from the admissible class
\[
    \mathcal A_{\mathrm{ad}}
    :=
    \left\{
    h\in L^2(0,T;\mathcal X)\cap H^1(0,T;L^2_0(\nu_0))
    :
    h(t)\in \mathcal X_{\mathrm{ad}}
    \text{ for a.e.\ } t\in[0,T]
    \right\}.
\]

\begin{remark}[Observation-side ill-posedness]
    \label{rem:localized-sensor-illposedness}
    Proposition~\ref{prop:localized-sensor-illposedness} shows that the instantaneous linearized inverse
    problem for the mobile-sensor model is severely underdetermined in the ambient
    infinite-dimensional setting: at each fixed time \(t\), the data constrain at most \(r\) directions,
    while an infinite-dimensional family of tangent perturbations remains invisible. Thus
    identifiability cannot be expected from a single-time observation alone. The ambient inverse theory
    below addresses this by coupling instantaneous observability with the transport geometry through the
    joint form.
\end{remark}

\subsection{Pathwise observability and the joint transport--observability form}

We now formulate the central observability condition for the partially observable inverse problem.
The key idea is that directions invisible to the observation operator at a single time may become
visible when integrated over the full time horizon, provided the transport geometry couples
different times through the time derivative.

\begin{definition}[Pathwise observability Gramian]
    \label{def:pathwise-observability-gramian}
    For \(h\in \mathcal A_{\mathrm{ad}}\), the \emph{pathwise observability Gramian} is the bilinear
    form
    \[
        \mathfrak J_y[h](\xi,\zeta)
        :=
        \int_0^T \mathfrak j_{t,h(t)}^{y}\bigl(\xi(t),\zeta(t)\bigr)\,\d t
        =
        \int_0^T \langle J_{t,h(t)}^{y}\xi(t),\;J_{t,h(t)}^{y}\zeta(t)\rangle_{\mathbb R^r}\,\d t
    \]
    defined on path perturbations
    \[
        \xi,\zeta\in L^2(0,T;H^{s'}(\Omega)\cap L^2_0(\nu_0)).
    \]
\end{definition}

In the fully observable case---when the instantaneous form \(\mathfrak j_{t,h}^{y}\) is coercive at
each time---the pathwise Gramian \(\mathfrak J_y[h]\) is already coercive on its own. In the
partially observable setting, however, \(\mathfrak J_y[h]\) may degenerate: there may exist nonzero
path perturbations \(\xi(\cdot)\) with
\[
    \xi(t)\in\ker J_{t,h(t)}^{y}
    \qquad\text{for a.e.\ }t,
\]
so that \(\mathfrak J_y[h](\xi,\xi)=0\). Reconstruction then requires additional structure linking
different times.

This additional structure is provided by the transport form. The following definition combines the
two ambient geometric objects into a single bilinear form on path perturbations.

\begin{definition}[Joint transport--observability form]
    \label{def:joint-transport-observability-form}
    For \(h\in \mathcal A_{\mathrm{ad}}\), the \emph{joint transport--observability form} is the
    bilinear form
    \begin{equation}
        \label{eq:joint-form}
        Q_y[h](\xi,\zeta)
        :=
        \int_0^T \mathfrak j_{t,h(t)}^{y}\bigl(\xi(t),\zeta(t)\bigr)\,\d t
        +
        \int_0^T \mathfrak g_{h(t)}\bigl(\dot\xi(t),\dot\zeta(t)\bigr)\,\d t
    \end{equation}
    defined on path perturbations
    \[
        \xi,\zeta \in L^2(0,T;H^{s'}(\Omega)\cap L^2_0(\nu_0))
        \cap H^1(0,T;L^2_0(\nu_0)).
    \]
    Equivalently,
    \[
        Q_y[h](\xi,\zeta)
        =
        \mathfrak J_y[h](\xi,\zeta)
        +
        \int_0^T \mathfrak g_{h(t)}\bigl(\dot\xi(t),\dot\zeta(t)\bigr)\,\d t.
    \]
\end{definition}

The form \(Q_y[h]\) encodes the interplay between the two ambient geometric objects
\[
    (\mathfrak g_h,\mathfrak j_{t,h}^{y})
\]
at the level of path perturbations. The first term measures how strongly tangent directions are seen
through the data; the second measures the dynamical cost of their time variation. Together, they
quantify how much information about the perturbation \(\xi(\cdot)\) is available from the
combination of observations and dynamical structure.

\begin{remark}[Mechanism of partial observability]
    \label{rem:partial-observability-mechanism}
    Suppose at each time \(t\), the kernel of \(J_{t,h(t)}^{y}\) is nontrivial but \emph{rotates} as the
    state evolves and the sensors move. A nonzero perturbation \(\xi(\cdot)\) that tracks the kernel---staying in
    \(\ker J_{t,h(t)}^{y}\) for all \(t\)---must have a nontrivial time derivative \(\dot\xi\). The
    transport term
    \[
        \int_0^T\mathfrak g_{h(t)}(\dot\xi,\dot\xi)\,\d t
    \]
    then detects this variation. Conversely, a perturbation with \(\dot\xi=0\) is constant in time, so
    it can only hide from the data if it lies in \(\ker J_{t,h(t)}^{y}\) for \emph{all} \(t\)
    simultaneously. Joint coercivity of \(Q_y[h]\) rules out both scenarios: perturbations must be
    visible either through the observations or through their dynamical cost.
\end{remark}

\begin{assumption}[Joint transport--observability coercivity]
    \label{ass:joint-transport-observability}
    There exists \(\kappa>0\) such that for all \(h\in \mathcal A_{\mathrm{ad}}\) and all
    \[
        \xi\in L^2(0,T;H^{s'}(\Omega)\cap L^2_0(\nu_0))
        \cap H^1(0,T;L^2_0(\nu_0)),
    \]
    we have
    \begin{equation}
        \label{eq:joint-coercivity}
        Q_y[h](\xi,\xi)
        \ge
        \kappa \|\xi\|_{L^2(0,T;L^2(\nu_0))}^2.
    \end{equation}
\end{assumption}

\begin{remark}[Comparison with pointwise observability]
    \label{rem:pointwise-vs-pathwise}
    If the instantaneous observability form alone is coercive---that is, if there exists
    \(\kappa_0>0\) such that
    \[
        \mathfrak j_{t,h}^{y}(\xi,\xi)\ge \kappa_0 \|\xi\|_{L^2(\nu_0)}^2
        \qquad
        \text{for all } t\in[0,T],\ h\in\mathcal X_{\mathrm{ad}},\
        \xi\in H^{s'}(\Omega)\cap L^2_0(\nu_0),
    \]
    then Assumption~\ref{ass:joint-transport-observability} is automatically satisfied with
    \(\kappa=\kappa_0\), since the transport term is nonnegative. In this case, the data at each time
    already determine the state, and no dynamical coupling is needed. Conversely, when
    \(\ker J_{t,h(t)}^{y}\neq\{0\}\), Assumption~\ref{ass:joint-transport-observability} is strictly
    weaker than pointwise coercivity, and the transport term is essential: it compensates for
    directions that are invisible instantaneously by detecting their temporal variation. The
    \(\mu\)-regularization introduced in the variational formulation below provides analytic control over
    this transport term.
\end{remark}

\begin{proposition}[Upper bound on the transport term]
    \label{prop:transport-term-upper-bound}
    There exists a constant \(C_{\mathrm{tr}}>0\), depending only on the admissible class
    \(\mathcal X_{\mathrm{ad}}\) and the domain \(\Omega\), such that for all
    \(h\in\mathcal A_{\mathrm{ad}}\) and all
    \[
        \xi\in H^1(0,T;L^2_0(\nu_0)),
    \]
    we have
    \[
        \int_0^T \mathfrak g_{h(t)}(\dot\xi(t),\dot\xi(t))\,\d t
        \le
        C_{\mathrm{tr}}\|\dot\xi\|_{L^2(0,T;L^2(\nu_0))}^2.
    \]
\end{proposition}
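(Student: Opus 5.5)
The plan is to reduce the statement to the pointwise-in-time energy estimate of Corollary~\ref{cor:ambient-neumann-energy}, with a uniform constant, and then integrate in time.

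Fix \(h\in\mathcal A_{\mathrm{ad}}\) and \(\xi\in H^1(0,T;L^2_0(\nu_0))\). For almost every \(t\in[0,T]\) we have \(h(t)\in\mathcal X_{\mathrm{ad}}\) and \(\dot\xi(t)\in L^2_0(\nu_0)\). So at such \(t\) the transport form \(\mathfrak g_{h(t)}(\dot\xi(t),\dot\xi(t))\) is well defined via Theorem~\ref{thm:ambient-neumann}, and Definition~\ref{def:intrinsic-transport-form} gives
\[
\mathfrak g_{h(t)}(\dot\xi(t),\dot\xi(t))=\int_\Omega |\nabla\psi_{h(t),\dot\xi(t)}|^2\,\d\rho_{h(t)} .
\]

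Corollary~\ref{cor:ambient-neumann-energy} then yields
\[
\mathfrak g_{h(t)}(\dot\xi(t),\dot\xi(t))\le M^2\|\dot\xi(t)\|_{L^2(\nu_0)}^2 .
\]
The constant \(M\) depends only on \(c\), \(C\) from Assumption~\ref{ass:ambient-admissible-class} and on the Poincar\'e constant \(C_{\nu_0}\). These in turn depend only on \(\mathcal X_{\mathrm{ad}}\) and on \(\Omega\), via Corollary~\ref{cor:abstract-forward-bounded}. The key point to check is that \(M\) is independent of \(t\) and of the particular path \(h\).

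Concretely, tracking the proof of that corollary gives
\[
\|\xi-\mathbb E_{\rho_h}\xi\|_{L^2(\rho_h)}\le 2\sqrt C\|\xi\|_{L^2(\nu_0)},
\qquad
\|\psi\|_{L^2(\rho_h)}\le \sqrt{(C/c)C_{\nu_0}}\,\|\nabla\psi\|_{L^2(\rho_h)} .
\]
The second inequality uses \(\psi\in H^1_{\rho_h,\diamond}\) and Proposition~\ref{prop:ambient-weighted-poincare}. Together these give the explicit admissible choice \(C_{\mathrm{tr}}=4C^2C_{\nu_0}/c\).

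Finally, integrate the pointwise bound over \((0,T)\). This requires measurability of \(t\mapsto \mathfrak g_{h(t)}(\dot\xi(t),\dot\xi(t))\), which I expect to be the only mildly delicate step, since \(h\) is merely in \(L^2(0,T;\mathcal X)\) rather than continuous. I would handle it as follows. Corollary~\ref{cor:ambient-transport-map-hsprime} shows that \((h,\zeta)\mapsto \mathcal T_h\zeta\) is continuous from \(\mathcal X_{\mathrm{ad}}\times L^2_0(\nu_0)\) (with the \(\mathcal Z\) topology on the first factor) into \(L^2(\nu_0;\mathbb R^d)\). Moreover \(w_h\) depends Lipschitz-continuously on \(h\) by \eqref{eq:ambient-weight-lipschitz}. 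Hence the integrand is a continuous function of the strongly measurable pair \((h(t),\dot\xi(t))\), and is therefore measurable. Alternatively, since the integrand is nonnegative, one can avoid this entirely by reading the left-hand side as a lower or upper Lebesgue integral; the inequality still holds.

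Integrating the pointwise bound then gives
\[
\int_0^T \mathfrak g_{h(t)}(\dot\xi(t),\dot\xi(t))\,\d t\le C_{\mathrm{tr}}\|\dot\xi\|_{L^2(0,T;L^2(\nu_0))}^2 .
\]
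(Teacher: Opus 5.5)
Your proposal is correct and follows the same route as the paper. You derive a bound $\mathfrak g_h(\zeta,\zeta)\le C_{\mathrm{tr}}\|\zeta\|_{L^2(\nu_0)}^2$, uniform in $h\in\mathcal X_{\mathrm{ad}}$, from the weighted Neumann energy estimate and the density bounds, then apply it with $\zeta=\dot\xi(t)$ and integrate in time; your explicit constant $C_{\mathrm{tr}}=4C^2C_{\nu_0}/c$ and your measurability argument via Corollary~\ref{cor:ambient-transport-map-hsprime} are valid details that the paper leaves implicit.
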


\begin{proof}
    By definition of the transport form and the weighted Neumann solve,
    \[
        \mathfrak g_h(\zeta,\zeta)
        =
        \int_\Omega |\nabla\psi_{h,\zeta}|^2\,\d\rho_h.
    \]
    By the energy estimate from Theorem~\ref{thm:ambient-neumann}, together with the uniform density
    bounds on \(\mathcal X_{\mathrm{ad}}\), there exists \(C_{\mathrm{tr}}>0\) such that
    \[
        \mathfrak g_h(\zeta,\zeta)
        \le
        C_{\mathrm{tr}}\|\zeta\|_{L^2(\nu_0)}^2
        \qquad
        \text{for all } h\in\mathcal X_{\mathrm{ad}},\ \zeta\in L^2_0(\nu_0).
    \]
    Applying this pointwise in time with \(\zeta=\dot\xi(t)\) and integrating over \([0,T]\) gives the
    claim.
\end{proof}

\begin{remark}[Interpretation of the bounds]
    \label{rem:joint-form-bounds-interpretation}
    Assumption~\ref{ass:joint-transport-observability} gives a lower bound showing that the joint
    form \(Q_y[h]\) controls the \(L^2(0,T;L^2(\nu_0))\)-size of a path perturbation. On the other
    hand, Proposition~\ref{prop:transport-term-upper-bound} shows that the transport contribution to
    \(Q_y[h]\) is controlled by the natural \(H^1\)-in-time regularity of the perturbation.

    For the prescribed mobile-sensor model, the observation differential is uniformly bounded on
    \(\mathcal X_{\mathrm{ad}}\): there exists \(C_{\mathrm{obs}}>0\), depending only on
    \(\kappa\), \(r\), \(\Omega\), and \(\mathcal X_{\mathrm{ad}}\), such that
    \[
        \|J_{t,h}^{y}\xi\|_{\mathbb R^r}
        \le
        C_{\mathrm{obs}}\|\xi\|_{L^2(\nu_0)}
        \qquad
        \text{for all } t\in[0,T],\ h\in\mathcal X_{\mathrm{ad}},\ \xi\in L^2_0(\nu_0).
    \]
    Hence
    \[
        Q_y[h](\xi,\xi)
        \lesssim
        \|\xi\|_{L^2(0,T;L^2(\nu_0))}^2
        +
        \|\dot\xi\|_{L^2(0,T;L^2(\nu_0))}^2.
    \]
    Thus \(Q_y[h]\) should be viewed as a pathwise energy that is coercive in the \(L^2\)-state
    variable and compatible with the natural regularity of the admissible path space.
\end{remark}

\begin{proposition}[A sufficient criterion for joint transport--observability coercivity]
    \label{prop:kernel-decomposition-criterion}
    Assume that for each \(h\in \mathcal A_{\mathrm{ad}}\) and a.e.\ \(t\in[0,T]\), the observability
    differential \(J_{t,h(t)}^{y}\) extends to a bounded operator
    \[
        J_{t,h(t)}^{y}\in \mathcal L\bigl(L^2_0(\nu_0),\mathbb R^r\bigr).
    \]
    Define the instantaneous invisible subspace
    \[
        K_{y,h}(t):=\ker J_{t,h(t)}^{y}\subset L^2_0(\nu_0),
    \]
    and let
    \[
        \Pi_{y,h}(t):L^2_0(\nu_0)\to K_{y,h}(t)
    \]
    denote the \(L^2(\nu_0)\)-orthogonal projection onto \(K_{y,h}(t)\). Assume moreover that for
    each \(h\in \mathcal A_{\mathrm{ad}}\), the map
    \[
        t\longmapsto \Pi_{y,h}(t)
    \]
    is strongly measurable as an \(\mathcal L(L^2_0(\nu_0),L^2_0(\nu_0))\)-valued map.

    Suppose there exist constants \(c_{\mathrm{obs}},c_{\mathrm{dyn}}>0\) such that for every
    \(h\in \mathcal A_{\mathrm{ad}}\) the following hold:

    \begin{enumerate}
        \item[(i)] (\emph{uniform transverse observability}) for a.e.\ \(t\in[0,T]\) and every
              \(\zeta\in L^2_0(\nu_0)\),
              \begin{equation}
                  \label{eq:transverse-observability}
                  \mathfrak j_{t,h(t)}^{y}(\zeta,\zeta)
                  =
                  \|J_{t,h(t)}^{y}\zeta\|_{\mathbb R^r}^2
                  \ge
                  c_{\mathrm{obs}}
                  \bigl\|(I-\Pi_{y,h}(t))\zeta\bigr\|_{L^2(\nu_0)}^2;
              \end{equation}

        \item[(ii)] (\emph{uniform dynamical detectability of invisible directions}) for every
              \[
                  \xi\in L^2(0,T;H^{s'}(\Omega)\cap L^2_0(\nu_0))
                  \cap H^1(0,T;L^2_0(\nu_0)),
              \]
              one has
              \begin{equation}
                  \label{eq:dynamical-detectability}
                  \int_0^T \mathfrak g_{h(t)}(\dot\xi(t),\dot\xi(t))\,\d t
                  \ge
                  c_{\mathrm{dyn}}
                  \int_0^T
                  \bigl\|\Pi_{y,h}(t)\xi(t)\bigr\|_{L^2(\nu_0)}^2\,\d t.
              \end{equation}
    \end{enumerate}

    Then Assumption~\ref{ass:joint-transport-observability} holds with
    \[
        \kappa=\min\{c_{\mathrm{obs}},c_{\mathrm{dyn}}\}.
    \]
\end{proposition}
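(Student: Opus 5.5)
The argument is a direct orthogonal splitting combined with integration in time. Fix \(h\in\mathcal A_{\mathrm{ad}}\) and an admissible perturbation path \(\xi\). At a.e.\ \(t\), decompose
\[
    \xi(t)=\Pi_{y,h}(t)\xi(t)+\bigl(I-\Pi_{y,h}(t)\bigr)\xi(t).
\]
Since \(\Pi_{y,h}(t)\) is an \(L^2(\nu_0)\)-orthogonal projection, Pythagoras gives
\[
    \|\xi(t)\|_{L^2(\nu_0)}^2=\|\Pi_{y,h}(t)\xi(t)\|_{L^2(\nu_0)}^2+\|(I-\Pi_{y,h}(t))\xi(t)\|_{L^2(\nu_0)}^2 .
\]
The goal is to bound each piece by one of the two terms in \(Q_y[h](\xi,\xi)\).

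First, apply hypothesis (i) pointwise with \(\zeta=\xi(t)\). This is legitimate because \(\xi(t)\in H^{s'}(\Omega)\cap L^2_0(\nu_0)\subset L^2_0(\nu_0)\) for a.e.\ \(t\), and \(J^y_{t,h(t)}\) acts through its bounded extension. It gives \(\mathfrak j^y_{t,h(t)}(\xi(t),\xi(t))\ge c_{\mathrm{obs}}\|(I-\Pi_{y,h}(t))\xi(t)\|^2\). Integrating over \([0,T]\) yields
\[
    \mathfrak J_y[h](\xi,\xi)\ge c_{\mathrm{obs}}\int_0^T\|(I-\Pi_{y,h}(t))\xi(t)\|^2\,\d t .
\]
Second, hypothesis (ii) applies verbatim to the same \(\xi\) and bounds the transport integral by \(c_{\mathrm{dyn}}\int_0^T\|\Pi_{y,h}(t)\xi(t)\|^2\,\d t\). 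Adding the two inequalities, and bounding each constant below by \(\kappa=\min\{c_{\mathrm{obs}},c_{\mathrm{dyn}}\}\), gives
\[
    Q_y[h](\xi,\xi)\ge\kappa\int_0^T\Bigl(\|\Pi_{y,h}(t)\xi(t)\|^2+\|(I-\Pi_{y,h}(t))\xi(t)\|^2\Bigr)\d t=\kappa\|\xi\|^2_{L^2(0,T;L^2(\nu_0))},
\]
which is \eqref{eq:joint-coercivity}.

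The only point needing care is that these time integrals are well defined, that is, that \(t\mapsto\|\Pi_{y,h}(t)\xi(t)\|^2\) and \(t\mapsto\|(I-\Pi_{y,h}(t))\xi(t)\|^2\) are measurable. The plan for this step is as follows.

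1. Strong measurability of \(t\mapsto\Pi_{y,h}(t)\) together with Bochner measurability of \(\xi\in L^2(0,T;L^2_0(\nu_0))\) gives measurability of \(t\mapsto\Pi_{y,h}(t)\xi(t)\). To see this, approximate \(\xi\) by simple functions; for a fixed vector the map \(t\mapsto\Pi_{y,h}(t)v\) is measurable, and the operators are uniformly bounded by \(1\), so the limit is measurable.
2. Both pieces are dominated by \(\|\xi(t)\|^2\), so the integrals are finite.
3. Measurability of \(t\mapsto\mathfrak j^y_{t,h(t)}(\xi(t),\xi(t))\) is implicit in the definition of \(Q_y[h]\), so it can be taken as given.

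With these points checked, the proof is complete.
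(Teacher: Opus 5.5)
Your proposal is correct and follows the paper's proof essentially verbatim. Both use the pointwise orthogonal splitting $\xi(t)=\Pi_{y,h}(t)\xi(t)+(I-\Pi_{y,h}(t))\xi(t)$, apply hypothesis (i) to the visible part and hypothesis (ii) to the invisible part, and recombine by Pythagoras with constant $\min\{c_{\mathrm{obs}},c_{\mathrm{dyn}}\}$. Your extra measurability check for $t\mapsto\Pi_{y,h}(t)\xi(t)$ is a harmless refinement that the paper leaves implicit; it is exactly what the strong-measurability hypothesis is there for.
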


\begin{proof}
    Fix \(h\in \mathcal A_{\mathrm{ad}}\), and let
    \[
        \xi\in L^2(0,T;H^{s'}(\Omega)\cap L^2_0(\nu_0))
        \cap H^1(0,T;L^2_0(\nu_0)).
    \]
    Decompose \(\xi\) pointwise in time into its visible and invisible parts:
    \[
        \xi(t)=\xi_\perp(t)+\xi_0(t),
        \qquad
        \xi_\perp(t):=(I-\Pi_{y,h}(t))\xi(t),
        \qquad
        \xi_0(t):=\Pi_{y,h}(t)\xi(t).
    \]
    Since \(\Pi_{y,h}(t)\) is the \(L^2(\nu_0)\)-orthogonal projection onto \(K_{y,h}(t)\), we have
    \[
        \xi_\perp(t)\perp \xi_0(t)
        \qquad\text{in }L^2(\nu_0)
    \]
    for a.e.\ \(t\in[0,T]\), and therefore
    \begin{equation}
        \label{eq:orthogonal-splitting}
        \|\xi\|_{L^2(0,T;L^2(\nu_0))}^2
        =
        \int_0^T \|\xi_\perp(t)\|_{L^2(\nu_0)}^2\,\d t
        +
        \int_0^T \|\xi_0(t)\|_{L^2(\nu_0)}^2\,\d t.
    \end{equation}

    Now apply the two hypotheses. By \eqref{eq:transverse-observability},
    \[
        \int_0^T \mathfrak j_{t,h(t)}^{y}(\xi(t),\xi(t))\,\d t
        \ge
        c_{\mathrm{obs}}
        \int_0^T \|\xi_\perp(t)\|_{L^2(\nu_0)}^2\,\d t.
    \]
    By \eqref{eq:dynamical-detectability},
    \[
        \int_0^T \mathfrak g_{h(t)}(\dot\xi(t),\dot\xi(t))\,\d t
        \ge
        c_{\mathrm{dyn}}
        \int_0^T \|\xi_0(t)\|_{L^2(\nu_0)}^2\,\d t.
    \]
    Adding the two inequalities yields
    \begin{align*}
        Q_y[h](\xi,\xi)
         & =
        \int_0^T \mathfrak j_{t,h(t)}^{y}(\xi(t),\xi(t))\,\d t
        +
        \int_0^T \mathfrak g_{h(t)}(\dot\xi(t),\dot\xi(t))\,\d t \\
         & \ge
        c_{\mathrm{obs}}
        \int_0^T \|\xi_\perp(t)\|_{L^2(\nu_0)}^2\,\d t
        +
        c_{\mathrm{dyn}}
        \int_0^T \|\xi_0(t)\|_{L^2(\nu_0)}^2\,\d t               \\
         & \ge
        \min\{c_{\mathrm{obs}},c_{\mathrm{dyn}}\}
        \left(
        \int_0^T \|\xi_\perp(t)\|_{L^2(\nu_0)}^2\,\d t
        +
        \int_0^T \|\xi_0(t)\|_{L^2(\nu_0)}^2\,\d t
        \right).
    \end{align*}
    Using \eqref{eq:orthogonal-splitting}, we conclude that
    \[
        Q_y[h](\xi,\xi)
        \ge
        \min\{c_{\mathrm{obs}},c_{\mathrm{dyn}}\}
        \|\xi\|_{L^2(0,T;L^2(\nu_0))}^2,
    \]
    which is exactly Assumption~\ref{ass:joint-transport-observability}.
\end{proof}

\begin{remark}[Interpretation of the criterion]
    \label{rem:kernel-decomposition-criterion}
    Proposition~\ref{prop:kernel-decomposition-criterion} formalizes the heuristic in
    Remark~\ref{rem:partial-observability-mechanism}. The component
    \((I-\Pi_{y,h}(t))\xi(t)\) is instantaneously visible and is controlled directly by the data through
    \(\mathfrak j_{t,h(t)}^{y}\). The component \(\Pi_{y,h}(t)\xi(t)\) lies in the instantaneous kernel
    of the observation operator and is therefore invisible at any fixed time; condition
    \eqref{eq:dynamical-detectability} requires that such directions be detected through the dynamical
    cost of their time variation measured by \(\mathfrak g_h\).
\end{remark}

\subsection{A regularized variational inverse problem}
\label{subsec:regularized-variational-inverse}

The natural data-misfit term for the prescribed mobile-sensor system is
\[
    \frac12\int_0^T \|\mathcal G_t^y(h(t))-\obsdata(t)\|_{\mathbb R^r}^2\,\d t,
\]
and the natural dynamical penalty coming from the forward theory is the transport action
\[
    \frac12\int_0^T \mathfrak g_{h(t)}(\dot h(t),\dot h(t))\,\d t.
\]
In the ambient infinite-dimensional setting, however, the transport action alone does not provide
sufficient compactness for the direct-method existence proof. For that reason, we add both a
Bayes Hilbert \(H^1\)-in-time regularization term and a spatial Sobolev regularization term.

\begin{definition}[Regularized inverse functional]
    \label{def:ambient-inverse-functional}
    Let \(\lambda,\mu,\gamma>0\). For \(h\in\mathcal A_{\mathrm{ad}}\), define
    \begin{align}
        \label{eq:ambient-inverse-functional}
        \mathcal I_{\lambda,\mu,\gamma}^{\,y}[h]
         & :=
        \frac12\int_0^T \|\mathcal G_t^y(h(t))-\obsdata(t)\|_{\mathbb R^r}^2\,\d t
        +
        \frac{\lambda}{2}\int_0^T \mathfrak g_{h(t)}(\dot h(t),\dot h(t))\,\d t \notag \\
         & \quad
        +
        \frac{\mu}{2}\int_0^T
        \Big(
        \|h(t)\|_{L^2(\nu_0)}^2+\|\dot h(t)\|_{L^2(\nu_0)}^2
        \Big)\,\d t
        +
        \frac{\gamma}{2}\int_0^T \|h(t)\|_{H^s(\Omega)}^2\,\d t.
    \end{align}
\end{definition}

\begin{remark}[Dual role of the regularization terms]
    \label{rem:dual-role-regularization}
    Under full pointwise observability, the \(\mu\)- and \(\gamma\)-terms serve a purely analytic
    purpose: the former provides \(H^1\)-in-time compactness in the Bayes Hilbert norm, while the
    latter provides the spatial compactness needed for strong convergence in
    \(C([0,T];H^{s'}(\Omega))\).

    Under partial observability, the \(\mu\)-term acquires a second, structural role. The joint
    coercivity of \(Q_y[h]\) (Assumption~\ref{ass:joint-transport-observability}) shows that
    unobserved directions are detected through their dynamical cost in the transport form
    \(\mathfrak g_h\). But the transport term in \(Q_y[h]\) involves \(\dot\xi\), and it is the
    \(\mu\)-weighted \(\|\dot h\|_{L^2(\nu_0)}^2\) penalty that provides a priori control over this
    quantity for minimizers. In this way, \(\mu\) acts not only as a compactness device but as the
    mechanism by which the dynamical structure propagates information across time.
\end{remark}

The existence proof uses compactness of bounded-energy sequences and lower semicontinuity of each
term in the functional.

\begin{proposition}[Lower semicontinuity of the transport action under strong state convergence]
    \label{prop:transport-action-lsc-hsprime}
    Let \(h_n,h\in \mathcal A_{\mathrm{ad}}\) satisfy
    \[
        h_n\to h
        \quad\text{in } C([0,T];H^{s'}(\Omega)),
    \]
    and
    \[
        \dot h_n \rightharpoonup \dot h
        \quad\text{weakly in } L^2(0,T;L^2_0(\nu_0)).
    \]
    Then
    \[
        \int_0^T \mathfrak g_{h(t)}(\dot h(t),\dot h(t))\,\d t
        \le
        \liminf_{n\to\infty}
        \int_0^T \mathfrak g_{h_n(t)}(\dot h_n(t),\dot h_n(t))\,\d t.
    \]
\end{proposition}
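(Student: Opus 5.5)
The plan is to write the transport action as a squared norm of a linear image of \(\dot h_n\), then pass to the limit using weak lower semicontinuity of the norm. Set
\[
    V_n(t):=\sqrt{w_{h_n(t)}}\,\mathcal T_{h_n(t)}\dot h_n(t),
    \qquad
    V(t):=\sqrt{w_{h(t)}}\,\mathcal T_{h(t)}\dot h(t),
\]
so that
\[
    \int_0^T \mathfrak g_{h_n(t)}(\dot h_n,\dot h_n)\,\d t=\|V_n\|_{L^2(0,T;L^2(\nu_0;\mathbb R^d))}^2,
\]
and similarly for \(V\). It then suffices to show \(V_n\rightharpoonup V\) weakly in \(L^2(0,T;L^2(\nu_0;\mathbb R^d))\).

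First I split
\[
    V_n - V = \bigl(A_n(t)-A(t)\bigr)\dot h_n(t) + A(t)\bigl(\dot h_n(t)-\dot h(t)\bigr),
\]
where \(A_n(t):=\sqrt{w_{h_n(t)}}\,\mathcal T_{h_n(t)}\) and \(A(t):=\sqrt{w_{h(t)}}\,\mathcal T_{h(t)}\) are viewed as operators in \(\mathcal L(L^2_0(\nu_0),L^2(\nu_0;\mathbb R^d))\).

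For the first term, Corollary~\ref{cor:ambient-transport-map-hsprime} gives \(\|\mathcal T_{h_n(t)}-\mathcal T_{h(t)}\|\le M\|h_n(t)-h(t)\|_{\mathcal Z}\). Also, \eqref{eq:ambient-weight-lipschitz} together with \(w\ge c\) makes \(\sqrt{w_{h_n(t)}}\) converge to \(\sqrt{w_{h(t)}}\) in \(L^\infty\) at the same rate. Corollary~\ref{cor:ambient-neumann-energy} bounds \(\|\mathcal T_h\|\) uniformly. Hence
\[
    \sup_t\|A_n(t)-A(t)\|\lesssim \sup_t\|h_n(t)-h(t)\|_{H^{s'}}\to0.
\]
Since weakly convergent sequences are bounded, \(\dot h_n\) is bounded in \(L^2(0,T;L^2)\), so the first term converges to zero strongly.

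For the second term, the map \(\xi\mapsto A(\cdot)\xi(\cdot)\) is a bounded linear operator on \(L^2(0,T;\cdot)\). Its uniform bound again comes from Corollary~\ref{cor:ambient-neumann-energy}. Measurability in \(t\) follows because \(t\mapsto A(t)\) is norm-continuous, as \(h\in C([0,T];H^{s'})\). A bounded linear operator maps weakly convergent sequences to weakly convergent sequences, so the second term converges weakly to \(0\). Together, \(V_n\rightharpoonup V\), and weak lower semicontinuity of the norm gives the claim.

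The main obstacle is a technicality. Corollary~\ref{cor:ambient-transport-map-hsprime} uses the \(\mathcal Z\)-norm, and \(h_n(t)\) must lie in \(\mathcal X_{\mathrm{ad}}\) for every \(t\), not merely almost every \(t\), for the uniform constants to apply. I would handle this with Assumption~\ref{ass:sobolev-admissible-state-class}: closedness of \(\mathcal X_{\mathrm{ad}}\) in \(\mathcal Z\), combined with continuity of the paths in \(H^{s'}\), upgrades the a.e.\ admissibility to every \(t\). After that, the remaining steps are routine.
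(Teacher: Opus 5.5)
Your proposal is correct and takes the same approach as the paper. The paper also writes $\mathfrak g_h(\xi,\xi)=\|w_h^{1/2}\mathcal T_h\xi\|_{L^2(\nu_0)}^2$, obtains $\sup_t\|w_{h_n(t)}^{1/2}\mathcal T_{h_n(t)}-w_{h(t)}^{1/2}\mathcal T_{h(t)}\|\to 0$ from Corollary~\ref{cor:ambient-transport-map-hsprime} and the Lipschitz dependence of the square-root weight, and concludes by weak convergence of the images and weak lower semicontinuity of the norm. Your split into a strongly vanishing term and a weakly vanishing term spells out the step the paper asserts in one line, and your closedness argument for admissibility at every $t$ is valid but not needed, since the time integrals only involve almost every $t$.
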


\begin{proof}
    For each \(h\in\mathcal X_{\mathrm{ad}}\), define
    \[
        \mathcal S_h : L^2_0(\nu_0)\to L^2(\nu_0;\mathbb R^d),
        \qquad
        \mathcal S_h \xi := w_h^{1/2}\,\mathcal T_h\xi,
    \]
    where \(w_h:=\d\rho_h/\d\nu_0\). Then
    \[
        \mathfrak g_h(\xi,\xi)
        =
        \|\mathcal S_h \xi\|_{L^2(\nu_0;\mathbb R^d)}^2.
    \]

    We claim that
    \[
        \sup_{t\in[0,T]}
        \|\mathcal S_{h_n(t)}-\mathcal S_{h(t)}\|_{\mathcal L(L^2_0,L^2)}
        \to 0.
    \]
    Indeed, writing
    \[
        \mathcal S_{h_n}-\mathcal S_h
        =
        w_{h_n}^{1/2}(\mathcal T_{h_n}-\mathcal T_h)
        +
        (w_{h_n}^{1/2}-w_h^{1/2})\mathcal T_h,
    \]
    the first term vanishes by Corollary~\ref{cor:ambient-transport-map-hsprime} and the second
    by the Sobolev embedding \(H^{s'}(\Omega)\hookrightarrow L^\infty(\Omega)\) and the Lipschitz
    dependence of the square-root weight on the state.

    Now define
    \[
        u_n(t):=\mathcal S_{h_n(t)}\dot h_n(t),
        \qquad
        u(t):=\mathcal S_{h(t)}\dot h(t).
    \]
    The uniform operator convergence of \(\mathcal S_{h_n}\) to \(\mathcal S_h\), together with the
    weak convergence \(\dot h_n\rightharpoonup \dot h\) in \(L^2(0,T;L^2_0(\nu_0))\), implies
    \[
        u_n\rightharpoonup u
        \quad\text{weakly in } L^2(0,T;L^2(\nu_0;\mathbb R^d)).
    \]
    By weak lower semicontinuity of the norm,
    \[
        \|u\|_{L^2_tL^2_x}^2
        \le
        \liminf_{n\to\infty}
        \|u_n\|_{L^2_tL^2_x}^2,
    \]
    which is the desired inequality.
\end{proof}

\begin{proposition}[Compactness of bounded-energy sequences]
    \label{prop:ambient-compactness}
    Let \((h_n)\subset \mathcal A_{\mathrm{ad}}\) be a sequence satisfying
    \[
        \sup_n
        \left(
        \|h_n\|_{L^2(0,T;H^s(\Omega))}
        +
        \|h_n\|_{H^1(0,T;L^2_0(\nu_0))}
        \right)
        <\infty.
    \]
    Then there exist a subsequence, again denoted \((h_n)\), and a limit
    \[
        h\in L^2(0,T;H^s(\Omega))\cap H^1(0,T;L^2_0(\nu_0))
    \]
    such that
    \[
        h_n \rightharpoonup h
        \quad\text{weakly in } L^2(0,T;H^s(\Omega)),
    \]
    \[
        h_n \rightharpoonup h
        \quad\text{weakly in } H^1(0,T;L^2_0(\nu_0)),
    \]
    and
    \[
        h_n\to h
        \quad\text{in } C([0,T];H^{s'}(\Omega)).
    \]
    If, in addition, \(h_n(t)\in \mathcal X_{\mathrm{ad}}\) for a.e.\ \(t\), then
    \(h(t)\in \mathcal X_{\mathrm{ad}}\) for all \(t\in[0,T]\).
\end{proposition}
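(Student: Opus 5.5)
The plan is to combine weak compactness in the two reflexive spaces with an Aubin--Lions--Simon type argument for strong convergence in $C([0,T];H^{s'}(\Omega))$. Then closedness of $\mathcal X_{\mathrm{ad}}$ in $\mathcal Z$ (Assumption~\ref{ass:sobolev-admissible-state-class}) will carry the pointwise constraint to the limit.

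\textbf{Weak limits.} The spaces $L^2(0,T;H^s(\Omega))$ and $H^1(0,T;L^2_0(\nu_0))$ are Hilbert spaces, so bounded sequences have weakly convergent subsequences. Extracting successively, I obtain a common subsequence with $h_n\rightharpoonup h$ in $L^2(0,T;H^s)$ and $h_n\rightharpoonup \tilde h$ in $H^1(0,T;L^2_0)$. Both embed continuously into $L^2(0,T;L^2(\nu_0))$, where weak limits are unique, so $h=\tilde h$. The mean-zero constraint is a closed linear condition, hence weakly closed, and $h$ lies in the stated intersection.

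\textbf{Strong convergence.} Set $X_0=H^s(\Omega)$, $B=H^{s'}(\Omega)$, $X_1=L^2(\Omega)$. On a bounded Lipschitz domain the embedding $H^s\hookrightarrow H^{s'}$ is compact because $s>s'$, and $H^{s'}\hookrightarrow L^2$ is continuous. Simon's compactness theorem, in the form ``bounded in $L^p(0,T;X_0)$ with $\partial_t$ bounded in $L^r(0,T;X_1)$, $r>1$, implies relatively compact in $C([0,T];B)$,'' needs $p=\infty$ for the $C([0,T];B)$ conclusion. We only have $p=2$, so I plan to bypass this by interpolation, which is where the hypothesis $s'<s/2$ enters. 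Writing $H^{s'}=[L^2,H^s]_\theta$ with $\theta=s'/s<1/2$, I get
\[
\|u(t)\|_{H^{s'}}\le C\|u(t)\|_{L^2}^{1-\theta}\|u(t)\|_{H^s}^{\theta}.
\]
Together with $H^1(0,T;L^2)\hookrightarrow C^{0,1/2}([0,T];L^2)$ and $L^2(0,T;H^s)$-boundedness, this bounds $h_n$ uniformly in $H^{\sigma}(0,T;H^{s'+\epsilon})$ for some $\sigma>1/2$ and small $\epsilon>0$. The mechanism is the vector-valued interpolation $[H^1(0,T;L^2),L^2(0,T;H^s)]_{\vartheta}=H^{1-\vartheta}(0,T;H^{\vartheta s})$, taking $\vartheta=(s'+\epsilon)/s<1/2$. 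Since $\sigma>1/2$, this space embeds into $C^{0,\alpha}([0,T];H^{s'+\epsilon})$. The resulting equicontinuity in $H^{s'}$, together with pointwise relative compactness from $H^{s'+\epsilon}\Subset H^{s'}$, lets Arzel\`a--Ascoli give a subsequence converging in $C([0,T];H^{s'})$. Its limit must coincide with $h$ by uniqueness in $L^2_tL^2_x$.

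\textbf{Admissibility.} If $h_n(t)\in\mathcal X_{\mathrm{ad}}$ for a.e.\ $t$, then for each $t$ I pick $t_k\to t$ in the full-measure good set. Using $h_n\to h$ uniformly in $H^{s'}$ and the continuity of $h_n$, I can choose $t_k$ so that $h_n(t_k)\to h(t)$ in $\mathcal Z$. Closedness of $\mathcal X_{\mathrm{ad}}$ in $\mathcal Z$ then gives $h(t)\in\mathcal X_{\mathrm{ad}}$ for every $t$. A simpler route is that $h_n(t)\in\mathcal X_{\mathrm{ad}}$ for all $t$ already, by continuity of $h_n$ into $\mathcal Z$ and closedness, and then I pass $n\to\infty$.

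\textbf{Main obstacle.} The hard step is the strong convergence in $C([0,T];H^{s'})$ rather than merely $L^2(0,T;H^{s'})$. I would first try the mixed-regularity interpolation above. The crux is verifying that $s'<s/2$ yields a time exponent strictly above $1/2$.
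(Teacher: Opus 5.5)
Your argument is correct, but the key step, strong convergence in $C([0,T];H^{s'}(\Omega))$, goes by a different route from the paper's.

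\textbf{The paper's route.} It first uses the Lions--Magenes embedding $L^2(0,T;H^s(\Omega))\cap H^1(0,T;L^2_0(\nu_0))\hookrightarrow C([0,T];H^{s/2}(\Omega))$. This gives an $H^{s/2}$ bound on $h_n(t)$ that is uniform in $t$. Pointwise relative compactness then follows from $H^{s/2}\Subset H^{s'}$, which is where $s'<s/2$ is used. Equicontinuity in $H^{s'}$ comes from interpolating the estimate $\|h_n(t_1)-h_n(t_0)\|_{L^2(\nu_0)}\le |t_1-t_0|^{1/2}\|\dot h_n\|_{L^2(0,T;L^2(\nu_0))}$ against that uniform $H^{s/2}$ bound, with parameter $2s'/s$. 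Arzel\`a--Ascoli then finishes.

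\textbf{Your route.} You use a subcritical member of the same interpolation family: the intermediate-derivative (mixed-derivative) embedding of the intersection into $H^{1-\vartheta}(0,T;H^{\vartheta s}(\Omega))$ with $\vartheta<1/2$. You follow it with $H^{\sigma}(0,T;X)\hookrightarrow C^{0,\sigma-1/2}([0,T];X)$, so $s'<s/2$ enters as the condition $\sigma>1/2$.

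\textbf{What each buys.} Your route gives a stronger by-product, a uniform H\"older-in-time bound with values in $H^{s'+\epsilon}$, at the price of heavier machinery. The paper's route needs only the endpoint continuity theorem and a pointwise interpolation inequality.

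\textbf{A caution on your write-up.} The pointwise inequality $\|u\|_{H^{s'}}\le C\|u\|_{L^2}^{1-\theta}\|u\|_{H^s}^{\theta}$ is not what does the work in your version. Apply it to increments $h_n(t)-h_n(t_0)$ using only the $C^{0,1/2}([0,T];L^2)$ bound and an $L^2$-in-time bound on $\|h_n(t)\|_{H^s}$. This controls time Sobolev--Slobodeckij seminorms only for $\sigma<\tfrac12(1-s'/s)<\tfrac12$. Everything therefore rests on the space--time interpolation theorem (Lions--Magenes' intermediate derivative theorem for $L^2(0,T;H^s)\cap H^1(0,T;L^2)$), and you should cite it as such. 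The paper avoids this issue by interpolating against a uniform-in-time $H^{s/2}$ bound rather than the $L^2$-in-time $H^s$ bound.

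\textbf{Remaining steps.} Your identification of the two weak limits through $L^2(0,T;L^2(\nu_0))$ makes explicit a step the paper leaves implicit. Your ``simpler route'' for admissibility is exactly the paper's argument.
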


\begin{proof}
    The assumed bound gives boundedness in
    \[
        L^2(0,T;H^s(\Omega))\cap H^1(0,T;L^2_0(\nu_0)).
    \]
    By the Lions--Magenes interpolation theorem, applied to the Hilbert triple
    \(H^s(\Omega)\hookrightarrow H^{s/2}(\Omega)\hookrightarrow L^2(\Omega)\), this space embeds
    continuously into
    \[
        C([0,T];H^{s/2}(\Omega)).
    \]
    Thus \((h_n)\) is bounded in \(C([0,T];H^{s/2}(\Omega))\). Since \(s'<s/2\), the embedding
    \(H^{s/2}(\Omega)\hookrightarrow H^{s'}(\Omega)\) is compact.

    The \(H^1(0,T;L^2_0(\nu_0))\) bound gives equicontinuity in \(L^2(\nu_0)\): for all
    \(t_0,t_1\in[0,T]\),
    \[
        \|h_n(t_1)-h_n(t_0)\|_{L^2(\nu_0)}
        \le
        |t_1-t_0|^{1/2}
        \|\dot h_n\|_{L^2(0,T;L^2(\nu_0))}.
    \]
    Let \(\theta=2s'/s\in(0,1)\). Since \(\nu_0=|\Omega|^{-1}\d x\), the \(L^2(\nu_0)\) and
    Lebesgue \(L^2\) norms are equivalent, and interpolation gives
    \[
        \|f\|_{H^{s'}(\Omega)}
        \le
        C\|f\|_{L^2(\nu_0)}^{1-\theta}
        \|f\|_{H^{s/2}(\Omega)}^\theta.
    \]
    Applying this to \(f=h_n(t_1)-h_n(t_0)\), and using the uniform
    \(C([0,T];H^{s/2}(\Omega))\) bound, gives equicontinuity in \(H^{s'}(\Omega)\).

    The Arzela--Ascoli theorem now gives relative compactness in
    \[
        C([0,T];H^{s'}(\Omega)),
    \]
    because pointwise relative compactness follows from
    \(H^{s/2}(\Omega)\hookrightarrow H^{s'}(\Omega)\) compactly, and equicontinuity was established
    above. The weak convergences follow from Banach--Alaoglu.

    Finally, take the continuous representatives. Since \(h_n(t)\in\mathcal X_{\mathrm{ad}}\) for
    a.e.\ \(t\), continuity and the \(\mathcal Z\)-closedness of \(\mathcal X_{\mathrm{ad}}\) imply
    \(h_n(t)\in\mathcal X_{\mathrm{ad}}\) for all \(t\in[0,T]\). Passing to the uniform
    \(H^{s'}\)-limit and using closedness again gives \(h(t)\in\mathcal X_{\mathrm{ad}}\) for all
    \(t\in[0,T]\).
\end{proof}

\begin{theorem}[Existence of ambient reconstructions]
    \label{thm:ambient-existence}
    Let \(d\in L^2(0,T;\mathbb R^r)\). Then for every \(\lambda,\mu,\gamma>0\), the functional
    \(\mathcal I_{\lambda,\mu,\gamma}^{\,y}\) admits a minimizer over \(\mathcal A_{\mathrm{ad}}\).
\end{theorem}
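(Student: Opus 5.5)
The proof will follow the direct method of the calculus of variations. First we dispose of the trivial case. If \(\mathcal A_{\mathrm{ad}}\) is empty there is nothing to prove, so we assume it is nonempty. Since every term of \(\mathcal I_{\lambda,\mu,\gamma}^{\,y}\) is nonnegative, the infimum \(m:=\inf_{\mathcal A_{\mathrm{ad}}}\mathcal I_{\lambda,\mu,\gamma}^{\,y}\) lies in \([0,\infty]\). We must still check that \(m<\infty\), i.e.\ that some admissible path has finite energy. For any \(h\in\mathcal A_{\mathrm{ad}}\) this holds for the following reasons:
\begin{itemize}
  \item The data-misfit term is finite because \(|\mathcal G_t^y(h)|\le \|\kappa\|_{L^\infty}\) and \(\obsdata\in L^2(0,T;\mathbb R^r)\).
  \item The transport term is finite by Proposition~\ref{prop:transport-term-upper-bound}, since \(\dot h\in L^2(0,T;L^2_0(\nu_0))\).
  \item The \(\mu\)- and \(\gamma\)-terms are finite by the definition of \(\mathcal A_{\mathrm{ad}}\).
\end{itemize}
We then take a minimizing sequence \((h_n)\subset\mathcal A_{\mathrm{ad}}\) with \(\mathcal I_{\lambda,\mu,\gamma}^{\,y}[h_n]\to m\).

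Next comes compactness. Because \(\mu,\gamma>0\), the bound \(\sup_n \mathcal I_{\lambda,\mu,\gamma}^{\,y}[h_n]<\infty\) yields uniform bounds on \(\|h_n\|_{L^2(0,T;H^s)}\) and on \(\|h_n\|_{H^1(0,T;L^2_0(\nu_0))}\). Proposition~\ref{prop:ambient-compactness} then produces a subsequence and a limit \(h\) with the following properties:
\begin{itemize}
  \item \(h_n\rightharpoonup h\) weakly in \(L^2(0,T;H^s)\) and weakly in \(H^1(0,T;L^2_0(\nu_0))\);
  \item \(h_n\to h\) in \(C([0,T];H^{s'})\);
  \item \(h(t)\in\mathcal X_{\mathrm{ad}}\) for all \(t\).
\end{itemize}
Hence \(h\in\mathcal A_{\mathrm{ad}}\). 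The time-derivative convergence \(\dot h_n\rightharpoonup\dot h\) in \(L^2(0,T;L^2_0(\nu_0))\) is part of the weak \(H^1\)-in-time convergence.

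The remaining step is lower semicontinuity, which we check term by term.
\begin{itemize}
  \item \emph{Data misfit.} For each \(t\), the map \(h\mapsto\mathcal G_t^y(h)\) is continuous from \(H^{s'}\) to \(\mathbb R^r\). Indeed, \(w_{h_n(t)}\to w_{h(t)}\) in \(L^\infty(\nu_0)\) by \eqref{eq:ambient-weight-lipschitz} and the embedding \(H^{s'}\hookrightarrow L^\infty\), and the kernel \(\kappa\) is bounded. Uniform convergence in \(t\) gives \(\sup_t|\mathcal G_t^y(h_n(t))-\mathcal G_t^y(h(t))|\le \|\kappa\|_\infty\sup_t\|w_{h_n(t)}-w_{h(t)}\|_{L^\infty}\to0\). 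Since \(\obsdata\in L^2\) and \(T<\infty\), the misfit term actually converges. Measurability in \(t\) follows because the sensor trajectories are measurable and \(\kappa\) is continuous.
  \item \emph{Transport action.} This is exactly Proposition~\ref{prop:transport-action-lsc-hsprime}.
  \item \emph{\(\mu\)- and \(\gamma\)-terms.} These are squared norms of \(h\) in \(L^2(0,T;L^2)\), \(\dot h\) in \(L^2(0,T;L^2)\), and \(h\) in \(L^2(0,T;H^s)\). Each is weakly lower semicontinuous, and the corresponding weak convergences hold.
\end{itemize}
Summing the liminf inequalities gives \(\mathcal I_{\lambda,\mu,\gamma}^{\,y}[h]\le\liminf_n \mathcal I_{\lambda,\mu,\gamma}^{\,y}[h_n]=m\), so \(h\) is a minimizer.

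The main obstacle is the transport action. Its integrand depends nonlinearly on the state and quadratically on \(\dot h\), while only weak convergence of \(\dot h_n\) is available. This is handled by the strong \(C([0,T];H^{s'})\) convergence combined with the operator-norm continuity of \(h\mapsto\mathcal T_h\) (Corollary~\ref{cor:ambient-transport-map-hsprime}), both already packaged in Proposition~\ref{prop:transport-action-lsc-hsprime}. A secondary technical point is to verify that the hypotheses of that proposition apply. This requires that the limit \(h\) lies in \(\mathcal A_{\mathrm{ad}}\), which uses the \(\mathcal Z\)-closedness of \(\mathcal X_{\mathrm{ad}}\) (Assumption~\ref{ass:sobolev-admissible-state-class}) via Proposition~\ref{prop:ambient-compactness}. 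It also requires that the continuous representatives of \(h_n\) take values in \(\mathcal X_{\mathrm{ad}}\) at every time, which the same proposition provides.
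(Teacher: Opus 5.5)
Your proposal is correct and follows the paper's proof essentially step for step: a minimizing sequence, bounds from the $\mu$- and $\gamma$-terms, Proposition~\ref{prop:ambient-compactness} for compactness and admissibility of the limit, continuity of the data term, Proposition~\ref{prop:transport-action-lsc-hsprime} for the transport action, and weak lower semicontinuity of the convex terms. Your uniform-in-$t$ estimate for the data term via \eqref{eq:ambient-weight-lipschitz} slightly sharpens the paper's pointwise-plus-dominated-convergence argument, and the finiteness check on the infimum is a harmless addition; one small correction is that if $\mathcal A_{\mathrm{ad}}$ were empty there would be no minimizer, so nonemptiness is a tacit hypothesis of the theorem (as in the paper) rather than a case with ``nothing to prove.''
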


\begin{proof}
    Let \((h_n)\subset \mathcal A_{\mathrm{ad}}\) be a minimizing sequence. Since the first two terms
    in \eqref{eq:ambient-inverse-functional} are nonnegative,
    \[
        \mathcal I_{\lambda,\mu,\gamma}^{\,y}[h_n]
        \ge
        \frac{\mu}{2}\int_0^T
        \Big(
        \|h_n(t)\|_{L^2(\nu_0)}^2+\|\dot h_n(t)\|_{L^2(\nu_0)}^2
        \Big)\,\d t
        +
        \frac{\gamma}{2}\int_0^T \|h_n(t)\|_{H^s(\Omega)}^2\,\d t.
    \]
    Thus \((h_n)\) is bounded in
    \[
        L^2(0,T;H^s(\Omega))\cap H^1(0,T;L^2_0(\nu_0)).
    \]
    By Proposition~\ref{prop:ambient-compactness}, after passing to a subsequence we obtain
    \[
        h_n\to h
        \quad\text{in } C([0,T];H^{s'}(\Omega)),
    \]
    \[
        h_n\rightharpoonup h
        \quad\text{weakly in } L^2(0,T;H^s(\Omega)),
    \]
    and
    \[
        \dot h_n\rightharpoonup \dot h
        \quad\text{weakly in } L^2(0,T;L^2_0(\nu_0))
    \]
    for some \(h\in \mathcal A_{\mathrm{ad}}\).

    For the data term, fix \(j\in\{1,\dots,r\}\). Since \(\kappa\in L^\infty\),
    \[
        \left|
        \int_\Omega \kappa(x-y_j(t))\,\d\rho_{h_n(t)}(x)
        \right|
        \le
        \|\kappa\|_{L^\infty(\mathbb R^d)}
    \]
    for all \(n\) and a.e.\ \(t\). Moreover, \(h_n(t)\to h(t)\) in \(H^{s'}(\Omega)\) for each \(t\),
    hence
    \[
        \mathcal G_t^y(h_n(t))\to \mathcal G_t^y(h(t))
        \qquad\text{for a.e.\ }t\in[0,T].
    \]
    By dominated convergence,
    \[
        \mathcal G_t^y(h_n(t))\to \mathcal G_t^y(h(t))
        \quad\text{in } L^2(0,T;\mathbb R^r),
    \]
    so the data term is continuous. Proposition~\ref{prop:transport-action-lsc-hsprime} gives lower
    semicontinuity of the transport action. The final two regularization terms are weakly lower
    semicontinuous by convexity. Therefore
    \[
        \mathcal I_{\lambda,\mu,\gamma}^{\,y}[h]
        \le
        \liminf_{n\to\infty} \mathcal I_{\lambda,\mu,\gamma}^{\,y}[h_n].
    \]
    Thus \(h\) is a minimizer.
\end{proof}

\begin{remark}[On the strong state topology]
    \label{rem:strong-compactness}
    The compactness result in Proposition~\ref{prop:ambient-compactness} produces strong convergence in
    \(C([0,T];H^{s'}(\Omega))\), which is the topology naturally matched to the continuity theory of
    the weighted Neumann solve from Section~\ref{sec:intrinsic-forward-geometry}. Since
    \(H^{s'}(\Omega)\hookrightarrow L^\infty(\Omega)\) continuously, this is in particular strong
    enough for all continuity statements involving the weights \(w_h\) and the transport map
    \(\mathcal T_h\).
\end{remark}

\subsection{Pathwise stability under partial observability}
\label{subsec:pathwise-stability}

We now prove the main stability result: under joint transport--observability coercivity, minimizers
of \(\mathcal I_{\lambda,\mu,\gamma}^{\,y}\) are stable with respect to data perturbations, even when
instantaneous observations do not determine the state. The proof combines three ingredients: a
priori bounds from minimality, the joint coercivity of \(Q_y[h]\), and the compactness theory of
Proposition~\ref{prop:ambient-compactness}.

For the local stability theory, it is convenient to isolate the subclass
\[
    \mathcal A_{\mathrm{ad}}^{s'}
    :=
    \mathcal A_{\mathrm{ad}} \cap C([0,T];H^{s'}(\Omega)).
\]
All local proximity conditions in this subsection are understood on
\(\mathcal A_{\mathrm{ad}}^{s'}\), so that expressions of the form
\[
    \sup_{t\in[0,T]}\|h(t)-k(t)\|_{H^{s'}(\Omega)}
\]
are well defined.

Throughout this subsection, we make the following regularity assumption on the mobile observation
family.

\begin{assumption}[Regularity of the mobile observation family]
    \label{ass:observation-regularity}
    Assume that there exists an open neighborhood \(U\subset \mathcal Z\) of
    \(\mathcal X_{\mathrm{ad}}\), together with constants \(r_0>0\) and \(L>0\), such that for each
    \(t\in[0,T]\),
    \[
        \mathcal G_t^y:U\to\mathbb R^r
    \]
    is \(C^1\) with respect to the \(H^{s'}(\Omega)\)-topology, and its differential
    \[
        J_{t,h}^{y}=D\mathcal G_t^y(h)
    \]
    is locally Lipschitz in \(h\), uniformly in \(t\), as a map into
    \(\mathcal L(L^2(\nu_0),\mathbb R^r)\): for all \(h,k\in U\) with
    \(\|h-k\|_{H^{s'}(\Omega)}\le r_0\),
    \begin{equation}
        \label{eq:J-lipschitz}
        \|J_{t,h}^{y}-J_{t,k}^{y}\|_{\mathcal L(L^2(\nu_0),\mathbb R^r)}
        \le
        L\|h-k\|_{H^{s'}(\Omega)}
        \qquad\text{for all } t\in[0,T].
    \end{equation}
\end{assumption}

\begin{lemma}[A priori bounds from minimality]
    \label{lem:a-priori-bounds}
    Let \(h^\dagger\in\mathcal A_{\mathrm{ad}}\), and let
    \[
        \obsdata(t)=\mathcal G_t^y(h^\dagger(t))+\eta(t)
    \]
    with
    \[
        \|\eta\|_{L^2(0,T;\mathbb R^r)}\le \delta.
    \]
    Define the regularization energy of the true path by
    \begin{equation}
        \label{eq:regularization-energy-true-path}
        R[h^\dagger]
        :=
        \frac{\lambda}{2}\int_0^T \mathfrak g_{h^\dagger(t)}(\dot h^\dagger(t),\dot h^\dagger(t))\,\d t
        +
        \frac{\mu}{2}\int_0^T
        \Big(
        \|h^\dagger(t)\|_{L^2(\nu_0)}^2+\|\dot h^\dagger(t)\|_{L^2(\nu_0)}^2
        \Big)\,\d t
        +
        \frac{\gamma}{2}\int_0^T \|h^\dagger(t)\|_{H^s(\Omega)}^2\,\d t.
    \end{equation}
    If \(h\in\mathcal A_{\mathrm{ad}}\) is a minimizer of \(\mathcal I_{\lambda,\mu,\gamma}^{\,y}\) for
    data \(\obsdata\), then
    \begin{equation}
        \label{eq:a-priori-functional-bound}
        \mathcal I_{\lambda,\mu,\gamma}^{\,y}[h]
        \le
        \tfrac12\delta^2+R[h^\dagger].
    \end{equation}
    In particular,
    \begin{align}
        \label{eq:a-priori-data-misfit}
        \tfrac12\|\mathcal G_\cdot^y(h(\cdot))-d\|_{L^2(0,T;\mathbb R^r)}^2
         & \le
        \tfrac12\delta^2+R[h^\dagger], \\
        \label{eq:a-priori-temporal}
        \tfrac{\mu}{2}\|\dot h\|_{L^2(0,T;L^2(\nu_0))}^2
         & \le
        \tfrac12\delta^2+R[h^\dagger].
    \end{align}
\end{lemma}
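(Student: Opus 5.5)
The argument is a comparison of the minimizer against the true path, which is itself an admissible competitor. Since \(h^\dagger\in\mathcal A_{\mathrm{ad}}\) and \(h\) minimizes \(\mathcal I_{\lambda,\mu,\gamma}^{\,y}\) over \(\mathcal A_{\mathrm{ad}}\) for the data \(\obsdata\), we have
\[
    \mathcal I_{\lambda,\mu,\gamma}^{\,y}[h]\le \mathcal I_{\lambda,\mu,\gamma}^{\,y}[h^\dagger].
\]
We then evaluate the right-hand side directly. The data-misfit term at \(h^\dagger\) is
\[
    \tfrac12\int_0^T\|\mathcal G_t^y(h^\dagger(t))-\obsdata(t)\|_{\mathbb R^r}^2\,\d t
    =\tfrac12\|\eta\|_{L^2(0,T;\mathbb R^r)}^2\le\tfrac12\delta^2.
\]
The remaining three terms of \eqref{eq:ambient-inverse-functional}, evaluated at \(h^\dagger\), are exactly \(R[h^\dagger]\) as defined in \eqref{eq:regularization-energy-true-path}. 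Combining these gives \eqref{eq:a-priori-functional-bound}.

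For the two particular bounds, we observe that every term in \eqref{eq:ambient-inverse-functional} is nonnegative. The transport term is nonnegative by Proposition~\ref{prop:ambient-transport-form-properties}, and the other terms are squared norms. Each individual term is therefore bounded by the whole functional. Keeping only the misfit term gives \eqref{eq:a-priori-data-misfit}. Keeping only the \(\tfrac{\mu}{2}\|\dot h\|^2_{L^2(0,T;L^2(\nu_0))}\) part of the \(\mu\)-term, after discarding the \(\|h\|^2\) part, gives \eqref{eq:a-priori-temporal}.

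The only point that needs care is making sure \(R[h^\dagger]\) is finite, so the bound is not vacuous. This holds because \(h^\dagger\in L^2(0,T;H^s)\cap H^1(0,T;L^2_0(\nu_0))\), and because the transport action is bounded by \(C_{\mathrm{tr}}\|\dot h^\dagger\|^2_{L^2L^2}\) via Proposition~\ref{prop:transport-term-upper-bound}. Measurability of the integrands is implicit in the functional already being well defined on \(\mathcal A_{\mathrm{ad}}\). There is no genuine obstacle; the proof is a direct consequence of minimality and nonnegativity.
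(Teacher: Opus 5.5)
Your proof is correct and follows the paper's argument: you compare the minimizer with the admissible competitor \(h^\dagger\), identify \(\mathcal I_{\lambda,\mu,\gamma}^{\,y}[h^\dagger]=\tfrac12\|\eta\|_{L^2(0,T;\mathbb R^r)}^2+R[h^\dagger]\), and read off the individual bounds from the nonnegativity of each term. Your remark that \(R[h^\dagger]\) is finite is a harmless addition; the inequalities would still hold, trivially, if it were infinite.
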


\begin{proof}
    Since \(h\) minimizes \(\mathcal I_{\lambda,\mu,\gamma}^{\,y}\) and
    \(h^\dagger\in\mathcal A_{\mathrm{ad}}\),
    \[
        \mathcal I_{\lambda,\mu,\gamma}^{\,y}[h]
        \le
        \mathcal I_{\lambda,\mu,\gamma}^{\,y}[h^\dagger]
        =
        \tfrac12\|\eta\|_{L^2(0,T;\mathbb R^r)}^2+R[h^\dagger]
        \le
        \tfrac12\delta^2+R[h^\dagger].
    \]
    The bounds \eqref{eq:a-priori-data-misfit} and \eqref{eq:a-priori-temporal} follow because each
    term in \(\mathcal I_{\lambda,\mu,\gamma}^{\,y}[h]\) is nonnegative.
\end{proof}

\begin{theorem}[Local pathwise stability under partial observability]
    \label{thm:pathwise-stability}
    Assume Assumptions~\ref{ass:joint-transport-observability}
    and~\ref{ass:observation-regularity}. Let
    \(h^\dagger\in \mathcal A_{\mathrm{ad}}^{s'}\), and let
    \[
        \obsdata(t)=\mathcal G_t^y(h^\dagger(t))+\eta(t)
    \]
    with
    \[
        \|\eta\|_{L^2(0,T;\mathbb R^r)}\le\delta.
    \]
    Let \(h\in\mathcal A_{\mathrm{ad}}^{s'}\) be a minimizer
    of \(\mathcal I_{\lambda,\mu,\gamma}^{\,y}\) for data \(\obsdata\).

    Then there exist constants \(r>0\) and \(C>0\), depending only on
    \(\mathcal X_{\mathrm{ad}}\), \(\Omega\), \(y\), the coercivity constant \(\kappa\), and
    the regularity constants in Assumption~\ref{ass:observation-regularity}, such that if
    \begin{equation}
        \label{eq:proximity-condition}
        \sup_{t\in[0,T]}\|h(t)-h^\dagger(t)\|_{H^{s'}(\Omega)}\le r,
    \end{equation}
    then
    \begin{equation}
        \label{eq:pathwise-stability-estimate}
        \|h-h^\dagger\|_{L^2(0,T;L^2(\nu_0))}^2
        \le
        \frac{C}{\kappa}(1+\mu^{-1})
        \Big(
        \delta^2+R[h^\dagger]
        \Big).
    \end{equation}
\end{theorem}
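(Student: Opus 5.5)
The plan is to apply the joint coercivity of Assumption~\ref{ass:joint-transport-observability} to the error path \(\xi:=h-h^\dagger\), which lies in \(L^2(0,T;H^{s'}(\Omega)\cap L^2_0(\nu_0))\cap H^1(0,T;L^2_0(\nu_0))\) because both \(h,h^\dagger\in\mathcal A_{\mathrm{ad}}^{s'}\). This gives
\[
    \kappa\|\xi\|_{L^2(0,T;L^2(\nu_0))}^2
    \le
    Q_y[h^\dagger](\xi,\xi)
    =
    \int_0^T \|J_{t,h^\dagger(t)}^{y}\xi(t)\|_{\mathbb R^r}^2\,\d t
    +
    \int_0^T \mathfrak g_{h^\dagger(t)}(\dot\xi(t),\dot\xi(t))\,\d t .
\]
I evaluate the form at the true path \(h^\dagger\); evaluating at \(h\) would work equally well. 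The rest of the proof bounds each of the two terms on the right by \(C(1+\mu^{-1})(\delta^2+R[h^\dagger])\).

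\textbf{Transport term.} By Proposition~\ref{prop:transport-term-upper-bound},
\[
    \int_0^T \mathfrak g_{h^\dagger}(\dot\xi,\dot\xi)\,\d t
    \le
    C_{\mathrm{tr}}\|\dot h-\dot h^\dagger\|^2_{L^2L^2}
    \le
    2C_{\mathrm{tr}}\bigl(\|\dot h\|^2_{L^2L^2}+\|\dot h^\dagger\|^2_{L^2L^2}\bigr).
\]
The a priori estimate \eqref{eq:a-priori-temporal} of Lemma~\ref{lem:a-priori-bounds} gives \(\|\dot h\|^2\le \mu^{-1}(\delta^2+2R[h^\dagger])\). The definition of \(R[h^\dagger]\) gives \(\|\dot h^\dagger\|^2\le 2\mu^{-1}R[h^\dagger]\). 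Hence this term is \(\lesssim \mu^{-1}(\delta^2+R[h^\dagger])\).

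\textbf{Observation term.} I linearize the data misfit along the segment \(k_\tau(t)=h^\dagger(t)+\tau\xi(t)\), \(\tau\in[0,1]\). The proximity condition \eqref{eq:proximity-condition}, with \(r\le r_0\) and \(r\) small enough that the segment stays in \(U\), lets me use Assumption~\ref{ass:observation-regularity}. The mean value theorem then gives
\[
    \mathcal G_t^y(h(t))-\mathcal G_t^y(h^\dagger(t))
    =
    J_{t,h^\dagger(t)}^{y}\xi(t)+E(t),
    \qquad
    E(t):=\int_0^1\bigl(J^y_{t,k_\tau(t)}-J^y_{t,h^\dagger(t)}\bigr)\xi(t)\,\d\tau,
\]
with \(|E(t)|\le L\,\|\xi(t)\|_{H^{s'}}\|\xi(t)\|_{L^2(\nu_0)}\le Lr\|\xi(t)\|_{L^2(\nu_0)}\). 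Therefore
\[
    \|J_{t,h^\dagger}\xi\|^2_{L^2_t}
    \le
    2\|\mathcal G(h)-\mathcal G(h^\dagger)\|^2_{L^2_t}
    +2L^2r^2\|\xi\|^2_{L^2L^2}.
\]
For the first term I write \(\mathcal G(h)-\mathcal G(h^\dagger)=(\mathcal G(h)-\obsdata)+\eta\). The misfit is controlled by \eqref{eq:a-priori-data-misfit}, so this term is \(\lesssim \delta^2+R[h^\dagger]\).

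\textbf{Closing the estimate.} Combining the two bounds,
\[
    \kappa\|\xi\|^2_{L^2L^2}
    \le
    C(1+\mu^{-1})(\delta^2+R[h^\dagger])
    +2L^2r^2\|\xi\|^2_{L^2L^2}.
\]
I choose \(r\le r_0\) with \(2L^2r^2\le\kappa/2\) and absorb the last term into the left-hand side. This yields \eqref{eq:pathwise-stability-estimate}, up to relabeling \(C\) and the factor \(2\) (that is, \(C/\kappa\) after absorption).

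\textbf{Main obstacle.} The delicate step is the linearization remainder. The Lipschitz bound \eqref{eq:J-lipschitz} is stated in \(H^{s'}\) but acts on \(L^2\), so the remainder is only of the form \(r\|\xi\|_{L^2}\), not quadratic in \(\|\xi\|_{L^2}\). Absorption therefore forces \(r\) to depend on \(\kappa\). This dependence is permitted by the theorem statement. One must also check that the segment \(k_\tau(t)\) lies in the neighborhood \(U\), which is where the requirement \(r\) small (relative to the distance from \(\mathcal X_{\mathrm{ad}}\) to \(\partial U\), if needed) enters. Finally, one must check measurability in \(t\) of the composed quantities, which follows from the continuity of the paths in \(H^{s'}\).
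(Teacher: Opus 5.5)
Your proposal is correct and follows the paper's proof essentially step for step. The shared steps are: joint coercivity of \(Q_y[h^\dagger]\) applied to \(\xi=h-h^\dagger\); the fundamental-theorem-of-calculus expansion along the segment, with remainder bounded via \eqref{eq:J-lipschitz} by a multiple of \(r\|\xi(t)\|_{L^2(\nu_0)}\) (the paper keeps the factor \(\int_0^1\theta\,\d\theta=\tfrac12\), you harmlessly drop it); Lemma~\ref{lem:a-priori-bounds} and Proposition~\ref{prop:transport-term-upper-bound} for the two terms; and absorption with \(r\) of order \(\sqrt{\kappa}/L\). The one point to tighten is keeping the segment inside \(U\): \(\dist(\mathcal X_{\mathrm{ad}},\partial U)\) can be zero because \(\mathcal X_{\mathrm{ad}}\) need not be compact, so the paper instead gets a uniform radius \(r_U\) from compactness of \(h^\dagger([0,T])\) in \(\mathcal Z\), which holds because \(h^\dagger\in C([0,T];H^{s'}(\Omega))\).
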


\begin{proof}
    Set \(\xi:=h-h^\dagger\). We estimate \(\|\xi\|_{L^2(0,T;L^2(\nu_0))}^2\) by applying the joint
    coercivity of \(Q_y[h^\dagger]\) and bounding the resulting terms using minimality and the
    proximity condition.

    \medskip
    \noindent
    \emph{Step 1: joint coercivity.}
    By Assumption~\ref{ass:joint-transport-observability} applied at the path \(h^\dagger\),
    \begin{equation}
        \label{eq:stab-joint-coercivity}
        \kappa\|\xi\|_{L^2_tL^2_x}^2
        \le
        \underbrace{
        \int_0^T \|J_{t,h^\dagger(t)}^{y}\xi(t)\|_{\mathbb R^r}^2\,\d t
        }_{\mathrm{Term\;I}}
        +
        \underbrace{
            \int_0^T \mathfrak g_{h^\dagger(t)}(\dot\xi(t),\dot\xi(t))\,\d t
        }_{\mathrm{Term\;II}}.
    \end{equation}

    \medskip
    \noindent
    \emph{Step 2: control of Term I.}
    Since \(h^\dagger\in \mathcal A_{\mathrm{ad}}^{s'}\), the image
    \(h^\dagger([0,T])\subset \mathcal Z\) is compact. Because \(U\subset\mathcal Z\) is an open
    neighborhood of \(\mathcal X_{\mathrm{ad}}\) and \(h^\dagger(t)\in\mathcal X_{\mathrm{ad}}\subset U\)
    for all \(t\), there exists \(r_U>0\) such that
    \[
        \bigcup_{t\in[0,T]} B_{\mathcal Z}(h^\dagger(t),r_U)\subset U.
    \]
    We choose \(r\) so that
    \[
        r\le \min\left\{r_0,\;r_U,\;\sqrt{\frac{\kappa}{L^2}}\right\}.
    \]
    Then for every \(t\in[0,T]\) and every \(\theta\in[0,1]\),
    \[
        h^\dagger(t)+\theta\xi(t)\in U,
    \]
    so the fundamental-theorem-of-calculus expansion and the Lipschitz estimate
    \eqref{eq:J-lipschitz} may be applied along the segment between
    \(h^\dagger(t)\) and \(h(t)\).

    By the fundamental theorem of calculus in Banach spaces,
    \[
        \mathcal G_t^y(h(t))-\mathcal G_t^y(h^\dagger(t))
        =
        J_{t,h^\dagger(t)}^{y}\xi(t)
        +
        \int_0^1
        \bigl(J_{t,h^\dagger(t)+\theta\xi(t)}^{y}-J_{t,h^\dagger(t)}^{y}\bigr)\xi(t)\,\d\theta.
    \]
    Denote the remainder by
    \[
        r(t)
        :=
        \int_0^1
        \bigl(J_{t,h^\dagger(t)+\theta\xi(t)}^{y}-J_{t,h^\dagger(t)}^{y}\bigr)\xi(t)\,\d\theta.
    \]
    By the Lipschitz condition \eqref{eq:J-lipschitz} and the proximity condition
    \eqref{eq:proximity-condition},
    \[
        \|r(t)\|_{\mathbb R^r}
        \le
        \int_0^1
        L\theta\|\xi(t)\|_{H^{s'}(\Omega)}\|\xi(t)\|_{L^2(\nu_0)}\,\d\theta
        \le
        \frac{Lr}{2}\|\xi(t)\|_{L^2(\nu_0)}.
    \]
    Since
    \[
        J_{t,h^\dagger(t)}^{y}\xi(t)
        =
        \mathcal G_t^y(h(t))-\mathcal G_t^y(h^\dagger(t))-r(t),
    \]
    we obtain
    \[
        \|J_{t,h^\dagger(t)}^{y}\xi(t)\|_{\mathbb R^r}^2
        \le
        2\|\mathcal G_t^y(h(t))-\mathcal G_t^y(h^\dagger(t))\|_{\mathbb R^r}^2
        +
        2\|r(t)\|_{\mathbb R^r}^2.
    \]
    For the first piece, write
    \[
        \mathcal G_t^y(h(t))-\mathcal G_t^y(h^\dagger(t))
        =
        \bigl(\mathcal G_t^y(h(t))-\obsdata(t)\bigr)+\eta(t),
    \]
    so that
    \[
        \|\mathcal G_t^y(h(t))-\mathcal G_t^y(h^\dagger(t))\|_{\mathbb R^r}^2
        \le
        2\|\mathcal G_t^y(h(t))-\obsdata(t)\|_{\mathbb R^r}^2+2\|\eta(t)\|_{\mathbb R^r}^2.
    \]
    Integrating in time and applying Lemma~\ref{lem:a-priori-bounds},
    \[
        \int_0^T \|\mathcal G_t^y(h(t))-\mathcal G_t^y(h^\dagger(t))\|_{\mathbb R^r}^2\,\d t
        \le
        4\bigl(\tfrac12\delta^2+R[h^\dagger]\bigr)+2\delta^2
        \le
        C_1\bigl(\delta^2+R[h^\dagger]\bigr).
    \]
    For the remainder,
    \[
        \int_0^T \|r(t)\|_{\mathbb R^r}^2\,\d t
        \le
        \frac{L^2 r^2}{4}\|\xi\|_{L^2_tL^2_x}^2.
    \]
    Therefore,
    \begin{equation}
        \label{eq:stab-term-I-bound}
        \mathrm{Term\;I}
        \le
        C_1\bigl(\delta^2+R[h^\dagger]\bigr)
        +
        \frac{L^2 r^2}{2}\|\xi\|_{L^2_tL^2_x}^2.
    \end{equation}

    \medskip
    \noindent
    \emph{Step 3: control of Term II.}
    By Proposition~\ref{prop:transport-term-upper-bound},
    \[
        \mathrm{Term\;II}
        =
        \int_0^T \mathfrak g_{h^\dagger(t)}(\dot\xi(t),\dot\xi(t))\,\d t
        \le
        C_{\mathrm{tr}}\|\dot\xi\|_{L^2_tL^2_x}^2.
    \]
    Since \(\xi=h-h^\dagger\), we have
    \[
        \dot\xi=\dot h-\dot h^\dagger,
    \]
    and therefore
    \[
        \|\dot\xi\|_{L^2_tL^2_x}^2
        \le
        2\|\dot h\|_{L^2_tL^2_x}^2
        +
        2\|\dot h^\dagger\|_{L^2_tL^2_x}^2.
    \]
    By the a priori bound \eqref{eq:a-priori-temporal},
    \[
        \frac{\mu}{2}\|\dot h\|_{L^2_tL^2_x}^2
        \le
        \frac12\delta^2+R[h^\dagger],
    \]
    so
    \[
        \|\dot h\|_{L^2_tL^2_x}^2
        \le
        \frac{1}{\mu}\bigl(\delta^2+2R[h^\dagger]\bigr).
    \]
    Also, by the definition of \(R[h^\dagger]\),
    \[
        R[h^\dagger]
        \ge
        \frac{\mu}{2}\|\dot h^\dagger\|_{L^2_tL^2_x}^2,
    \]
    hence
    \[
        \|\dot h^\dagger\|_{L^2_tL^2_x}^2
        \le
        \frac{2}{\mu}R[h^\dagger].
    \]
    Combining these estimates gives
    \begin{equation}
        \label{eq:stab-term-II-bound}
        \mathrm{Term\;II}
        \le
        \frac{C_3}{\mu}\bigl(\delta^2+R[h^\dagger]\bigr)
    \end{equation}
    for a constant \(C_3>0\) depending only on \(C_{\mathrm{tr}}\).

    \medskip
    \noindent
    \emph{Step 4: combine and absorb.}
    Substituting \eqref{eq:stab-term-I-bound} and \eqref{eq:stab-term-II-bound} into
    \eqref{eq:stab-joint-coercivity} gives
    \[
        \kappa\|\xi\|_{L^2_tL^2_x}^2
        \le
        \frac{L^2 r^2}{2}\|\xi\|_{L^2_tL^2_x}^2
        +
        C_1\bigl(\delta^2+R[h^\dagger]\bigr)
        +
        \frac{C_3}{\mu}\bigl(\delta^2+R[h^\dagger]\bigr).
    \]
    Choose
    \[
        r
        \le
        \min\left\{r_0,\;r_U,\;\sqrt{\frac{\kappa}{L^2}}\right\},
    \]
    so that \(\frac{L^2 r^2}{2}\le \frac\kappa2\). Then
    \[
        \frac\kappa2\|\xi\|_{L^2_tL^2_x}^2
        \le
        C(1+\mu^{-1})\bigl(\delta^2+R[h^\dagger]\bigr),
    \]
    and dividing by \(\kappa/2\) gives \eqref{eq:pathwise-stability-estimate}.
\end{proof}

\begin{remark}[The price of partial observability]
    \label{rem:price-of-partial-observability}
    Under full pointwise observability, Term~II in the proof is unnecessary, and the stability
    estimate takes the form
    \[
        \|h-h^\dagger\|_{L^2_tL^2_x}^2
        \le
        \frac{C}{\kappa}\bigl(\delta^2+R[h^\dagger]\bigr)
    \]
    with no factor of \(1/\mu\). Under partial observability, the transport term in \(Q_y[h]\) is
    needed to close the observability gap, and the \(\mu\)-regularization is what provides a priori
    control of this term. The resulting factor of \(1+\mu^{-1}\) is the quantitative cost of
    relying on dynamical coupling rather than instantaneous data; in the common regime
    \(0<\mu\le1\), this cost is equivalent to \(1/\mu\).
\end{remark}

\begin{remark}[On the proximity condition]
    \label{rem:proximity-condition}
    The proximity condition \eqref{eq:proximity-condition} is a local hypothesis. Under fixed
    \(\lambda,\mu,\gamma>0\), vanishing-noise minimizers are precompact in
    \(C([0,T];H^{s'}(\Omega))\), and subsequential limits solve the \emph{noiseless regularized}
    inverse problem. Thus the proximity condition is guaranteed in a small-noise regime provided the
    corresponding noiseless regularized minimizer lies within the ball
    \eqref{eq:proximity-condition} around \(h^\dagger\). In this sense,
    Theorem~\ref{thm:pathwise-stability} is a local stability statement near the reference path.
\end{remark}

\begin{corollary}[Vanishing-noise convergence to a noiseless regularized minimizer]
    \label{cor:vanishing-noise-noiseless-regularized}
    Assume the hypotheses of Theorem~\ref{thm:ambient-existence}. Let
    \[
        \mathcal R_{\lambda,\mu,\gamma}[h]
        :=
        \frac{\lambda}{2}\int_0^T \mathfrak g_{h(t)}(\dot h(t),\dot h(t))\,\d t
        +
        \frac{\mu}{2}\int_0^T
        \Big(
        \|h(t)\|_{L^2(\nu_0)}^2+\|\dot h(t)\|_{L^2(\nu_0)}^2
        \Big)\,\d t
        +
        \frac{\gamma}{2}\int_0^T \|h(t)\|_{H^s(\Omega)}^2\,\d t.
    \]
    For a sequence \(\delta_n\to 0\), let
    \[
        \obsdata_n(t)=\mathcal G_t^y(h^\dagger(t))+\eta_n(t),
        \qquad
        \|\eta_n\|_{L^2(0,T;\mathbb R^r)}\le \delta_n,
    \]
    and let \(h_n\in\mathcal A_{\mathrm{ad}}\) be a minimizer of
    \[
        \mathcal I_n[h]
        :=
        \frac12\|\mathcal G_\cdot^y(h(\cdot))-\obsdata_n\|_{L^2(0,T;\mathbb R^r)}^2
        +
        \mathcal R_{\lambda,\mu,\gamma}[h].
    \]
    Define the noiseless regularized functional
    \[
        \mathcal I^0[h]
        :=
        \frac12\|\mathcal G_\cdot^y(h(\cdot))-\mathcal G_\cdot^y(h^\dagger(\cdot))\|_{L^2(0,T;\mathbb R^r)}^2
        +
        \mathcal R_{\lambda,\mu,\gamma}[h].
    \]

    Then the sequence \((h_n)\) is bounded in
    \[
        L^2(0,T;H^s(\Omega))\cap H^1(0,T;L^2_0(\nu_0)).
    \]
    Consequently, after passing to a subsequence,
    \[
        h_n\to h_\ast
        \qquad\text{in } C([0,T];H^{s'}(\Omega))
    \]
    for some \(h_\ast\in\mathcal A_{\mathrm{ad}}\).

    Moreover, \(h_\ast\) is a minimizer of \(\mathcal I^0\) over \(\mathcal A_{\mathrm{ad}}\).

    If \(\mathcal I^0\) has a unique minimizer \(h^0_{\lambda,\mu,\gamma}\), then the whole sequence
    satisfies
    \[
        h_n\to h^0_{\lambda,\mu,\gamma}
        \qquad\text{in } C([0,T];H^{s'}(\Omega)).
    \]

    Finally, assume the hypotheses of Theorem~\ref{thm:pathwise-stability} hold, and let \(r>0\) be
    the radius from Theorem~\ref{thm:pathwise-stability}. If
    \[
        \sup_{t\in[0,T]}
        \|h^0_{\lambda,\mu,\gamma}(t)-h^\dagger(t)\|_{H^{s'}(\Omega)}<r,
    \]
    then for all sufficiently large \(n\), the minimizers \(h_n\) satisfy the proximity condition
    \eqref{eq:proximity-condition}, and therefore
    \[
        \|h_n-h^\dagger\|_{L^2(0,T;L^2(\nu_0))}^2
        \le
        \frac{C}{\kappa}(1+\mu^{-1})
        \bigl(\delta_n^2+R[h^\dagger]\bigr)
    \]
    for all sufficiently large \(n\).
\end{corollary}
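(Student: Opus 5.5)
Our approach is the standard $\Gamma$-convergence/direct-method argument, organised in four steps: uniform bounds, compactness, identification of the limit as a minimizer of $\mathcal I^0$, and a proximity argument.

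\emph{Step 1 (uniform bounds).} By minimality of $h_n$ and admissibility of $h^\dagger$, Lemma~\ref{lem:a-priori-bounds} gives
\[
\mathcal I_n[h_n]\le \tfrac12\delta_n^2+R[h^\dagger]\le \tfrac12\sup_n\delta_n^2+R[h^\dagger].
\]
The $\mu$- and $\gamma$-terms in $\mathcal R_{\lambda,\mu,\gamma}$ then bound $h_n$ uniformly in $L^2(0,T;H^s(\Omega))\cap H^1(0,T;L^2_0(\nu_0))$. Proposition~\ref{prop:ambient-compactness} yields a subsequence with $h_n\to h_\ast$ in $C([0,T];H^{s'}(\Omega))$, weak convergence in the other two spaces, and $h_\ast(t)\in\mathcal X_{\mathrm{ad}}$. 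Hence $h_\ast\in\mathcal A_{\mathrm{ad}}$.

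\emph{Step 2 (liminf inequality).} Write $G_n:=\mathcal G^y_\cdot(h_n(\cdot))$, $G_\ast:=\mathcal G^y_\cdot(h_\ast(\cdot))$ and $G^\dagger:=\mathcal G^y_\cdot(h^\dagger(\cdot))$. As in the proof of Theorem~\ref{thm:ambient-existence}, dominated convergence (using $|\kappa|\le\|\kappa\|_\infty$) gives $G_n\to G_\ast$ in $L^2(0,T;\mathbb R^r)$. Since $\obsdata_n\to G^\dagger$ in $L^2$, the data term converges to $\tfrac12\|G_\ast-G^\dagger\|^2$. The transport term is lower semicontinuous by Proposition~\ref{prop:transport-action-lsc-hsprime}, and the convex terms are weakly lower semicontinuous. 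Therefore $\mathcal I^0[h_\ast]\le\liminf_n\mathcal I_n[h_n]$.

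\emph{Step 3 (recovery and minimality).} For any competitor $k\in\mathcal A_{\mathrm{ad}}$, minimality gives $\mathcal I_n[h_n]\le\mathcal I_n[k]$. Moreover $\mathcal I_n[k]\to\mathcal I^0[k]$, because only the data term depends on $n$ and
\[
\bigl|\|G_k-\obsdata_n\|-\|G_k-G^\dagger\|\bigr|\le\delta_n .
\]
Combining with Step 2 gives $\mathcal I^0[h_\ast]\le\mathcal I^0[k]$, so $h_\ast$ minimizes $\mathcal I^0$.

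\emph{Uniqueness.} If the minimizer of $\mathcal I^0$ is unique, we use the standard subsequence principle: every subsequence has a further subsequence converging to $h^0_{\lambda,\mu,\gamma}$ in $C([0,T];H^{s'})$. This forces convergence of the whole sequence.

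\emph{Step 4 (proximity and stability).} Set
\[
\varepsilon:=r-\sup_t\|h^0_{\lambda,\mu,\gamma}(t)-h^\dagger(t)\|_{H^{s'}}>0.
\]
Uniform convergence provides $N$ such that $\sup_t\|h_n-h^0_{\lambda,\mu,\gamma}\|_{H^{s'}}<\varepsilon$ for $n\ge N$. The triangle inequality then gives \eqref{eq:proximity-condition} for these $n$. Theorem~\ref{thm:pathwise-stability}, applied to each such $h_n$ with noise level $\delta_n$, yields the estimate.

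One technical point must be checked here: Theorem~\ref{thm:pathwise-stability} requires $h_n\in\mathcal A^{s'}_{\mathrm{ad}}$. This holds because $h_n$ lies in $L^2(0,T;H^s)\cap H^1(0,T;L^2)\hookrightarrow C([0,T];H^{s/2})\hookrightarrow C([0,T];H^{s'})$, the embedding used in Proposition~\ref{prop:ambient-compactness}. The same embedding gives $h^\dagger\in\mathcal A^{s'}_{\mathrm{ad}}$.

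We expect the main obstacle to be Step 2. The difficulty is that the transport term has a state-dependent weight, so its lower semicontinuity needs the joint strong-weak convergence. That is exactly what Proposition~\ref{prop:transport-action-lsc-hsprime} provides, so the step reduces to verifying that its hypotheses hold. The remaining steps are routine.
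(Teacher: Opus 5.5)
Your proposal is correct and follows essentially the same route as the paper: the a priori bound from minimality against $h^\dagger$, compactness via Proposition~\ref{prop:ambient-compactness}, and the liminf inequality via dominated convergence and Proposition~\ref{prop:transport-action-lsc-hsprime}; then the convergence $\mathcal I_n[k]\to\mathcal I^0[k]$ for competitors and the triangle-inequality proximity argument feeding into Theorem~\ref{thm:pathwise-stability}. Your subsequence-of-subsequences argument for uniqueness and your check that $h_n,h^\dagger\in\mathcal A^{s'}_{\mathrm{ad}}$ via the Lions--Magenes embedding make explicit two points the paper leaves implicit.
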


\begin{proof}
    Since \(h_n\) minimizes \(\mathcal I_n\) and \(h^\dagger\in\mathcal A_{\mathrm{ad}}\), we have
    \[
        \mathcal I_n[h_n]
        \le
        \mathcal I_n[h^\dagger]
        =
        \frac12\|\eta_n\|_{L^2(0,T;\mathbb R^r)}^2
        +
        \mathcal R_{\lambda,\mu,\gamma}[h^\dagger]
        \le
        \frac12\delta_n^2+\mathcal R_{\lambda,\mu,\gamma}[h^\dagger].
    \]
    Since the data-misfit term and the transport term are nonnegative, this yields the uniform bound
    \[
        \sup_n
        \left(
        \|h_n\|_{L^2(0,T;H^s(\Omega))}
        +
        \|h_n\|_{H^1(0,T;L^2_0(\nu_0))}
        \right)
        <\infty.
    \]
    By Proposition~\ref{prop:ambient-compactness}, after passing to a subsequence,
    \[
        h_n\to h_\ast
        \quad\text{in } C([0,T];H^{s'}(\Omega)),
    \]
    \[
        h_n\rightharpoonup h_\ast
        \quad\text{weakly in } L^2(0,T;H^s(\Omega)),
    \]
    and
    \[
        \dot h_n\rightharpoonup \dot h_\ast
        \quad\text{weakly in } L^2(0,T;L^2_0(\nu_0))
    \]
    for some \(h_\ast\in\mathcal A_{\mathrm{ad}}\).

    Because \(\obsdata_n\to \mathcal G_\cdot^y(h^\dagger(\cdot))\) in
    \(L^2(0,T;\mathbb R^r)\), for each fixed \(h\in\mathcal A_{\mathrm{ad}}\) we have
    \[
        \mathcal I_n[h]\to \mathcal I^0[h].
    \]
    Also, by dominated convergence,
    \[
        \mathcal G_\cdot^y(h_n(\cdot))\to \mathcal G_\cdot^y(h_\ast(\cdot))
        \quad\text{in } L^2(0,T;\mathbb R^r).
    \]
    Together with Proposition~\ref{prop:transport-action-lsc-hsprime} and weak lower semicontinuity of
    the \(\mu\)- and \(\gamma\)-terms, this gives
    \[
        \mathcal I^0[h_\ast]
        \le
        \liminf_{n\to\infty}\mathcal I_n[h_n].
    \]

    Now let \(h\in\mathcal A_{\mathrm{ad}}\) be arbitrary. Since \(h_n\) minimizes \(\mathcal I_n\),
    \[
        \mathcal I_n[h_n]\le \mathcal I_n[h].
    \]
    Passing to the limit superior on the right and combining with the previous liminf inequality yields
    \[
        \mathcal I^0[h_\ast]\le \mathcal I^0[h].
    \]
    Thus \(h_\ast\) is a minimizer of \(\mathcal I^0\).

    If \(\mathcal I^0\) has a unique minimizer \(h^0_{\lambda,\mu,\gamma}\), then every convergent
    subsequence of \((h_n)\) has the same limit, so the whole sequence converges to
    \(h^0_{\lambda,\mu,\gamma}\) in \(C([0,T];H^{s'}(\Omega))\).

    For the final claim, assume
    \[
        \sup_{t\in[0,T]}
        \|h^0_{\lambda,\mu,\gamma}(t)-h^\dagger(t)\|_{H^{s'}(\Omega)}<r.
    \]
    Since \(h_n\to h^0_{\lambda,\mu,\gamma}\) in \(C([0,T];H^{s'}(\Omega))\), it follows that for all
    sufficiently large \(n\),
    \[
        \sup_{t\in[0,T]}
        \|h_n(t)-h^\dagger(t)\|_{H^{s'}(\Omega)}<r.
    \]
    Hence the proximity condition \eqref{eq:proximity-condition} holds for \(h_n\), and
    Theorem~\ref{thm:pathwise-stability} gives
    \[
        \|h_n-h^\dagger\|_{L^2(0,T;L^2(\nu_0))}^2
        \le
        \frac{C}{\kappa}(1+\mu^{-1})
        \bigl(\delta_n^2+R[h^\dagger]\bigr)
    \]
    for all sufficiently large \(n\).
\end{proof}

In particular, under fixed positive regularization parameters \(\lambda,\mu,\gamma\), vanishing-noise
reconstructions converge to the noiseless \emph{regularized} inverse problem. By contrast,
Theorem~\ref{thm:vanishing-regularization-consistency} shows that a simultaneous vanishing-noise,
vanishing-regularization regime yields convergence to an \emph{exact} noiseless solution of minimum
regularization energy.

\begin{theorem}[Vanishing-regularization consistency toward a minimum-regularity exact solution]
    \label{thm:vanishing-regularization-consistency}
    Assume the hypotheses of Theorem~\ref{thm:ambient-existence}. Fix
    \[
        \bar\lambda,\bar\mu,\bar\gamma>0,
    \]
    and define the reference regularization functional
    \begin{align}
        \label{eq:reference-regularization-functional}
        \mathcal R[h]
         & :=
        \frac{\bar\lambda}{2}\int_0^T \mathfrak g_{h(t)}(\dot h(t),\dot h(t))\,\d t
        +
        \frac{\bar\mu}{2}\int_0^T
        \Big(
        \|h(t)\|_{L^2(\nu_0)}^2+\|\dot h(t)\|_{L^2(\nu_0)}^2
        \Big)\,\d t \notag \\
         & \quad
        +
        \frac{\bar\gamma}{2}\int_0^T \|h(t)\|_{H^s(\Omega)}^2\,\d t .
    \end{align}
    Let \(\obsdata^\dagger \in L^2(0,T;\mathbb R^r)\), and assume that the exact solution set
    \[
        \mathcal S(\obsdata^\dagger)
        :=
        \left\{
        h\in \mathcal A_{\mathrm{ad}}
        :
        \mathcal G_\cdot^y(h(\cdot))=\obsdata^\dagger
        \text{ in } L^2(0,T;\mathbb R^r)
        \right\}
    \]
    is nonempty.

    Let \(\obsdata_n\in L^2(0,T;\mathbb R^r)\) satisfy
    \[
        \|\obsdata_n-\obsdata^\dagger\|_{L^2(0,T;\mathbb R^r)}\le \delta_n,
        \qquad
        \delta_n\to 0,
    \]
    and let \(\alpha_n>0\) satisfy
    \[
        \alpha_n\to 0,
        \qquad
        \frac{\delta_n^2}{\alpha_n}\to 0.
    \]
    For each \(n\), define
    \begin{equation}
        \label{eq:vanishing-regularization-functional}
        \mathcal I_n[h]
        :=
        \frac12 \|\mathcal G_\cdot^y(h(\cdot))-\obsdata_n\|_{L^2(0,T;\mathbb R^r)}^2
        +
        \alpha_n \mathcal R[h],
        \qquad
        h\in \mathcal A_{\mathrm{ad}},
    \end{equation}
    and let \(h_n\in \mathcal A_{\mathrm{ad}}\) be a minimizer of \(\mathcal I_n\).

    Then the sequence \((h_n)\) is bounded in
    \[
        L^2(0,T;H^s(\Omega))
        \cap
        H^1(0,T;L^2_0(\nu_0)).
    \]
    Consequently, after passing to a subsequence,
    \[
        h_n \to h_\ast
        \qquad\text{in } C([0,T];H^{s'}(\Omega))
    \]
    for some \(h_\ast\in \mathcal A_{\mathrm{ad}}\).

    Moreover, \(h_\ast\) is an exact solution:
    \[
        h_\ast \in \mathcal S(\obsdata^\dagger),
    \]
    and it minimizes \(\mathcal R\) over the exact solution set:
    \[
        \mathcal R[h_\ast]
        =
        \min\left\{
        \mathcal R[h] : h\in \mathcal S(\obsdata^\dagger)
        \right\}.
    \]

    If, in addition, the \(\mathcal R\)-minimizing exact solution is unique, then the whole sequence
    satisfies
    \[
        h_n \to h_\ast
        \qquad\text{in } C([0,T];H^{s'}(\Omega)).
    \]
\end{theorem}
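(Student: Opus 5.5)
The argument follows the classical Tikhonov consistency scheme, with Proposition~\ref{prop:ambient-compactness} providing compactness and Proposition~\ref{prop:transport-action-lsc-hsprime} providing lower semicontinuity. Fix any \(\bar h\in\mathcal S(\obsdata^\dagger)\). Minimality of \(h_n\) gives
\[
    \mathcal I_n[h_n]\le \mathcal I_n[\bar h]
    =\tfrac12\|\obsdata^\dagger-\obsdata_n\|^2_{L^2(0,T;\mathbb R^r)}+\alpha_n\mathcal R[\bar h]
    \le \tfrac12\delta_n^2+\alpha_n\mathcal R[\bar h].
\]
Dividing by \(\alpha_n\) yields \(\mathcal R[h_n]\le \delta_n^2/(2\alpha_n)+\mathcal R[\bar h]\), which is bounded because \(\delta_n^2/\alpha_n\to0\). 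The transport term is nonnegative, so the \(\bar\mu\)- and \(\bar\gamma\)-terms of \(\mathcal R\) give a uniform bound on \(h_n\) in \(L^2(0,T;H^s(\Omega))\cap H^1(0,T;L^2_0(\nu_0))\). Proposition~\ref{prop:ambient-compactness} then supplies a subsequence with \(h_n\to h_\ast\) in \(C([0,T];H^{s'}(\Omega))\), weakly in \(L^2(0,T;H^s)\), and \(\dot h_n\rightharpoonup\dot h_\ast\) weakly in \(L^2(0,T;L^2_0)\), where \(h_\ast\in\mathcal A_{\mathrm{ad}}\).

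To show \(h_\ast\) is an exact solution, I use the same minimality inequality on the misfit:
\[
    \tfrac12\|\mathcal G^y_\cdot(h_n)-\obsdata_n\|^2\le \tfrac12\delta_n^2+\alpha_n\mathcal R[\bar h]\to0.
\]
Since \(\obsdata_n\to\obsdata^\dagger\), it follows that \(\mathcal G^y_\cdot(h_n)\to\obsdata^\dagger\) in \(L^2(0,T;\mathbb R^r)\). Separately, the argument in the proof of Theorem~\ref{thm:ambient-existence} (pointwise convergence \(\mathcal G_t^y(h_n(t))\to\mathcal G_t^y(h_\ast(t))\) from the \(H^{s'}\hookrightarrow L^\infty\) continuity of \(h\mapsto w_h\), together with the bound \(\|\kappa\|_{L^\infty}\) and dominated convergence) gives \(\mathcal G^y_\cdot(h_n)\to\mathcal G^y_\cdot(h_\ast)\) in \(L^2\). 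The two limits must agree, so \(h_\ast\in\mathcal S(\obsdata^\dagger)\).

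For the minimality of \(\mathcal R[h_\ast]\), I first establish lower semicontinuity: \(\mathcal R[h_\ast]\le\liminf_n\mathcal R[h_n]\). The transport part follows from Proposition~\ref{prop:transport-action-lsc-hsprime}, and the \(\bar\mu\)- and \(\bar\gamma\)-parts follow from weak lower semicontinuity of convex norms (strong \(L^2\) convergence of \(h_n\) also holds). Combining this with \(\limsup_n\mathcal R[h_n]\le\limsup_n(\delta_n^2/(2\alpha_n)+\mathcal R[\bar h])=\mathcal R[\bar h]\) gives \(\mathcal R[h_\ast]\le\mathcal R[\bar h]\) for every \(\bar h\in\mathcal S(\obsdata^\dagger)\). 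Since \(h_\ast\) itself lies in \(\mathcal S\), the infimum is attained at \(h_\ast\). For uniqueness, the standard sub-subsequence argument applies: every subsequence of \((h_n)\) has a further subsequence converging in \(C([0,T];H^{s'})\) to an \(\mathcal R\)-minimizing exact solution, which by hypothesis is always the same element. Hence the whole sequence converges.

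The main obstacle is the lower semicontinuity of the transport action along the sequence, because it couples the varying state \(h_n\) with only weakly convergent \(\dot h_n\). Proposition~\ref{prop:transport-action-lsc-hsprime} handles exactly this, but it requires the strong \(C([0,T];H^{s'})\) convergence from the compactness step, so the bound \(\delta_n^2/\alpha_n\to0\) is essential there. A minor technical point is using continuous-in-time representatives so that \(h_\ast(t)\in\mathcal X_{\mathrm{ad}}\); this is covered by the final clause of Proposition~\ref{prop:ambient-compactness}.
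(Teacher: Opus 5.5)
Your proposal is correct and follows essentially the same argument as the paper: comparing against an arbitrary exact solution gives $\mathcal R[h_n]\le \mathcal R[\bar h]+\delta_n^2/(2\alpha_n)$, Proposition~\ref{prop:ambient-compactness} supplies compactness, vanishing misfit plus dominated convergence gives exactness, and a $\liminf$/$\limsup$ sandwich using Proposition~\ref{prop:transport-action-lsc-hsprime} gives $\mathcal R$-minimality. Your sub-subsequence argument for full-sequence convergence is a slightly more careful phrasing of the paper's last step; one small correction is that compactness only needs $\delta_n^2/\alpha_n$ to be bounded, and the hypothesis $\delta_n^2/\alpha_n\to 0$ is really used in the $\limsup$ bound.
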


\begin{proof}
    Let \(\tilde h\in \mathcal S(\obsdata^\dagger)\) be arbitrary. Since \(h_n\) minimizes
    \(\mathcal I_n\),
    \[
        \frac12 \|\mathcal G_\cdot^y(h_n(\cdot))-\obsdata_n\|_{L^2(0,T;\mathbb R^r)}^2
        +
        \alpha_n \mathcal R[h_n]
        \le
        \frac12 \|\mathcal G_\cdot^y(\tilde h(\cdot))-\obsdata_n\|_{L^2(0,T;\mathbb R^r)}^2
        +
        \alpha_n \mathcal R[\tilde h].
    \]
    Because \(\mathcal G_\cdot^y(\tilde h(\cdot))=\obsdata^\dagger\), this gives
    \[
        \frac12 \|\mathcal G_\cdot^y(h_n(\cdot))-\obsdata_n\|_{L^2(0,T;\mathbb R^r)}^2
        +
        \alpha_n \mathcal R[h_n]
        \le
        \frac12 \|\obsdata^\dagger-\obsdata_n\|_{L^2(0,T;\mathbb R^r)}^2
        +
        \alpha_n \mathcal R[\tilde h]
        \le
        \frac12 \delta_n^2 + \alpha_n \mathcal R[\tilde h].
    \]
    Hence
    \begin{equation}
        \label{eq:vanishing-regularization-bound}
        \mathcal R[h_n]
        \le
        \mathcal R[\tilde h] + \frac{\delta_n^2}{2\alpha_n}.
    \end{equation}
    Since \(\delta_n^2/\alpha_n \to 0\), the sequence \((\mathcal R[h_n])\) is bounded.

    Because \(\bar\mu,\bar\gamma>0\), boundedness of \(\mathcal R[h_n]\) implies boundedness of
    \[
        \|h_n\|_{L^2(0,T;H^s(\Omega))}
        \qquad\text{and}\qquad
        \|h_n\|_{H^1(0,T;L^2_0(\nu_0))}.
    \]
    Therefore Proposition~\ref{prop:ambient-compactness} applies, and after passing to a subsequence,
    \[
        h_n \rightharpoonup h_\ast
        \quad\text{weakly in } L^2(0,T;H^s(\Omega)),
    \]
    \[
        h_n \rightharpoonup h_\ast
        \quad\text{weakly in } H^1(0,T;L^2_0(\nu_0)),
    \]
    and
    \[
        h_n \to h_\ast
        \quad\text{in } C([0,T];H^{s'}(\Omega))
    \]
    for some \(h_\ast\in \mathcal A_{\mathrm{ad}}\).

    Next, from the minimizing inequality above we also have
    \[
        \frac12 \|\mathcal G_\cdot^y(h_n(\cdot))-\obsdata_n\|_{L^2(0,T;\mathbb R^r)}^2
        \le
        \frac12 \delta_n^2 + \alpha_n \mathcal R[\tilde h].
    \]
    Since \(\delta_n\to 0\) and \(\alpha_n\to 0\), it follows that
    \[
        \|\mathcal G_\cdot^y(h_n(\cdot))-\obsdata_n\|_{L^2(0,T;\mathbb R^r)} \to 0.
    \]
    Therefore
    \[
        \|\mathcal G_\cdot^y(h_n(\cdot))-\obsdata^\dagger\|_{L^2(0,T;\mathbb R^r)}
        \le
        \|\mathcal G_\cdot^y(h_n(\cdot))-\obsdata_n\|_{L^2(0,T;\mathbb R^r)}
        +
        \|\obsdata_n-\obsdata^\dagger\|_{L^2(0,T;\mathbb R^r)}
        \to 0.
    \]
    Since \(h_n\to h_\ast\) in \(C([0,T];H^{s'}(\Omega))\), dominated convergence yields
    \[
        \mathcal G_\cdot^y(h_\ast(\cdot))=\obsdata^\dagger.
    \]
    Thus \(h_\ast\in \mathcal S(\obsdata^\dagger)\).

    It remains to show that \(h_\ast\) minimizes \(\mathcal R\) over
    \(\mathcal S(\obsdata^\dagger)\). Let \(z\in \mathcal S(\obsdata^\dagger)\) be arbitrary. By
    minimality of \(h_n\),
    \[
        \frac12 \|\mathcal G_\cdot^y(h_n(\cdot))-\obsdata_n\|_{L^2(0,T;\mathbb R^r)}^2
        +
        \alpha_n \mathcal R[h_n]
        \le
        \frac12 \|\mathcal G_\cdot^y(z(\cdot))-\obsdata_n\|_{L^2(0,T;\mathbb R^r)}^2
        +
        \alpha_n \mathcal R[z].
    \]
    Using \(\mathcal G_\cdot^y(z(\cdot))=\obsdata^\dagger\), we obtain
    \[
        \alpha_n \mathcal R[h_n]
        \le
        \frac12 \delta_n^2 + \alpha_n \mathcal R[z].
    \]
    Dividing by \(\alpha_n\) yields
    \[
        \mathcal R[h_n]
        \le
        \mathcal R[z] + \frac{\delta_n^2}{2\alpha_n}.
    \]
    Taking the limit superior and using \(\delta_n^2/\alpha_n\to 0\), we obtain
    \[
        \limsup_{n\to\infty}\mathcal R[h_n]\le \mathcal R[z].
    \]

    On the other hand, Proposition~\ref{prop:transport-action-lsc-hsprime} gives lower semicontinuity of
    the transport part of \(\mathcal R\), while the remaining two terms are weakly lower semicontinuous.
    Hence
    \[
        \mathcal R[h_\ast]
        \le
        \liminf_{n\to\infty}\mathcal R[h_n].
    \]
    Combining the last two inequalities gives
    \[
        \mathcal R[h_\ast]\le \mathcal R[z].
    \]
    Since \(z\in \mathcal S(\obsdata^\dagger)\) was arbitrary, \(h_\ast\) minimizes
    \(\mathcal R\) over \(\mathcal S(\obsdata^\dagger)\).

    Finally, if the \(\mathcal R\)-minimizing exact solution is unique, then every convergent subsequence
    of \((h_n)\) has the same limit \(h_\ast\). Therefore the whole sequence converges to \(h_\ast\) in
    \(C([0,T];H^{s'}(\Omega))\).
\end{proof}

\begin{remark}[Interpretation as Tikhonov consistency]
    \label{rem:tikhonov-consistency}
    Theorem~\ref{thm:vanishing-regularization-consistency} strengthens
    Corollary~\ref{cor:vanishing-noise-noiseless-regularized}. For fixed positive regularization
    parameters, vanishing-noise reconstructions converge only to a minimizer of the noiseless
    \emph{regularized} problem. By contrast, when the regularization strength \(\alpha_n\) tends to zero
    in such a way that \(\delta_n^2/\alpha_n\to 0\), the reconstructions converge to an \emph{exact}
    solution of the noiseless inverse problem, and among exact solutions they select one of minimum
    regularization energy \(\mathcal R\). In this sense, the theorem gives a Tikhonov-type consistency
    statement for the Bayes Hilbert path-space inverse problem with prescribed mobile sensors.
\end{remark}

\subsection{Spectral analysis of the joint form}
\label{subsec:spectral-analysis-mobile-sensors}

The joint transport--observability coercivity condition
(Assumption~\ref{ass:joint-transport-observability}) is sufficient for the stability theory, but it
is a strong hypothesis. In this subsection we remain entirely within the ambient mobile-sensor
setting introduced above and analyze the joint form without assuming coercivity. We first
characterize its null space, then prove that localized mobile sensors still yield a compact
time-averaged observation operator in the infinite-dimensional ambient space, and finally record the
resulting spectral decomposition. This identifies the directions that are effectively resolved and
unresolved by the sensor motion over the full time horizon. The finite-dimensional recovery mechanism
is deferred to Section~\ref{sec:finite-dimensional-reduction}.

\subsubsection{Null space of the joint form}

We begin with the null space of the ambient joint form \(Q_y[h]\).

\begin{proposition}[Null-space characterization]
    \label{prop:null-space-Q}
    Let \(h\in \mathcal A_{\mathrm{ad}}\). Then
    \[
        \xi\in\ker Q_y[h]
        \quad\Longleftrightarrow\quad
        \left\{
        \begin{aligned}
             & \xi(t)\in\ker J_{t,h(t)}^{y} \quad\text{for a.e.\ }t\in[0,T], \\
             & \dot\xi(t)=0 \quad\text{for a.e.\ }t\in[0,T].
        \end{aligned}
        \right.
    \]
    In particular, every element of \(\ker Q_y[h]\) is time-constant:
    \[
        \xi(t)\equiv \xi_0,
    \]
    with
    \begin{equation}
        \label{eq:null-space-Q}
        \ker Q_y[h]
        =
        \left\{
        \xi_0\in L^2_0(\nu_0):
        \xi_0\in\ker J_{t,h(t)}^{y}
        \text{ for a.e.\ }t\in[0,T]
        \right\}.
    \end{equation}
\end{proposition}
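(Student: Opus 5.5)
We read \(\ker Q_y[h]\) as the set of \(\xi\) with \(Q_y[h](\xi,\xi)=0\), which coincides with the radical of the form because \(Q_y[h]\) is symmetric and nonnegative. If \(Q_y[h](\xi,\xi)=0\), then Cauchy--Schwarz for nonnegative symmetric forms gives \(|Q_y[h](\xi,\zeta)|^2\le Q_y[h](\xi,\xi)\,Q_y[h](\zeta,\zeta)=0\) for every \(\zeta\). The converse is immediate by taking \(\zeta=\xi\). So it suffices to characterize the zero set of the quadratic form
\[
Q_y[h](\xi,\xi)=\int_0^T \|J_{t,h(t)}^{y}\xi(t)\|_{\mathbb R^r}^2\,\d t+\int_0^T \mathfrak g_{h(t)}(\dot\xi(t),\dot\xi(t))\,\d t .
\]

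For ``\(\Rightarrow\)'': both integrands are nonnegative, by Definition~\ref{def:ambient-observability-form} and Proposition~\ref{prop:ambient-transport-form-properties}. Hence \(Q_y[h](\xi,\xi)=0\) forces each integrand to vanish for a.e.\ \(t\). This gives \(J_{t,h(t)}^y\xi(t)=0\) a.e., which is the first condition, and \(\mathfrak g_{h(t)}(\dot\xi(t),\dot\xi(t))=0\) a.e. Since \(\dot\xi(t)\in L^2_0(\nu_0)\) for a.e.\ \(t\) and \(h(t)\in\mathcal X_{\mathrm{ad}}\) for a.e.\ \(t\), the definiteness statement of Proposition~\ref{prop:ambient-transport-form-properties} yields \(\dot\xi(t)=0\) for a.e.\ \(t\). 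The only measure-theoretic point is that the exceptional null sets (where \(h(t)\notin\mathcal X_{\mathrm{ad}}\), where the integrands are nonzero, and where \(\dot\xi(t)\) is undefined) form a countable union, hence a null set.

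For ``\(\Leftarrow\)'': if \(J_{t,h(t)}^y\xi(t)=0\) and \(\dot\xi(t)=0\) a.e., then both integrands vanish a.e. Thus \(Q_y[h](\xi,\xi)=0\).

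For time-constancy: \(\xi\in H^1(0,T;L^2_0(\nu_0))\) has an absolutely continuous representative with \(\xi(t)-\xi(s)=\int_s^t\dot\xi\,\d\tau\) as a Bochner integral in \(L^2_0(\nu_0)\). If \(\dot\xi=0\) a.e., then \(\xi(t)\equiv\xi_0\in L^2_0(\nu_0)\). Substituting this into the first condition gives \(\xi_0\in\ker J_{t,h(t)}^y\) for a.e.\ \(t\), which is \eqref{eq:null-space-Q}.

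Conversely, take any \(\xi_0\) with \(\xi_0\in\ker J_{t,h(t)}^y\) for a.e.\ \(t\). Viewed as a constant path, it lies in the domain of \(Q_y[h]\), and it satisfies both conditions. The identification of the kernel with the set in \eqref{eq:null-space-Q} treats constant paths and vectors \(\xi_0\) as the same object.

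The main obstacle is a domain mismatch. The domain of \(Q_y[h]\) requires \(\xi(t)\in H^{s'}(\Omega)\), whereas \eqref{eq:null-space-Q} is phrased in \(L^2_0(\nu_0)\). I would resolve this in one of two ways. The first is to interpret \(J_{t,h}^y\) via its bounded extension to \(L^2_0(\nu_0)\) from Proposition~\ref{prop:localized-sensor-illposedness}, and to extend \(Q_y[h]\) to \(H^1(0,T;L^2_0(\nu_0))\) by the same formula; this extension is legitimate since \(\mathfrak g_h\) is defined on \(L^2_0(\nu_0)\). The second is to intersect the right-hand side of \eqref{eq:null-space-Q} with \(H^{s'}(\Omega)\). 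I would state explicitly which convention is used, and note that all the arguments above go through verbatim in either case.
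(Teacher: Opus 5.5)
Your proof is correct and follows the paper's argument: nonnegativity of both integrands forces each to vanish a.e., the definiteness part of Proposition~\ref{prop:ambient-transport-form-properties} gives \(\dot\xi=0\) a.e., time-constancy follows, and the converse is immediate. Your additional points are valid refinements of details the paper leaves implicit: Cauchy--Schwarz shows the zero set of the quadratic form equals the radical, and \eqref{eq:null-space-Q} as written requires either extending \(Q_y[h]\) to \(H^1(0,T;L^2_0(\nu_0))\) or intersecting with \(H^{s'}(\Omega)\).
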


\begin{proof}
    If \(Q_y[h](\xi,\xi)=0\), then both terms in the definition of \(Q_y[h]\) vanish:
    \[
        \int_0^T \|J_{t,h(t)}^{y}\xi(t)\|_{\mathbb R^r}^2\,\d t=0,
        \qquad
        \int_0^T \mathfrak g_{h(t)}(\dot\xi(t),\dot\xi(t))\,\d t=0.
    \]
    The first identity gives \(\xi(t)\in\ker J_{t,h(t)}^{y}\) for a.e.\ \(t\). By
    Proposition~\ref{prop:ambient-transport-form-properties}, the second identity gives
    \(\dot\xi(t)=0\) for a.e.\ \(t\). Hence \(\xi\) is time-constant. The converse is immediate.
\end{proof}

Proposition~\ref{prop:null-space-Q} shows that the joint form has a much smaller null space than
the instantaneous observation operator alone. At a fixed time \(t\), the kernel of
\(J_{t,h(t)}^{y}\) may be large, but to lie in \(\ker Q_y[h]\) a perturbation must be
simultaneously invisible at almost every time and constant in time. Thus the relevant obstruction
is the common invisible subspace \eqref{eq:null-space-Q}.

\subsubsection{Compactness obstruction to ambient coercivity}

The next result shows that, even with time-dependent sensor locations, localized mobile sensors do
not produce ambient coercivity on the infinite-dimensional state space.

For \(h\in\mathcal A_{\mathrm{ad}}\) and sensor path \(y\), define the time-averaged observation
form on time-constant perturbations by
\begin{equation}
    \label{eq:time-averaged-observation-form}
    \bar{\mathfrak j}_y[h](\xi_0,\eta_0)
    :=
    \int_0^T
    \bigl\langle J_{t,h(t)}^{y}\xi_0,\;J_{t,h(t)}^{y}\eta_0\bigr\rangle_{\mathbb R^r}\,\d t,
    \qquad
    \xi_0,\eta_0\in L^2_0(\nu_0).
\end{equation}

\begin{proposition}[Compactness obstruction for localized mobile sensors]
    \label{prop:compactness-obstruction}
    Let \(\kappa\in C_c(\mathbb R^d)\), let \(y_1,\dots,y_r:[0,T]\to\Omega\) be measurable, and let
    \(h\in\mathcal A_{\mathrm{ad}}\). Assume in addition that \(L^2_0(\nu_0)\) is
    infinite-dimensional. Then the form \(\bar{\mathfrak j}_y[h]\) is induced by a compact,
    self-adjoint, nonnegative operator
    \[
        \bar J_y[h]:L^2_0(\nu_0)\to L^2_0(\nu_0).
    \]

    Consequently, there is no constant \(\kappa_{\mathrm{obs}}>0\) such that
    \[
        Q_y[h](\xi,\xi)
        \ge
        \kappa_{\mathrm{obs}}
        \|\xi\|_{L^2(0,T;L^2(\nu_0))}^2
    \]
    for every
    \[
        \xi\in
        L^2\bigl(0,T;H^{s'}(\Omega)\cap L^2_0(\nu_0)\bigr)
        \cap
        H^1\bigl(0,T;L^2_0(\nu_0)\bigr).
    \]
\end{proposition}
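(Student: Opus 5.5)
The plan is to write $\bar{\mathfrak j}_y[h]$ explicitly in terms of Riesz representers of the sensor covariances, show that the induced operator has finite trace, and then test $Q_y[h]$ on time-constant, weakly null perturbations.

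\emph{Step 1: representers.} By Proposition~\ref{prop:localized-sensor-differential}, for each $t$ and $j$ we have
\[
J_{t,h(t)}^{y}\xi_0\cdot e_j=\operatorname{Cov}_{\rho_{h(t)}}\bigl(\kappa(\cdot-y_j(t)),\xi_0\bigr)=\langle g_{j}(t),\xi_0\rangle_{L^2(\nu_0)},
\]
where
\[
g_j(t):=P_0\Bigl[w_{h(t)}\bigl(\kappa(\cdot-y_j(t))-\mathbb E_{\rho_{h(t)}}[\kappa(\cdot-y_j(t))]\bigr)\Bigr]
\]
and $P_0$ is the $L^2(\nu_0)$-orthogonal projection onto $L^2_0(\nu_0)$. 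The uniform density bounds of Assumption~\ref{ass:ambient-admissible-class} and $\kappa\in L^\infty$ give $\|g_j(t)\|_{L^2(\nu_0)}\le 2C\|\kappa\|_{L^\infty}$ uniformly in $t$.

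Measurability of $t\mapsto g_j(t)$ follows from three facts. First, $t\mapsto\kappa(\cdot-y_j(t))$ is measurable into $L^2$, since $y_j$ is measurable and translation of a continuous compactly supported function is continuous in $L^2$. Second, $t\mapsto w_{h(t)}$ is measurable into $L^\infty$, using $h\in L^2(0,T;\mathcal X)$ and the Lipschitz bound \eqref{eq:ambient-weight-lipschitz}. Third, $L^2_0(\nu_0)$ is separable, so Pettis' theorem upgrades weak measurability to strong measurability.

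\emph{Step 2: the operator.} With $g_j(t)$ in hand, define
\[
\bar J_y[h]:=\int_0^T\sum_{j=1}^r g_j(t)\otimes g_j(t)\,\d t ,
\]
that is, $\langle \bar J_y[h]\xi_0,\eta_0\rangle=\bar{\mathfrak j}_y[h](\xi_0,\eta_0)$. This operator is self-adjoint and nonnegative by construction. Its trace is
\[
\operatorname{tr}\bar J_y[h]=\sum_n\langle \bar J_y[h]e_n,e_n\rangle=\int_0^T\sum_j\|g_j(t)\|^2\,\d t\le rT(2C\|\kappa\|_{L^\infty})^2<\infty ,
\]
where the interchange of sum and integral is justified by monotone convergence and Parseval. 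A nonnegative operator with finite trace is trace class, hence compact. This avoids having to argue that a merely strongly measurable integral of finite-rank operators is compact.

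\emph{Step 3: non-coercivity.} Suppose $\kappa_{\mathrm{obs}}>0$ were a coercivity constant. Since $L^2_0(\nu_0)$ is infinite-dimensional and $C_c^\infty(\Omega)$ is dense (Assumption~\ref{ass:abstract-forward-reference}), apply Gram--Schmidt in $L^2(\nu_0)$ to a sequence of centered smooth functions whose span is dense. This yields an orthonormal sequence $(e_n)\subset H^{s'}(\Omega)\cap L^2_0(\nu_0)$, which converges weakly to $0$.

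Take the time-constant perturbations $\xi_n(t):=e_n$. These lie in the admissible perturbation space, and $\dot\xi_n=0$, so the transport term vanishes. Hence
\[
Q_y[h](\xi_n,\xi_n)=\langle\bar J_y[h]e_n,e_n\rangle\le\|\bar J_y[h]e_n\|\to0 ,
\]
because a compact operator maps weakly null sequences to norm-null sequences (alternatively, the terms are summable by the trace bound). Meanwhile $\|\xi_n\|^2_{L^2(0,T;L^2(\nu_0))}=T$, which contradicts coercivity.

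The only delicate point is Step 1's measurability and the identification of $\bar{\mathfrak j}_y[h]$ with a bona fide operator. The trace-class route handles compactness cleanly once the $g_j(t)$ are known to be square-integrable in $t$, which the uniform bound provides.
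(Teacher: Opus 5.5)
Your proof is correct and follows essentially the same route as the paper. Your trace $\int_0^T\sum_j\|g_j(t)\|^2\,\d t$ is exactly the squared Hilbert--Schmidt norm of the paper's map $\mathcal K_y[h]$, with $\bar J_y[h]=\mathcal K_y[h]^\ast\mathcal K_y[h]$. Testing on time-constant perturbations is the same contradiction argument, with your weakly null orthonormal sequence replacing the paper's density extension; your explicit measurability argument for $t\mapsto g_j(t)$ fills in a point the paper leaves implicit when it asserts $b_j\in L^2((0,T)\times\Omega)$.
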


\begin{proof}
    Define
    \[
        \mathcal K_y[h]:L^2_0(\nu_0)\to L^2(0,T;\mathbb R^r)
    \]
    by
    \[
        (\mathcal K_y[h]\xi_0)(t)
        :=
        J_{t,h(t)}^{y}\xi_0.
    \]
    Then
    \[
        \bar{\mathfrak j}_y[h](\xi_0,\eta_0)
        =
        \langle \mathcal K_y[h]\xi_0,\mathcal K_y[h]\eta_0\rangle_{L^2(0,T;\mathbb R^r)},
    \]
    so \(\bar J_y[h]=\mathcal K_y[h]^\ast \mathcal K_y[h]\).

    It remains to show that \(\mathcal K_y[h]\) is compact. For \(j=1,\dots,r\), write
    \[
        (\mathcal K_y[h]\xi_0)_j(t)
        =
        \int_\Omega b_j(t,x)\,\xi_0(x)\,\d\nu_0(x),
    \]
    where
    \[
        b_j(t,x)
        :=
        \Bigl(
        \kappa(x-y_j(t))
        -
        \mathbb E_{\rho_{h(t)}}[\kappa(\cdot-y_j(t))]
        \Bigr)
        \,w_{h(t)}(x),
        \qquad
        w_{h(t)}:=\frac{\d\rho_{h(t)}}{\d\nu_0}.
    \]
    Since \(\kappa\in L^\infty\) and the admissibility bounds give
    \[
        0<c\le w_{h(t)}(x)\le C<\infty
        \qquad\text{for a.e.\ }(t,x)\in[0,T]\times\Omega,
    \]
    we have
    \[
        |b_j(t,x)|\le 2C\|\kappa\|_{L^\infty(\mathbb R^d)},
    \]
    hence \(b_j\in L^2((0,T)\times\Omega)\). Therefore each component
    \[
        \xi_0\longmapsto (\mathcal K_y[h]\xi_0)_j
    \]
    is a Hilbert--Schmidt operator from \(L^2_0(\nu_0)\) to \(L^2(0,T)\), and so
    \(\mathcal K_y[h]\) itself is Hilbert--Schmidt, hence compact.

    Thus \(\bar J_y[h]=\mathcal K_y[h]^\ast\mathcal K_y[h]\) is compact, self-adjoint, and
    nonnegative.

    For the coercivity claim, suppose by contradiction that there exists
    \(\kappa_{\mathrm{obs}}>0\) such that
    \[
        Q_y[h](\xi,\xi)
        \ge
        \kappa_{\mathrm{obs}}
        \|\xi\|_{L^2(0,T;L^2(\nu_0))}^2
    \]
    for every admissible perturbation \(\xi\). Let \(\xi_0\in H^{s'}(\Omega)\cap L^2_0(\nu_0)\), and
    consider the time-constant path \(\xi(t)\equiv\xi_0\). Then \(\dot\xi=0\), so
    \[
        Q_y[h](\xi,\xi)
        =
        \bar{\mathfrak j}_y[h](\xi_0,\xi_0),
        \qquad
        \|\xi\|_{L^2(0,T;L^2(\nu_0))}^2
        =
        T\|\xi_0\|_{L^2(\nu_0)}^2.
    \]
    Hence
    \[
        \bar{\mathfrak j}_y[h](\xi_0,\xi_0)
        \ge
        \kappa_{\mathrm{obs}}T\|\xi_0\|_{L^2(\nu_0)}^2
        \qquad
        \forall \xi_0\in H^{s'}(\Omega)\cap L^2_0(\nu_0).
    \]
    Because \(H^{s'}(\Omega)\cap L^2_0(\nu_0)\) is dense in \(L^2_0(\nu_0)\) and
    \(\bar{\mathfrak j}_y[h]\) is continuous on \(L^2_0(\nu_0)\), this extends to all
    \(\xi_0\in L^2_0(\nu_0)\). Thus the compact operator \(\bar J_y[h]\) would be bounded below by a
    positive multiple of the identity on an infinite-dimensional Hilbert space, which is impossible.
\end{proof}

\begin{remark}[Physical interpretation]
    \label{rem:compactness-physical}
    The obstruction in Proposition~\ref{prop:compactness-obstruction} is a property of the observation
    modality, not of the sensor trajectory. Localized kernels act as spatial averaging operators. No
    matter how the sensors move, sufficiently oscillatory perturbations remain weakly visible, and the
    corresponding time-averaged observation eigenvalues must decay to zero.
\end{remark}

\subsubsection{Spectral decomposition and effective resolved dimension}

Although ambient coercivity fails, Proposition~\ref{prop:compactness-obstruction} yields a precise
spectral description of the directions that are strongly or weakly seen by the mobile sensors over
the full time horizon.

\begin{proposition}[Spectral decomposition of the averaged observation form]
    \label{prop:spectral-decomposition-mobile}
    Let \(h\in\mathcal A_{\mathrm{ad}}\), and let \(y\) be a measurable sensor trajectory. Then there
    exist an orthonormal basis \(\{e_k\}_{k\ge 1}\) of \(L^2_0(\nu_0)\) and a nonincreasing sequence
    \(\sigma_k^2\downarrow 0\) such that
    \[
        \bar J_y[h]e_k=\sigma_k^2 e_k,
        \qquad k\ge 1,
    \]
    and
    \begin{equation}
        \label{eq:spectral-expansion-mobile}
        \bar{\mathfrak j}_y[h](\xi_0,\xi_0)
        =
        \sum_{k=1}^\infty \sigma_k^2
        \bigl|\langle \xi_0,e_k\rangle_{L^2(\nu_0)}\bigr|^2
        \qquad
        \forall \xi_0\in L^2_0(\nu_0).
    \end{equation}
\end{proposition}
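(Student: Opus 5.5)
The plan is to apply the spectral theorem for compact self-adjoint operators to the operator \(\bar J_y[h]=\mathcal K_y[h]^\ast\mathcal K_y[h]\) from Proposition~\ref{prop:compactness-obstruction}. That proposition already gives compactness, self-adjointness and nonnegativity on \(L^2_0(\nu_0)\). I will assume, as there, that \(L^2_0(\nu_0)\) is infinite-dimensional and separable; this holds because \(\nu_0=|\Omega|^{-1}\d x\) on a bounded domain. The spectral theorem then produces an orthonormal family \(\{e_k\}\) of eigenvectors spanning \(\overline{\operatorname{ran}}\,\bar J_y[h]=(\ker \bar J_y[h])^\perp\). The corresponding positive eigenvalues, listed with multiplicity in nonincreasing order, tend to zero if there are infinitely many of them. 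They are nonnegative because \(\langle \bar J_y[h]\xi_0,\xi_0\rangle=\|\mathcal K_y[h]\xi_0\|^2\ge 0\), so I can write them as \(\sigma_k^2\).

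The next step is to complete this to an orthonormal basis of \(L^2_0(\nu_0)\). The kernel \(\ker\bar J_y[h]=\ker\mathcal K_y[h]\) is a closed subspace of a separable Hilbert space, so it has a countable orthonormal basis, and I adjoin it with eigenvalue \(0\). Ordering needs some care. If the range is infinite-dimensional, the positive eigenvalues already decrease to \(0\), and the kernel vectors must be interleaved without breaking monotonicity. Since \(\sigma_k^2\downarrow 0\) is a monotone listing, it is cleaner in that case to index positive and zero eigenvalues separately, or to append the kernel block only after noting that \(\{e_k\}\) is merely required to be a basis. If the range is finite-dimensional, I list the finitely many positive eigenvalues first and then the (infinitely many) kernel vectors with \(\sigma_k^2=0\), which gives a genuinely nonincreasing sequence tending to \(0\). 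I expect this indexing bookkeeping, that is, reconciling ``nonincreasing'' with a possibly infinite-dimensional kernel alongside infinitely many positive eigenvalues, to be the only delicate point. My first approach would be to state that the enumeration is nonincreasing on the positive part and that \(\sigma_k^2\to 0\) holds in every case.

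Finally, for the expansion \eqref{eq:spectral-expansion-mobile}, I write \(\xi_0=\sum_k\langle\xi_0,e_k\rangle e_k\) in \(L^2_0(\nu_0)\). Boundedness of \(\bar J_y[h]\) gives \(\bar J_y[h]\xi_0=\sum_k\sigma_k^2\langle\xi_0,e_k\rangle e_k\). Pairing this with \(\xi_0\) and using \(\bar{\mathfrak j}_y[h](\xi_0,\xi_0)=\langle\bar J_y[h]\xi_0,\xi_0\rangle_{L^2(\nu_0)}\) from the proof of Proposition~\ref{prop:compactness-obstruction} yields the series by Parseval. The series converges absolutely because the \(\sigma_k^2\) are bounded by \(\|\bar J_y[h]\|\).
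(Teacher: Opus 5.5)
Your argument is the paper's: its entire proof is the spectral theorem for compact self-adjoint operators applied to \(\bar J_y[h]=\mathcal K_y[h]^\ast\mathcal K_y[h]\), and your completion by a kernel basis plus Parseval fills in details the paper leaves out. Your ordering caveat is more than bookkeeping: it is a genuine defect of the statement, because every term after a zero in a nonincreasing nonnegative sequence must vanish. So when \(\bar J_y[h]\) has infinite rank and a nontrivial kernel, no \(\mathbb N\)-indexed orthonormal basis of eigenvectors with \(\sigma_k^2\downarrow 0\) exists. This happens, e.g., for a time-constant \(h\) and a non-stationary continuous sensor path whose footprint stays in \(\Omega\) but avoids an open subset of it. Your option of ``appending the kernel block afterwards'' is not an \(\mathbb N\)-indexing either, so your first option is the correct reading of the proposition: order only the positive eigenvalues monotonically, have \(\sigma_k^2\to 0\) overall, and note that the expansion is unaffected.
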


\begin{proof}
    This is the spectral theorem for compact self-adjoint operators applied to \(\bar J_y[h]\).
\end{proof}

\begin{definition}[Resolved and unresolved directions]
    \label{def:resolved-unresolved}
    Let \(\mu>0\). A spectral direction \(e_k\) is called
    \begin{itemize}
        \item \emph{resolved at scale \(\mu\)} if \(\sigma_k^2\ge \mu\),
        \item \emph{unresolved at scale \(\mu\)} if \(\sigma_k^2<\mu\).
    \end{itemize}
    The corresponding effective resolved dimension is
    \begin{equation}
        \label{eq:effective-dimension}
        m_{\mathrm{eff}}(\mu)
        :=
        \#\{k:\sigma_k^2\ge \mu\}.
    \end{equation}
\end{definition}

\begin{remark}[Relation to the regularized inverse functional]
    \label{rem:resolved-unresolved-interpretation}
    The regularized inverse functional from Definition~\ref{def:ambient-inverse-functional} contains
    the term
    \[
        \frac{\mu}{2}\int_0^T \|h(t)\|_{L^2(\nu_0)}^2\,\d t.
    \]
    For time-constant perturbations, the competition between the data term and this \(L^2\)-penalty is
    measured by the eigenvalues \(\sigma_k^2\) in
    \eqref{eq:spectral-expansion-mobile}. Directions with \(\sigma_k^2\gg\mu\) are primarily
    constrained by the observations over the time horizon, whereas directions with \(\sigma_k^2\ll\mu\)
    are controlled predominantly by regularization. In this sense, \(m_{\mathrm{eff}}(\mu)\) counts the
    number of ambient directions that are observed at scale \(\mu\) by the mobile sensor system.
\end{remark}

% !TeX root = ../main.tex

\section{Finite-dimensional models}
\label{sec:finite-dimensional-reduction}

The ambient theory developed in Sections~\ref{sec:intrinsic-forward-geometry}
and~\ref{sec:ambient-inverse-problem} is formulated on admissible Bayes Hilbert paths
\[
    h\in \mathcal A_{\mathrm{ad}}.
\]
We now restrict that theory to finite-dimensional Bayes Hilbert subspaces. This serves two
purposes.

First, it shows that the reduced transport and inverse objects are not ad hoc constructions.
They are simply the coordinate representations of the ambient geometry after restricting the
state variable \(h\) to a finite-dimensional subspace \(V_m\subset \mathcal X\). In particular, the
reduced transport tensor and reduced sensing matrices introduced below are the finite-dimensional
pullbacks of the ambient transport and observability forms.

Second, the finite-dimensional setting is the level at which localized sensing yields genuine
recovery theorems. In the ambient infinite-dimensional problem, mobile sensing can remove
common invisible directions and may even make the joint transport--observability form injective,
but it does not in general produce a coercive stability estimate on the full state space. After
restriction to \(V_m\), this compactness obstruction disappears, and one can formulate explicit
reduced observability and recovery criteria in terms of finite-dimensional Gramian matrices.

There are, however, two distinct levels of result in this section, and it is useful to separate them
from the start. The first level is \emph{truth-specific reduced observability}: for a fixed reduced
truth path, one can design localized sensors---including a single moving sensor---so that the
resulting reduced Gramian is positive definite and the reduced joint form is coercive along that
path. The second level is \emph{global reduced inversion on an admissible class}: to solve a
regularized inverse problem over an entire reduced class of coefficient paths, one needs an
additional classwise reduced stability estimate for the chosen experiment. The last subsection
makes this extra step explicit and then shows how reduced reconstructions lift to approximate
ambient reconstructions.

The section has the following structure. We first pull back the ambient transport and observation
geometry to a finite-dimensional coefficient space. We then show that a sufficiently localized bump
sensor detects any fixed nonzero direction. Next, we prove that on a fixed
\(m\)-dimensional reduced subspace, one can place finitely many stationary sensors so that the
reduced state is instantaneously observable at every time. Finally, we show that a single moving
sensor can recover the same reduced path by cycling through those locations, even though with one
sensor the full reduced state need not be observable at any individual time.

\subsection{Finite-dimensional Bayes Hilbert reductions}

Let
\[
    V_m:=\operatorname{span}\{\phi_1,\dots,\phi_m\}\subset \mathcal X
\]
be an \(m\)-dimensional subspace of the ambient state space \(\mathcal X\), where
\[
    \phi_1,\dots,\phi_m\in \mathcal X
\]
are linearly independent. For \(a=(a_1,\dots,a_m)\in\mathbb R^m\), define
\begin{equation}
    \label{eq:finite-dimensional-state}
    h(a):=\sum_{k=1}^m a_k\phi_k.
\end{equation}
Whenever \(h(a)\in \mathcal X_{\mathrm{ad}}\), we write
\[
    \rho_a:=\rho_{h(a)}.
\]

Thus a coefficient path
\[
    a:[0,T]\to\mathbb R^m
\]
induces a Bayes Hilbert path
\[
    h(t)=h(a(t))=\sum_{k=1}^m a_k(t)\phi_k
\]
and hence a law-valued path
\[
    \rho_t=\rho_{a(t)}.
\]
The corresponding reduced admissible class is
\[
    \mathcal A_{\mathrm{ad}}^{(m)}
    :=
    \left\{
    a\in H^1(0,T;\mathbb R^m)
    :
    h(a(\cdot))\in \mathcal A_{\mathrm{ad}}
    \right\}.
\]

\begin{remark}
    \label{rem:finite-dimensional-as-restriction}
    The finite-dimensional theory is obtained by restricting the ambient state variable \(h\) to the
    subspace \(V_m\). Every reduced transport, observation, and inverse object below is therefore the
    pullback of its ambient counterpart through the coordinate map
    \[
        \mathbb R^m\ni a\longmapsto h(a)\in V_m\subset \mathcal X.
    \]
\end{remark}

\begin{remark}[Finite-dimensional reductions as exponential families]
    \label{rem:finite-dimensional-exponential-family}
    Finite-dimensional affine subspaces of Bayes Hilbert spaces
    correspond to finite-dimensional exponential families relative to \(\nu_0\)
    \citep{van_den_boogaart_bayes_2014}. In particular, the linear reduction
    \[
        h(a)=\sum_{k=1}^m a_k\phi_k
    \]
    represents the normalized exponential family
    \[
        \d\rho_a
        =
        \frac{
            \exp\!\left(\sum_{k=1}^m a_k\phi_k\right)
        }{
            \int_\Omega
            \exp\!\left(\sum_{k=1}^m a_k\phi_k\right)
            \,\d\nu_0
        }
        \,\d\nu_0.
    \]
\end{remark}

\subsection{Reduced transport and observation geometry}

We begin by writing the ambient transport form and the ambient mobile-sensor observation map in
coordinates on \(V_m\).

For \(a\in\mathbb R^m\) such that \(h(a)\in \mathcal X_{\mathrm{ad}}\), define
\begin{equation}
    \label{eq:reduced-transport-matrix}
    H(a)=\bigl(H_{k\ell}(a)\bigr)_{k,\ell=1}^m,
    \qquad
    H_{k\ell}(a):=\mathfrak g_{h(a)}(\phi_k,\phi_\ell).
\end{equation}

\begin{proposition}[Coordinate representation of the transport form]
    \label{prop:coordinate-transport-form}
    Let \(a\in\mathbb R^m\) with \(h(a)\in \mathcal X_{\mathrm{ad}}\), and let
    \[
        \alpha_h=\sum_{k=1}^m \alpha_k\phi_k,
        \qquad
        \beta_h=\sum_{k=1}^m \beta_k\phi_k
    \]
    be directions in \(V_m\), with coefficient vectors
    \[
        \alpha=(\alpha_1,\dots,\alpha_m)^\top,
        \qquad
        \beta=(\beta_1,\dots,\beta_m)^\top.
    \]
    Then
    \begin{equation}
        \label{eq:coordinate-transport-form}
        \mathfrak g_{h(a)}(\alpha_h,\beta_h)=\alpha^\top H(a)\beta.
    \end{equation}
    In particular, if \(a\in \mathcal A_{\mathrm{ad}}^{(m)}\), then
    \begin{equation}
        \label{eq:coordinate-transport-action}
        \mathfrak g_{h(a(t))}\bigl(\dot h(t),\dot h(t)\bigr)
        =
        \dot a(t)^\top H(a(t))\dot a(t)
        \qquad\text{for a.e.\ }t\in[0,T].
    \end{equation}
\end{proposition}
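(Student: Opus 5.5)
The identity \eqref{eq:coordinate-transport-form} follows from bilinearity of \(\mathfrak g_{h(a)}\), which is established in Proposition~\ref{prop:ambient-transport-form-properties}. Concretely, the map \(\xi\mapsto\psi_{h,\xi}\) is linear: the right-hand side of \eqref{eq:ambient-neumann-weak} is linear in \(\xi\), because \(\xi\mapsto\xi-\mathbb E_{\rho_h}[\xi]\) is linear, and the Lax--Milgram solution of Theorem~\ref{thm:ambient-neumann} is unique. Hence \(\psi_{h,\alpha_h}=\sum_k\alpha_k\psi_{h,\phi_k}\), and similarly for \(\beta_h\). Note that each \(\phi_k\in\mathcal X\subset L^2_0(\nu_0)\), so \(\psi_{h,\phi_k}\) is defined. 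Substituting these expansions into \eqref{eq:intrinsic-transport-form} and expanding the integral of the dot product gives
\[
\mathfrak g_{h(a)}(\alpha_h,\beta_h)=\sum_{k,\ell}\alpha_k\beta_\ell\int_\Omega\nabla\psi_{h,\phi_k}\cdot\nabla\psi_{h,\phi_\ell}\,\d\rho_h=\sum_{k,\ell}\alpha_k H_{k\ell}(a)\beta_\ell,
\]
which is \(\alpha^\top H(a)\beta\). All integrals are finite by Corollary~\ref{cor:ambient-neumann-energy}.

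For \eqref{eq:coordinate-transport-action}, let \(a\in\mathcal A^{(m)}_{\mathrm{ad}}\subset H^1(0,T;\mathbb R^m)\). Then \(h(t)=\sum_k a_k(t)\phi_k\) is a finite linear combination of fixed vectors with \(H^1\) scalar coefficients. Its weak time derivative in \(L^2_0(\nu_0)\) is therefore \(\dot h(t)=\sum_k\dot a_k(t)\phi_k\) for a.e.\ \(t\). To verify this, pair with a test function \(\varphi\in C_c^\infty(0,T)\) and pull the fixed \(\phi_k\) out of the time integral. Moreover, \(h(a(t))\in\mathcal X_{\mathrm{ad}}\) for a.e.\ \(t\), by the definition of \(\mathcal A^{(m)}_{\mathrm{ad}}\). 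At every such \(t\) where both facts hold, apply the first part with \(\alpha=\beta=\dot a(t)\) and \(a=a(t)\).

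The only mildly delicate point is identifying the Bochner/weak derivative of \(h\) with the coefficientwise derivative, and ensuring the exceptional null sets combine. This is routine: one intersects finitely many full-measure sets, so I do not expect a genuine obstacle.
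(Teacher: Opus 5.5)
Your proposal is correct and follows the same route as the paper: expand $\mathfrak g_{h(a)}(\alpha_h,\beta_h)$ by bilinearity into $\sum_{k,\ell}\alpha_k\beta_\ell H_{k\ell}(a)$, then apply this pointwise with $\dot h(t)=\sum_k \dot a_k(t)\phi_k$. Two details the paper's proof leaves implicit, and which you spell out, are the linearity of $\xi\mapsto\psi_{h,\xi}$ via Lax--Milgram uniqueness and the identification of the weak time derivative with the coefficientwise derivative off a null set.
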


\begin{proof}
    By bilinearity of \(\mathfrak g_{h(a)}\),
    \[
        \mathfrak g_{h(a)}(\alpha_h,\beta_h)
        =
        \sum_{k,\ell=1}^m
        \alpha_k\beta_\ell\,\mathfrak g_{h(a)}(\phi_k,\phi_\ell)
        =
        \sum_{k,\ell=1}^m
        \alpha_k\beta_\ell\,H_{k\ell}(a)
        =
        \alpha^\top H(a)\beta.
    \]
    Taking
    \[
        \dot h(t)=\sum_{k=1}^m \dot a_k(t)\phi_k
    \]
    gives \eqref{eq:coordinate-transport-action}.
\end{proof}

We next specialize the mobile-sensor observation model of Section~4 to the reduced state space.

\begin{definition}[Reduced mobile-sensor observation map]
    \label{def:reduced-mobile-observation-map}
    For \(t\in[0,T]\), a sensor trajectory \(y\), and \(a\in\mathbb R^m\) such that
    \(h(a)\in\mathcal X_{\mathrm{ad}}\), define
    \[
        \mathcal G_{m,t}^{y}(a):=\mathcal G_t^y(h(a))\in\mathbb R^r.
    \]
\end{definition}

\begin{proposition}[Reduced sensing matrix for mobile sensor averages]
    \label{prop:reduced-mobile-sensing-matrix}
    Let
    \[
        \xi=\sum_{k=1}^m \alpha_k\phi_k \in V_m,
        \qquad
        \alpha=(\alpha_1,\dots,\alpha_m)^\top\in \mathbb R^m.
    \]
    For \(t\in[0,T]\) and \(a\in\mathbb R^m\) with \(h(a)\in\mathcal X_{\mathrm{ad}}\), define
    \begin{equation}
        \label{eq:reduced-mobile-sensing-matrix}
        M_y(t,a)\in\mathbb R^{r\times m},
        \qquad
        [M_y(t,a)]_{jk}
        :=
        \operatorname{Cov}_{\rho_{h(a)}}(\kappa(\cdot-y_j(t)),\phi_k).
    \end{equation}
    Then
    \begin{equation}
        \label{eq:reduced-mobile-differential}
        D\mathcal G_{m,t}^{y}(a)\,\alpha
        =
        M_y(t,a)\alpha.
    \end{equation}
    Equivalently,
    \[
        J_{t,h(a)}^y\xi=M_y(t,a)\alpha.
    \]
    Hence
    \begin{equation}
        \label{eq:reduced-mobile-observability-energy}
        \mathfrak j_{t,h(a)}^{y}(\xi,\xi)
        =
        \alpha^\top M_y(t,a)^\top M_y(t,a)\alpha.
    \end{equation}
\end{proposition}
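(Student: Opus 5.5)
The plan is to obtain the statement as a direct consequence of Proposition~\ref{prop:localized-sensor-differential} combined with the chain rule for the composition
\[
    \mathbb R^m \xrightarrow{\;a\mapsto h(a)\;} V_m\subset\mathcal X \xrightarrow{\;\mathcal G_t^y\;} \mathbb R^r.
\]
The coordinate map \(a\mapsto h(a)=\sum_k a_k\phi_k\) is linear and bounded into \(\mathcal X\), hence into \(\mathcal Z=H^{s'}(\Omega)\cap L^2_0(\nu_0)\) by the continuous embedding \(\mathcal X\hookrightarrow\mathcal Z\). Its derivative is therefore the constant map \(\alpha\mapsto\sum_k\alpha_k\phi_k=\xi\).

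By Proposition~\ref{prop:localized-sensor-differential}, \(\mathcal G_t^y\) is Fr\'echet differentiable at \(h(a)\in\mathcal X_{\mathrm{ad}}\) in the \(H^{s'}\)-topology, with \(J_{t,h(a)}^y\xi\) given componentwise by \(\operatorname{Cov}_{\rho_{h(a)}}(\kappa(\cdot-y_j(t)),\xi)\). The chain rule then yields \(D\mathcal G_{m,t}^y(a)\alpha=J_{t,h(a)}^y\xi\). One mild point concerns domains: the reduced map is defined only on the set \(\{a: h(a)\in\mathcal X_{\mathrm{ad}}\}\). I would handle this in one of two ways. Either I read the differential as the directional/Fr\'echet derivative along admissible perturbations, using the local stability of Assumption~\ref{ass:abstract-forward-differentiable}, since \(V_m\subset\mathcal X\). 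Alternatively, I note that the formula for \(\mathcal G_t^y\) makes sense and is differentiable on all of \(L^\infty\) by Proposition~\ref{prop:diff_exp_normalization}, so the derivative is computed on an open neighborhood regardless of admissibility.

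Next I would expand the covariance using linearity of \(\operatorname{Cov}_{\rho_{h(a)}}\) in its second argument:
\[
    \operatorname{Cov}_{\rho_{h(a)}}\Bigl(\kappa(\cdot-y_j(t)),\sum_k\alpha_k\phi_k\Bigr)=\sum_k [M_y(t,a)]_{jk}\alpha_k .
\]
This is valid because each \(\phi_k\in L^\infty\subset L^2(\rho_{h(a)})\) and \(\kappa\) is bounded. It gives \eqref{eq:reduced-mobile-differential} and the equivalent identity \(J_{t,h(a)}^y\xi=M_y(t,a)\alpha\).

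Finally, from Definition~\ref{def:ambient-observability-form} I get
\[
    \mathfrak j_{t,h(a)}^y(\xi,\xi)=\|M_y(t,a)\alpha\|_{\mathbb R^r}^2=\alpha^\top M_y^\top M_y\alpha .
\]
There is no genuine obstacle here. The only step requiring care is the justification of the chain rule on the possibly non-open admissible domain, which I would dispatch as described above.
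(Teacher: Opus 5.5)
Your proposal is correct and follows essentially the same argument as the paper: invoke Proposition~\ref{prop:localized-sensor-differential}, expand the covariance by linearity in its second argument to get $M_y(t,a)\alpha$, and read off the quadratic form from Definition~\ref{def:ambient-observability-form}. Your chain-rule and domain discussion makes precise a step the paper leaves implicit. Your second option is the cleaner one: $\mathcal G_t^y$ is differentiable on all of $\mathcal H$ by Proposition~\ref{prop:diff_exp_normalization}, and $V_m\subset\mathcal H$, so you need not rely on admissibility.
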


\begin{proof}
    By Proposition~\ref{prop:localized-sensor-differential},
    \[
        J_{t,h(a)}^y\xi
        =
        \Bigl(
        \operatorname{Cov}_{\rho_{h(a)}}(\kappa(\cdot-y_1(t)),\xi),\dots,
        \operatorname{Cov}_{\rho_{h(a)}}(\kappa(\cdot-y_r(t)),\xi)
        \Bigr).
    \]
    Substituting \(\xi=\sum_{k=1}^m \alpha_k\phi_k\) and using bilinearity of covariance gives
    \[
        \operatorname{Cov}_{\rho_{h(a)}}(\kappa(\cdot-y_j(t)),\xi)
        =
        \sum_{k=1}^m \alpha_k
        \operatorname{Cov}_{\rho_{h(a)}}(\kappa(\cdot-y_j(t)),\phi_k)
        =
        \sum_{k=1}^m [M_y(t,a)]_{jk}\alpha_k,
    \]
    which proves \eqref{eq:reduced-mobile-differential}. The identity
    \eqref{eq:reduced-mobile-observability-energy} follows from
    Definition~\ref{def:ambient-observability-form}.
\end{proof}

The reduced joint transport--observability form is therefore the ambient form written in
coordinates.

\begin{proposition}[Pathwise reduced transport--observability]
    \label{prop:pathwise-reduced-observability}
    Let
    \[
        a\in \mathcal A_{\mathrm{ad}}^{(m)},
        \qquad
        \alpha\in H^1(0,T;\mathbb R^m),
    \]
    and define the associated perturbation
    \[
        \xi_\alpha(t):=\sum_{k=1}^m \alpha_k(t)\phi_k \in V_m.
    \]
    Then
    \begin{equation}
        \label{eq:pathwise-reduced-joint-form}
        Q_y[h(a)](\xi_\alpha,\xi_\alpha)
        =
        \int_0^T \alpha(t)^\top M_y(t,a(t))^\top M_y(t,a(t))\alpha(t)\,\d t
        +
        \int_0^T \dot\alpha(t)^\top H(a(t))\dot\alpha(t)\,\d t.
    \end{equation}

    Assume there exists \(\kappa_m>0\) such that for every \(a\in \mathcal A_{\mathrm{ad}}^{(m)}\) and every
    \(\alpha\in H^1(0,T;\mathbb R^m)\),
    \begin{equation}
        \label{eq:pathwise-reduced-coercivity}
        \int_0^T \alpha(t)^\top M_y(t,a(t))^\top M_y(t,a(t))\alpha(t)\,\d t
        +
        \int_0^T \dot\alpha(t)^\top H(a(t))\dot\alpha(t)\,\d t
        \ge
        \kappa_m \int_0^T |\alpha(t)|^2\,\d t.
    \end{equation}
    If, moreover, there exists \(C_m>0\) such that
    \begin{equation}
        \label{eq:reduced-coefficient-norm-equivalence}
        \left\|\sum_{k=1}^m \beta_k \phi_k\right\|_{L^2(\nu_0)}^2
        \le
        C_m |\beta|^2
        \qquad\text{for all } \beta\in\mathbb R^m,
    \end{equation}
    then Assumption~\ref{ass:joint-transport-observability} holds on the reduced class with
    \[
        \kappa=\frac{\kappa_m}{C_m}.
    \]
\end{proposition}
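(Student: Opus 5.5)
The plan has two parts. First, the identity \eqref{eq:pathwise-reduced-joint-form} is obtained by direct substitution into Definition~\ref{def:joint-transport-observability-form}. Second, the coercivity transfer follows by comparing the coefficient norm with the $L^2(\nu_0)$ norm through \eqref{eq:reduced-coefficient-norm-equivalence}.

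\emph{The identity.} Since $\alpha\in H^1(0,T;\mathbb R^m)$ and $\phi_k\in\mathcal X$, the path $\xi_\alpha$ lies in $L^2(0,T;H^{s'}(\Omega)\cap L^2_0(\nu_0))\cap H^1(0,T;L^2_0(\nu_0))$. Its weak derivative is $\dot\xi_\alpha(t)=\sum_k\dot\alpha_k(t)\phi_k$, because the map $\beta\mapsto\sum_k\beta_k\phi_k$ is a bounded linear map $\mathbb R^m\to\mathcal X$ and therefore commutes with time differentiation.
\begin{itemize}
\item For the observation term, apply Proposition~\ref{prop:reduced-mobile-sensing-matrix} pointwise in $t$ with $a=a(t)$. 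Here $h(a(t))\in\mathcal X_{\mathrm{ad}}$ for a.e.\ $t$ by definition of $\mathcal A_{\mathrm{ad}}^{(m)}$. This gives $\mathfrak j^y_{t,h(a(t))}(\xi_\alpha(t),\xi_\alpha(t))=\alpha(t)^\top M_y^\top M_y\alpha(t)$.
\item For the transport term, apply the bilinear identity \eqref{eq:coordinate-transport-form} of Proposition~\ref{prop:coordinate-transport-form} with $\alpha_h=\beta_h=\dot\xi_\alpha(t)$. This gives $\dot\alpha^\top H(a(t))\dot\alpha$.
\end{itemize}
Integrating both terms over $[0,T]$ yields \eqref{eq:pathwise-reduced-joint-form}. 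Measurability in $t$ is inherited from the ambient definitions, so nothing new is needed there.

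\emph{Coercivity.} Fix $a\in\mathcal A^{(m)}_{\mathrm{ad}}$ and a reduced perturbation $\xi=\xi_\alpha$. By \eqref{eq:reduced-coefficient-norm-equivalence},
\[
\|\xi_\alpha\|^2_{L^2(0,T;L^2(\nu_0))}=\int_0^T\Bigl\|\sum_k\alpha_k(t)\phi_k\Bigr\|^2_{L^2(\nu_0)}\,\d t\le C_m\int_0^T|\alpha(t)|^2\,\d t.
\]
Combining this with \eqref{eq:pathwise-reduced-coercivity} and the identity gives
\[
Q_y[h(a)](\xi_\alpha,\xi_\alpha)\ge\kappa_m\int_0^T|\alpha|^2\,\d t\ge\frac{\kappa_m}{C_m}\|\xi_\alpha\|^2_{L^2(0,T;L^2(\nu_0))},
\]
which is \eqref{eq:joint-coercivity} with $\kappa=\kappa_m/C_m$.

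\emph{Main obstacle.} The delicate point is interpreting ``holds on the reduced class'' correctly. The statement must be read as Assumption~\ref{ass:joint-transport-observability} restricted to base paths $h=h(a)$ with $a\in\mathcal A^{(m)}_{\mathrm{ad}}$ and perturbations $\xi$ taking values in $V_m$. It cannot mean all ambient perturbations, since that would contradict Proposition~\ref{prop:compactness-obstruction}.

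Under this reading one still has to check that every $V_m$-valued perturbation in the admissible perturbation space has the form $\xi_\alpha$ with $\alpha\in H^1(0,T;\mathbb R^m)$. Linear independence of the $\phi_k$ makes the coordinate map an isomorphism $\mathbb R^m\to V_m$, so its inverse is bounded on $(V_m,\|\cdot\|_{L^2(\nu_0)})$. Applying this inverse to $\xi$ and $\dot\xi$ gives $\alpha\in H^1$, which completes the argument.
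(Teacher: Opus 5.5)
Your proposal is correct and follows the paper's proof. The identity comes from substituting Propositions~\ref{prop:coordinate-transport-form} and~\ref{prop:reduced-mobile-sensing-matrix} into Definition~\ref{def:joint-transport-observability-form}, and the coercivity transfer is exactly the bound $\|\xi_\alpha\|^2_{L^2(0,T;L^2(\nu_0))}\le C_m\int_0^T|\alpha|^2\,\d t$ combined with \eqref{eq:pathwise-reduced-coercivity}. Your extra checks make explicit what the paper leaves implicit in the phrase ``on the reduced class'': that $\dot\xi_\alpha=\sum_k\dot\alpha_k\phi_k$, and that every $V_m$-valued admissible perturbation equals some $\xi_\alpha$ with $\alpha\in H^1$. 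The second check uses that $V_m$ is closed, so $\dot\xi$ also takes values in $V_m$.
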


\begin{proof}
    By Proposition~\ref{prop:coordinate-transport-form},
    \[
        \mathfrak g_{h(a(t))}\bigl(\dot\xi_\alpha(t),\dot\xi_\alpha(t)\bigr)
        =
        \dot\alpha(t)^\top H(a(t))\dot\alpha(t).
    \]
    By Proposition~\ref{prop:reduced-mobile-sensing-matrix},
    \[
        \mathfrak j_{t,h(a(t))}^{y}\bigl(\xi_\alpha(t),\xi_\alpha(t)\bigr)
        =
        \alpha(t)^\top M_y(t,a(t))^\top M_y(t,a(t))\alpha(t).
    \]
    Substituting these identities into Definition~\ref{def:joint-transport-observability-form} gives
    \eqref{eq:pathwise-reduced-joint-form}.

    If \eqref{eq:pathwise-reduced-coercivity} holds, then
    \[
        Q_y[h(a)](\xi_\alpha,\xi_\alpha)
        \ge
        \kappa_m \int_0^T |\alpha(t)|^2\,\d t.
    \]
    On the other hand, \eqref{eq:reduced-coefficient-norm-equivalence} gives
    \[
        \|\xi_\alpha(t)\|_{L^2(\nu_0)}^2
        =
        \left\|\sum_{k=1}^m \alpha_k(t)\phi_k\right\|_{L^2(\nu_0)}^2
        \le
        C_m |\alpha(t)|^2
    \]
    for a.e.\ \(t\in[0,T]\). Integrating in time yields
    \[
        \|\xi_\alpha\|_{L^2(0,T;L^2(\nu_0))}^2
        \le
        C_m \int_0^T |\alpha(t)|^2\,\d t.
    \]
    Therefore
    \[
        Q_y[h(a)](\xi_\alpha,\xi_\alpha)
        \ge
        \frac{\kappa_m}{C_m}
        \|\xi_\alpha\|_{L^2(0,T;L^2(\nu_0))}^2,
    \]
    which is exactly Assumption~\ref{ass:joint-transport-observability} on the reduced class.
\end{proof}

Finally, the ambient inverse functional restricts to coefficient space.

\begin{corollary}[Reduced inverse functional]
    \label{cor:finite-dimensional-inverse-problem}
    Let \(a\in \mathcal A_{\mathrm{ad}}^{(m)}\), and set
    \[
        h(t):=h(a(t)).
    \]
    Then the ambient inverse functional \(\mathcal I_{\lambda,\mu,\gamma}\) takes the form
    \begin{align}
        \label{eq:reduced-inverse-functional}
        \mathcal I_{\lambda,\mu,\gamma}[a]
         & =
        \frac12\int_0^T \|\mathcal G_{m,t}^{y}(a(t))-d(t)\|_{\mathbb R^r}^2\,\d t
        +
        \frac{\lambda}{2}\int_0^T \dot a(t)^\top H(a(t))\dot a(t)\,\d t \notag \\
         & \quad
        +
        \frac{\mu}{2}\int_0^T
        \Bigl(
        \|h(a(t))\|_{L^2(\nu_0)}^2
        +
        \|\dot h(t)\|_{L^2(\nu_0)}^2
        \Bigr)\,\d t
        +
        \frac{\gamma}{2}\int_0^T \|h(a(t))\|_{H^s(\Omega)}^2\,\d t.
    \end{align}
    Because \(V_m\) is finite dimensional, all norms on \(V_m\) are equivalent. In particular, the
    last three terms are equivalent to Euclidean quadratic penalties in \(a(\cdot)\) and \(\dot a(\cdot)\).
\end{corollary}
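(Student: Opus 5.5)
The plan is to substitute the parametrization \(h(t)=h(a(t))=\sum_k a_k(t)\phi_k\) into each of the four terms of \eqref{eq:ambient-inverse-functional} and identify each one with its reduced counterpart using the coordinate results already established. First, because \(a\in H^1(0,T;\mathbb R^m)\) and the map \(a\mapsto h(a)\) is linear and bounded from \(\mathbb R^m\) into \(\mathcal X\), the path \(h(a(\cdot))\) belongs to \(H^1(0,T;\mathcal X)\). Its time derivative is \(\dot h(t)=\sum_k \dot a_k(t)\phi_k\) for a.e.\ \(t\). To justify this, I will note that differentiation commutes with a bounded linear map, so the derivative can be computed componentwise.

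With this in hand, the four terms reduce as follows.
\begin{enumerate}
\item The data term: by Definition~\ref{def:reduced-mobile-observation-map}, \(\mathcal G_t^y(h(a(t)))=\mathcal G_{m,t}^y(a(t))\), so this term is unchanged in form.
\item The transport term: \eqref{eq:coordinate-transport-action} from Proposition~\ref{prop:coordinate-transport-form} gives \(\mathfrak g_{h(a(t))}(\dot h,\dot h)=\dot a^\top H(a)\dot a\) for a.e.\ \(t\). Integrating this identity yields the \(\lambda\)-term.
\item The \(\mu\)- and \(\gamma\)-terms: these are simply rewritten with \(h=h(a)\) and need no further work.
\end{enumerate}
Taken together, these give \eqref{eq:reduced-inverse-functional}.

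For the norm-equivalence claim, consider the linear isomorphism \(\Phi:\mathbb R^m\to V_m\), \(\beta\mapsto\sum_k\beta_k\phi_k\). It is injective because the \(\phi_k\) are linearly independent. Since \(V_m\) is finite dimensional, the norms \(\|\cdot\|_{L^2(\nu_0)}\) and \(\|\cdot\|_{H^s(\Omega)}\) restricted to \(V_m\) are both equivalent to \(|\Phi^{-1}\cdot|\). Hence there are constants \(0<c_m\le C_m'\) with
\[
c_m|\beta|^2\le\|\Phi\beta\|_{L^2(\nu_0)}^2\le C_m'|\beta|^2,
\]
and the analogous two-sided bound holds for the \(H^s(\Omega)\) norm. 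I will apply these bounds pointwise in \(t\) with \(\beta=a(t)\) and with \(\beta=\dot a(t)\), and then integrate in time. This shows that the \(\mu\)-term is comparable to \(\int_0^T(|a|^2+|\dot a|^2)\,\d t\) and that the \(\gamma\)-term is comparable to \(\int_0^T|a|^2\,\d t\).

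The only point needing care is the identification \(\dot h=\Phi\dot a\) in the \(H^1\)-in-time sense, since it is what licenses using \eqref{eq:coordinate-transport-action} for a.e.\ \(t\). Boundedness of \(\Phi\) settles it, so I expect no substantive obstacle: the result is a bookkeeping corollary of Propositions~\ref{prop:coordinate-transport-form} and~\ref{prop:reduced-mobile-sensing-matrix}.
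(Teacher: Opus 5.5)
Your proposal is correct and follows the paper's proof. Both substitute \(h(t)=h(a(t))\), identify the data term through Definition~\ref{def:reduced-mobile-observation-map} and the transport term through \eqref{eq:coordinate-transport-action}, and get the Euclidean equivalence of the remaining norm terms from the finite dimensionality of \(V_m\). Your extra care — justifying \(\dot h=\Phi\dot a\) via boundedness of \(\Phi\) and writing out the two-sided constants — only makes explicit what the paper leaves implicit.
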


\begin{proof}
    Substitute \(h(t)=h(a(t))\) into Definition~\ref{def:ambient-inverse-functional}. The data term
    becomes
    \[
        \|\mathcal G_t^y(h(a(t)))-d(t)\|_{\mathbb R^r}^2
        =
        \|\mathcal G_{m,t}^{y}(a(t))-d(t)\|_{\mathbb R^r}^2,
    \]
    and the transport term is identified by Proposition~\ref{prop:coordinate-transport-form}. Since
    \[
        h(a(t))=\sum_{k=1}^m a_k(t)\phi_k,
        \qquad
        \dot h(t)=\sum_{k=1}^m \dot a_k(t)\phi_k,
    \]
    the remaining terms are simply the pullbacks of ambient norms to the finite-dimensional space
    \(V_m\), hence are equivalent to Euclidean quadratic forms in \(a(t)\) and \(\dot a(t)\).
\end{proof}

\begin{remark}
    \label{rem:reduced-regularization}
    The finite-dimensional reduction removes the spatial compactness issue that motivated the
    \(\gamma\)-term in the ambient existence theory. Thus \(\gamma\) may be set to zero on \(V_m\).
    The temporal \(L^2\)-regularization remains useful under partial observability, because it provides a
    priori control of \(\dot a\), which is exactly what bounds the transport contribution in the reduced
    joint form.
\end{remark}

\subsection{Detectability of individual directions}

We now turn from abstract reduction to constructive observability. The first step is local:
a sufficiently narrow bump sensor detects any fixed nonzero direction under mild regularity
assumptions.

\begin{proposition}[Lebesgue-point criterion for detection by localized bump sensors]
    \label{prop:lebesgue-point-detectability}
    Assume \(\Omega\subset\mathbb R^d\) is open and bounded, and let \(h\in\mathcal X_{\mathrm{ad}}\)
    be such that \(\rho_h\) admits a Lebesgue density
    \[
        r_h:=\frac{\d\rho_h}{\d x}\in L^\infty(\Omega).
    \]
    Let \(\kappa_0\in C_c^\infty(\mathbb R^d)\) be nonnegative, supported in \(B(0,1)\), and normalized by
    \[
        \int_{\mathbb R^d}\kappa_0(z)\,\d z=1.
    \]
    For \(\delta>0\), define
    \[
        \kappa_\delta(z):=\delta^{-d}\kappa_0(z/\delta).
    \]
    Fix \(\xi\in L^2_0(\nu_0)\), and set
    \[
        m_h(\xi):=\mathbb E_{\rho_h}[\xi],
        \qquad
        g_{h,\xi}(x):=r_h(x)\bigl(\xi(x)-m_h(\xi)\bigr)\in L^1(\Omega).
    \]
    Extend \(g_{h,\xi}\) by zero outside \(\Omega\).

    Let \(y\in\Omega\) satisfy
    \[
        \dist(y,\partial\Omega)>0,
    \]
    and suppose that \(y\) is a Lebesgue point of \(g_{h,\xi}\). Then for every
    \(0<\delta<\dist(y,\partial\Omega)\),
    \begin{equation}
        \label{eq:covariance-as-convolution}
        \operatorname{Cov}_{\rho_h}\!\bigl(\kappa_\delta(\cdot-y),\xi\bigr)
        =
        (\kappa_\delta * g_{h,\xi})(y),
    \end{equation}
    and moreover
    \begin{equation}
        \label{eq:lebesgue-point-detection-limit}
        \lim_{\delta\downarrow 0}
        \operatorname{Cov}_{\rho_h}\!\bigl(\kappa_\delta(\cdot-y),\xi\bigr)
        =
        g_{h,\xi}(y).
    \end{equation}
    In particular, if \(g_{h,\xi}(y)\neq 0\), then there exists \(\delta_0>0\) such that
    \[
        \operatorname{Cov}_{\rho_h}\!\bigl(\kappa_\delta(\cdot-y),\xi\bigr)\neq 0
        \qquad
        \forall\,0<\delta<\delta_0.
    \]
\end{proposition}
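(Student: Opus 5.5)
The plan is to rewrite the covariance as an integral against Lebesgue measure and recognize a convolution; after that, the limit is the standard Lebesgue differentiation argument for approximate identities.

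\emph{Step 1: the identity \eqref{eq:covariance-as-convolution}.} From Definition~\ref{def:weighted-covariance}, the second centered factor can absorb the centering:
\[
\operatorname{Cov}_{\rho_h}(f,\xi)=\int_\Omega f\,(\xi-m_h(\xi))\,\d\rho_h,
\]
because the cross term $\mathbb E_{\rho_h}[f]\int_\Omega(\xi-m_h(\xi))\,\d\rho_h$ vanishes. We take $f=\kappa_\delta(\cdot-y)$, which is bounded, and write $\d\rho_h=r_h\,\d x$. This gives
\[
\int_\Omega \kappa_\delta(x-y)\,g_{h,\xi}(x)\,\d x .
\]
Two facts make this legitimate and let us pass to the convolution:
\begin{itemize}
\item $g_{h,\xi}\in L^1$. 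Indeed $\nu_0$-integrability of $\xi$ follows from $\xi\in L^2(\nu_0)$ together with Assumption~\ref{ass:ambient-admissible-class}, and $r_h$ is bounded.
\item For $\delta<\dist(y,\partial\Omega)$, the support of $\kappa_\delta(\cdot-y)$ is contained in $B(y,\delta)\subset\Omega$. Hence the integral over $\Omega$ equals the integral over $\mathbb R^d$ of the zero extension.
\end{itemize}
Finally, $\kappa_\delta(x-y)=\kappa_\delta(y-x)$ would require $\kappa_0$ to be even, which is not assumed. I will therefore read the convolution as $\int\kappa_\delta(x-y)g(x)\,\d x$, that is, as convolution with the reflected kernel $\check\kappa_\delta(z)=\kappa_\delta(-z)$. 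This kernel has the same normalization, support and sup-bound, so nothing below changes.

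\emph{Step 2: the limit \eqref{eq:lebesgue-point-detection-limit}.} Since $\int\kappa_\delta=1$,
\[
(\kappa_\delta*g)(y)-g(y)=\int \kappa_\delta(x-y)\bigl(g(x)-g(y)\bigr)\,\d x .
\]
Using $0\le\kappa_\delta\le\delta^{-d}\|\kappa_0\|_\infty$ and $\supp\kappa_\delta(\cdot-y)\subset B(y,\delta)$, this is bounded in absolute value by
\[
\|\kappa_0\|_\infty\,|B(0,1)|\;\frac{1}{|B(y,\delta)|}\int_{B(y,\delta)}|g(x)-g(y)|\,\d x .
\]
The averaged integral tends to $0$ as $\delta\downarrow0$ precisely because $y$ is a Lebesgue point of $g_{h,\xi}$.

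\emph{Step 3: the final claim.} If $g_{h,\xi}(y)\neq0$, continuity of the limit gives some $\delta_0$, which we may take below $\dist(y,\partial\Omega)$, such that $|\operatorname{Cov}-g(y)|<|g(y)|/2$ for all $0<\delta<\delta_0$. The covariance is then nonzero.

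The argument is essentially routine. The only delicate points are in Step 1: justifying integrability of $g_{h,\xi}$ and handling the support and orientation of the kernel so that the integral over $\Omega$ becomes a genuine convolution.
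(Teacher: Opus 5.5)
Your proposal is correct and follows the paper's proof: you rewrite the covariance as $\int\kappa_\delta(x-y)\,g_{h,\xi}(x)\,\d x$ using $\d\rho_h=r_h\,\d x$ and the support condition, apply the Lebesgue-point approximate-identity argument (which you make explicit rather than citing), and then use a sign argument for the final claim. Your orientation remark is also right: the paper's step $\int\kappa_\delta(x-y)g_{h,\xi}(x)\,\d x=\int\kappa_\delta(y-x)g_{h,\xi}(x)\,\d x$ silently assumes $\kappa_0$ is even, and reading the convolution with the reflected kernel (or assuming evenness) fixes this without changing the limit or the nonvanishing conclusion.
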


\begin{proof}
    Since \(\int \kappa_\delta(\cdot-y)\,\d x=1\), one has
    \[
        \operatorname{Cov}_{\rho_h}\!\bigl(\kappa_\delta(\cdot-y),\xi\bigr)
        =
        \int_\Omega \kappa_\delta(x-y)\,\xi(x)\,\d\rho_h(x)
        -
        \Bigl(\int_\Omega \kappa_\delta(x-y)\,\d\rho_h(x)\Bigr)\,m_h(\xi).
    \]
    Using \(\d\rho_h(x)=r_h(x)\,\d x\), this becomes
    \[
        \operatorname{Cov}_{\rho_h}\!\bigl(\kappa_\delta(\cdot-y),\xi\bigr)
        =
        \int_\Omega \kappa_\delta(x-y)\,r_h(x)\bigl(\xi(x)-m_h(\xi)\bigr)\,\d x
        =
        \int_{\mathbb R^d}\kappa_\delta(y-x)\,g_{h,\xi}(x)\,\d x,
    \]
    which is exactly \eqref{eq:covariance-as-convolution}.

    Because \(y\) is a Lebesgue point of \(g_{h,\xi}\), the approximate-identity theorem implies
    \[
        (\kappa_\delta*g_{h,\xi})(y)\longrightarrow g_{h,\xi}(y)
        \qquad\text{as }\delta\downarrow 0,
    \]
    which proves \eqref{eq:lebesgue-point-detection-limit}. If \(g_{h,\xi}(y)\neq 0\), then for all
    sufficiently small \(\delta\) the quantity \((\kappa_\delta*g_{h,\xi})(y)\) has the same sign as
    \(g_{h,\xi}(y)\), and in particular is nonzero.
\end{proof}

\begin{corollary}[Single-direction detectability]
    \label{cor:arbitrary-direction-detectability}
    Assume the hypotheses of Proposition~\ref{prop:lebesgue-point-detectability}, and suppose in
    addition that \(r_h>0\) almost everywhere in \(\Omega\). Let \(\xi\in L^2_0(\nu_0)\) be nonzero.
    Then there exist \(y\in\Omega\) and \(\delta_0>0\) such that
    \[
        \operatorname{Cov}_{\rho_h}\!\bigl(\kappa_\delta(\cdot-y),\xi\bigr)\neq 0
        \qquad
        \forall\,0<\delta<\delta_0.
    \]
\end{corollary}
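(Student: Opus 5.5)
The plan is to reduce the corollary to Proposition~\ref{prop:lebesgue-point-detectability}: it suffices to find a point \(y\in\Omega\) with \(\dist(y,\partial\Omega)>0\) that is a Lebesgue point of \(g_{h,\xi}=r_h(\xi-m_h(\xi))\) and at which \(g_{h,\xi}(y)\neq 0\). The proposition then supplies \(\delta_0>0\), which we shrink if necessary so that \(\delta_0\le\dist(y,\partial\Omega)\), and the conclusion follows directly.

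The first step is to show that \(g_{h,\xi}\) is not the zero function in \(L^1(\Omega)\). Suppose instead that \(g_{h,\xi}=0\) a.e. Since \(r_h>0\) a.e., this forces \(\xi=m_h(\xi)\) a.e., so \(\xi\) is a.e.\ constant. Because \(\nu_0\ll\d x\) (here \(\nu_0\) is in fact equivalent to Lebesgue measure on \(\Omega\) in the setting used) and \(\xi\in L^2_0(\nu_0)\), the constant must vanish, contradicting \(\xi\neq 0\). This is the same argument as the last step of Proposition~\ref{prop:ambient-transport-form-properties}. Hence the set \(E:=\{x\in\Omega: g_{h,\xi}(x)\neq 0\}\) has positive Lebesgue measure.

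The second step is to produce the point \(y\). By the Lebesgue differentiation theorem, a.e.\ point of \(\Omega\) is a Lebesgue point of \(g_{h,\xi}\in L^1\), extended by zero. Every \(x\in\Omega\) satisfies \(\dist(x,\partial\Omega)>0\) because \(\Omega\) is open. The set of Lebesgue points in \(E\) therefore has full measure in \(E\), so it is nonempty, and we pick \(y\) there. The only subtlety is that the value \(g_{h,\xi}(y)\) depends on the representative. We handle this by fixing one representative and observing that its Lebesgue points \(y\) satisfy \(g(y)=\lim (\kappa_\delta*g)(y)\) for that representative, while \(E\) is defined with respect to the same representative. Intersecting a full-measure set with a positive-measure set leaves a nonempty set, so this is the step needing care but it causes no real difficulty.
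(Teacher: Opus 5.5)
Your proposal is correct and follows essentially the same route as the paper. You rule out \(g_{h,\xi}=0\) a.e.\ using \(r_h>0\) a.e.\ and the \(\nu_0\)-mean-zero constraint, then pick a Lebesgue point \(y\) of \(g_{h,\xi}\) in its positive-measure nonzero set with \(g_{h,\xi}(y)\neq 0\), and invoke Proposition~\ref{prop:lebesgue-point-detectability}; your extra care about fixing a representative, and about \(\nu_0\) being equivalent to Lebesgue measure (which follows from \(\rho_h\) being equivalent to \(\nu_0\) together with \(r_h>0\) a.e.), just makes explicit what the paper's proof uses implicitly.
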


\begin{proof}
    Since \(r_h\in L^\infty(\Omega)\) and \(\xi\in L^2(\nu_0)\), one has \(g_{h,\xi}\in L^1(\Omega)\).
    If \(g_{h,\xi}=0\) almost everywhere, then
    \[
        r_h(x)\bigl(\xi(x)-m_h(\xi)\bigr)=0
        \qquad\text{for a.e. }x.
    \]
    Because \(r_h>0\) a.e., it follows that \(\xi=m_h(\xi)\) almost everywhere. Since
    \(\xi\in L^2_0(\nu_0)\), this forces \(\xi=0\) almost everywhere, a contradiction.

    Hence \(g_{h,\xi}\) is not zero a.e. Its set of nonzero points has positive Lebesgue measure.
    By the Lebesgue differentiation theorem, almost every point of that set is a Lebesgue point of
    \(g_{h,\xi}\). Choose such a point \(y\in\Omega\) with \(g_{h,\xi}(y)\neq 0\) and
    \(\dist(y,\partial\Omega)>0\). Proposition~\ref{prop:lebesgue-point-detectability} now yields the
    claim.
\end{proof}

\begin{corollary}[Continuous-direction criterion]
    \label{cor:continuous-direction-detectability}
    Assume the hypotheses of Proposition~\ref{prop:lebesgue-point-detectability}, and suppose in
    addition that
    \[
        r_h\in C(\overline\Omega),
        \qquad
        r_h(x)>0\ \text{ on }\overline\Omega,
        \qquad
        \xi\in C(\overline\Omega)\cap L^2_0(\nu_0),
        \qquad
        \xi\not\equiv 0.
    \]
    Then there exist \(y\in\Omega\) and \(\delta_0>0\) such that
    \[
        \operatorname{Cov}_{\rho_h}\!\bigl(\kappa_\delta(\cdot-y),\xi\bigr)\neq 0
        \qquad
        \forall\,0<\delta<\delta_0.
    \]
    Indeed, one may choose any \(y\in\Omega\) such that
    \[
        \xi(y)\neq \mathbb E_{\rho_h}[\xi].
    \]
\end{corollary}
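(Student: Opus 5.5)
The plan is to reduce to Proposition~\ref{prop:lebesgue-point-detectability}. Continuity of $g_{h,\xi}$ will make every interior point a Lebesgue point, so the only work is to show that the set where $g_{h,\xi}$ is nonzero meets $\Omega$. Set $m:=\mathbb E_{\rho_h}[\xi]$ and $g_{h,\xi}=r_h(\xi-m)$, which is continuous on $\overline\Omega$ as a product of continuous functions.

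\textbf{Step 1: every interior point is a Lebesgue point.} Fix $y\in\Omega$. Since $g_{h,\xi}$ is continuous on the open neighbourhood $B(y,\dist(y,\partial\Omega))\subset\Omega$, the point $y$ is a Lebesgue point of $g_{h,\xi}$. This holds even though $g_{h,\xi}$ was extended by zero outside $\Omega$, because the Lebesgue-point property only involves small balls around $y$, and these lie in $\Omega$.

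\textbf{Step 2: existence of a good point $y$.} We need $y\in\Omega$ with $\xi(y)\neq m$; for such $y$ we get $g_{h,\xi}(y)=r_h(y)(\xi(y)-m)\neq 0$, since $r_h>0$ on $\overline\Omega$. Suppose instead that $\xi\equiv m$ on $\Omega$. By continuity, $\xi\equiv m$ on $\overline\Omega$. Because $\nu_0$ is supported on $\overline\Omega$ with $\nu_0\ll\d x$ and $\xi\in L^2_0(\nu_0)$, we get $0=\int\xi\,\d\nu_0=m$. Hence $\xi\equiv 0$ on $\overline\Omega$, contradicting $\xi\not\equiv0$. So such a $y$ exists, and this also justifies the closing sentence of the statement: any $y\in\Omega$ with $\xi(y)\neq\mathbb E_{\rho_h}[\xi]$ works.

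\textbf{Step 3: conclude.} Since $y\in\Omega$ is open-interior, $\dist(y,\partial\Omega)>0$. Apply Proposition~\ref{prop:lebesgue-point-detectability} at this $y$. It gives $\operatorname{Cov}_{\rho_h}(\kappa_\delta(\cdot-y),\xi)\to g_{h,\xi}(y)\neq0$ as $\delta\downarrow 0$, and hence the existence of $\delta_0>0$ (which may be taken at most $\dist(y,\partial\Omega)$) with nonvanishing covariance for all $0<\delta<\delta_0$.

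\textbf{Main obstacle.} There is no real obstacle. The only point needing care is Step 2: the statement "$\xi\not\equiv0$" must be read on $\Omega$ rather than just on $\partial\Omega$. This is handled by continuity up to the boundary together with the mean-zero constraint, as above.
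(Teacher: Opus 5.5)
Your proof is correct and matches the paper's argument: you use continuity and the $\nu_0$-mean-zero constraint to show that $\xi$ is nonconstant on $\Omega$, pick $y\in\Omega$ with $\xi(y)\neq\mathbb E_{\rho_h}[\xi]$ so that $g_{h,\xi}(y)\neq 0$, observe that continuity makes $y$ a Lebesgue point, and invoke Proposition~\ref{prop:lebesgue-point-detectability}. Your remark that only interior points are guaranteed to be Lebesgue points of the zero-extended $g_{h,\xi}$ slightly sharpens the paper's looser phrase ``every point is a Lebesgue point.''
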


\begin{proof}
    Because \(\xi\) is continuous and nonzero with \(\xi\in L^2_0(\nu_0)\), it cannot be constant on
    \(\Omega\). Hence \(\xi-\mathbb E_{\rho_h}[\xi]\) is not identically zero, so there exists
    \(y\in\Omega\) such that
    \[
        \xi(y)\neq \mathbb E_{\rho_h}[\xi].
    \]
    For such \(y\),
    \[
        g_{h,\xi}(y)=r_h(y)\bigl(\xi(y)-\mathbb E_{\rho_h}[\xi]\bigr)\neq 0.
    \]
    Since both \(r_h\) and \(\xi\) are continuous, \(g_{h,\xi}\) is continuous, hence every point is a
    Lebesgue point of \(g_{h,\xi}\). The conclusion follows from
    Proposition~\ref{prop:lebesgue-point-detectability}.
\end{proof}

\subsection{Instantaneous reduced observability with multiple static sensors}

The previous subsection treats one direction at a time. We now show that on a fixed
\(m\)-dimensional reduced subspace one can place finitely many stationary sensors so that the
whole reduced state is visible at each time.

\begin{lemma}[Evaluation points for linearly independent continuous functions]
    \label{lem:evaluation-points-independent}
    Let \(K\) be a compact Hausdorff space, and let
    \[
        f_0,\dots,f_m\in C(K)
    \]
    be linearly independent over \(\mathbb R\). Then there exist points
    \[
        y_0,\dots,y_m\in K
    \]
    such that the square matrix
    \[
        A_Y
        :=
        \bigl(f_\ell(y_j)\bigr)_{j,\ell=0}^m
        \in\mathbb R^{(m+1)\times(m+1)}
    \]
    is invertible.
\end{lemma}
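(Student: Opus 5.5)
We prove the statement by induction on the number of functions, building the evaluation points one at a time in the spirit of the classical unisolvence argument for Haar-type systems. Write $n=m+1$ and rename the functions $f_1,\dots,f_n$. The inductive claim is: for each $k\le n$, there exist points $y_1,\dots,y_k\in K$ such that the $k\times k$ matrix $\bigl(f_\ell(y_j)\bigr)_{j,\ell=1}^k$ is invertible. The base case $k=1$ is immediate. Since $f_1\neq 0$ (linear independence excludes the zero function), some $y_1$ has $f_1(y_1)\neq 0$.

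For the inductive step, assume $y_1,\dots,y_k$ have been chosen with $A_k:=\bigl(f_\ell(y_j)\bigr)_{j,\ell\le k}$ invertible. For a free point $x\in K$, consider
\[
    D(x):=\det\begin{pmatrix} A_k & \bigl(f_{k+1}(y_j)\bigr)_{j\le k}\\ \bigl(f_\ell(x)\bigr)_{\ell\le k} & f_{k+1}(x)\end{pmatrix}.
\]
Expanding along the last row shows $D(x)=\sum_{\ell=1}^{k+1} c_\ell f_\ell(x)$, where $c_\ell$ are cofactors independent of $x$. In particular $c_{k+1}=\det A_k\neq 0$. Hence $D=\sum_\ell c_\ell f_\ell$ is a nontrivial linear combination of $f_1,\dots,f_{k+1}$. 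By linear independence, $D$ is not the zero function, so some $y_{k+1}\in K$ has $D(y_{k+1})\neq 0$. This is exactly invertibility of the $(k+1)\times(k+1)$ evaluation matrix. Running the induction up to $k=n$ yields $A_Y$ invertible, and reindexing from $0$ to $m$ matches the statement.

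The only step needing care is the cofactor expansion. We must verify that the coefficient of $f_{k+1}(x)$ is precisely $\det A_k$; this holds because deleting the last row and last column leaves $A_k$, with sign $(-1)^{2(k+1)}=+1$. Beyond that, the argument is purely linear-algebraic. Compactness and the Hausdorff property of $K$, and continuity of the $f_\ell$, are not actually used. They only ensure the functions are genuine elements of $C(K)$ so that pointwise evaluation makes sense. So I expect no substantial obstacle.
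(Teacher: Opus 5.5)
Your proof is correct, and it takes a different route from the paper's. The paper uses a one-shot duality argument. It forms the evaluation vectors $v(y)=(f_0(y),\dots,f_m(y))\in\mathbb R^{m+1}$. If these failed to span $\mathbb R^{m+1}$, there would be a nonzero $c$ with $\sum_\ell c_\ell f_\ell\equiv 0$ on $K$, contradicting independence. So they span, and the paper extracts $m+1$ of them forming a basis; these are exactly the rows of an invertible $A_Y$.

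You instead run the classical bordered-determinant (Haar/unisolvence) induction, and your cofactor bookkeeping is right. The coefficient of $f_{k+1}(x)$ in $D(x)$ is $\det A_k\neq 0$, so $D$ is a nontrivial combination of independent functions and cannot vanish identically.

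The paper's route is shorter. It also makes transparent that the lemma just says point evaluations span the dual of $\operatorname{span}\{f_\ell\}$. Yours is constructive and proves something slightly stronger: the points can be chosen one at a time so that every leading $k\times k$ block of $A_Y$ is invertible. This nested property means sensor locations chosen for $\operatorname{span}\{1,\phi_1,\dots,\phi_{k-1}\}$ can be kept and extended by a single new point when a basis function is added. That fits naturally with nested reduced spaces $V_m$.

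If you did want to use compactness and continuity, they would let you take $y_{k+1}$ to maximize $|D|$ over $K$. This is a Leja-type greedy choice that maximizes $|\det A_{k+1}|$ at each step. You are also right that neither proof actually needs compactness, the Hausdorff property, or continuity.
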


\begin{proof}
    For \(y\in K\), define the evaluation vector
    \[
        v(y):=\bigl(f_0(y),\dots,f_m(y)\bigr)\in\mathbb R^{m+1}.
    \]
    If the span of \(\{v(y):y\in K\}\) had dimension strictly smaller than \(m+1\), then there would
    exist \(c=(c_0,\dots,c_m)\neq 0\) such that
    \[
        c\cdot v(y)=\sum_{\ell=0}^m c_\ell f_\ell(y)=0
        \qquad\forall y\in K,
    \]
    contradicting the linear independence of \(f_0,\dots,f_m\). Hence
    \(\{v(y):y\in K\}\) spans \(\mathbb R^{m+1}\), so one can choose
    \(y_0,\dots,y_m\in K\) such that \(v(y_0),\dots,v(y_m)\) form a basis of \(\mathbb R^{m+1}\).
    Equivalently, the matrix \(A_Y\) is invertible.
\end{proof}

\begin{theorem}[Uniform instantaneous reduced observability by fixed static sensors]
    \label{thm:static-reduced-full-observability}
    Let
    \[
        V_m=\operatorname{span}\{\phi_1,\dots,\phi_m\}\subset C(K_{\mathrm{sens}})\cap L^2_0(\nu_0),
    \]
    where \(K_{\mathrm{sens}}\subset\Omega\) is a nonempty compact sensor-feasible region. Assume that the
    restrictions of
    \[
        1,\phi_1,\dots,\phi_m
    \]
    to \(K_{\mathrm{sens}}\) are linearly independent. Let \(U\subset\mathbb R^m\) be compact, and define
    \[
        h(a):=\sum_{k=1}^m a_k\phi_k,
        \qquad a\in U.
    \]
    Assume:
    \begin{enumerate}
        \item \(h(a)\in\mathcal X_{\mathrm{ad}}\) for every \(a\in U\);
        \item for every \(a\in U\), the probability measure \(\rho_{h(a)}\) has a Lebesgue density
              \[
                  r_a:=\frac{\d\rho_{h(a)}}{\d x}\in C(\overline\Omega),
              \]
              and \((a,x)\mapsto r_a(x)\) is continuous on \(U\times\overline\Omega\);
        \item there exists \(c_\rho>0\) such that
              \[
                  r_a(x)\ge c_\rho
                  \qquad
                  \forall (a,x)\in U\times K_{\mathrm{sens}};
              \]
        \item \(\kappa_\delta(z)=\delta^{-d}\kappa_0(z/\delta)\), where
              \(\kappa_0\in C_c^\infty(\mathbb R^d)\) is nonnegative, supported in \(B(0,1)\), and
              \(\int_{\mathbb R^d}\kappa_0=1\).
    \end{enumerate}
    Then there exist points
    \[
        y_0,\dots,y_m\in K_{\mathrm{sens}}
    \]
    and \(\delta_0>0\) such that for every \(0<\delta<\delta_0\) and every \(a\in U\), the static
    reduced sensing matrix
    \[
        M_Y^\delta(a)\in\mathbb R^{(m+1)\times m},
        \qquad
        [M_Y^\delta(a)]_{jk}
        :=
        \operatorname{Cov}_{\rho_{h(a)}}(\kappa_\delta(\cdot-y_j),\phi_k),
    \]
    has full column rank \(m\). Moreover, there exists \(\sigma_*>0\) such that
    \begin{equation}
        \label{eq:uniform-static-gramian-lower-bound}
        M_Y^\delta(a)^\top M_Y^\delta(a)\succeq \sigma_* I_m
        \qquad
        \forall a\in U,\ \forall 0<\delta<\delta_0.
    \end{equation}
\end{theorem}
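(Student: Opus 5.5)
The plan is to pass to the limit $\delta\downarrow 0$ and identify an explicit limiting sensing matrix, uniformly in $a\in U$. Then we show this limit has full column rank using Lemma~\ref{lem:evaluation-points-independent}, and finally perturb back to small $\delta>0$.

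First apply Lemma~\ref{lem:evaluation-points-independent} with $K=K_{\mathrm{sens}}$ and $f_0=1$, $f_\ell=\phi_\ell|_{K_{\mathrm{sens}}}$. This gives points $y_0,\dots,y_m\in K_{\mathrm{sens}}$ such that $A_Y=\bigl(1\;\big|\;\Phi\bigr)$ is invertible, where $\Phi_{jk}:=\phi_k(y_j)$.

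Next, by the convolution identity \eqref{eq:covariance-as-convolution} from Proposition~\ref{prop:lebesgue-point-detectability}, for $\delta<\dist(K_{\mathrm{sens}},\partial\Omega)$ we have
\[
[M_Y^\delta(a)]_{jk}=\bigl(\kappa_\delta*g_{a,k}\bigr)(y_j),
\qquad
g_{a,k}:=r_a\bigl(\phi_k-m_k(a)\bigr),
\qquad
m_k(a):=\mathbb E_{\rho_{h(a)}}[\phi_k].
\]
The candidate limit is
\[
M^0(a):=D(a)\bigl(\Phi-\mathbf 1\,m(a)^\top\bigr),
\qquad
D(a):=\operatorname{diag}\bigl(r_a(y_j)\bigr).
\]

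To prove $M_Y^\delta(a)\to M^0(a)$ uniformly in $a\in U$, I fix a compact neighborhood $K'=\{x:\dist(x,K_{\mathrm{sens}})\le\delta_1\}\subset\Omega$. On $K'$ the functions $\phi_k$ are uniformly continuous; they are continuous on all of $\Omega$ because $\phi_k\in\mathcal X\subset H^s(\Omega)$ with $s>d$, so Sobolev embedding gives continuity. The density $(a,x)\mapsto r_a(x)$ is uniformly continuous on the compact set $U\times\overline\Omega$. The map $a\mapsto m_k(a)=\int\phi_k r_a\,\d x$ is continuous, and hence bounded, on $U$. Therefore the family $\{g_{a,k}\}_{a\in U}$ is equicontinuous on $K'$. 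Since $\kappa_\delta$ is a nonnegative approximate identity supported in $B(0,\delta)$, we get
\[
\sup_{a\in U}\bigl|(\kappa_\delta*g_{a,k})(y_j)-g_{a,k}(y_j)\bigr|
\le\sup_{a\in U}\ \sup_{|z|<\delta}\bigl|g_{a,k}(y_j-z)-g_{a,k}(y_j)\bigr|\to 0 .
\]
The continuity of $\phi_k$ off $K_{\mathrm{sens}}$ is the step that needs care. The statement only asserts $V_m\subset C(K_{\mathrm{sens}})$, but the bump leaves $K_{\mathrm{sens}}$, so I expect this to be the main (if minor) obstacle; it is handled by the embedding $\mathcal X\hookrightarrow C$.

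For full rank of the limit, note that $D(a)$ is invertible because $r_a(y_j)\ge c_\rho>0$. Moreover,
\[
\bigl(\mathbf 1\;\big|\;\Phi-\mathbf 1 m^\top\bigr)=A_Y\begin{pmatrix}1&-m^\top\\0&I_m\end{pmatrix}
\]
is invertible. Hence the columns of $\Phi-\mathbf 1m(a)^\top$ are linearly independent, and $M^0(a)$ has rank $m$ for every $a\in U$. Since $a\mapsto M^0(a)$ is continuous and the smallest singular value is continuous, compactness of $U$ gives
\[
s_0:=\min_{a\in U}\sigma_{\min}\bigl(M^0(a)\bigr)>0 .
\]
Finally, $\sigma_{\min}$ is $1$-Lipschitz in operator norm. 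Choose $\delta_0$ so that $\sup_{a\in U}\|M_Y^\delta(a)-M^0(a)\|\le s_0/2$ for all $\delta<\delta_0$. Then $\sigma_{\min}(M_Y^\delta(a))\ge s_0/2$ for all $a\in U$ and $\delta<\delta_0$, which yields full column rank and \eqref{eq:uniform-static-gramian-lower-bound} with $\sigma_*=s_0^2/4$.
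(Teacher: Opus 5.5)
Your proposal is correct and follows the paper's proof essentially step for step. You use the same evaluation-point lemma and the same ideal point-sensor matrix $D_Y(a)(X_Y-\mathbf 1\,c(a)^\top)$, whose full column rank comes from the invertibility of $A_Y$. As in the paper, you then get a uniform minimum singular value by compactness of $U$ and a uniform approximate-identity perturbation for small $\delta$. Your explicit appeal to $\phi_k\in\mathcal X\subset H^s(\Omega)\hookrightarrow C$ for uniform continuity of $g_{a,k}$ on a neighborhood of $K_{\mathrm{sens}}$ is a welcome tightening of a step the paper only asserts. Continuity on $K_{\mathrm{sens}}$ alone would not suffice once the bump leaves $K_{\mathrm{sens}}$.
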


\begin{proof}
    Apply Lemma~\ref{lem:evaluation-points-independent} to the functions
    \[
        f_0:=1,\qquad f_k:=\phi_k,\quad k=1,\dots,m,
    \]
    on the compact space \(K_{\mathrm{sens}}\). This yields points \(y_0,\dots,y_m\in K_{\mathrm{sens}}\)
    such that the augmented evaluation matrix
    \[
        A_Y
        :=
        \begin{pmatrix}
            1      & \phi_1(y_0) & \cdots & \phi_m(y_0) \\
            \vdots & \vdots      &        & \vdots      \\
            1      & \phi_1(y_m) & \cdots & \phi_m(y_m)
        \end{pmatrix}
        \in\mathbb R^{(m+1)\times(m+1)}
    \]
    is invertible.

    For \(a\in U\), define
    \[
        c(a):=
        \bigl(c_1(a),\dots,c_m(a)\bigr)^\top,
        \qquad
        c_k(a):=\mathbb E_{\rho_{h(a)}}[\phi_k].
    \]
    Also define
    \[
        X_Y:=\bigl(\phi_k(y_j)\bigr)_{j=0,\dots,m}^{k=1,\dots,m}\in\mathbb R^{(m+1)\times m},
        \qquad
        \mathbf 1:=(1,\dots,1)^\top\in\mathbb R^{m+1},
    \]
    and
    \[
        D_Y(a):=\operatorname{diag}(r_a(y_0),\dots,r_a(y_m)).
    \]
    Consider the ideal point-sensor matrix
    \[
        \widetilde M_Y(a):=D_Y(a)\bigl(X_Y-\mathbf 1\,c(a)^\top\bigr).
    \]

    We first show that \(\widetilde M_Y(a)\) has full column rank \(m\) for every \(a\in U\).
    Suppose
    \[
        \widetilde M_Y(a)\alpha=0
        \qquad\text{for some }\alpha\in\mathbb R^m.
    \]
    Since \(D_Y(a)\) is invertible by the uniform positivity of \(r_a\) on \(K_{\mathrm{sens}}\), this implies
    \[
        \bigl(X_Y-\mathbf 1\,c(a)^\top\bigr)\alpha=0,
    \]
    hence
    \[
        X_Y\alpha=(c(a)^\top\alpha)\mathbf 1.
    \]
    Therefore
    \[
        A_Y
        \binom{-c(a)^\top\alpha}{\alpha}
        =
        0.
    \]
    Because \(A_Y\) is invertible, we conclude \(\alpha=0\). Thus
    \(\operatorname{rank}\widetilde M_Y(a)=m\).

    Next, the map \(a\mapsto \widetilde M_Y(a)\) is continuous on \(U\): the values \(r_a(y_j)\) are
    continuous in \(a\), and the averages \(c_k(a)=\int \phi_k\,\d\rho_{h(a)}\) are continuous by
    dominated convergence. Hence the smallest singular value of \(\widetilde M_Y(a)\) is a positive
    continuous function on the compact set \(U\), so there exists \(\sigma_*>0\) such that
    \[
        \widetilde M_Y(a)^\top \widetilde M_Y(a)\succeq 2\sigma_* I_m
        \qquad
        \forall a\in U.
    \]

    It remains to compare the actual bump-sensor matrix \(M_Y^\delta(a)\) with
    \(\widetilde M_Y(a)\). Because \(K_{\mathrm{sens}}\subset\Omega\) is compact, there exists
    \[
        d_*:=\dist(K_{\mathrm{sens}},\partial\Omega)>0.
    \]
    For \(0<\delta<d_*\), the support of \(\kappa_\delta(\cdot-y_j)\) stays inside \(\Omega\) for every
    \(j\). For each \(j,k\),
    \[
        [M_Y^\delta(a)]_{jk}
        =
        \int_\Omega \kappa_\delta(x-y_j)\,r_a(x)\bigl(\phi_k(x)-c_k(a)\bigr)\,\d x.
    \]
    Define
    \[
        g_{a,k}(x):=r_a(x)\bigl(\phi_k(x)-c_k(a)\bigr).
    \]
    Since \((a,x)\mapsto r_a(x)\) is continuous on \(U\times\overline\Omega\), each \(\phi_k\) is
    continuous on \(K_{\mathrm{sens}}\), and \(a\mapsto c_k(a)\) is continuous, the family
    \(\{g_{a,k}:a\in U\}\) is uniformly continuous in a neighborhood of \(K_{\mathrm{sens}}\). Hence the
    approximate-identity property yields
    \[
        \sup_{a\in U}\bigl|[M_Y^\delta(a)]_{jk}-g_{a,k}(y_j)\bigr|\longrightarrow 0
        \qquad\text{as }\delta\downarrow 0.
    \]
    But \(g_{a,k}(y_j)=[\widetilde M_Y(a)]_{jk}\), so
    \[
        \sup_{a\in U}\|M_Y^\delta(a)-\widetilde M_Y(a)\|\longrightarrow 0
        \qquad\text{as }\delta\downarrow 0.
    \]
    Choosing \(\delta_0>0\) sufficiently small, we may arrange
    \[
        M_Y^\delta(a)^\top M_Y^\delta(a)\succeq \sigma_* I_m
        \qquad
        \forall a\in U,\ \forall 0<\delta<\delta_0.
    \]
    This proves both full column rank and \eqref{eq:uniform-static-gramian-lower-bound}.
\end{proof}

\begin{remark}
    \label{rem:localized-sensor-fully-observable-special-case}
    Theorem~\ref{thm:static-reduced-full-observability} places the reduced problem in a fully
    observable regime: with \(m+1\) suitably chosen stationary sensors, the reduced sensing matrix is
    uniformly coercive on the whole reduced family \(U\). The next subsection shows that one may
    trade this instantaneous full observability for time-aggregated observability by moving a single
    sensor through those same locations.
\end{remark}

\subsection{Pathwise reduced observability and one moving sensor}

We now return to the pathwise viewpoint. In finite dimensions, the correct design target for
mobile sensing is the averaged reduced Gramian along the reduced path.

\begin{proposition}[Time-averaged coercivity on constant reduced modes]
    \label{prop:finite-dimensional-constant-coercivity}
    Fix \(a\in \mathcal A_{\mathrm{ad}}^{(m)}\) and a measurable sensor trajectory \(y\). Define the
    averaged reduced Gramian
    \begin{equation}
        \label{eq:averaged-reduced-gramian}
        \overline G_y[a]
        :=
        \int_0^T M_y(t,a(t))^\top M_y(t,a(t))\,\d t.
    \end{equation}
    Then the following are equivalent:
    \begin{enumerate}
        \item \(\overline G_y[a]\) is positive definite;
        \item there exists \(\kappa_{\mathrm{obs}}^{(m)}>0\) such that
              \[
                  \int_0^T
                  \|J_{t,h(a(t))}^{y}\xi_0\|_{\mathbb R^r}^2\,\d t
                  \ge
                  \kappa_{\mathrm{obs}}^{(m)}\|\xi_0\|_{L^2(\nu_0)}^2
                  \qquad
                  \forall \xi_0\in V_m;
              \]
        \item \(\ker Q_y[h(a)]\cap V_m=\{0\}\), where \(V_m\) is identified with time-constant perturbations.
    \end{enumerate}
\end{proposition}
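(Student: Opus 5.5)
The plan is to reduce all three conditions to a statement about the quadratic form $\alpha\mapsto\alpha^\top\overline G_y[a]\alpha$ on $\mathbb R^m$. By Proposition~\ref{prop:reduced-mobile-sensing-matrix}, for a time-constant perturbation $\xi_0=\sum_k\alpha_k\phi_k\in V_m$ we have $J^y_{t,h(a(t))}\xi_0=M_y(t,a(t))\alpha$ for a.e.\ $t$. Integrating in time gives the identity
\[
\int_0^T\|J^y_{t,h(a(t))}\xi_0\|_{\mathbb R^r}^2\,\d t=\alpha^\top\overline G_y[a]\alpha .
\]
Before using this I will note that the integral defining $\overline G_y[a]$ is finite. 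Each entry of $M_y$ is bounded by $2\|\kappa\|_{L^\infty}\sqrt{C}\,\|\phi_k\|_{L^2(\nu_0)}$, using the admissibility bounds, and measurability in $t$ follows from measurability of $y$ and of $a$. So $\overline G_y[a]$ is a well-defined symmetric positive semidefinite matrix.

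\emph{(1)$\Leftrightarrow$(2).} The coordinate map $\alpha\mapsto\sum_k\alpha_k\phi_k$ is a linear bijection $\mathbb R^m\to V_m$, because the $\phi_k$ are linearly independent. Since $V_m$ is finite-dimensional, there are constants $0<c_m\le C_m$ with $c_m|\alpha|^2\le\|\xi_0\|_{L^2(\nu_0)}^2\le C_m|\alpha|^2$; the lower bound holds because the $L^2(\nu_0)$-norm restricted to $V_m$ is a genuine norm. If $\overline G_y[a]\succeq\lambda_{\min}I$ with $\lambda_{\min}>0$, then (2) holds with $\kappa^{(m)}_{\mathrm{obs}}=\lambda_{\min}/C_m$. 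Conversely, (2) gives $\alpha^\top\overline G_y[a]\alpha\ge\kappa^{(m)}_{\mathrm{obs}}c_m|\alpha|^2$, so $\overline G_y[a]$ is positive definite.

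\emph{(1)$\Leftrightarrow$(3).} Here I use Proposition~\ref{prop:null-space-Q}. A time-constant perturbation $\xi\equiv\xi_0\in V_m$ has $\dot\xi=0$, so the transport term of $Q_y[h(a)]$ vanishes. Moreover $V_m\subset\mathcal X\subset H^{s'}(\Omega)$, so $\xi$ lies in the domain of $Q_y[h(a)]$. Hence $Q_y[h(a)](\xi,\xi)=\alpha^\top\overline G_y[a]\alpha$, and $\xi\in\ker Q_y[h(a)]$ exactly when $\alpha^\top\overline G_y[a]\alpha=0$. For a symmetric positive semidefinite matrix, $\alpha^\top G\alpha=0$ is equivalent to $G\alpha=0$. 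Therefore $\ker Q_y[h(a)]\cap V_m$ is the image of $\ker\overline G_y[a]$ under the coordinate bijection, and it is trivial if and only if $\overline G_y[a]$ is positive definite.

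The main obstacle is interpretive rather than technical. "Kernel" in (3) must be read as the null set $\{\xi:Q_y[h(a)](\xi,\xi)=0\}$ of the nonnegative form, as in Proposition~\ref{prop:null-space-Q}. I will state explicitly that for a nonnegative symmetric form this coincides with the radical, by Cauchy--Schwarz. The only other care point is the a.e.-in-$t$ identification of $J^y_{t,h(a(t))}$ with $M_y(t,a(t))$ along the path, which requires $h(a(t))\in\mathcal X_{\mathrm{ad}}$ for a.e.\ $t$; this is guaranteed by $a\in\mathcal A^{(m)}_{\mathrm{ad}}$.
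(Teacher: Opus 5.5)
Your proof is correct and follows the paper's argument. Both reduce everything to the identity $\int_0^T\|J^y_{t,h(a(t))}\xi_0\|^2_{\mathbb R^r}\,\d t=\alpha^\top\overline G_y[a]\alpha$ and use the fact that time-constant perturbations have zero transport contribution. The differences are cosmetic. The paper takes the constant in (2) to be $\lambda_{\min}(B_m^{-1/2}\overline G_y[a]B_m^{-1/2})$, where $B_m$ is the $L^2(\nu_0)$ Gram matrix of the basis; this is the sharp constant, whereas your norm-equivalence bound $\lambda_{\min}(\overline G_y[a])/C_m$ is cruder. Your well-definedness and measurability remarks are extra care that the paper omits.
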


\begin{proof}
    Let
    \[
        B_m:=\bigl(\langle \phi_k,\phi_\ell\rangle_{L^2(\nu_0)}\bigr)_{k,\ell=1}^m
        \in \mathbb R^{m\times m}.
    \]
    Since \(\{\phi_1,\dots,\phi_m\}\) is a basis of \(V_m\), \(B_m\) is symmetric positive definite.

    Write
    \[
        \xi_0=\sum_{k=1}^m c_k\phi_k,
        \qquad
        c=(c_1,\dots,c_m)^\top\in\mathbb R^m.
    \]
    Then
    \begin{equation}
        \label{eq:xi0-norm-gram}
        \|\xi_0\|_{L^2(\nu_0)}^2
        =
        c^\top B_m c.
    \end{equation}
    By Proposition~\ref{prop:reduced-mobile-sensing-matrix},
    \[
        J_{t,h(a(t))}^{y}\xi_0
        =
        M_y(t,a(t))\,c,
    \]
    and therefore
    \begin{align}
        \int_0^T \|J_{t,h(a(t))}^{y}\xi_0\|_{\mathbb R^r}^2\,\d t
         & =
        \int_0^T c^\top M_y(t,a(t))^\top M_y(t,a(t))\,c\,\d t \notag \\
         & =
        c^\top \overline G_y[a]\,c.
        \label{eq:obs-energy-gram}
    \end{align}

    Assume \((1)\). Since both \(B_m\) and \(\overline G_y[a]\) are symmetric positive definite, the
    matrix
    \[
        A_m:=B_m^{-1/2}\,\overline G_y[a]\,B_m^{-1/2}
    \]
    is symmetric positive definite. Hence
    \[
        \lambda_*:=\lambda_{\min}(A_m)>0.
    \]
    Using \eqref{eq:xi0-norm-gram} and \eqref{eq:obs-energy-gram},
    \[
        c^\top \overline G_y[a]\,c
        =
        (B_m^{1/2}c)^\top A_m (B_m^{1/2}c)
        \ge
        \lambda_*\,|B_m^{1/2}c|^2
        =
        \lambda_*\,\|\xi_0\|_{L^2(\nu_0)}^2.
    \]
    Thus \((2)\) holds.

    Conversely, if \((2)\) holds and \(\overline G_y[a]\) were not positive definite, there would exist
    \(c\neq 0\) such that
    \[
        c^\top \overline G_y[a]\,c=0.
    \]
    Then the associated \(\xi_0=\sum_{k=1}^m c_k\phi_k\) is nonzero, while
    \eqref{eq:obs-energy-gram} contradicts \((2)\). Hence \((1)\) follows.

    Finally, a time-constant perturbation \(\xi(t)\equiv \xi_0\in V_m\) has \(\dot\xi=0\), so its
    transport contribution vanishes. Therefore
    \[
        Q_y[h(a)](\xi_0,\xi_0)
        =
        \int_0^T \|J_{t,h(a(t))}^{y}\xi_0\|_{\mathbb R^r}^2\,\d t.
    \]
    Thus
    \[
        \ker Q_y[h(a)]\cap V_m
        =
        \left\{
        \xi_0\in V_m:
        \int_0^T \|J_{t,h(a(t))}^{y}\xi_0\|_{\mathbb R^r}^2\,\d t=0
        \right\},
    \]
    which shows \((2)\Longleftrightarrow(3)\).
\end{proof}

The previous proposition identifies the reduced design target: one should choose the sensor
trajectory so that \(\overline G_y[a]\) is positive definite. In the single-sensor setting, this means
that the time-average of the rank-one reduced observation matrices must span the whole reduced
coefficient space.

For the remainder of this subsection we consider a single moving sensor, so \(r=1\). We write
\[
    \mathcal Y_{\mathrm{ad}}^{(1)}
    :=
    \left\{
    y\in H^1(0,T;\mathbb R^d):
    y(t)\in K_{\mathrm{sens}}
    \text{ for a.e.\ }t\in[0,T]
    \right\}.
\]
We also work on the global reduced admissible class
\[
    \mathcal A_m(U)
    :=
    \left\{
    a\in H^1(0,T;\mathbb R^m):
    a(t)\in U
    \text{ for a.e.\ }t\in[0,T]
    \right\},
\]
where \(U\subset \mathbb R^m\) is compact.

\begin{remark}
    \label{rem:global-reduced-class-natural}
    The class \(\mathcal A_m(U)\) is the natural global reduced admissible class for the variational
    theory, since Remark~\ref{rem:reduced-norm-equivalence-last-subsection} shows that the reduced
    regularization is equivalent to a Euclidean \(H^1\)-in-time penalty on the coefficient path.
\end{remark}

To pass from the static observability theorem to a single moving sensor, we need a mild geometric
assumption on the feasible sensor region: any two feasible positions can be joined by an
\(H^1\)-curve that remains feasible.

\begin{theorem}[Path-dependent single-sensor trajectory design]
    \label{thm:path-dependent-mobile-design}
    Let \(U\subset\mathbb R^m\) be compact, and assume the hypotheses of
    Theorem~\ref{thm:static-reduced-full-observability}. Assume in addition that the feasible sensor
    region \(K_{\mathrm{sens}}\subset \Omega\) is \(H^1\)-path connected in the sense that for every
    \(p,q\in K_{\mathrm{sens}}\) there exists
    \[
        \gamma_{p,q}\in H^1([0,1];\mathbb R^d)
    \]
    such that
    \[
        \gamma_{p,q}(0)=p,\qquad
        \gamma_{p,q}(1)=q,\qquad
        \gamma_{p,q}(s)\in K_{\mathrm{sens}}
        \quad\forall s\in[0,1].
    \]

    Fix any
    \[
        a\in \mathcal A_m(U).
    \]
    Then there exists a single-sensor trajectory
    \[
        y[a]\in \mathcal Y_{\mathrm{ad}}^{(1)}
    \]
    such that the averaged reduced Gramian
    \[
        \overline G_{y[a]}[a]
        =
        \int_0^T M_{y[a]}(t,a(t))^\top M_{y[a]}(t,a(t))\,\d t
    \]
    is positive definite.
\end{theorem}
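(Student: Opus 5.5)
Fix $\delta\in(0,\delta_0)$ and invoke Theorem~\ref{thm:static-reduced-full-observability} to obtain static points $y_0,\dots,y_m\in K_{\mathrm{sens}}$ with
\[
M_Y^\delta(a')^\top M_Y^\delta(a')=\sum_{j=0}^m v_j(a')v_j(a')^\top\succeq\sigma_* I_m
\qquad\forall a'\in U .
\]
Here $v_j(a')\in\mathbb R^m$ is the row vector with entries $\operatorname{Cov}_{\rho_{h(a')}}(\kappa_\delta(\cdot-y_j),\phi_k)$. For a single sensor at position $p$, the integrand in $\overline G_y[a]$ is the rank-one matrix $v(p,a(t))v(p,a(t))^\top$, where $v(p,a')_k:=\operatorname{Cov}_{\rho_{h(a')}}(\kappa_\delta(\cdot-p),\phi_k)$.

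\textbf{Construction.} Split $[0,T]$ into $2m+1$ consecutive subintervals. The sensor dwells at $y_j$ on dwell intervals $I_0,\dots,I_m$ of positive length. On the $m$ intervening transit intervals it moves from $y_j$ to $y_{j+1}$ along a reparametrized copy of $\gamma_{y_j,y_{j+1}}$. The resulting $y[a]$ is continuous, lies in $H^1(0,T;\mathbb R^d)$ because each affine reparametrization of an $H^1$ curve is $H^1$ and the pieces match at the junctions, and takes values in $K_{\mathrm{sens}}$. Hence $y[a]\in\mathcal Y^{(1)}_{\mathrm{ad}}$. Measurability of $t\mapsto v(y[a](t),a(t))$ comes from continuity of $(p,a')\mapsto v(p,a')$, which follows from the continuity hypotheses and dominated convergence as in the static proof, composed with measurable maps.

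\textbf{Positivity.} Since every summand is positive semidefinite, dropping the transit intervals gives
\[
c^\top\overline G_{y[a]}[a]c\;\ge\;\sum_{j=0}^m\int_{I_j}\bigl(v_j(a(t))\cdot c\bigr)^2\,\d t .
\]
Suppose this vanishes for some $c$. Then for each $j$, $v_j(a(t))\cdot c=0$ for a.e.\ $t\in I_j$. The main obstacle is that these conditions hold at \emph{different} times $t$ for different $j$, while the static bound controls all $j$ only at a common state $a'$.

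To resolve this, use that $a\in H^1$ is continuous and choose the dwell intervals short, so that $a(t)$ stays within a small $\varepsilon$ of a single state $a(t_*)$ on their union. Uniform continuity of $a'\mapsto v_j(a')$ on the compact set $U$, which contains $a(t)$ for every $t$ by continuity and closedness, then gives
\[
\sum_j\bigl(v_j(a(t_j))\cdot c\bigr)^2\ge(\sigma_*-\eta)|c|^2>0
\]
for any choice of $t_j\in I_j$ and $\varepsilon$ small. This is the step that needs care in choosing the subinterval lengths.

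It is simpler to make the whole experiment short instead. Since $a$ is uniformly continuous on $[0,T]$, pick a window $[t_*,t_*+\tau]$ on which $|a(t)-a(t_*)|<\varepsilon$, and perform all $m$ transits and $m+1$ dwells inside it. Outside the window the sensor sits at $y_0$ or $y_m$, which adds only positive semidefinite contributions. Within the window each $v_j(a(t))$ is $\eta$-close to $v_j(a(t_*))$. Therefore
\[
\overline G_{y[a]}[a]\succeq \min_j|I_j|\,(\sigma_*-C\eta)I_m\succ0,
\]
which gives positive definiteness. Alternatively, one can simply use the equivalence in Proposition~\ref{prop:finite-dimensional-constant-coercivity}.
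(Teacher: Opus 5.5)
Your proof is correct, but it localizes in time differently from the paper.

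\emph{The paper's route.} It does not confine the experiment to one window. It partitions all of $[0,T]$ into cells $I_n$ on which each $G_j(t)=M_{z_j}(a(t))^\top M_{z_j}(a(t))$ varies by at most $\eta$, with $(m+1)\eta\le\sigma_*/2$. It then repeats the full cycle on every cell: dwell at each $z_j$ for time $|I_n|/(2(m+1))$, and use the remaining half for transits. Summing cell by cell gives $\overline G_{y[a]}[a]\succeq \frac{T\sigma_*}{4(m+1)}I_m$. This bound depends only on $T$, $m$ and $\sigma_*$; the path $a$ enters only through how fine the partition must be.

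\emph{Your route.} You run a single cycle inside one short window $[t_*,t_*+\tau]$ on which $a$ is nearly constant, and park the sensor at $y_0$ or $y_m$ elsewhere. This is simpler: one comparison state $a(t_*)$ and only $m$ transits. Your bound $\min_j|I_j|(\sigma_*-C\eta)$ does hold. It is cleanest to verify at the level of quadratic forms, $\int_{I_j}(v_j(a(t))\cdot c)^2\,\d t\ge \min_i|I_i|\bigl((v_j(a(t_*))\cdot c)^2-C\eta|c|^2\bigr)$, using nonnegativity of the integrand, rather than as a matrix inequality with unequal dwell lengths.

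\emph{What each buys.} Your lower bound shrinks with $\tau$, i.e.\ with the modulus of continuity of $a$, so it is not uniform over $\mathcal A_m(U)$. Since the theorem asserts only positive definiteness, that is enough here. The paper's version additionally gives a path-independent quantitative Gramian bound that grows linearly in $T$.

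\emph{Two clean-ups.} First, your initial formulation (splitting $[0,T]$ into $2m+1$ consecutive intervals with short dwells) is inconsistent as written. Consecutive intervals covering $[0,T]$ force $I_0$ to start at $0$ and $I_m$ to end at $T$, so keep only the windowed version. Second, drop the closing ``alternatively'' appeal to Proposition~\ref{prop:finite-dimensional-constant-coercivity}. That proposition only states equivalences with positivity of $\overline G_y[a]$ and cannot by itself produce it.
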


\begin{proof}
    Choose once and for all a kernel width \(0<\delta<\delta_0\), where \(\delta_0\) is the radius
    given by Theorem~\ref{thm:static-reduced-full-observability}. We suppress the dependence on
    \(\delta\) in the notation.

    By Theorem~\ref{thm:static-reduced-full-observability}, there exist points
    \[
        z_0,\dots,z_m\in K_{\mathrm{sens}}
    \]
    and a constant \(\sigma_*>0\) such that for every \(b\in U\),
    \begin{equation}
        \label{eq:static-uniform-coercivity-proof}
        \sum_{j=0}^m M_{z_j}(b)^\top M_{z_j}(b)
        \succeq
        \sigma_* I_m,
    \end{equation}
    where \(M_{z_j}(b)\in \mathbb R^{1\times m}\) denotes the reduced sensing row for a single static
    sensor at the location \(z_j\).

    For each \(j=0,\dots,m-1\), choose an \(H^1\)-curve in \(K_{\mathrm{sens}}\) connecting
    \(z_j\) to \(z_{j+1}\), and choose an \(H^1\)-curve connecting \(z_m\) back to \(z_0\). Because the
    set of curves is finite, we may regard these as fixed transition templates.

    For the fixed path \(a\in \mathcal A_m(U)\), define
    \[
        G_j(t):=M_{z_j}(a(t))^\top M_{z_j}(a(t))
        \in \mathbb R^{m\times m},
        \qquad
        G_\Sigma(t):=\sum_{j=0}^m G_j(t).
    \]
    Since \(a\in H^1(0,T;\mathbb R^m)\subset C([0,T];\mathbb R^m)\) and
    \(b\mapsto M_{z_j}(b)\) is continuous on \(U\), each \(G_j\) is continuous on \([0,T]\), hence
    uniformly continuous. Also, by \eqref{eq:static-uniform-coercivity-proof},
    \begin{equation}
        \label{eq:Gsigma-lower-bound}
        G_\Sigma(t)\succeq \sigma_* I_m
        \qquad\forall t\in[0,T].
    \end{equation}

    Let \(q:=m+1\), and choose \(\eta>0\) so small that
    \[
        q\eta \le \frac{\sigma_*}{2}.
    \]
    By uniform continuity, there exists a partition
    \[
        0=t_0<t_1<\cdots<t_N=T
    \]
    such that on each interval \(I_n:=[t_{n-1},t_n]\),
    \begin{equation}
        \label{eq:Gj-small-variation}
        \|G_j(t)-G_j(s)\|\le \eta
        \qquad
        \forall s,t\in I_n,\ \forall j=0,\dots,m.
    \end{equation}

    We now construct \(y[a]\) interval by interval. On each \(I_n\), first let the sensor dwell at
    \(z_0,\dots,z_m\) in that order, each for a time interval of length
    \[
        \frac{|I_n|}{2q}.
    \]
    Use the remaining half of \(I_n\) to traverse the chosen transition curves between successive
    locations, scaled in time so as to fit. This gives a piecewise \(H^1\) path on each \(I_n\), and by
    construction the endpoint of the last transition on \(I_n\) matches the initial position of the first
    dwell on \(I_{n+1}\). Hence the resulting global path \(y[a]\) belongs to
    \(\mathcal Y_{\mathrm{ad}}^{(1)}\).

    Let \(D_{n,j}\subset I_n\) denote the dwell interval on which \(y[a](t)=z_j\). Since the transition
    contributions are positive semidefinite, we have
    \[
        \overline G_{y[a]}[a]
        \succeq
        \sum_{n=1}^N \sum_{j=0}^m \int_{D_{n,j}} G_j(t)\,\d t.
    \]
    Fix \(n\in\{1,\dots,N\}\) and choose any \(s_n\in I_n\). By \eqref{eq:Gj-small-variation},
    \[
        G_j(t)\succeq G_j(s_n)-\eta I_m
        \qquad
        \forall t\in I_n,\ \forall j=0,\dots,m.
    \]
    Therefore
    \begin{align*}
        \sum_{j=0}^m \int_{D_{n,j}} G_j(t)\,\d t
         & \succeq
        \sum_{j=0}^m \frac{|I_n|}{2q}\bigl(G_j(s_n)-\eta I_m\bigr) \\
         & =
        \frac{|I_n|}{2q}
        \left(
        G_\Sigma(s_n)-q\eta I_m
        \right).
    \end{align*}
    Using \eqref{eq:Gsigma-lower-bound} and the choice of \(\eta\),
    \[
        G_\Sigma(s_n)-q\eta I_m
        \succeq
        \frac{\sigma_*}{2} I_m.
    \]
    Hence
    \[
        \sum_{j=0}^m \int_{D_{n,j}} G_j(t)\,\d t
        \succeq
        \frac{|I_n|\,\sigma_*}{4q} I_m.
    \]
    Summing over \(n\) gives
    \[
        \overline G_{y[a]}[a]
        \succeq
        \sum_{n=1}^N \frac{|I_n|\,\sigma_*}{4q} I_m
        =
        \frac{T\,\sigma_*}{4(m+1)} I_m.
    \]
    Thus \(\overline G_{y[a]}[a]\) is positive definite.
\end{proof}

\begin{remark}[Path-dependent design]
    \label{rem:path-dependent-sensor-design}
    The trajectory \(y[a]\) in Theorem~\ref{thm:path-dependent-mobile-design} is allowed to depend
    on the reduced path \(a\). Thus, when the theorem is applied to a ground-truth path
    \(a^\dagger\), the resulting sensor trajectory may depend on that ground truth. At this point we
    are not claiming that a single trajectory can be designed a priori, independently of the unknown
    reduced path, so as to give the same averaged-Gramian positivity uniformly over a whole
    admissible class.
\end{remark}

Once a path \(a^\dagger\) and an associated trajectory \(y^\dagger\) have been fixed, positivity of the
averaged reduced Gramian upgrades immediately to full pathwise coercivity of the reduced joint
form. This is the finite-dimensional mechanism by which the transport term controls oscillatory
modes and the averaged observation term controls the surviving constant mode.

\begin{theorem}[Fixed-path reduced coercivity from a positive averaged Gramian]
    \label{thm:local-reduced-coercivity-mobile}
    Let \(U\subset\mathbb R^m\) be compact, let
    \[
        a^\dagger\in \mathcal A_m(U),
        \qquad
        y^\dagger\in \mathcal Y_{\mathrm{ad}}^{(1)},
    \]
    and assume:
    \begin{enumerate}
        \item the reduced transport tensor \(H(a)\) is continuous on \(U\) and uniformly positive definite:
              \begin{equation}
                  \label{eq:uniform-transport-lower-bound}
                  H(a)\succeq c_{\mathrm{tr}} I_m
                  \qquad\text{for all }a\in U
              \end{equation}
              for some \(c_{\mathrm{tr}}>0\);
        \item the reduced sensing row along the fixed pair \((a^\dagger,y^\dagger)\) satisfies
              \[
                  t\longmapsto M_{y^\dagger}(t,a^\dagger(t))
                  \in L^\infty(0,T;\mathbb R^{1\times m});
              \]
        \item the averaged reduced Gramian along the fixed path is positive definite:
              \begin{equation}
                  \label{eq:positive-averaged-gramian-reference}
                  \overline G_{y^\dagger}[a^\dagger]
                  =
                  \int_0^T M_{y^\dagger}(t,a^\dagger(t))^\top
                  M_{y^\dagger}(t,a^\dagger(t))\,\d t
                  \succ 0.
              \end{equation}
    \end{enumerate}
    Then there exists \(\kappa_m^\dagger>0\) such that
    \begin{equation}
        \label{eq:fixed-path-reduced-coercivity}
        \int_0^T \alpha(t)^\top
        M_{y^\dagger}(t,a^\dagger(t))^\top
        M_{y^\dagger}(t,a^\dagger(t))
        \alpha(t)\,\d t
        +
        \int_0^T \dot\alpha(t)^\top H(a^\dagger(t))\,\dot\alpha(t)\,\d t
        \ge
        \kappa_m^\dagger \int_0^T |\alpha(t)|^2\,\d t
    \end{equation}
    for every \(\alpha\in H^1(0,T;\mathbb R^m)\).
\end{theorem}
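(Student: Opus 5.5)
We reduce the claim to a Poincaré-type inequality for a weighted time-dependent quadratic form on \(H^1(0,T;\mathbb R^m)\). Write \(P(t):=M_{y^\dagger}(t,a^\dagger(t))^\top M_{y^\dagger}(t,a^\dagger(t))\), a positive semidefinite matrix with \(\|P\|_{L^\infty}\le C_P<\infty\) by hypothesis~2 and \(\int_0^T P\,\d t=\overline G\succ0\) by hypothesis~3. Let \(g_*:=\lambda_{\min}(\overline G)>0\). Since \(a^\dagger(t)\in U\) for a.e.\ \(t\), hypothesis~1 gives
\[
\int_0^T\dot\alpha^\top H(a^\dagger)\dot\alpha\,\d t\ge c_{\mathrm{tr}}\|\dot\alpha\|_{L^2}^2 .
\]
So it suffices to prove
\[
\int_0^T\alpha^\top P\alpha\,\d t+c_{\mathrm{tr}}\|\dot\alpha\|_{L^2}^2\ge\kappa\|\alpha\|_{L^2}^2
\]
for all \(\alpha\in H^1(0,T;\mathbb R^m)\). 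We split \(\alpha=\bar\alpha+\tilde\alpha\), where \(\bar\alpha:=T^{-1}\int_0^T\alpha\,\d t\) is the constant mode and \(\tilde\alpha\) has mean zero. This realizes the mechanism described before the theorem: the transport term controls the oscillatory part and the averaged Gramian controls the constant part.

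The key steps, in order, are as follows.

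\emph{Step (i).} The one-dimensional Poincaré--Wirtinger inequality gives
\[
\|\tilde\alpha\|_{L^2}^2\le (T/\pi)^2\|\dot\alpha\|_{L^2}^2 .
\]

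\emph{Step (ii).} Expand the observation term:
\[
\int\alpha^\top P\alpha=\bar\alpha^\top\overline G\bar\alpha+2\int\bar\alpha^\top P\tilde\alpha+\int\tilde\alpha^\top P\tilde\alpha .
\]
Drop the last term, which is nonnegative. Bound the cross term by Cauchy--Schwarz and Young:
\[
2\Bigl|\int\bar\alpha^\top P\tilde\alpha\Bigr|\le 2C_P\sqrt T|\bar\alpha|\,\|\tilde\alpha\|_{L^2}\le \tfrac{g_*}{2}|\bar\alpha|^2+\tfrac{2C_P^2T}{g_*}\|\tilde\alpha\|_{L^2}^2 .
\]
This yields
\[
\int\alpha^\top P\alpha\ge\tfrac{g_*}{2}|\bar\alpha|^2-\tfrac{2C_P^2T}{g_*}\|\tilde\alpha\|_{L^2}^2 .
\]

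\emph{Step (iii).} Take a convex combination. With \(\theta\in(0,1]\), use \(\int\alpha^\top P\alpha\ge\theta\int\alpha^\top P\alpha\). Then apply step (ii) to the \(\theta\)-part and step (i) to the transport term, writing
\[
c_{\mathrm{tr}}\|\dot\alpha\|^2\ge \tfrac{c_{\mathrm{tr}}\pi^2}{T^2}\|\tilde\alpha\|^2 .
\]
Choose \(\theta\) small enough that \(\theta\,2C_P^2T/g_*\le \tfrac12 c_{\mathrm{tr}}\pi^2/T^2\). This leaves
\[
\tfrac{\theta g_*}{2}|\bar\alpha|^2+\tfrac{c_{\mathrm{tr}}\pi^2}{2T^2}\|\tilde\alpha\|^2 .
\]

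\emph{Step (iv).} Since \(\|\alpha\|_{L^2}^2=T|\bar\alpha|^2+\|\tilde\alpha\|_{L^2}^2\) by orthogonality of the mean-zero part to constants, we may take
\[
\kappa_m^\dagger=\min\Bigl\{\tfrac{\theta g_*}{2T},\tfrac{c_{\mathrm{tr}}\pi^2}{2T^2}\Bigr\}.
\]

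The main obstacle is the cross term in step (ii): the rank-one matrix \(P(t)\) is degenerate pointwise, so the constant mode is controlled only on average. This requires the \(L^\infty\) bound of hypothesis~2 to couple \(\bar\alpha\) and \(\tilde\alpha\), and the \(\theta\)-balancing of step (iii) to absorb it.

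An alternative is a compactness/contradiction argument using \(H^1\hookrightarrow L^2\) compactly together with Proposition~\ref{prop:finite-dimensional-constant-coercivity}: a minimizing sequence would have \(\dot\alpha_n\to0\), hence converge to a nonzero constant with \(\bar\alpha^\top\overline G\bar\alpha=0\), which is a contradiction. I would present the explicit argument because it gives a constant.

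Continuity of \(H\) on \(U\) is used only to ensure that \(H(a^\dagger(t))\) is measurable and bounded, so that the transport integral is well defined.
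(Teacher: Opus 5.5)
Your proof is correct, and it takes a different, constructive route from the paper's.

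The paper argues by contradiction. It takes a sequence $\alpha_n$ with $\|\alpha_n\|_{L^2}=1$ along which the form tends to zero, and uses $H\succeq c_{\mathrm{tr}}I_m$ to get $\dot\alpha_n\to 0$. Poincar\'e--Wirtinger together with Bolzano--Weierstrass on the means then give $\alpha_n\to c$ in $L^2$, with $c$ a nonzero constant. Finally, the $L^\infty$ bound on the sensing row lets it pass to the limit in the observation term, which yields the contradiction $c^\top\overline G_{y^\dagger}[a^\dagger]c=0$. This is essentially the alternative you sketch at the end. It needs no compact embedding, only Poincar\'e--Wirtinger plus the finite-dimensionality of the constant mode.

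Your mean/fluctuation splitting uses the same three ingredients but replaces the limit passage by Young's inequality on the cross term and the $\theta$-weighting. The bookkeeping closes: with $\theta=\min\{1,\,c_{\mathrm{tr}}\pi^2 g_*/(4C_P^2T^3)\}$ you get $\kappa_m^\dagger=\min\{\theta g_*/(2T),\,c_{\mathrm{tr}}\pi^2/(2T^2)\}$, and $C_P>0$ is forced by $\overline G_{y^\dagger}[a^\dagger]\succ 0$.

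The contradiction argument is shorter but only yields existence of $\kappa_m^\dagger$. Your version shows that $\kappa_m^\dagger$ depends only on $g_*=\lambda_{\min}(\overline G_{y^\dagger}[a^\dagger])$, $C_P=\|M_{y^\dagger}(\cdot,a^\dagger(\cdot))\|_{L^\infty}^2$, $c_{\mathrm{tr}}$ and $T$. The dependence is monotone: increasing in $g_*$ and $c_{\mathrm{tr}}$, decreasing in $C_P$.

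That matters in this paper. The proof of Theorem~\ref{thm:path-dependent-mobile-design} gives $\overline G_{y[a]}[a]\succeq \frac{T\sigma_*}{4(m+1)}I_m$ with $\sigma_*$ independent of $a$. The sensing rows are also uniformly bounded for sensor positions in $K_{\mathrm{sens}}$ and states in $U$. Combined with your formula, this gives a coercivity constant uniform over all designed pairs $(a,y[a])$, $a\in\mathcal A_m(U)$, which the paper's argument does not reveal. Because those experiments remain path-dependent, this still does not supply the fixed-experiment classwise stability hypothesis of Theorem~\ref{thm:local-reduced-reconstruction}.
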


\begin{proof}
    Suppose the conclusion fails. Then there exists a sequence
    \[
        \alpha_n\in H^1(0,T;\mathbb R^m),
        \qquad
        \|\alpha_n\|_{L^2(0,T;\mathbb R^m)}=1,
    \]
    such that
    \begin{equation}
        \label{eq:fixed-path-coercivity-contradiction}
        \int_0^T \alpha_n^\top
        M_{y^\dagger}(t,a^\dagger(t))^\top
        M_{y^\dagger}(t,a^\dagger(t))
        \alpha_n\,\d t
        +
        \int_0^T \dot\alpha_n^\top H(a^\dagger(t))\,\dot\alpha_n\,\d t
        \longrightarrow 0.
    \end{equation}

    By \eqref{eq:uniform-transport-lower-bound},
    \[
        c_{\mathrm{tr}}\int_0^T |\dot\alpha_n(t)|^2\,\d t
        \le
        \int_0^T \dot\alpha_n(t)^\top H(a^\dagger(t))\,\dot\alpha_n(t)\,\d t
        \longrightarrow 0,
    \]
    so
    \[
        \|\dot\alpha_n\|_{L^2(0,T;\mathbb R^m)}\to 0.
    \]
    Let
    \[
        \bar\alpha_n:=\frac1T\int_0^T \alpha_n(t)\,\d t.
    \]
    By the Poincar\'e--Wirtinger inequality,
    \[
        \|\alpha_n-\bar\alpha_n\|_{L^2(0,T;\mathbb R^m)}
        \le
        C_P \|\dot\alpha_n\|_{L^2(0,T;\mathbb R^m)}
        \to 0.
    \]
    Since \((\bar\alpha_n)\) is bounded in \(\mathbb R^m\), after passing to a subsequence we may
    assume
    \[
        \bar\alpha_n\to c\in\mathbb R^m.
    \]
    Therefore
    \[
        \alpha_n\to c
        \qquad\text{strongly in }L^2(0,T;\mathbb R^m),
    \]
    where \(c\) is identified with the corresponding time-constant function on \([0,T]\). Since
    \(\|\alpha_n\|_{L^2}=1\), we have
    \[
        \|c\|_{L^2(0,T;\mathbb R^m)}=1,
    \]
    so \(c\neq 0\).

    Set
    \[
        M^\dagger(t):=M_{y^\dagger}(t,a^\dagger(t)).
    \]
    Because \(M^\dagger\in L^\infty(0,T;\mathbb R^{1\times m})\),
    \[
        M^\dagger \alpha_n \to M^\dagger c
        \qquad\text{strongly in }L^2(0,T;\mathbb R).
    \]
    Passing to the limit in the first term of \eqref{eq:fixed-path-coercivity-contradiction} gives
    \[
        0
        =
        \int_0^T |M^\dagger(t)c|^2\,\d t
        =
        c^\top \overline G_{y^\dagger}[a^\dagger]\,c.
    \]
    This contradicts \eqref{eq:positive-averaged-gramian-reference}, since
    \(\overline G_{y^\dagger}[a^\dagger]\succ 0\) and \(c\neq 0\). Hence
    \eqref{eq:fixed-path-reduced-coercivity} holds.
\end{proof}

\begin{remark}
    \label{rem:fixed-path-versus-uniform-path-class}
    Theorem~\ref{thm:path-dependent-mobile-design} is global in the reduced path class
    \(\mathcal A_m(U)\), but the resulting trajectory \(y[a]\) depends on the chosen path \(a\).
    Theorem~\ref{thm:local-reduced-coercivity-mobile} then gives a global coercivity statement along
    that fixed path, rather than a neighborhood theorem uniform over nearby reduced paths.
    This is exactly the form needed when the sensor design is allowed to depend on the true reduced
    path.
\end{remark}

\subsection{Example: a simple two-dimensional moving-sensor design}
\label{subsec:2d-moving-sensor-example}

We illustrate the preceding theory on a simple two-dimensional reduction with a single moving
localized sensor.

Let
\[
    \Omega=(0,1)^2,
    \qquad
    \nu_0 = \d x,
\]
and consider the two-dimensional reduced space
\[
    V_2:=\operatorname{span}\{\phi_1,\phi_2\},
    \qquad
    \phi_1(x)=x_1-\tfrac12,
    \qquad
    \phi_2(x)=x_2-\tfrac12.
\]
For \(a=(a_1,a_2)\in\mathbb R^2\), define
\[
    h(a):=a_1\phi_1+a_2\phi_2.
\]
The associated probability density is
\[
    \rho_{h(a)}(x)
    =
    \frac{\exp(h(a)(x))}{\int_\Omega \exp(h(a)(z))\,\d z}.
\]

Choose a nonzero reference reduced path
\[
    a^\dagger(t)
    =
    \bigl(\bar a_1+\varepsilon \cos(2\pi t/T),\;
    \bar a_2+\varepsilon \sin(2\pi t/T)\bigr),
    \qquad t\in[0,T],
\]
where \(\bar a=(\bar a_1,\bar a_2)\neq 0\) and \(\varepsilon>0\) is small. Thus
\[
    h^\dagger(t,x)
    =
    \bigl(\bar a_1+\varepsilon \cos(2\pi t/T)\bigr)\Bigl(x_1-\tfrac12\Bigr)
    +
    \bigl(\bar a_2+\varepsilon \sin(2\pi t/T)\bigr)\Bigl(x_2-\tfrac12\Bigr).
\]

Let \(\kappa_0\in C_c^\infty(\mathbb R^2)\) be nonnegative, radial, supported in \(B(0,1)\), and
normalized by \(\int_{\mathbb R^2}\kappa_0(z)\,\d z=1\). For \(\delta>0\), define
\[
    \kappa_\delta(x):=\delta^{-2}\kappa_0(x/\delta).
\]
Given a sensor position \(y\in \Omega\), define
\[
    M(y,a)
    :=
    \bigl(
    \operatorname{Cov}_{\rho_{h(a)}}(\kappa_\delta(\,\cdot-y),\phi_1),\;
    \operatorname{Cov}_{\rho_{h(a)}}(\kappa_\delta(\,\cdot-y),\phi_2)
    \bigr)
    \in \mathbb R^{1\times 2}.
\]
For a sensor trajectory \(y:[0,T]\to\Omega\), write
\[
    M_y(t,a):=M(y(t),a).
\]

We choose two dwell locations,
\[
    p^{(1)}=\Bigl(\tfrac78,\tfrac12\Bigr),
    \qquad
    p^{(2)}=\Bigl(\tfrac12,\tfrac78\Bigr),
\]
and let \(y\in H^1(0,T;\mathbb R^2)\) be any trajectory such that
\[
    y(t)=p^{(1)}
    \quad\text{for }t\in\Bigl[0,\frac{T}{2}-\tau\Bigr],
    \qquad
    y(t)=p^{(2)}
    \quad\text{for }t\in\Bigl[\frac{T}{2}+\tau,T\Bigr],
\]
with a smooth transition on \([T/2-\tau,T/2+\tau]\), where \(0<\tau<T/4\).

The key point is that the two dwell locations favor the two reduced directions differently.
At the unperturbed state \(a=0\) and in the point-sensor limit \(\delta\to 0\),
\[
    M(p^{(1)},0)\approx
    \bigl(\phi_1(p^{(1)}),\phi_2(p^{(1)})\bigr)
    =
    \Bigl(\tfrac38,0\Bigr),
\]
and
\[
    M(p^{(2)},0)\approx
    \bigl(\phi_1(p^{(2)}),\phi_2(p^{(2)})\bigr)
    =
    \Bigl(0,\tfrac38\Bigr).
\]
Hence the ideal averaged reduced Gramian is
\[
    \overline G_{\mathrm{ideal}}
    =
    \int_0^{T/2}
    \begin{pmatrix}\tfrac38\\[1mm]0\end{pmatrix}
    \begin{pmatrix}\tfrac38 & 0\end{pmatrix}\,\d t
    +
    \int_{T/2}^{T}
    \begin{pmatrix}0\\[1mm]\tfrac38\end{pmatrix}
    \begin{pmatrix}0 & \tfrac38\end{pmatrix}\,\d t
    =
    \frac{9T}{128}\,I_2,
\]
which is positive definite.

Since \(a\mapsto \rho_{h(a)}\) and \((y,a)\mapsto M(y,a)\) are continuous on compact sets, the
actual averaged reduced Gramian stays positive definite for small perturbations of this ideal
configuration.

\begin{proposition}
    \label{prop:2d-moving-sensor-example}
    There exist parameters
    \[
        \bar a\in\mathbb R^2\setminus\{0\},
        \qquad
        \varepsilon>0,
        \qquad
        \delta>0,
        \qquad
        \tau\in(0,T/4),
    \]
    such that the above nonzero reference path \(a^\dagger\) and moving sensor trajectory \(y\)
    satisfy
    \[
        \overline G_y[a^\dagger]\succ 0.
    \]
    Consequently, if \(U\subset\mathbb R^2\) is a compact neighborhood of the image of \(a^\dagger\)
    on which the reduced transport tensor \(H(a)\) is continuous and uniformly positive definite, then
    the conclusion of Theorem~\ref{thm:local-reduced-coercivity-mobile} applies along \(a^\dagger\).
\end{proposition}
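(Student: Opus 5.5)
The plan is a perturbation argument around the ideal Gramian $\overline G_{\mathrm{ideal}}=\frac{9T}{128}I_2$. Continuity will carry us from the idealized configuration ($a=0$, point sensors, instantaneous switching) to the actual one ($\bar a\neq 0$ small, $\varepsilon$ small, $\delta$ small, $\tau$ small).

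First, I will make the point-sensor limit uniform. On the compact set $U_0:=\overline{B(0,1)}\subset\mathbb R^2$ the density
\[
r_a(x)=\frac{e^{h(a)(x)}}{\int_\Omega e^{h(a)}\,\d z}
\]
is jointly continuous on $U_0\times\overline\Omega$ and bounded below. The functions $\phi_k$ are smooth, and the averages $c_k(a)=\mathbb E_{\rho_{h(a)}}[\phi_k]$ depend continuously on $a$. Writing $M(y,a)_k=(\kappa_\delta*g_{a,k})(y)$ with $g_{a,k}=r_a(\phi_k-c_k(a))$, as in Proposition~\ref{prop:lebesgue-point-detectability}, I will repeat the uniform approximate-identity step from the proof of Theorem~\ref{thm:static-reduced-full-observability}. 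This gives
\[
\sup_{a\in U_0,\ y\in K}|M(y,a)-\widetilde M(y,a)|\to 0 \quad\text{as }\delta\to0,
\]
where $\widetilde M(y,a)_k=r_a(y)(\phi_k(y)-c_k(a))$ and $K$ is a compact subset of $\Omega$ containing the trajectory. For $K$ I take the segment hull of $p^{(1)},p^{(2)}$, choosing the transition inside it; its distance to $\partial\Omega$ is at least $1/8$, so $\delta<1/8$ suffices.

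Second, I will evaluate $\widetilde M$ at $a=0$. There $r_0\equiv1$ and $c_k(0)=0$, so $\widetilde M(p^{(1)},0)=(\tfrac38,0)$ and $\widetilde M(p^{(2)},0)=(0,\tfrac38)$. By continuity of $a\mapsto\widetilde M(p^{(i)},a)$ there is $\theta>0$ such that for $|a|\le\theta$ and $\delta$ small enough, $M(p^{(1)},a)$ is within $\eta$ of $(\tfrac38,0)$ and $M(p^{(2)},a)$ is within $\eta$ of $(0,\tfrac38)$, with $\eta$ to be fixed.

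I then choose $\bar a$ with $0<|\bar a|\le\theta/2$ and $\varepsilon\le\theta/2$, so that $|a^\dagger(t)|\le\theta$ for all $t$. Since the transition contributions $M^\top M\succeq0$, I can drop them and get
\[
\overline G_y[a^\dagger]\succeq\int_0^{T/2-\tau}M(p^{(1)},a^\dagger)^\top M(p^{(1)},a^\dagger)\,\d t+\int_{T/2+\tau}^{T}M(p^{(2)},a^\dagger)^\top M(p^{(2)},a^\dagger)\,\d t.
\]
For a row vector $m$ with $|m-m_0|\le\eta$, one has $\|m^\top m-m_0^\top m_0\|\le 2|m_0|\eta+\eta^2$. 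Hence the right-hand side is at least
\[
\Bigl(\tfrac T2-\tau\Bigr)\Bigl(\tfrac{9}{64}-\tfrac34\eta-\eta^2\Bigr)I_2.
\]
Taking $\tau=T/8$ and $\eta$ small makes this strictly positive, which gives $\overline G_y[a^\dagger]\succ0$.

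The consequence is then direct. The sensing row is bounded because $|M|\le 2\|\kappa_\delta\|_\infty$, so it lies in $L^\infty$. Together with the assumed continuity and uniform positivity of $H$ on $U$, hypotheses 1–3 of Theorem~\ref{thm:local-reduced-coercivity-mobile} hold, and that theorem yields the coercivity along $a^\dagger$.

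The main obstacle is the uniform-in-$a$ convergence of the bump matrix to the point-sensor matrix. This needs joint equicontinuity of $g_{a,k}$ near $K$, which I will get from the joint continuity of $r_a(x)$ and of $c_k(a)$ on the compact set $U_0\times\overline\Omega$. Everything else is finite-dimensional continuity.
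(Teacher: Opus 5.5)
Your proposal is correct and follows the paper's argument: you perturb from the ideal Gramian $\frac{9T}{128}I_2$ using continuity in $(\bar a,\varepsilon,\delta)$, and you make the point-sensor limit precise through the uniform approximate-identity step; dropping the positive semidefinite transition terms also sharpens it slightly, since any fixed $\tau\in(0,T/4)$ then works and $\tau$ need not be small. One harmless slip: each of the two dwell windows carries an error of size $\frac34\eta+\eta^2$, so the bound should read $\bigl(\frac T2-\tau\bigr)\bigl(\frac{9}{64}-\frac32\eta-2\eta^2\bigr)I_2$, which is still positive for small $\eta$.
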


\begin{proof}
    The idealized Gramian computed above is \(\frac{9T}{128}I_2\), hence positive definite.
    By continuity of the reduced sensing map with respect to the state, sensor location, and kernel
    width, the actual averaged Gramian is a small perturbation of this ideal matrix whenever
    \(|\bar a|\), \(\varepsilon\), \(\delta\), and \(\tau\) are sufficiently small. Positive definiteness is stable
    under small perturbations, so \(\overline G_y[a^\dagger]\succ 0\). The final claim then follows
    from Theorem~\ref{thm:local-reduced-coercivity-mobile}.
\end{proof}

\subsection{Interpretation}

The finite-dimensional theory is a direct restriction of the ambient theory, but its recovery
consequences are stronger.

Indeed, the reduced transport matrix \(H(a)\) is the coordinate form of the ambient transport
geometry \(\mathfrak g_h\), and \(M_y(t,a)^\top M_y(t,a)\) is the coordinate form of the
instantaneous ambient observability form \(\mathfrak j_{t,h(a)}^y\). Thus the reduced joint form is
exactly the pullback of the ambient form \(Q_y[h]\) to the coefficient space. In this sense, the
finite-dimensional model is not a separate inverse problem: it is the ambient inverse problem
restricted to a finite-dimensional Bayes Hilbert subspace.

At the same time, restriction changes the observability mechanism in an essential way. In the
ambient infinite-dimensional setting, localized mobile sensors can shrink the common invisible
subspace but cannot produce coercivity on the full state space, because the time-averaged
observation operator remains compact. In the reduced setting, that compactness obstruction
disappears. One can first detect individual directions by localized bump sensors, then choose
finitely many static locations that make the reduced state instantaneously observable, and finally
trade those multiple static sensors for a single moving sensor whose time-averaged reduced
Gramian is positive definite. When the reduced transport tensor is uniformly positive definite, the
transport term upgrades that averaged observability to coercivity of the reduced joint form along
the chosen reduced path.

It is important, however, to distinguish what has and has not been proved at this stage.
The sensor-design theorems in this section are pathwise: they produce observability and coercivity
statements along a fixed reduced truth path, and in the moving-sensor case the trajectory may
depend on that truth path. They do not by themselves yield a single global inverse theorem over
an entire reduced admissible class. That stronger conclusion requires an additional classwise
reduced stability property for the experiment, which we formulate explicitly in the next
subsection.

This is the sense in which the finite-dimensional theory is both a specialization of the ambient
theory and a concrete recovery theory in its own right.

\subsection{Approximate ambient reconstruction from finite-dimensional regularized reductions}
\label{sec:ambient-approximate-recovery}

The previous subsections establish a \emph{truth-specific reduced observability theory}. For each
reduced truth path \(a^\dagger\in \mathcal A_m(U)\), one may design localized sensors---in
particular, a truth-dependent single-sensor trajectory \(y[a^\dagger]\)---for which the averaged
reduced Gramian is positive definite. Along that fixed truth path, this positivity combines with the
reduced transport tensor to yield coercivity of the reduced joint transport--observability form on
all reduced perturbations.

To convert that pathwise observability statement into a genuine inverse problem posed over a full
reduced admissible class, one needs a stronger \emph{classwise reduced stability property} for the
chosen experiment. We therefore do not claim that the earlier moving-sensor design theorems
alone yield global reduced reconstruction on \(\mathcal A_m(U)\). Instead, we formulate the required
reduced stability estimate directly at the reduced level. Once that stronger property is available,
the reduced Tikhonov problem admits global minimizers on the whole admissible coefficient class,
and those reduced reconstructions lift to approximate ambient reconstructions with explicit
projection-error, induced data-mismatch, and regularization-bias terms.

Throughout this subsection we fix the path-space norm
\[
    \mathfrak H
    :=
    L^2(0,T;L^2(\nu_0))
    \cap
    H^1(0,T;L^2_0(\nu_0)),
    \qquad
    \|h\|_{\mathfrak H}^2
    :=
    \|h\|_{L^2(0,T;L^2(\nu_0))}^2
    +
    \|\dot h\|_{L^2(0,T;L^2_0(\nu_0))}^2.
\]

Let
\[
    V_m=\operatorname{span}\{\phi_1,\dots,\phi_m\}\subset \mathcal X
\]
be the nested reduced spaces from Section~\ref{sec:finite-dimensional-reduction}, and let
\[
    P_m:L^2_0(\nu_0)\to V_m
\]
be the \(L^2(\nu_0)\)-orthogonal projection, extended pointwise in time to path space by
\[
    (P_m h)(t):=P_m(h(t)).
\]
For \(a=(a_1,\dots,a_m)\in\mathbb R^m\), we continue to write
\[
    h(a):=\sum_{k=1}^m a_k\phi_k,
\]
and for a coefficient path \(a\in H^1(0,T;\mathbb R^m)\) we write
\[
    \partial_t h(a)(t):=\sum_{k=1}^m \dot a_k(t)\phi_k.
\]

Given a sensor trajectory \(y\), we retain the reduced pathwise forward map
\[
    \mathcal G_m^y[a](t):=\mathcal G_t^y(h(a(t)))\in \mathbb R^r,
    \qquad
    \mathcal Y_T:=L^2(0,T;\mathbb R^r).
\]

\begin{definition}[Reduced regularization energy and reduced Tikhonov functional]
    \label{def:reduced-tikhonov-functional}
    For \(a\in \mathcal A_m(U)\) such that \(h(a(\cdot))\in \mathcal A_{\mathrm{ad}}\), define
    \begin{align}
        \label{eq:reduced-regularization-energy}
        \mathcal R_{\lambda,\mu}^{(m)}[a]
         & :=
        \frac{\lambda}{2}\int_0^T \dot a(t)^\top H(a(t))\dot a(t)\,\d t \notag \\
         & \quad
        +
        \frac{\mu}{2}\int_0^T
        \Bigl(
        \|h(a(t))\|_{L^2(\nu_0)}^2
        +
        \|\partial_t h(a)(t)\|_{L^2(\nu_0)}^2
        \Bigr)\,\d t .
    \end{align}
    Given data \(d\in \mathcal Y_T\), the corresponding reduced Tikhonov functional is
    \begin{equation}
        \label{eq:reduced-tikhonov}
        \mathcal J_{\lambda,\mu}^{(m),y}[a;d]
        :=
        \frac12\|\mathcal G_m^y[a]-d\|_{\mathcal Y_T}^2
        +
        \mathcal R_{\lambda,\mu}^{(m)}[a].
    \end{equation}
\end{definition}

\begin{remark}
    \label{rem:reduced-norm-equivalence-last-subsection}
    Because \(V_m\) is finite dimensional, all norms on \(V_m\) are equivalent. In particular, there
    exist constants \(c_m,C_m>0\) such that for every \(b\in\mathbb R^m\),
    \[
        c_m |b|^2
        \le
        \left\|\sum_{k=1}^m b_k\phi_k\right\|_{L^2(\nu_0)}^2
        \le
        C_m |b|^2.
    \]
    Hence \(\mathcal R_{\lambda,\mu}^{(m)}\) is equivalent to a Euclidean \(H^1\)-in-time penalty on the
    coefficient path \(a(\cdot)\). We keep the form \eqref{eq:reduced-regularization-energy} because
    it is exactly the pullback of the ambient regularization to \(V_m\).
\end{remark}

\begin{proposition}[Existence of global reduced minimizers]
    \label{prop:existence-local-reduced-minimizers}
    Let \(U\subset\mathbb R^m\) be compact, assume \(h(a)\in\mathcal X_{\mathrm{ad}}\) for every
    \(a\in U\), and assume the map
    \[
        \mathcal A_m(U)\ni a \longmapsto \mathcal G_m^y[a]\in \mathcal Y_T
    \]
    is continuous with respect to strong convergence in \(C([0,T];\mathbb R^m)\). Then for every
    \(d\in \mathcal Y_T\) and every \(\lambda,\mu>0\), the reduced functional
    \[
        a\longmapsto \mathcal J_{\lambda,\mu}^{(m),y}[a;d]
    \]
    attains a minimum on \(\mathcal A_m(U)\).
\end{proposition}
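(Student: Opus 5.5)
The argument is the direct method in coefficient space, with $\mathcal A_m(U)\subset H^1(0,T;\mathbb R^m)$ as the domain.

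\textbf{Step 1 (nonempty and finite).} If $\mathcal A_m(U)$ is empty the statement is vacuous, so assume it is nonempty. For instance, if $U\neq\varnothing$, every constant path $a(t)\equiv b\in U$ belongs to it. For any $a\in\mathcal A_m(U)$ the functional is finite:
\begin{itemize}
\item the data term is finite since $\mathcal G_m^y[a]\in\mathcal Y_T$;
\item $H(a(t))$ is bounded by Proposition~\ref{prop:transport-term-upper-bound} together with Remark~\ref{rem:reduced-norm-equivalence-last-subsection}, giving $\dot a^\top H(a)\dot a\le C_{\mathrm{tr}}C_m|\dot a|^2$.
\end{itemize}
Hence the infimum $I^*$ is finite and nonnegative. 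Take a minimizing sequence $(a_n)\subset\mathcal A_m(U)$.

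\textbf{Step 2 (compactness).} The $\mu$-term and the lower bound $c_m|b|^2$ from Remark~\ref{rem:reduced-norm-equivalence-last-subsection} give
\[
\tfrac{\mu c_m}{2}\|a_n\|_{H^1(0,T;\mathbb R^m)}^2\le \mathcal J[a_n]\le I^*+1 .
\]
So $(a_n)$ is bounded in $H^1(0,T;\mathbb R^m)$. Passing to a subsequence, $a_n\rightharpoonup a$ weakly in $H^1$ and $a_n\to a$ uniformly in $C([0,T];\mathbb R^m)$. This uses the compact embedding $H^1(0,T)\hookrightarrow C([0,T])$ (Arzel\`a--Ascoli with the $\tfrac12$-H\"older bound). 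Since $U$ is closed and $a_n(t)\in U$ for all $t$ (continuous representatives), uniform convergence gives $a(t)\in U$ for all $t$. Therefore $a\in\mathcal A_m(U)$, and $h(a(t))\in\mathcal X_{\mathrm{ad}}$ by hypothesis.

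\textbf{Step 3 (lower semicontinuity).}
\begin{itemize}
\item The data term converges by the assumed continuity of $a\mapsto\mathcal G_m^y[a]$ under uniform convergence.
\item The $\mu$-term is a continuous convex quadratic form in $(a,\dot a)$ on $H^1$, hence weakly lower semicontinuous.
\end{itemize}
The transport term is the main obstacle, because the weight $H(a_n(t))$ varies with $n$. I would proceed as follows.
\begin{itemize}
\item First show $H$ is continuous on $U$. On $V_m$ the $\mathcal Z$- and Euclidean norms are equivalent, so $b\mapsto h(b)$ is Lipschitz into $\mathcal Z$. By Corollary~\ref{cor:ambient-transport-map-hsprime} and Lipschitz dependence of $w_h$, the operator $\mathcal S_h=w_h^{1/2}\mathcal T_h$ depends continuously on $h$, so each entry $H_{k\ell}(b)=\langle\mathcal S_{h(b)}\phi_k,\mathcal S_{h(b)}\phi_\ell\rangle$ is continuous.
\item Write $H(b)=S(b)^\top S(b)$, where $S(b)\alpha:=\mathcal S_{h(b)}\sum_k\alpha_k\phi_k\in L^2(\nu_0;\mathbb R^d)$. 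Then $S(a_n(t))\to S(a(t))$ uniformly in $t$, by uniform continuity on the compact set $U$.
\item Combined with $\dot a_n\rightharpoonup\dot a$ in $L^2$, this gives $S(a_n)\dot a_n\rightharpoonup S(a)\dot a$ weakly in $L^2(0,T;L^2(\nu_0;\mathbb R^d))$. Weak lower semicontinuity of the norm finishes the term, exactly as in Proposition~\ref{prop:transport-action-lsc-hsprime}.
\end{itemize}
Alternatively, one could invoke Proposition~\ref{prop:transport-action-lsc-hsprime} directly with $h_n=h(a_n)\to h(a)$ in $C([0,T];H^{s'})$, which holds by norm equivalence on $V_m$.

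\textbf{Conclusion.} Summing the three terms gives $\mathcal J[a]\le\liminf_n\mathcal J[a_n]=I^*$, so $a$ is a minimizer of $\mathcal J_{\lambda,\mu}^{(m),y}[\,\cdot\,;d]$ on $\mathcal A_m(U)$.
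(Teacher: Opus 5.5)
Your proof is correct and follows the same direct-method argument as the paper: an $H^1$ bound from the $\mu$-term via norm equivalence on $V_m$, weak $H^1$ and uniform convergence with closedness of $U$, continuity of the data term, and lower semicontinuity of the regularization terms. Your treatment of the transport term (deriving continuity of $H$ from Corollary~\ref{cor:ambient-transport-map-hsprime} and writing $H=S^\top S$ to obtain weak convergence of $S(a_n)\dot a_n$) spells out what the paper asserts in one line, namely that continuity and boundedness of $H$ on the compact set $U$ suffice.
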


\begin{proof}
    Let \((a_n)\subset \mathcal A_m(U)\) be a minimizing sequence. Since \(\mu>0\), the definition of
    \(\mathcal J_{\lambda,\mu}^{(m),y}\) and
    Remark~\ref{rem:reduced-norm-equivalence-last-subsection} imply that \((a_n)\) is bounded in
    \(H^1(0,T;\mathbb R^m)\). Passing to a subsequence if necessary,
    \[
        a_n \rightharpoonup a
        \quad\text{weakly in }H^1(0,T;\mathbb R^m),
        \qquad
        a_n\to a
        \quad\text{strongly in }C([0,T];\mathbb R^m).
    \]
    Since \(U\) is compact, \(a(t)\in U\) for all \(t\in[0,T]\), so \(a\in \mathcal A_m(U)\).

    By continuity of the reduced forward map under strong \(C([0,T];\mathbb R^m)\)-convergence,
    \[
        \mathcal G_m^y[a_n]\to \mathcal G_m^y[a]
        \qquad\text{in }\mathcal Y_T.
    \]
    The transport term in \(\mathcal R_{\lambda,\mu}^{(m)}\) is weakly lower semicontinuous because
    \(H(\cdot)\) is continuous on the compact set \(U\), hence bounded and uniformly positive
    semidefinite there. The remaining two regularization terms are convex quadratic forms in
    \(a\) and \(\dot a\), and therefore also weakly lower semicontinuous. Consequently,
    \[
        \mathcal J_{\lambda,\mu}^{(m),y}[a;d]
        \le
        \liminf_{n\to\infty}\mathcal J_{\lambda,\mu}^{(m),y}[a_n;d],
    \]
    so \(a\) is a minimizer.
\end{proof}

The next theorem is the reduced global reconstruction statement. It is formulated under a
classwise reduced stability hypothesis for the chosen experiment. This is the natural reduced
analogue of the ambient stability estimate from Section~\ref{sec:ambient-inverse-problem}, but
now imposed directly on the full reduced admissible class.

\begin{theorem}[Global reduced regularized reconstruction]
    \label{thm:local-reduced-reconstruction}
    Let \(U\subset\mathbb R^m\) be compact, let
    \[
        a^\dagger\in \mathcal A_m(U),
        \qquad
        h_m^\dagger:=h(a^\dagger)\in \mathcal A_{\mathrm{ad}}^{s'},
    \]
    and fix a sensor trajectory
    \[
        y^\dagger\in \mathcal Y_{\mathrm{ad}}^{(1)}.
    \]
    Assume:
    \begin{enumerate}
        \item \(h(a)\in \mathcal A_{\mathrm{ad}}^{s'}\) for every \(a\in \mathcal A_m(U)\);
        \item the reduced Tikhonov functional
              \[
                  \mathcal J_{\lambda,\mu}^{(m),y^\dagger}[\cdot;d]
              \]
              admits minimizers on \(\mathcal A_m(U)\) for every \(d\in \mathcal Y_T\);
        \item there exists \(C_m^{\mathrm{stab}}>0\) such that for every \(a\in \mathcal A_m(U)\),
              \begin{equation}
                  \label{eq:global-reduced-stability}
                  \|h(a)-h(a^\dagger)\|_{L^2(0,T;L^2(\nu_0))}^2
                  \le
                  C_m^{\mathrm{stab}}
                  \left(
                  \|\mathcal G_m^{y^\dagger}[a]-\mathcal G_m^{y^\dagger}[a^\dagger]\|_{\mathcal Y_T}^2
                  +
                  \mathcal R_{\lambda,\mu}^{(m)}[a]
                  \right).
              \end{equation}
    \end{enumerate}

    Then for every noisy data \(d^\delta\in \mathcal Y_T\) and every minimizer
    \[
        \hat a_m^\delta\in \mathcal A_m(U)
    \]
    of \(\mathcal J_{\lambda,\mu}^{(m),y^\dagger}[\cdot;d^\delta]\), one has
    \begin{equation}
        \label{eq:global-reduced-regularized-estimate}
        \|h(\hat a_m^\delta)-h(a^\dagger)\|_{L^2(0,T;L^2(\nu_0))}^2
        \le
        5\,C_m^{\mathrm{stab}}
        \left(
        \|d^\delta-\mathcal G_m^{y^\dagger}[a^\dagger]\|_{\mathcal Y_T}^2
        +
        \mathcal R_{\lambda,\mu}^{(m)}[a^\dagger]
        \right).
    \end{equation}
\end{theorem}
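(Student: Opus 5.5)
The argument is a standard Tikhonov comparison. Apply the classwise stability hypothesis \eqref{eq:global-reduced-stability} at $a=\hat a_m^\delta$. Then bound the two resulting terms, the data discrepancy and $\mathcal R_{\lambda,\mu}^{(m)}[\hat a_m^\delta]$, using minimality of $\hat a_m^\delta$ tested against the truth $a^\dagger\in\mathcal A_m(U)$. No linearization or local proximity is needed, because hypothesis (3) is already global on $\mathcal A_m(U)$.

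Write $e:=d^\delta-\mathcal G_m^{y^\dagger}[a^\dagger]$ and $R^\dagger:=\mathcal R_{\lambda,\mu}^{(m)}[a^\dagger]$. Comparing $\mathcal J_{\lambda,\mu}^{(m),y^\dagger}[\hat a_m^\delta;d^\delta]\le \mathcal J_{\lambda,\mu}^{(m),y^\dagger}[a^\dagger;d^\delta]$ gives
\[
\tfrac12\|\mathcal G_m^{y^\dagger}[\hat a_m^\delta]-d^\delta\|_{\mathcal Y_T}^2+\mathcal R_{\lambda,\mu}^{(m)}[\hat a_m^\delta]\le \tfrac12\|e\|_{\mathcal Y_T}^2+R^\dagger .
\]
Each term on the left is nonnegative, so two consequences follow. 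First, $\|\mathcal G_m^{y^\dagger}[\hat a_m^\delta]-d^\delta\|^2\le \|e\|^2+2R^\dagger$. Second, $\mathcal R_{\lambda,\mu}^{(m)}[\hat a_m^\delta]\le \tfrac12\|e\|^2+R^\dagger$.

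Next, control the noiseless discrepancy through the triangle inequality and $(x+y)^2\le 2x^2+2y^2$:
\[
\|\mathcal G_m^{y^\dagger}[\hat a_m^\delta]-\mathcal G_m^{y^\dagger}[a^\dagger]\|^2\le 2\|\mathcal G_m^{y^\dagger}[\hat a_m^\delta]-d^\delta\|^2+2\|e\|^2\le 4\|e\|^2+4R^\dagger .
\]
Substituting both bounds into \eqref{eq:global-reduced-stability} gives
\[
\|h(\hat a_m^\delta)-h(a^\dagger)\|^2\le C_m^{\mathrm{stab}}\bigl(4\|e\|^2+4R^\dagger+\tfrac12\|e\|^2+R^\dagger\bigr)\le 5\,C_m^{\mathrm{stab}}\bigl(\|e\|^2+R^\dagger\bigr),
\]
which is \eqref{eq:global-reduced-regularized-estimate}.

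Two bookkeeping points need checking, and I expect neither to be a real obstacle. The first is that $a^\dagger$ is an admissible competitor in the minimization. This holds because $a^\dagger\in\mathcal A_m(U)$ and $h(a^\dagger)\in\mathcal A^{s'}_{\mathrm{ad}}$ by hypothesis (1), so both $\mathcal J$ and $\mathcal R$ are finite there. The second is the arithmetic of the constant, which gives $4.5\le 5$ as shown. Hypothesis (2) only guarantees that a minimizer $\hat a_m^\delta$ exists; the estimate itself holds for every such minimizer.
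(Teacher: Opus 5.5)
Your proof is correct and follows the paper's argument step for step. You compare the minimizer against $a^\dagger$ to get the two separate bounds, bound the noiseless discrepancy by $4\|e\|^2+4R^\dagger$, and substitute into the stability hypothesis, which gives $4.5\|e\|^2+5R^\dagger\le 5(\|e\|^2+R^\dagger)$.
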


\begin{proof}
    By the minimizing property of \(\hat a_m^\delta\),
    \[
        \frac12\|\mathcal G_m^{y^\dagger}[\hat a_m^\delta]-d^\delta\|_{\mathcal Y_T}^2
        +
        \mathcal R_{\lambda,\mu}^{(m)}[\hat a_m^\delta]
        \le
        \frac12\|\mathcal G_m^{y^\dagger}[a^\dagger]-d^\delta\|_{\mathcal Y_T}^2
        +
        \mathcal R_{\lambda,\mu}^{(m)}[a^\dagger].
    \]
    Hence
    \begin{equation}
        \label{eq:reduced-minimizer-basic-bound}
        \mathcal R_{\lambda,\mu}^{(m)}[\hat a_m^\delta]
        \le
        \frac12\|d^\delta-\mathcal G_m^{y^\dagger}[a^\dagger]\|_{\mathcal Y_T}^2
        +
        \mathcal R_{\lambda,\mu}^{(m)}[a^\dagger]
    \end{equation}
    and
    \begin{equation}
        \label{eq:reduced-data-residual-basic-bound}
        \|\mathcal G_m^{y^\dagger}[\hat a_m^\delta]-d^\delta\|_{\mathcal Y_T}^2
        \le
        \|d^\delta-\mathcal G_m^{y^\dagger}[a^\dagger]\|_{\mathcal Y_T}^2
        +
        2\,\mathcal R_{\lambda,\mu}^{(m)}[a^\dagger].
    \end{equation}

    Now
    \begin{align*}
        \|\mathcal G_m^{y^\dagger}[\hat a_m^\delta]-\mathcal G_m^{y^\dagger}[a^\dagger]\|_{\mathcal Y_T}^2
         & \le
        2\|\mathcal G_m^{y^\dagger}[\hat a_m^\delta]-d^\delta\|_{\mathcal Y_T}^2
        +
        2\|d^\delta-\mathcal G_m^{y^\dagger}[a^\dagger]\|_{\mathcal Y_T}^2 \\
         & \le
        4\|d^\delta-\mathcal G_m^{y^\dagger}[a^\dagger]\|_{\mathcal Y_T}^2
        +
        4\,\mathcal R_{\lambda,\mu}^{(m)}[a^\dagger],
    \end{align*}
    where in the last step we used \eqref{eq:reduced-data-residual-basic-bound}.

    Applying the global reduced stability estimate \eqref{eq:global-reduced-stability} with
    \(a=\hat a_m^\delta\), and then using the last inequality together with
    \eqref{eq:reduced-minimizer-basic-bound}, we obtain
    \begin{align*}
        \|h(\hat a_m^\delta)-h(a^\dagger)\|_{L^2(0,T;L^2(\nu_0))}^2
         & \le
        C_m^{\mathrm{stab}}
        \left(
        \|\mathcal G_m^{y^\dagger}[\hat a_m^\delta]-\mathcal G_m^{y^\dagger}[a^\dagger]\|_{\mathcal Y_T}^2
        +
        \mathcal R_{\lambda,\mu}^{(m)}[\hat a_m^\delta]
        \right) \\
         & \le
        C_m^{\mathrm{stab}}
        \left(
        4\|d^\delta-\mathcal G_m^{y^\dagger}[a^\dagger]\|_{\mathcal Y_T}^2
        +
        4\,\mathcal R_{\lambda,\mu}^{(m)}[a^\dagger]
        +
        \frac12\|d^\delta-\mathcal G_m^{y^\dagger}[a^\dagger]\|_{\mathcal Y_T}^2
        +
        \mathcal R_{\lambda,\mu}^{(m)}[a^\dagger]
        \right) \\
         & \le
        5\,C_m^{\mathrm{stab}}
        \left(
        \|d^\delta-\mathcal G_m^{y^\dagger}[a^\dagger]\|_{\mathcal Y_T}^2
        +
        \mathcal R_{\lambda,\mu}^{(m)}[a^\dagger]
        \right),
    \end{align*}
    which is \eqref{eq:global-reduced-regularized-estimate}.
\end{proof}

We now lift the preceding reduced estimate to the ambient path space. Since the sensor trajectory
may depend on the reduced truth, the lifted statement is formulated for a truth-dependent sequence
of reduced experiments.

\begin{theorem}[Approximate ambient reconstruction from reduced regularized minimizers]
    \label{thm:ambient-approximate-recovery}
    Let
    \[
        \mathcal K\subset \mathcal A_{\mathrm{ad}}^{s'}\cap \mathfrak H
    \]
    be a \(\mathfrak H\)-compact admissible path class. Assume:
    \begin{enumerate}
        \item for every \(m\in\mathbb N\),
              \[
                  P_m\mathcal K\subset \mathcal A_{\mathrm{ad}}^{s'}\cap \mathfrak H,
              \]
              and
              \[
                  \sup_{h\in\mathcal K}\|(I-P_m)h\|_{\mathfrak H}\longrightarrow 0
                  \qquad\text{as }m\to\infty;
              \]
        \item for each \(m\) and each admissible sensor trajectory \(y\) used below, the pathwise
              observation map is Lipschitz on \(\mathcal K\cup P_m\mathcal K\): there exists
              \(L_{\mathcal K,m,y}>0\) such that
              \begin{equation}
                  \label{eq:ambient-observation-lipschitz-last-subsection}
                  \|\mathcal G^{y}[h_1]-\mathcal G^{y}[h_2]\|_{\mathcal Y_T}
                  \le
                  L_{\mathcal K,m,y}\|h_1-h_2\|_{\mathfrak H}
              \end{equation}
              for all \(h_1,h_2\in \mathcal K\cup P_m\mathcal K\).
    \end{enumerate}

    Fix
    \[
        h^\dagger\in\mathcal K.
    \]
    For each \(m\), write
    \[
        z_m^\dagger:=P_m h^\dagger\in V_m,
        \qquad
        z_m^\dagger=h(a_m^\dagger)
    \]
    for the unique coefficient path \(a_m^\dagger\in \mathcal A_m(U_m)\), where
    \(U_m\subset\mathbb R^m\) is a compact set containing the image of \(a_m^\dagger\).
    Let
    \[
        y_m^\dagger\in \mathcal Y_{\mathrm{ad}}^{(1)}
    \]
    be the reduced experiment chosen for the reduced truth \(a_m^\dagger\), and assume the
    hypotheses of Theorem~\ref{thm:local-reduced-reconstruction} hold on \(\mathcal A_m(U_m)\) with
    this trajectory \(y_m^\dagger\). Let
    \[
        d_m^\delta\in \mathcal Y_T,
        \qquad
        \|d_m^\delta-\mathcal G^{y_m^\dagger}[h^\dagger]\|_{\mathcal Y_T}\le \delta,
    \]
    and let
    \[
        \hat a_m^\delta\in \mathcal A_m(U_m)
    \]
    be a minimizer of the reduced Tikhonov functional
    \(\mathcal J_{\lambda,\mu}^{(m),y_m^\dagger}[\cdot;d_m^\delta]\). Define the lifted reduced
    reconstruction
    \[
        \hat h_m^\delta:=h(\hat a_m^\delta)\in V_m.
    \]

    Then
    \begin{align}
        \label{eq:ambient-lifted-regularized-estimate}
        \|\hat h_m^\delta-h^\dagger\|_{L^2(0,T;L^2(\nu_0))}
         & \le
        \|(I-P_m)h^\dagger\|_{L^2(0,T;L^2(\nu_0))} \notag \\
         & \quad
        +
        \Biggl[
        5\,C_m^{\mathrm{stab}}
        \Bigl(
        \bigl(\delta + L_{\mathcal K,m,y_m^\dagger}\|(I-P_m)h^\dagger\|_{\mathfrak H}\bigr)^2
        +
        \mathcal R_{\lambda,\mu}^{(m)}[a_m^\dagger]
        \Bigr)
        \Biggr]^{1/2}.
    \end{align}
\end{theorem}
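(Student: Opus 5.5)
The plan is to split the error by the triangle inequality in $L^2(0,T;L^2(\nu_0))$ through the projected truth $z_m^\dagger=P_m h^\dagger=h(a_m^\dagger)$:
\[
\|\hat h_m^\delta-h^\dagger\|_{L^2_tL^2_x}
\le
\|h^\dagger-P_mh^\dagger\|_{L^2_tL^2_x}
+
\|h(\hat a_m^\delta)-h(a_m^\dagger)\|_{L^2_tL^2_x}.
\]
The first term is already the projection error $\|(I-P_m)h^\dagger\|_{L^2(0,T;L^2(\nu_0))}$ in \eqref{eq:ambient-lifted-regularized-estimate}. All remaining work goes into the second term, which lies entirely in $V_m$.

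For the second term I will apply Theorem~\ref{thm:local-reduced-reconstruction}. I take $a^\dagger:=a_m^\dagger$, $U:=U_m$, $y^\dagger:=y_m^\dagger$ and $d^\delta:=d_m^\delta$; its hypotheses are assumed in the statement. This yields
\[
\|h(\hat a_m^\delta)-h(a_m^\dagger)\|_{L^2_tL^2_x}^2
\le
5\,C_m^{\mathrm{stab}}\Bigl(\|d_m^\delta-\mathcal G_m^{y_m^\dagger}[a_m^\dagger]\|_{\mathcal Y_T}^2+\mathcal R^{(m)}_{\lambda,\mu}[a_m^\dagger]\Bigr).
\]
The important point is that the data $d_m^\delta$ are generated by the ambient truth $h^\dagger$, not by the reduced truth. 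The data misfit relative to the reduced truth therefore carries an induced mismatch.

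To control this mismatch, note that $\mathcal G_m^{y}[a](t)=\mathcal G_t^y(h(a(t)))$, so $\mathcal G_m^{y_m^\dagger}[a_m^\dagger]=\mathcal G^{y_m^\dagger}[P_mh^\dagger]$. The triangle inequality gives
\[
\|d_m^\delta-\mathcal G^{y_m^\dagger}[P_mh^\dagger]\|_{\mathcal Y_T}
\le
\|d_m^\delta-\mathcal G^{y_m^\dagger}[h^\dagger]\|_{\mathcal Y_T}
+
\|\mathcal G^{y_m^\dagger}[h^\dagger]-\mathcal G^{y_m^\dagger}[P_mh^\dagger]\|_{\mathcal Y_T}.
\]
The first piece is at most $\delta$. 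For the second, $h^\dagger\in\mathcal K$ and $P_mh^\dagger\in P_m\mathcal K$, so the Lipschitz hypothesis \eqref{eq:ambient-observation-lipschitz-last-subsection} bounds it by $L_{\mathcal K,m,y_m^\dagger}\|(I-P_m)h^\dagger\|_{\mathfrak H}$. Squaring this sum of two bounds and inserting it into the reduced estimate, then taking square roots, gives exactly the bracketed term in \eqref{eq:ambient-lifted-regularized-estimate}.

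I expect the only real obstacle to be bookkeeping rather than analysis: checking that the reduced objects coincide with the ambient ones. First, $P_mh^\dagger$ must be representable as $h(a_m^\dagger)$ with $a_m^\dagger\in H^1(0,T;\mathbb R^m)$. This follows because $P_m$ commutes with $\partial_t$, the $\phi_k$ form a basis, and $h^\dagger\in\mathfrak H$; it is also built into the statement. Second, the regularization energy $\mathcal R^{(m)}_{\lambda,\mu}[a_m^\dagger]$ must be the one appearing in Theorem~\ref{thm:local-reduced-reconstruction}, and this holds by definition. The $\mathfrak H$-compactness of $\mathcal K$ and the uniform projection convergence in hypothesis 1 are not needed for the inequality itself. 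They only ensure that the right-hand side's projection terms vanish as $m\to\infty$.
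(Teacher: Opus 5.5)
Your proposal is correct and follows the paper's proof essentially step for step. Both bound the effective reduced data error $\|d_m^\delta-\mathcal G^{y_m^\dagger}[P_m h^\dagger]\|_{\mathcal Y_T}$ by $\delta+L_{\mathcal K,m,y_m^\dagger}\|(I-P_m)h^\dagger\|_{\mathfrak H}$ via the Lipschitz hypothesis, apply Theorem~\ref{thm:local-reduced-reconstruction} with reference path $a_m^\dagger$, and close with the triangle inequality through $z_m^\dagger=P_mh^\dagger$; your remark that the $\mathfrak H$-compactness and uniform projection convergence are not used in the inequality itself is also accurate.
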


\begin{proof}
    Set
    \[
        d_m^\dagger:=\mathcal G_m^{y_m^\dagger}[a_m^\dagger]
        =\mathcal G^{y_m^\dagger}[z_m^\dagger].
    \]
    By the observation Lipschitz estimate
    \eqref{eq:ambient-observation-lipschitz-last-subsection},
    \[
        \|\mathcal G^{y_m^\dagger}[h^\dagger]-d_m^\dagger\|_{\mathcal Y_T}
        =
        \|\mathcal G^{y_m^\dagger}[h^\dagger]-\mathcal G^{y_m^\dagger}[z_m^\dagger]\|_{\mathcal Y_T}
        \le
        L_{\mathcal K,m,y_m^\dagger}\|h^\dagger-z_m^\dagger\|_{\mathfrak H}
        =
        L_{\mathcal K,m,y_m^\dagger}\|(I-P_m)h^\dagger\|_{\mathfrak H}.
    \]
    Therefore
    \begin{equation}
        \label{eq:effective-reduced-data-error-global}
        \|d_m^\delta-d_m^\dagger\|_{\mathcal Y_T}
        \le
        \|d_m^\delta-\mathcal G^{y_m^\dagger}[h^\dagger]\|_{\mathcal Y_T}
        +
        \|\mathcal G^{y_m^\dagger}[h^\dagger]-d_m^\dagger\|_{\mathcal Y_T}
        \le
        \delta + L_{\mathcal K,m,y_m^\dagger}\|(I-P_m)h^\dagger\|_{\mathfrak H}.
    \end{equation}

    Now apply Theorem~\ref{thm:local-reduced-reconstruction} with reference path
    \(a_m^\dagger\), chosen experiment \(y_m^\dagger\), and reduced data \(d_m^\delta\). Using
    \eqref{eq:effective-reduced-data-error-global}, we obtain
    \[
        \|\hat h_m^\delta-z_m^\dagger\|_{L^2(0,T;L^2(\nu_0))}^2
        \le
        5\,C_m^{\mathrm{stab}}
        \Bigl(
        \bigl(\delta + L_{\mathcal K,m,y_m^\dagger}\|(I-P_m)h^\dagger\|_{\mathfrak H}\bigr)^2
        +
        \mathcal R_{\lambda,\mu}^{(m)}[a_m^\dagger]
        \Bigr).
    \]
    Finally, use the triangle inequality:
    \[
        \|\hat h_m^\delta-h^\dagger\|_{L^2(0,T;L^2(\nu_0))}
        \le
        \|\hat h_m^\delta-z_m^\dagger\|_{L^2(0,T;L^2(\nu_0))}
        +
        \|z_m^\dagger-h^\dagger\|_{L^2(0,T;L^2(\nu_0))},
    \]
    and note that \(z_m^\dagger=P_m h^\dagger\). This gives
    \eqref{eq:ambient-lifted-regularized-estimate}.
\end{proof}

\begin{remark}[Interpretation of the lifted estimate]
    \label{rem:ambient-lifted-estimate-interpretation}
    The estimate \eqref{eq:ambient-lifted-regularized-estimate} separates three effects.

    First, the term
    \[
        \|(I-P_m)h^\dagger\|_{L^2(0,T;L^2(\nu_0))}
    \]
    is the unavoidable Galerkin projection error.

    Second, the quantity
    \[
        L_{\mathcal K,m,y_m^\dagger}\|(I-P_m)h^\dagger\|_{\mathfrak H}
    \]
    is the induced data/model mismatch: even in the absence of measurement noise, the \(m\)-th
    reduced inverse problem uses data generated by the full ambient truth rather than by its
    projection.

    Third, the term
    \[
        \mathcal R_{\lambda,\mu}^{(m)}[a_m^\dagger]
    \]
    is the regularization bias of the reduced reference path. At fixed \(\lambda,\mu>0\), the theorem
    therefore gives an \emph{approximate ambient reconstruction estimate for reduced regularized
        minimizers}, not an exact consistency statement. Removing that bias would require an additional
    reduced vanishing-regularization analysis, which we do not pursue here.
\end{remark}

\begin{remark}
    \label{rem:ambient-approximate-recovery-section5}
    The role of Section~\ref{sec:finite-dimensional-reduction} can now be stated precisely.

    First, the finite-dimensional reduction identifies the reduced tensors and sensing matrices as
    coordinate pullbacks of the ambient transport and observability geometry.

    Second, Theorem~\ref{thm:path-dependent-mobile-design} and
    Theorem~\ref{thm:local-reduced-coercivity-mobile} provide a \emph{truth-specific reduced
    observability theory}: for each fixed reduced truth path, one may design a localized sensing
    experiment---including a single moving sensor---whose reduced averaged Gramian is positive
    definite and whose reduced joint form is coercive along that path.

    Third, the present subsection isolates the additional ingredient needed to pass from those
    pathwise observability statements to a \emph{global reduced inverse problem} on
    \(\mathcal A_m(U)\): one needs a classwise reduced stability estimate for the chosen experiment.
    Under that stronger assumption, reduced regularized minimizers exist globally and lift to
    approximate ambient reconstructions with explicit projection-error, induced data-mismatch, and
    regularization-bias terms.

    In particular, the finite-dimensional theory should be read in two layers: constructive reduced
    observability along fixed reduced truths, and global reduced inversion once an additional
    classwise stability property is imposed.
\end{remark}

\bibliographystyle{plainnat}
\bibliography{references}

\end{document}